\newcommand{\argmin}{\operatorname{argmin}}
\newcommand{\tr}{\operatorname{tr}}
\newcommand{\hbeta}{\hat{\beta}}
\newcommand{\hSigma}{\widehat{\Sigma}}
\newcommand{\tx}{\tilde{x}}
\newcommand{\tX}{\widetilde{X}}
\renewcommand{\le}{\leqslant}
\renewcommand{\ge}{\geqslant}
\newcommand{\e}{\mathbb{E}}
\newcommand{\ep}{\varepsilon}
\newcommand{\N}{\mathcal{N}}
\newcommand{\MSE}{\textnormal{MSE}}
\newcommand{\E}{\mathbb{E}}
\newcommand{\R}{\mathbb{R}}
\newcommand{\V}{\textnormal{Var}}
\newcommand{\bitem}{\begin{itemize}}
\newcommand{\eitem}{\end{itemize}}
\newcommand{\benum}{\begin{enumerate}}
\newcommand{\eenum}{\end{enumerate}}
\newcommand{\beq}{\begin{equation}}
\newcommand{\eeq}{\end{equation}}
\newcommand{\beqs}{\begin{equation*}}
\newcommand{\eeqs}{\end{equation*}}
\newcommand{\Slab}{{\Sigma}_{\mathrm{label}}}
\newcommand{\Ssam}{{\Sigma}_{\mathrm{sample}}}
\newcommand{\Sini}{{\Sigma}_{\mathrm{init}}}
\newcommand{\mT}{\mathcal{T}}
\newcommand{\Vl}{\mathbf{Var}(\lambda)}
\newcommand{\Bl}{\mathbf{Bias}^{2}(\lambda)}
\newcommand{\tM}{\tilde{M}}
\newcommand{\tth}{\tilde{\theta}}
\newcommand{\tlam}{\tilde{\lambda}}
\newcommand{\ga}{\gamma}
\newcommand{\sig}{\sigma}
\newcommand{\lam}{\lambda}
\newcommand{\be}{\bar{e}_d}
\newcommand{\tDel}{\tilde{\Delta}}
\newcommand{\deri}[1]{\mathrm{\frac{d}{d #1}}}
\newcommand{\mse}{\mathbf{MSE}}
\newcommand{\bias}{\mathbf{Bias}}
\newcommand{\var}{\mathbf{Var}}
\newcommand{\cM}{\mathcal{M}}
\newcommand{\tC}{\tilde{C}}
\newcommand{\tw}{\tilde{w}}
\newcommand{\nnum}{\nonumber}
\newcommand{\tu}{\tilde{u}}
\newcommand{\iC}{C^{-1}}
\newcommand{\iCt}{C^{-2}}
\newcommand{\Ep}{\mathcal{E}}
\begin{document}

\title{What Causes the Test Error? \\Going Beyond Bias-Variance via ANOVA}

\author{\name Licong Lin \email llc2000@pku.edu.cn \\
       \addr School of Mathematical Sciences\\
       Peking University\\
       5 Yiheyuan Road, Beijing, China
       \AND
       \name Edgar Dobriban \email dobriban@wharton.upenn.edu \\
       \addr Departments of Statistics \& Computer and Information Science\\
       University of Pennsylvania\\
       Philadelphia, PA, 19104-6340, USA}

\editor{Ambuj Tewari}

\maketitle

\begin{abstract}%
Modern machine learning methods are often overparametrized, allowing adaptation to the data at a fine level. This can seem puzzling; in the worst case, such models do not need to generalize.  This puzzle inspired a great amount of work, arguing when overparametrization reduces test error, in a phenomenon called ``double descent". Recent work aimed to understand in greater depth why overparametrization is helpful for generalization. This lead to discovering the unimodality of variance as a function of the level of parametrization, and to decomposing the variance into that arising from label noise, initialization, and randomness in the training data to understand the sources of the error.

In this work we develop a deeper understanding of this area. Specifically, we propose using \emph{the analysis of variance} (ANOVA) to decompose the variance in the test error in a symmetric way, for studying the generalization performance of certain two-layer linear and non-linear networks. The advantage of the analysis of variance is that it reveals the effects of initialization, label noise, and training data more clearly than prior approaches. Moreover, we also study the monotonicity and unimodality of the variance components. While prior work studied the unimodality of the overall variance, we study the properties of each term in the variance decomposition. 

One of our key insights is that often, the \emph{interaction} between training samples and initialization can dominate the variance; surprisingly being larger than their marginal effect. Also, we characterize ``phase transitions" where the variance changes from unimodal to monotone.
On a technical level, we leverage advanced deterministic equivalent techniques for Haar random matrices, that---to our knowledge---have not yet been used in the area. We verify our results in numerical simulations and on empirical data examples.  
\end{abstract}

\begin{keywords}
   Test Error, ANOVA, Double Descent,  Ridge Regression, Random Matrix Theory
\end{keywords}

\section{Introduction}
Modern machine learning methods are often overparametrized, allowing adaptation to the data at a fine level. For instance, competitive methods for image classification---such as WideResNet \citep{zagoruyko2016wide}---and for text processing---such as GPT-3 \citep{brown2020language}---have from millions to billions of explicit optimizable parameters, comparable to the number of datapoints. From a theoretical point of view, this can seem puzzling and perhaps even paradoxical: in the worst case, models with lots of parameters do not need to generalize (i.e., perform similarly on test data as on training data from the same distribution). 

This puzzle has inspired a great amount of work. Without being exhaustive, some of the main approaches argue the following. (1) Overparametrization beyond the ``interpolation threshold" (number of parameters required to fit the data) can eventually reduce test error (in a phenomenon called ``double descent"). (2) The specific algorithms used in the training process have beneficial ``implicit regularization" effects which effectively reduce model complexity and help with generalization. These two ideas are naturally connected, as the implicit regularization helps achieve decreasing test error with overparametrization. This area has registered a great deal of progress recently, but its roots can be traced back many years ago. We discuss some of these works in the related work section.

\begin{figure}[tb]
\centering
\includegraphics[width=0.49\linewidth]{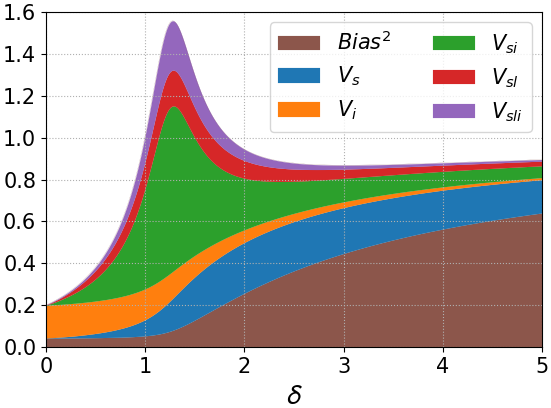}
\includegraphics[width=0.49\linewidth]{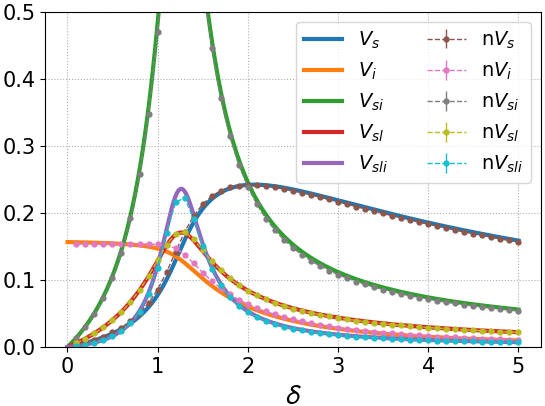}
\caption{ANOVA decomposition of the variance. The plots show the components of the variance (as well as the bias) in certain two-layer linear networks studied in the paper, as a function of the data aspect ratio $\delta = \lim d/n$, where $d$ is the {dimension of features} and $n$ is the number of samples. The variance can be decomposed into its contributions from randomness in label noise ($l$), training data/samples ($s$), and initialization ($i$). Namely, the variance is decomposed into the main effects $V_{a}$ and the interaction effects $V_{ab},V_{abc}$, where $a,b,c \in \{l,s,i\}$. We omit $V_{l},V_{li}$ in the figures since they equal zero. The key observation is that the \emph{interaction effects} (especially $V_{si}$) dominate the variance at the interpolation limit where $\lim p/n=1$ {(this turns out to correspond to $\delta=1.25$ in the Figure)} and $p$ is the number of features in the hidden layer. {\bf Left}: Cumulative figure of the bias and variance components. {\bf Right}: Variance components in numerical simulations. ($\star$: theory, $n\star$: numerical, averaged over $5$ runs, for $\star = V_s$, etc). Parameters: signal strength $\alpha=1$, noise level $\sigma=0.3$, regularization parameter $\lambda=0.01$, parametrization level $\pi=0.8$. See Sections \ref{bvsob}, \ref{deriest} for details.}
\label{filled}
\end{figure}

One particular line of work aims to understand in greater depth why overparametrization is helpful for generalization. In this line of work, \cite{yang2020rethinking} has studied the bias-variance decomposition of the mean squared error (and for other losses), and proposed that a key phenomenon is that the variance is \emph{unimodal} as a function of the level of parametrization. This was verified empirically for a wide range of models including modern neural networks, as well as theoretically for certain two-layer linear networks with only the second layer trained. Moreover, \cite{dascoli2020double} proposed to decompose the variance in a two-layer non-linear network with only second layer trained (i.e., a random features model) into that arising from label noise, initialization, and randomness in the features of the training data (in this specific order), arguing that---in their particular model---the label and initialization noise dominates the variance.


In this work we develop a set of techniques aiming to improve our understanding of this area; and more broadly of generalization in machine learning. Specifically, we propose to use \emph{the analysis of variance} (ANOVA), a classical tool from statistics and uncertainty quantification \citep[e.g.,][]{box2005statistics,mcbook}, to decompose the variance in the generalization mean squared error into its components stemming from the initialization, label noise, and training data (see Figure \ref{filled} for a brief example). The advantage of the analysis of variance is that it reveals the effects of the components in a more clearly interpretable, and perhaps "unequivocal", way than the approach in \cite{dascoli2020double}. The prior decomposition depends on the specific order in which the conditional expectations are evaluated, while ours does not. We carry out this program in detail in certain two-layer linear and non-linear network models (more specifically, random feature models), which have have been the subject of intense recent study, and are effectively at the frontier of our theoretical understanding.

As is well known in the literature on ANOVA, the variance components form a hierarchy whose first level, the \emph{main effects}, can be interpreted as the effects of varying each variable (here: random initialization, features, label noise) separately, while the higher levels can be interpreted as the interaction effects between them. These are symmetric, which is both elegant and interpretable, and thus provide advantages over the prior approaches. See Figure \ref{varsdecomp1} for an example.

Moreover, we study the monotonicity and unimodality of MSE, bias, variance, and the various variance components in a specific variance decomposition. While \cite{yang2020rethinking} studied the unimodality of the overall variance, we study the properties the components individually. On a technical level, our work is quite involved, and leverages some advanced techniques from random matrix theory, that---to our knowledge---have not yet been used in the area. In particular, we discovered that we can leverage the deterministic equivalent results for Haar random matrices from \cite{couillet2012random}. These have  been developed for different purposes, for analyzing random beamforming in wireless communications. 

{After the initial posting of our work, we became aware of the highly related paper \cite{adlam2020understanding}. This was publicly posted on the \url{arxiv.org} preprint server later than our work, but had been submitted for publication earlier. The conclusions in the two works are similar, but the techniques and setting are different. We discuss this at the end of the next section.}

\subsection{Related Works}

There is an extraordinary amount of related work, as this topic is one of the most exciting and popular ones recently in the theory of machine learning. Due to space limitations, we can only review the most closely related work.

The phenomenon of ``double descent", coined in \cite{belkin2018reconciling}, states that the limiting test error first increases, then decreases as a function of the parametrization level, having a ``double descent", or ``w"-shaped behavior. This phenomenon has been studied, in one form or another, in a great number of recent works, see e.g., \cite{advani2020high,bartlett2020benign,belkin2018reconciling,belkin2018overfitting,belkin2020two,derezinski2019exact,geiger2020scaling,ghorbani2021linearized,hastie2019surprises,liang2018just,Li2020ProvableMD,mei2019generalization,muthukumar2020harmless,xie2020weighted}, etc.

Various forms have also appeared in earlier works, see e.g., the discussion on ``A brief prehistory of double descent" \citep{loog2020brief} and the reply in \cite{Belkin10627}. This points to the related works \cite{opper2001learning,kramer2009peaking}. The online machine learning community has engaged in a detailed historical reference search, which unearthed the related early works\footnote{The reader can see the Twitter thread by Dmitry Kobak: \url{https://twitter.com/hippopedoid/status/1243229021921579010}.} \cite{hertz1989phase,opper1990ability,hansen1993stochastic,barber1995finite,duin1995small,opper1995statistical,opper1996statistical,raudys1998expected}. The observations on the ``peaking phenomenon" are consistent with empirical results on training neural networks dating back to the 1990s. There it has been suggested that the difficulties captured by the peak in double descent stem from optimization, such as the ill-conditioning of the Hessian \citep{lecun1991second,le1991eigenvalues}.

Some works that are especially relevant to us are the following. \cite{hastie2019surprises} showed that the limiting MSE of ridgeless interpolation in linear regression as a function of the overparametrization ratio, for fixed SNR, has a double descent behavior. 
\cite{nakkiran2021optimal} rigorously proved that optimally regularized ridge regression can eliminate double descent in finite samples in a linear regression model. \cite{nakkiran2019more} clearly explained that ``more data can hurt", because  algorithms do not always adapt well to the additional data. In comparison, the special case of our results pertaining to linear nets allows for certain non-Gaussian data, while only proved asymptotically.   \cite{Nakkiran2020DeepDD} empirically showed a double descent shape for the test risk for various neural network architectures a function of model complexity, number of samples (``sample-wise" double descent), and training epochs.

 \cite{dascoli2020double} used the  (not fully rigorous) replica method to obtain the bias-variance decomposition for two-layer neural networks in the lazy training regime. They further also decomposed the variance in a specific order into that stemming from label noise, initialization, and training features. Compared to this, our work is fully rigorous, and proposes to use the analysis of variance, from which we show that the sequential decompositions like the ones proposed in \cite{dascoli2020double} can be recovered. Moreover, we are concerned with a slightly different model (with orthogonal initialization), and some of our results are only proved for linear orthogonal networks (e.g., the forms of the variance components). However, going beyond \cite{dascoli2020double}, we also obtain rigorous results for the monotonicity and unimodality of the various elements of the variance decomposition.  

 \cite{jimmy2020generalization}  obtained the generalization error of two-layer neural networks when only training the first or the second layer, and compared the effects of various algorithmic choices involved. Compared with our work, \cite{dascoli2020double,jimmy2020generalization} studied more general settings and provided results that involve more complex expressions; the advantage our our simpler expressions is that we can find the variance components and study properties such as their monotonicity. Our results are simpler mainly because we consider orthogonal initialization; and, in several results, consider a linear network. We believe that our results are complementary.

\cite{wu2020optimal} calculated the prediction risk of optimally regularized ridge regression under a general covariance assumption of the data.
\cite{Jacot2020implicit} argued that random feature models can be close to kernel ridge regression with additional regularization. This is related to the ``calculus of deterministic equivalents" for random matrices \citep{dobriban2018understanding}. \cite{liang2020multiple} argued that in certain kernel regression problems one may obtain generalization curves with multiple descent points. \cite{Chen2020MultipleDD} studied certain models with provable multiple descent curves. 

{When the data has general covariance, \cite{kobak2020the} showed that the optimal ridge parameter could be negative.  Thus any positive ridge penalty would be sub-optimal if the true parameter vector lies on a direction with high predictor variance. Understanding the implications of this work in our context is a subject of interesting future research.} 

More broadly viewed, a great deal of effort has been focused on connecting ``classical" statistical theory (focusing on low-dimensional models) with ``modern" machine learning (focusing on overparametrized models).\footnote{The reader can see e.g., the talks titled ``From classical statistics to modern machine learning" by M. Belkin at the Simons Institute (\url{https://simons.berkeley.edu/talks/tbd-65}) at the Institute of Advanced Studies (\url{https://video.ias.edu/theorydeeplearning/2019/1016-MikhailBelkin}), and and other venues.} From this perspective, there are strong analogies with nonparametric statistics \citep{ibragimov2013statistical}. Non-parametric estimators such as kernel smoothing have, in effect, infinitely many parameters, yet they can perform well in practice and have strong theoretical guarantees.  Nonparameteric statistics already has the same components of the ``overparametrize then regularize" principle as in modern machine learning. {The same principle also arises in high-dimensional statistics, such as with basis pursuit and Lasso \citep{chen1994basis}. Namely, one can get good performance if one considers a large set of potential predictors (overparametrize), and then selects a small, highly-regularized subset.} 

Even more broadly, our work is connected to the emerging theme in modern statistics and machine learning of studying high-dimensional asymptotic limits, where both the sample size and the dimension of data tend to infinity. This is a powerful framework that allows us to develop new methods, and to uncover phenomena not detectable using classical fixed-dimension asymptotics \cite[see e.g.,][]{couillet2011random,paul2014random,yao2015large}. It also dates back to the 1970s, see e.g., the literature review in \cite{dobriban2018high}, which points to works by \cite{raudys1967determining,deev1970representation,serdobolskii1980discriminant}, etc.
Some other recent related  works include \cite{pennington2017nonlinear,louart2018random,liao2018spectrum,liao2019large,benigni2019eigenvalue,goldt2019modelling,fan2020spectra,deng2019model,gerace2020generalisation,liao2020random,adlam2019random,adlam2020neural}. See also \cite{geman1992neural,bos1997dynamics,neal2018modern} for various classical and modern discussions of bias-variance tradeoffs and dynamics of training.

{The most closely related work to ours is \cite{adlam2020understanding}, publicly posted later, but submitted for publication earlier. Both works study the generalization error via ANOVA decomposition, and show that the interaction effect can dominate the variance. The conclusions in the two works are similar. For instance, the $V_{si}$ term (interaction between samples and initialization, defined later) dominates the total variance; $V_{si}$ and $V_{sli}$ (interaction between samples, label noise and initialization) diverge as the ridge regularization parameter $\lambda\to 0$. On the other hand, there are many differences between two works. (1). The mathematical settings are different. Their work studies a two-layer nonlinear network with Gaussian initialization, while we study both linear and nonlinear networks with orthogonal initialization. (2). The mathematical tools employed in the two papers are different. They use Gaussian equivalents and the linear pencil representation, while we exploit  orthogonal deterministic equivalents. (3). The results are different. Beyond the ANOVA decomposition, they also study the effect of ensemble learning. On our end, we study optimally tuned ridge regression and prove properties of the bias, variance and MSE.} 

{Another related paper, also publicly posted after our work is by \cite{rocks2020memorizing}. They study generalization error in linear regression and two-layer networks by deriving the formulas for bias and variance. The main techinque they used is the cavity method originating from statistical physics. Similarly, they also show that the generalization error diverges at the interpolation threhold due to the large variance. We provide a more detailed comparison later, after stating our main results.}

{As already mentioned, our work is related to the one by \cite{yang2020rethinking}.
The model we consider is related to theirs, with several key differences. One is the orthogonal initialization, in contrast to their Gaussian initialization. Also, they assume that the ratio $d/n\to 0$ while we study the proportional regime where $d/n\to \delta>0$ (which can be arbitrarily small, so our setting is in a sense effectively more general). As for the results, they prov the unimodality of variance and monotonicity of the bias under their setting. They also make some conjectures on the variance unimodality that we prove (keeping in mind the different settings), see the results section for more details.} 

\subsection{Our Contributions}
{}

Our contributions can be summarized as follows:

\begin{enumerate}

\item We study a two-layer linear network where the first layer is a fixed partial orthogonal embedding (which determines the latent features) and the second layer is trained with ridge regularization. While the expressive power of this model only captures certain linear functions, training only the second layer already exhibits certain  intriguing statistical and generalization phenomena. We study the prediction error of this learning method in a noisy linear model. We consider three sources of randomness that contribute to the error: the random initialization (a random partial orthogonal embedding), the label noise, and the randomness over the training data. We propose to use \emph{the analysis of variance} (ANOVA), a classical tool from statistics and uncertainty quantification \citep[e.g.,][]{box2005statistics,mcbook} to decompose and understand their contribution. 

We study an asymptotic regime where the data dimension, sample size, and number of latent features tends to infinity together, proportionally to each other. In this model, we calculate the limits of the variance components (Theorem \ref{sobolthm}); in terms of moments of the Marchenko-Pastur distribution \citep{marchenko1967distribution}. We then show how to recover various sequential variance decompositions, such as the one from \cite{dascoli2020double} (albeit only for linear rather than nonlinear networks). We also show that the order in the sequence of decompositions matters.  Our work leverages deterministic equivalent results for Haar random matrices from \cite{couillet2012random} that, to our knowledge, have not yet been used in the area. We also leverage recent technical developments such as the \emph{calculus of deterministic equivalents} for random matrices of the sample covariance type \citep{dobriban2018understanding,dobriban2020wonder}. Proofs are in Appendix \ref{all_proofs}.

\item We then study the bias-variance decomposition in greater detail. As a corollary of the ANOVA results, we study the decomposition of the variance in the order label-sample-initialization, which has some special properties (Theorem \ref{2lthm1}). When using an optimal ridge regularization, we study the monotonicity and unimodality properties of these components (Theorem \ref{2lmon} and Table \ref{allmonotonicity}). With this, we shed further light on phenomena discovered by \cite{yang2020rethinking}, who wrote that ``The main unexplained mystery is the unimodality of the variance". Specifically, we are able to show that the variance is indeed unimodal in a broad range of settings. This analysis goes beyond prior works e.g., \cite{yang2020rethinking} (who studied setting with a number of inner neurons being much larger than the number of datapoints), or ``double descent mitigation" as in \cite{nakkiran2021optimal}, because it studies bias and variance separately.

We uncover several intriguing properties: for instance, for a fixed parametrization level $\pi$, as a function of aspect ratio or ``dimensions-per-sample", the variance is monotonically decreasing when $\pi<0.5$, and unimodal when $\pi\geq0.5$. We discuss and offer possible explanations.

We also discuss the special case of linear models, which has received a great deal of prior attention (Proposition \ref{lin}). We view the results on standard linear models as valuable, as they are both simpler to state and to prove, and moreover they also directly connect to some prior work. 

\item We develop some further special properties of the bias, variance, and MSE. We report a seemingly surprising simple relation between the MSE and bias at the optimum (Section \ref{surp}). We study the properties of the bias and variance for a \emph{fixed} (as opposed to optimally tuned) ridge regularization parameter (Theorem \ref{bias_var_for_fixed_lambda}). In particular, we show that the bias decreases as a function of the parametrization, and increases as a function of the data aspect ratio. 
In contrast to choosing $\lambda$ optimally, we see that double descent is \emph{not} mitigated, and may occur in our setting when we use a small regularization parameter $\lambda$ that is fixed across problem sizes (going beyond the models where this was known from prior work). This corroborates that the lack of proper regularization plays a crucial role for the emergence of double descent.

We also give an added noise interpretation of the initial random initialization step (Section \ref{add_noise}). Further, we provide some detailed analysis and intuition of these phenomena, aided by numerical plots of the variance components (Section \ref{understand_opt}).

\item The above results are about ridge regression as a heuristic for regularized empirical risk minimization. In some settings, ridge regularization is known to have limitations \citep{derezinski2014limits}, thus it is an importat question to understand its fundamental limitations here. In fact, we can show that ridge regression is an asymptotically optimal estimator, in the sense that it converges to the Bayes optimal estimator in our model  (Theorem \ref{ridgeopt}). This provides some justification for studying ridge regression in a two-layer network, which is not covered by standard results.

\item We extend some of our results to two-layer networks with a non-linear activation function with orthogonal initialization. In particular, we provide the limits of the MSE, bias, and variance in the same asymptotic regime (Theorem \ref{nlbiasvardecomp}). Furthermore, we provide the monotonicity and unimodality properties of these quantities as a function of parametrization and aspect ratio (Table \ref{allmonotonicitynonlinear}).

\item We provide numerical simulations to check the validity of our theoretical results (Section \ref{simu}), including the MSE, the bias-variance decomposition, and the variance components. We also show some experiments on empirical data, specifically on the superconductivity data set \citep{hamidieh2018data}, where we test our predictions for two-layer orthogonal nets. Code associated with the paper is available at \url{https://github.com/licong-lin/VarianceDecomposition}.

\eenum

\subsection{Highlights and Implications}

We discuss some of the highlights and implications of our results.

\paragraph{Beyond bias-variance.} Much of the prior work in this area has focused on the fundamental bias-variance decomposition. In this work, we demonstrate that it is possible to go significantly beyond this via the ANOVA decomposition. Specifically, using this methodology, one can understand how the random training data, initialization, and label noise contribute to the test error in more detailed and comprehensive ways than what was previously possible. We carry out this in certain two-layer linear and non-linear networks with only the second layer trained (i.e., random features models), but our approach may be more broadly relevant.

\paragraph{Non-additive test error.} A key finding of our work is that in the specific neural net models considered here, the random training data, initialization, and label noise contribute highly non-additively to the test error. Thus, when discussing ``the effects of initialization", some care ought to be taken; i.e., to clarify which interaction effects (e.g., with label noise or training data) this includes. The interaction term between the initialization and the training data can be large in our setting.

\paragraph{Beyond double descent: Prevalence of unimodality.} While initial work on asymptotic generalization error of one and two-layer neural nets focused on the ``double descent" or peaking shape of the test error, our work gives further evidence that the unimodal shape of the variance is a prevalent phenomenon. Moreover, our work also suggests the the unimodality holds not just for the overall variance, but also for specific and variance components; which was not known in prior work. We show that unimodality with respect to both overparametrization level and data aspect ratio holds in specific parameter settings for the variance and certain other decompositions for the optimal setting of the regularization parameter. In other parameter settings, we obtain monotonicity results for these components. This also underscores that regularization and the associated bias-variance tradeoff plays a key role in determining monotonicity and unimodality.

\section{ANOVA for a Two-layer Linear Network}

\subsection{Setup}

In this section, we study the bias-variance  tradeoff and ANOVA decomposition for a two-layer linear network model.
Suppose that we have  a training data set $\mT$ containing $n$ data points $({x}_i,y_i)\in \R^d\times\R$, with features $x_i$  and outcomes $y_i$. We assume the data is  drawn independently from a distribution such that $x_i=(x_{i1},x_{i2},...,x_{id})$, where $x_{ij}$ are i.i.d. random variables satisfying \begin{align*}&\E x_{ij}=0, \text{\hspace{10em}}\E x_{ij}^2=1, \text{\hspace{10em}}\E x_{ij}^{8+\eta}<\infty,\end{align*} where $\eta>0$ is an arbitrary constant.  
Also, each $(x_i,y_i)$ are drawn from the model
 \begin{align*}
y &=f^*(x)+\ep=x^\top  \theta+\ep, \,\theta\in\R^d,
\end{align*}
where $\ep\sim\N(0,\sigma^2)$ is the label noise independent of $x$ and $\sigma\geq0$ is the noise standard deviation. In matrix form, $Y = X\theta + \Ep$, where $X=({x}_1,{x}_2,...,{x}_n)^\top\in\R^{n\times d}$ has input vectors ${x}_i$, $i=1,2,...,n$ as its rows and $Y=(y_1,y_2,...,y_n)^\top\in\R^{n\times 1}$ with output values $y_i$, $i=1,2,...,n$ as its entries.
 Our task is to learn the true regression function $f^*(x)=x^\top  \theta$ by using a two-layer linear neural network with weights $W\in \R^{p\times d}$, $\beta\in\R^{p\times 1}$, which computes for an input $x\in\R^d$,
 \begin{align}\label{estimate_fun_2lr_form}
  f(x)=(Wx)^\top \beta.
 \end{align}
Later in Section \ref{nonlin} we will also study two-layer nonlinear networks. 
For analytical tractability, we assume that the true parameters $\theta$ are random: $\theta\sim \N(0,\alpha^2I_d/d)$. Here $\alpha^2$ can be viewed as a signal strength parameter. This assumption corresponds to performing an ``average-case" analysis of the difficulty of the problem over random problem instances given by various $\theta$. 

We also consider a random \emph{orthogonal} initialization $W$ independent of $\mT$, so $W$ is a $p\times d$ matrix uniformly distributed over the set of matrices satisfying $WW^\top =I_p$, also known as the Stiefel manifold. This requires that $p \le d$, so the dimension of the inner representation of the neural net is not larger than the number of input features.  To an extent, this can be seen as a random projection model, where a lower-dimensional representation of the high-dimensional input features is obtained by randomly projecting the input features into a subspace. Both training and prediction are based on the lower dimensional representation. In some works studying the orthogonal initialization of neural networks \cite[e.g.,][]{hu2019provable}, the first layer weights $W_{1}$ satisfy $W_1 ^\top W_1 =I_1$, while the last layer weights $W_L$ satisfy $W_LW_L^\top=I_n$, so the dimension of the hidden representation is larger than the dimension of the input features. Similarly, in several recent works on wide neural networks, the number of inner neurons is large.
However, we think that in many applications, the number of ``higher level features" should indeed not be  larger than the number of input features. For instance, the number of features in facial image data such as eyes, hair, is expected to be not more than the number of pixels. 

The model we consider here is related to the one from \cite{yang2020rethinking}, with several key differences. The orthogonal initialization is a key difference, as  \cite{yang2020rethinking} assume that $W$ is a random Gaussian matrix. The expressive power of the two models is the same, but orthogonal initialization has some benefits (see Appendix \ref{orthogonal} for more information).

During training, we fix $W$ and estimate $\beta$ by performing ridge regression:
\begin{align}\label{2lr}
\hbeta_{\lambda,\mT,W}=\arg\min_{\beta\in\R^p}\frac{1}{2n}\|Y-(WX^\top)^\top\beta\|_2^2+\frac{\lambda}{2}\|\beta\|_2^2,\end{align}
where $\lambda>0$ is the regularization parameter. This has a closed-form solution 
\begin{align}\label{closed_form_2lr}
\hbeta_{\lambda,\mT,W}=\left(\frac{WX^\top  XW^\top }{n}+\lambda I_p\right)^{-1}\frac{WX^\top  Y}{n}.
	\end{align}
We will often use the notation $R = (WX^\top  XW^\top /n+\lambda I_p)^{-1}$ for the so-called resolvent matrix of $WX^\top  XW^\top $. 
By plugging it into (\ref{estimate_fun_2lr_form}), we obtain our  estimated prediction function, for a new datapoint $x$, projected first via $W$ and thus accessed via $Wx$:
\begin{align} f(x)&=(Wx)^\top \hbeta_{\lambda,\mT,W}
	=x^\top W^\top R\frac{WX^\top Y}{n}.\label{estimate_fun_l2r}
\end{align}
Ridge regression is equivalent to $\ell_2$ weight decay, a popular heuristic. We will later show that ridge has some asymptotic optimality properties in our model, which thus justify its choice. In contrast, if we follow the approach from \cite{hu2019provable} and take $p\ge d$ with $W^\top W=I_d$, then it is readily verified that  we would obtain
\begin{align*}f(x)&=x^\top \left({X^\top X /n}+\lambda I_d\right)^{-1}{X^\top Y/n}.\end{align*} 
This means that the prediction function reduces to standard ridge regression. 
Thus we assume instead that $WW^\top=I_p$ and this makes our model resemble the ``feature extraction" layers of a neural network.

We will consider the following asymptotic setting. Let $\{p_d,d,n_d\}_{d=1}^{\infty}$ be a sequence such that $p_d\leq d$ and  $p_d,d,n_d\to\infty$ proportionally, i.e.,
\begin{align*}&\lim_{d\to\infty}\frac{p_d}{d}=\pi, &\lim_{d\to\infty}\frac{d}{n_d}=\delta ,\end{align*} where $\pi\in(0,1]$ and $\delta \in(0,\infty)$. 
Here $\pi \in (0,1]$ denotes the parametrization factor, i.e., the number of parameters in $\beta$ relative to the input dimension. Also, $\delta >0$ is the data aspect ratio. We will also use $\gamma = \delta\pi = \lim p/n$, the ratio of learned parameters to number of samples. In \cite{yang2020rethinking}, the assumption $\lim_{d\to\infty}d/n_d=0$ implies  that the number of samples is much larger than the number of parameters, which is limiting in high dimensional problems. Thus, we study a broader setting, in which the sample size is proportional to the model size, with an arbitrary ratio.

\subsection{Bias-Variance Decompositions and ANOVA}
\label{bvsob}      
\subsubsection{{Introduction and the Main Result}}

Now we analyze the mean squared prediction error in our model,
\begin{align*}\mse(\lambda)&=\E_{\theta,x,\ep,X,\Ep,W}(f_{\lambda,\mT,W}(x)-y)^2\\
&=\E_{\theta,x,X,\Ep,W}(f_{\lambda,\mT,W}(x)-x^\top\theta)^2+\sigma^2.\end{align*}
The expectation is over a random test datapoint $(x,y)$ from the same distribution as the training data, i.e., $x\in \R^{d}$ has i.i.d. zero mean, unit variance entries with finite $8+\eta$ moment, $y = \theta^\top x + \ep$, and $\ep\sim \N(0,\sigma^2)$. It is also over the random training input $X$, the random training label noise $\Ep$ 
, the random initialization $W$, and the random true parameter $\theta$. Thus, this MSE corresponds to an average-case error over various training data sets, initializations, and true parameters. In this work, we will always average over the random test data point $x$, as is usual in classical statistical learning. We will also average over the random parameter $\theta$, which corresponds to a Bayesian average-case analysis over various generative models. This is partly for technical reasons; we can also show almost sure convergence over the random $\theta$, but the analysis for general $\theta$ is beyond our scope. Thus, we write the MSE as $\E_{\theta,x}\E_{X,\Ep,W}(f_{\lambda,\mT,W}(x)-x^\top\theta)^2+\sigma^2$, and the outer expectation $\E_{\theta,x}$ is always present in our formulas.

We can study the mean squared error via the standard bias-variance decomposition corresponding to the average prediction function $\E_{X,\Ep,W}f_{\lambda,\mT,W}(x)$ over the random training set $\mT$ ($X,\Ep$) and initialization $W$: $\mse(\lambda)=\Bl+\Vl+\sigma^2$, where
\begin{align*}
\Bl&=\E_{\theta,x}(\E_{X,\Ep,W}f_{\lambda,X,\Ep,W}(x)-x^\top\theta)^2,\\
\Vl= \V[\hat{f}(x)]&=\E_{\theta,x}\E_{X,\Ep,W}(f_{\lambda,\mT,W}(x)-\E_{X,\Ep,W}f_{\lambda,\mT,W}(x))^2.
\end{align*}
A key point is that the variance can be further decomposed into the components due to the randomness in the training data $X$, label noise $\Ep$, and initialization $W$. 

We use $s,l,i$ to represent the samples $X$, label noise $\Ep$ and initialization $W$, respectively. Their impact on the variance can be decomposed in a symmetric way into their main and interaction effects via the the analysis of variance (ANOVA) decomposition \citep[e.g.,][]{box2005statistics,mcbook} as follows:
\begin{align*}
\V[\hat{f}(x)]=V_s+V_l+V_i+V_{sl}+V_{si}+V_{li}+V_{sli},
\end{align*} 
where 
\begin{align*}
V_a&=\E_{\theta,x}\V_a[\E_{-a}(\hat{f}(x)|a)], &a\in\{s,l,i\}\\
V_{ab}&=\E_{\theta,x}\V_{ab}[\E_{-ab}(\hat{f}(x)|a,b)]-V_a-V_b, &a,b\in\{s,l,i\}, a\neq b.\\
V_{abc}&=\E_{\theta,x}\V_{abc}[\E_{-abc}(\hat{f}(x)|a,b,c)]-V_a-V_b-V_c-V_{ab}-V_{ac}-V_{bc}&\\
&=\V[\hat{f}(x)]-V_s-V_l-V_i-V_{sl}-V_{si}-V_{li}, &\{a,b,c\}=\{s,l,i\}.
\end{align*}
Here, $\E_{-a}$ means taking expectation with respect to all components except for $a$. Then $V_a \ge 0$ can be interpreted as the effect of varying $a$ alone, also referred to as the \emph{main effect} of $a$. Also, $V_{ab} = V_{ba} \ge 0$ can be interpreted as the second-order interaction effect between  $a$ and $b$ beyond their main effects,  and $V_{abc} \ge 0$ can be seen as the interaction effect among $a,b,c$, beyond their pairwise interactions. The ANOVA decomposition is symmetric in $a,b,c$. 
We will show how to recover some sequential  variance decompositions from the ANOVA decomposition later. We mention that this decomposition is sometimes referred to as functional ANOVA \citep{mcbook}.

For intuition, consider the noiseless case when the label noise equals zero, so $\Ep=0$ and the index $l$ does not contribute to the variance. Then the main effect of the samples/training data is  $V_s $ $ =$ $\E_{\theta,x}\V_s[\E_{-s}(\hat{f}(x)|s)] $ $ =$  $\E_{\theta,x}\V_X[\E_W(\hat{f}(x)|X)]$. This can be interpreted as the variance of the ensemble estimator $\tilde{f}(x):=\E_{W}\hat{f}(x)$, the average of models parametrized by various $W$-s on the same data set $X$. 
Therefore, $V_s$ can be regarded as the expected variance with respect to training data of the ensemble estimator $\tilde{f}$; where the expectation is over the test datapoint $x$ and the true parameter $\theta$. Furthermore, $V_{si} +V_i$ is the variance that can be eliminated by ensembling. This is because $V_{si}+V_{s}+V_{i}$ is the total variance of the estimator $\hat{f}(x)$; and $V_s$ is the variance of the ensemble, thus $V_{si}+V_{i}$ is the remaining variance that can be eliminated.

Our bias-variance decomposition is slightly different from the standard one in statistical learning theory \citep[e.g.,][p. 24]{friedman2009elements}, where the variability is introduced only by the random training set $\mT$. Here we also consider the variability due to the random initialization $W$; and decompose the variance due to $\mT$ into that due to samples $X$ and label noise $\Ep$. The motivation is because randomization in the algorithms, such as random initialization, as well as randomness in stochastic gradient descent, are very common in modern machine learning. Our decomposition helps understand such scenarios.

Now, we define the following quantities which are frequently used throughout our paper. These are ``resolvent moments" of the well-known Marchenko-Pastur (MP) distribution $F_\gamma$ \citep{marchenko1967distribution}. The MP distribution is the limit of the distribution of eigenvalues of sample covariance matrices $n^{-1} Z^\top Z$ of $n\times p$ data matrices $Z$  with iid zero-mean unit-variance entries when $n,p\to\infty$ with $p/n\to\gamma>0$ \citep{bai2009spectral,couillet2011random,anderson2010introduction,yao2015large}.
\begin{definition}[Resolvent moments]\label{resolvent}In this paper, we use the first and second resolvent moments
\begin{align}
&\theta_1(\gamma,\lam):=\int\frac{1}{x+\lam}dF_\gamma(x),
&\theta_2(\gamma,\lam):=\int\frac{1}{(x+\lam)^2}dF_\gamma(x),\label{deftheta12}
\end{align}
where $F_{\gamma}(x)$ is the Marchenko-Pastur distribution with parameter $\gamma$. Recall that for us, $\gamma = \delta\pi$. Then $\theta_1:=\theta_1(\ga,\lam)$ and $\theta_2:=\theta_2(\ga,\lam)$ have explicit expressions \citep[see e.g.,][for the first one; and our proofs also contain the derivations]{bai2009spectral}:
\begin{align}
\theta_1&=\frac{(-\lambda+\gamma-1)+\sqrt{(-\lambda+\gamma-1)^2+4\lambda \gamma}}{2\lam\ga}\label{exprtheta1},\\
\theta_2&=-\frac{d}{d\lam}\theta_1
=\frac{(\gamma-1)}
{2\gamma\lambda^2}
+
\frac{(\gamma+1) \cdot \lambda+(\gamma-1)^2}
{2\gamma\lambda^2\sqrt{(-\lambda+\gamma-1)^2+4\lambda \gamma}}\label{exprtheta2}.\end{align}
We further define
\begin{equation*}
\tlam:=\lam+\frac{1-\pi}{2\pi}\left[\lam+1-\ga+\sqrt{(\lam+\ga-1)^2+4\lam}\right],
\end{equation*} 
and denote $\tth_1:=\theta_1(\delta,\tlam)$, $\tth_2:=\theta_2(\delta,\tlam)$.
\end{definition}
{\bf Remark. } From \eqref{exprtheta1}, it is readily verified that $\lam\ga\theta_1^2+(\lam-\ga+1)\theta_1-1=0$. Taking derivatives and noting that $\theta_2=-d\theta_1/d\lam$, we get $1+(\ga-1)\theta_1-2\lam^2\ga\theta_1\theta_2-\lam(\lam-\ga+1)\theta_2=0.$ These two equations will be useful when simplifying certain formulas.
Then we have the following fundamental result on the asymptotic behavior of the variance components. This is our first main result.
\begin{theorem}[Variance components]\label{sobolthm}
Under the previous assumptions, consider an $n\times d$ feature matrix $X$ with i.i.d. entries of zero mean, unit variance and finite $8+\eta$-th moment. Take a two-layer linear neural network $f(x) = (Wx)^\top \beta$, with $p \le d$ intermediate activations, and $p\times d$ matrix $W$ of first-layer weights chosen uniformly subject to the orthogonality constraint $WW^\top =I_p$. Then, the variance components have the following  limits as $n,p,d\to\infty$, with $p/d\to \pi\in(0,1]$ (parametrization level), $d/n\to\delta>0$ (data aspect ratio). Here $\alpha^2$ is the signal strength, $\sigma^2$ is the noise level, $\lambda$ is the regularization parameter, $\theta_i$ are the resolvent moments (and $\tlam,\tth_i$ are their adjusted versions).
\begin{align*}
\lim_{d\to\infty} V_s&=\alpha^2[1-2\tlam\tth_1+\tlam^2\tth_2-\pi^2(1-\lam\theta_1)^2]\\
\lim_{d\to\infty} V_l&=0\\
\lim_{d\to\infty} V_i&=\alpha^2\pi(1-\pi)(1-\lam\theta_1)^2\\
\lim_{d\to\infty} V_{sl}&=\sigma^2\delta(\tth_1-\tlam\tth_2)
\end{align*}
\begin{align*}
\phantom{1em}\qquad\lim_{d\to\infty} V_{li}&=0\\
\lim_{d\to\infty}V_{si}&=\alpha^2[\pi(1-2\lam\theta_1+\lam^2\theta_2+(1-\pi)\delta(\theta_1-\lam\theta_2))\nonumber\\&-\pi(1-\pi)(1-\lam\theta_1)^2
-1+2\tlam\tth_1-\tlam^2\tth_2]\\
\lim_{d\to\infty}V_{sli}&=\sigma^2\delta[\pi(\theta_1-\lam\theta_2)-(\tth_1-\tlam\tth_2)].
\end{align*}
\end{theorem}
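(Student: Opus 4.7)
The plan is to reduce each of the seven Sobol components to the asymptotic limit of a few normalized traces of matrices built from the Haar-Stiefel matrix $W$ and the sample covariance $\Sigma := X^\top X/n$, and then to invoke the deterministic equivalents for Haar matrices from \citep{couillet2012random} together with the Marchenko--Pastur calculus from \citep{dobriban2018understanding}.

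\textbf{Step 1 (algebraic setup; the vanishing components).} First split $\hat f(x) = T_1(W,X) + T_2(W,X)\,\Ep$ with $T_1 := x^\top W^\top R W X^\top X\theta/n$ and row-vector $T_2 := x^\top W^\top R W X^\top/n \in \R^{1\times n}$. Since $(W, X, \Ep)$ are mutually independent and $\Ep \sim \N(0,\sigma^2 I_n)$ enters linearly, every Sobol mean $\mu_S = \E_{-S}[\hat f\mid S] - \sum_{T\subsetneq S}\mu_T$ splits into an $\alpha^2$-contribution from $T_1$ and a $\sigma^2$-contribution from $T_2\Ep$. The i.i.d.\ row-exchangeability of $X$ forces both $\E_{W,X}[T_2]$ and $\E_X[T_2\mid W]$ to be $n$-vectors with all entries equal to a common scalar (deterministic in the first case, a function of $W$ in the second), and a standard leave-one-out (Sherman--Morrison) computation shows each such scalar is $O(1/n)$. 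Squaring and summing then gives $V_l, V_{li} = O(1/n)$, both of which vanish in the limit.

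\textbf{Step 2 (trace reduction).} For the five nontrivial components, integrate out $\theta \sim \N(0,\alpha^2 I_d/d)$ and $x$ with $\E xx^\top = I_d$ first. Every $\alpha^2$-term collapses to $\tfrac{\alpha^2}{d}\tr\,\E_S[\Phi_S\Phi_S^\top]$ for a $d\times d$ matrix $\Phi_S$ built from $W^\top R W$ and $\Sigma$, while every $\sigma^2$-term collapses to $\sigma^2 \delta\cdot\tfrac{1}{d}\tr\,\E_S[\Psi_S\Psi_S^\top]$. All the traces that appear belong to a short list: $\tfrac{1}{d}\tr(W^\top R W)$, $\tfrac{1}{d}\tr((W^\top R W)^2)$, $\tfrac{1}{d}\tr(W^\top R W\,\Sigma)$, and $\tfrac{1}{d}\tr((W^\top R W)^2 \Sigma)$. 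The first-moment-Haar traces deliver the main effect $V_s$ and the $V_{sl}$ cross term; the quadratic-in-$W^\top R W$ ones deliver $V_i$, $V_{si}$, and $V_{sli}$.

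\textbf{Step 3 (deterministic equivalents and simplification).} The Haar-Stiefel deterministic equivalent \citep{couillet2012random} shows that $\E_W[W^\top R W]$ is asymptotically an affine function of $(\Sigma + \tlam I_d)^{-1}$, whose coefficients and the shifted regularizer $\tlam$ are pinned down by a fixed-point equation encoding the $p/d \to \pi$ compression. Combining this with the Marchenko--Pastur limit for $\Sigma$ at aspect ratio $\delta$ produces the two families of resolvent moments appearing in the theorem: $(\lambda,\theta_1,\theta_2)$ at ratio $\gamma = \pi\delta$ coming from the ``inner'' $p\times p$ resolvent $R$, and $(\tlam,\tth_1,\tth_2)$ at ratio $\delta$ coming from the ``outer'' Haar-averaged $d\times d$ operator. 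Closed forms then follow by applying the two algebraic identities in the Remark after Definition~\ref{resolvent}, together with their $(\delta,\tlam)$ analogues, to simplify the mixed expressions.

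\textbf{Main obstacle.} The key technical difficulty is the quadratic-in-$W^\top R W$ traces driving $V_i$, $V_{si}$, and $V_{sli}$: a first-moment Haar equivalent is insufficient, and one needs the \emph{second-order} deterministic equivalents of \citep{couillet2012random} for quantities of the form $\E_W[\tr(W^\top R W\,A\,W^\top R W\,B)]$. These Haar second-order equivalents have to be combined with Marchenko--Pastur concentration for $\Sigma$, and justifying the joint concentration in $W$ and $X$ simultaneously is precisely where the $8+\eta$-moment hypothesis on the entries of $X$ enters.
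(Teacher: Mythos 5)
Your Steps~1--2 are in the right spirit and match the paper's strategy: derive the seven components as Frobenius-norm functionals of $M = W^\top R W X^\top X /n$ and $\tM = W^\top R W X^\top /n$ (the paper's \eqref{sobolvsf}--\eqref{sobolvslif}), with the $\alpha^2$- and $\sigma^2$-contributions coming from $M$ and $\tM$ respectively, and $V_l, V_{li}\to 0$ by Sherman--Morrison (the paper's Lemma~\ref{etm0}, which does need some care because the third moment of $X$-entries enters). Step~3 also correctly identifies the first-order Haar--Stiefel deterministic equivalent from \cite{couillet2012random}, applied to $\E_W[W^\top R W]$ after a Woodbury rewriting, as the mechanism that produces the shifted regularizer $\tlam$ and the $(\delta,\tlam)$-resolvent moments $\tth_1,\tth_2$.

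Where the proposal goes wrong is the ``main obstacle'' paragraph. You assert that $V_i,V_{si},V_{sli}$ require second-order Haar deterministic equivalents of the form $\E_W[\tr(W^\top R W\,A\,W^\top R W\,B)]$. The paper needs no such thing, and in fact carefully routes around it. Concretely: for $V_i = \tfrac{\alpha^2}{d}\E_W\|\E_X M\|_F^2 - \tfrac{\alpha^2}{d}\|\E M\|_F^2$, the inner average over $X$ is taken first; since $XW^\top$ has i.i.d.\ rows given a fixed $W$, one shows $\E_X M = c(\Gamma)\,W^\top W$ with $c$ a scalar, and then $\E_W\tr\big((W^\top W)^2\big) = \E_W\tr(W^\top W)=p$ because $W^\top W$ is a rank-$p$ projection --- no Haar DE at all. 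For the term $\E\|M\|_F^2$ entering $V_{si}$, the quadratic $W$-dependence is absorbed by splitting $X = (XW^\top)W + (XW_\perp^\top)W_\perp$; under Gaussianity the two blocks are independent, so the trace reduces to moments of the $p\times p$ Wishart $n^{-1}(XW^\top)^\top(XW^\top)$, which are pure Marchenko--Pastur; for non-Gaussian $X$ the paper instead uses \emph{second-order Marchenko--Pastur} deterministic equivalents for the covariance resolvent (Lemma~\ref{emmsquaredet}), not second-order Haar. The terms $\E_X\|\E_W M\|_F^2$ and $\E_X\|\E_W\tM\|_F^2$ are quadratic in the \emph{already-averaged} matrix $\E_W[W^\top R W]$, so only a first-moment Haar equivalent is needed (Lemmas~\ref{fnormewtm}, \ref{fnormewm}, \ref{detlm}, \ref{replace2}). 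It is also doubtful that \cite{couillet2012random} contains the specific Stiefel second-moment objects you invoke, so a proof along your proposed lines would have a genuine gap. A smaller issue: your Step~2 trace list omits the $\Sigma^2$ terms (e.g.\ $\E\|M\|_F^2 = \E\tr\big((W^\top R W)^2\Sigma^2\big)$, not $\E\tr\big((W^\top R W)^2\Sigma\big)$), so the bookkeeping there would also need fixing.
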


{\bf Remark.} Theorem \ref{sobolthm} shows that the label noise does not contribute to the variance via a main effect (because $\lim_{d\to\infty} V_l=0$), but instead  through its interaction effects with the sample and initialization {(the terms $V_{sl},V_{sli}$)}. These can be arbitrarily large if we let $\sigma\to\infty$. In our simulations, we assume a reasonable amount of label noise, e.g., $\sigma=0.3\alpha$.

\subsubsection{Ordered Variance Decompositions}
\label{ordvar}

Using Theorem 
\ref{sobolthm}, we can calculate all six variance decompositions corresponding to the ordering of the sources of randomness. Namely, suppose that we decompose the variance in the order $(a,b,c)$, where $\{a,b,c\}=\{s,l,i\}$, i.e., we calculate the following three terms:
\begin{align*}
\Sigma_{abc}^{a}&:=\E_{\theta,x}\E_{a,b,c}[\hat{f}(x)-\E_a \hat{f}(x)]^2\\
\Sigma_{abc}^{b}&:=\E_{\theta,x}\E_{b,c}[\E_{a}\hat{f}(x)-\E_{a,b} \hat{f}(x)]^2\\
\Sigma_{abc}^{c}&:=\E_{\theta,x}\E_{c}[\E_{a,b}\hat{f}(x)-\E_{a,b,c} \hat{f}(x)]^2.
\end{align*} 
We can interpret these as follows: (1) $\Sigma_{abc}^a$ is all the variance related to $a$. (2) $\Sigma_{abc}^b$ is all the variance related to $b$ after subtracting all the variance related to $a$ in the total variance. (3) $\Sigma_{abc}^c$ is the part of the variance that depends only on $c$.  Then, simple calculations show
\begin{align*}
\Sigma_{abc}^{a}&=V_a+V_{ab}+V_{ac}+V_{abc}\\
\Sigma_{abc}^{b}&=V_{bc}+V_b\\
\Sigma_{abc}^{c}&=V_c.
\end{align*}

In previous work,  \cite{dascoli2020double} considered the decomposition in the order label - initialization - sample $(l,i,s)$, which becomes in our case (canceling the terms that vanish) 
\begin{align*}
\Sigma_{lis}^l&=V_{ls}+V_{lsi}+V_l+V_{li}=V_{ls}+V_{lsi}\\
\Sigma_{lis}^i&=V_i+V_{si}\\
\Sigma_{lis}^{s}&=V_s.
\end{align*}

\cite{dascoli2020double} argued that the label and initialization noise dominate the variance. However, different  decomposition orders can lead to qualitatively different results. We take the decomposition order $(l,s,i)$ as an example. Roughly speaking, in this decomposition, $\Sigma_{lsi}^{l}$ and $\Sigma_{lsi}^{s}$ can be interpreted as the variance introduced by the data set given a fixed initialization (and model), and $\Sigma_{lsi}^{i}$ is the variance of the initialization alone.  Figure \ref{varsdecomp1} (left, middle) shows the results
under these two different decomposition orders.

\begin{figure}[htb]
\centering
\begin{subfigure}{0.33\textwidth}
  \centering
  \includegraphics[width=\linewidth]{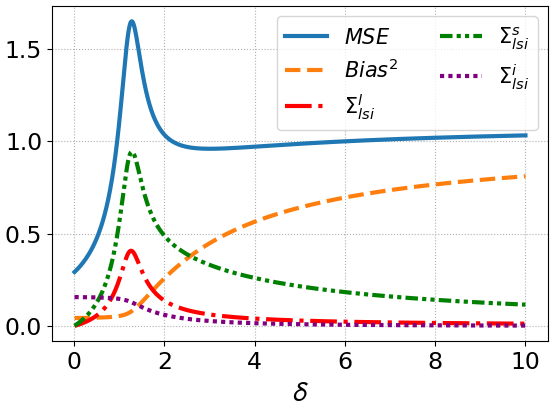}
\end{subfigure}%
\begin{subfigure}{0.33\textwidth}
  \centering
  \includegraphics[width=\linewidth]{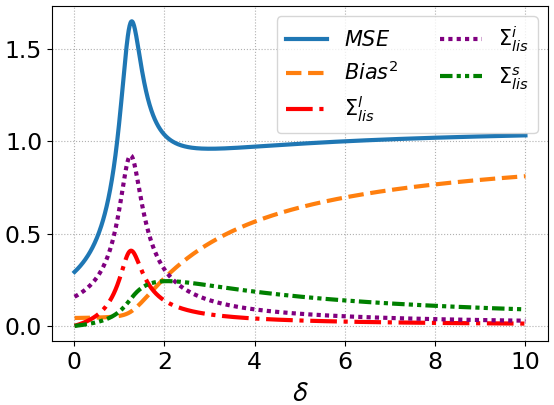}
\end{subfigure}
\begin{subfigure}{0.33\textwidth}
  \centering
  \includegraphics[width=\linewidth]{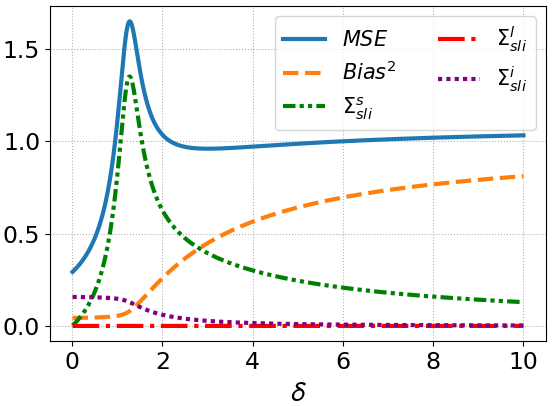}
  \end{subfigure}
\caption{Bias-variance decompositions in three orders. {\bf Left}: Decomposition order: label, sample, initialization $(l, s, i)$; dominated by $\Sigma^s_{lsi}$ (``samples"). {\bf Middle}: Decomposition order: label, initialization, sample $(l, i, s)$; dominated by $\Sigma^i_{lis}$ (``initialization"). {\bf Right}: Decomposition order: sample, label, initialization $(s,l,i)$; dominated by $\Sigma^s_{sli}$ (``samples"). Parameters: signal strength $\alpha=1$, noise level $\sigma=0.3$, regularization parameter $\lambda=0.01$, parametrization level $\pi=0.8$.}
\label{varsdecomp1}
\end{figure}

Comparing the left ($lsi$) and middle ($lis$) panels of Figure \ref{varsdecomp1},  we can see that different decomposition orders indeed lead to qualitatively different results. When $\delta<2$, in $lsi$, the variance with respect to samples dominates the total variance, while in $lis$ the variance with respect to initialization dominates. Therefore, to have a better understanding of the limiting MSE of ridge models, it is preferable to decompose the variance in a symmetric and more systematic way using the variance components. In fact, the discrepancy between these two decompositions  is due to the term $V_{si}$, which dominates the variance (as discussed later) and is contained in both $\Sigma_{lsi}^{s}$ and $\Sigma_{lis}^{i}$. By identifying this key term $V_{si}$, which has not appeared in prior work, we are able to pinpoint the specific reason why the variance is large, namely the interaction between the variation in samples and initialization. Moreover, the variance with respect to samples is even larger in Figure \ref{varsdecomp1} (right), for the $(sli)$ order, since $\Sigma_{sli}^{s}$  also contains the interaction effect between the randomness of samples and labels.

\subsubsection{{A Special Ordered Variance Decomposition}}
Next, we consider a special case of the variance decomposition, in the order of label-sample-initialization. {This is advantageous because it leads to particularly simple formulas, whose monotonicity properties are particularly tractable, as explained below.} The variance decomposes as $\Vl=\Slab+\Ssam+\Sini$, where
\begin{align*}
\Slab&:=\Sigma_{lsi}^l= \E_{\theta,x}\E_{X,W,\Ep}(f_{\lambda,\mT,W}(x)-\E_{\Ep}f_{\lambda,\mT,W}(x))^2\\
\Ssam
&:=\Sigma_{lsi}^s= 
\E_{\theta,x}\E_{X,W}(\E_{\Ep}f_{\lambda,\mT,W}(x)-\E_{X,\Ep}f_{\lambda,\mT,W}(x))^2\\
\Sini&:=\Sigma_{lsi}^i= 
\E_{\theta,x}\E_{W}(\E_{X,\Ep}f_{\lambda,\mT,W}(x)-\E_{W,X,\Ep}f_{\lambda,\mT,W}(x))^2
.
\end{align*}
As above, intuitively $\Slab$ is all variance related to the label noise, $\Ssam$ is the variance related to the samples after subtracting the variance related to the label noise, and $\Sini$ is the variance due only to the initialization. The decomposition "label-init-sample" was studied in \citep{dascoli2020double}. Going beyond what was previously known, we can get explicit expressions not only for the variances (using the ANOVA decomposition), but also for the bias, and moreover prove some monotonicity and unimodality properties of these quantities when the ridge parameter $\lam$ is optimal.

\cite{yang2020rethinking} observed empirically that the variance when fitting certain neural networks can often be unimodal, and proved this for a 2-layer net similar to our setting, with Gaussian initialization $W$ and assuming $n/d\to\infty$. However, they left open the question of understanding this phenomenon more broadly, writing that ``The main unexplained mystery is the unimodality of the variance". Our result sheds further light on this problem. 

\begin{corollary}[Bias \& variance in Two-Layer Orthogonal Net---special ordering]\label{2lthm1}
Under the assumptions from Theorem \ref{sobolthm}, we have the limits
\begin{align}\lim_{d\to\infty}\Bl
=&\alpha^2(1-\pi+\lambda\pi\theta_1)^2
,\label{biasf}\\ 
\lim_{d\to\infty}\Vl
=&
\alpha^2\pi[1-\pi+(\pi-1)(2\lambda-\delta )\theta_1-\pi\lambda^2\theta_1^2+\lambda(\lambda-\delta +\pi\delta )\theta_2]+\nonumber\\&\sigma^2\pi\delta (\theta_1-\lambda\theta_2). \label{varf}
\end{align}
More specifically,
\begin{align}
\lim_{d\to\infty}\Slab(\lambda)&=\sigma^2\pi\delta (\theta_1-\lambda\theta_2)\label{varlabelf},
\\
\lim_{d\to\infty}\Ssam(\lambda)&=\alpha^2\pi\left[-\lambda^2\theta_1^2+\lambda^2\theta_2+(1-\pi)\delta (\theta_1-\lambda\theta_2)\right]\label{varsamplef},
\\
\lim_{d\to\infty}\Sini(\lambda)&=\alpha^2\pi(1-\pi)(1-\lambda\theta_1)^2\label{varinitf},\end{align}
where $\theta_i:=\theta_i(\pi\delta ,\lambda)$, $i=1,2$. 
Therefore \begin{align}\lim_{d\to\infty}\mse(\lambda)
	=&\alpha^2\left\{1-\pi+\pi\delta \left(1-\pi+\sigma^2/\alpha^2\right)\theta_1+\left[\lambda-\delta \left(1-\pi+\sigma^2/\alpha^2\right)\right]\lambda\pi\theta_2\right\}+\sigma^2\label{msef}.\end{align}
For any fixed $\delta, \pi$, the asymptotic MSE has a unique minimum at $\lambda^*:=\delta (1-\pi+\sigma^2/\alpha^2)$. (except when $\pi=1,\sigma=0$).
\end{corollary}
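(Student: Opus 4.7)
The three variance-component formulas \eqref{varlabelf}, \eqref{varsamplef}, \eqref{varinitf} follow from Theorem~\ref{sobolthm} together with the ordering identities in Section~\ref{ordvar}: $\Slab = V_{ls}+V_{lsi}$ (using $V_l=V_{li}=0$), $\Ssam = V_s + V_{si}$, and $\Sini = V_i$. Substituting the limits from Theorem~\ref{sobolthm}, the $\tlam, \tth_1, \tth_2$ pieces cancel exactly between $V_s$ and $V_{si}$ (and between $V_{ls}$ and $V_{lsi}$), and the identity $\pi^2 + \pi(1-\pi) = \pi$ collapses the rest into the stated forms. Summing the three components yields the total variance \eqref{varf}; the simplification is streamlined by the identity $\lambda\gamma\theta_1^2 + (\lambda-\gamma+1)\theta_1 - 1 = 0$ noted after Definition~\ref{resolvent}.

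For the bias \eqref{biasf}: writing $\hat{f}(x) = x^\top W^\top \hbeta$ with $\hbeta = RWX^\top Y/n$ and $Y = X\theta + \Ep$, taking expectation over $\Ep$ and defining $M := \E_{X,W}[W^\top RW X^\top X/n]$ gives $\E_{X,\Ep,W}\hat{f}(x) = x^\top M\theta$. Since $\theta \sim \N(0,\alpha^2 I/d)$ and $\E xx^\top = I$,
\[
\Bl = \E_{\theta,x}(x^\top(M-I)\theta)^2 = (\alpha^2/d)\,\tr\bigl((M-I)^\top(M-I)\bigr).
\]
Now $\tr(W^\top RW X^\top X/n) = \tr(R \cdot WX^\top XW^\top/n) = p - \lambda\tr(R)$. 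Setting $\tilde{X}:=XW^\top$, the $n$ rows of $\tilde{X}$ have identity covariance, so $\tilde{X}^\top\tilde{X}/n$ has limiting spectral distribution $F_\gamma$ with $\gamma = \pi\delta$, yielding $\tr(M)/d \to \pi(1-\lambda\theta_1)$. A parallel but longer computation of $\tr(M^\top M)/d$ using the Haar deterministic equivalents of \cite{couillet2012random} shows this trace concentrates on $(\tr(M)/d)^2$, so $\Bl \to \alpha^2(1-\pi+\lambda\pi\theta_1)^2$. Then \eqref{msef} follows from $\mse = \Bl + \Vl + \sigma^2$ by elementary algebra.

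For the optimal $\lambda$: set $\kappa := 1-\pi + \sigma^2/\alpha^2$ and use $d\theta_1/d\lambda = -\theta_2$ to differentiate \eqref{msef}:
\[
\frac{d\,\mse}{d\lambda} = \alpha^2\pi(\lambda - \delta\kappa)\bigl(2\theta_2 + \lambda\cdot d\theta_2/d\lambda\bigr) = 2\alpha^2\pi(\lambda - \delta\kappa)\int\frac{x}{(x+\lambda)^3}\,dF_\gamma(x).
\]
The integral is strictly positive for any $\gamma > 0$, so the unique interior critical point is $\lambda^* = \delta\kappa = \delta(1-\pi+\sigma^2/\alpha^2)$; this is the global minimum since the derivative flips from negative to positive there. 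When $\pi = 1$ and $\sigma = 0$, $\kappa = 0$ pushes $\lambda^*$ to the boundary $\lambda = 0$, which lies outside the admissible domain $\lambda > 0$; this is the stated exception.

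The main obstacle is the bias computation: the variance side is a direct algebraic corollary of Theorem~\ref{sobolthm}, but the bias requires invoking the same Haar-random deterministic equivalent machinery that underlies Theorem~\ref{sobolthm} itself (specifically, controlling $\tr(M^\top M)/d$ beyond just $\tr(M)/d$). Once the bias and MSE are in hand, the optimization is a short calculus exercise via the factored form of $d\,\mse/d\lambda$.
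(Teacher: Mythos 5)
Your proposal is correct and mirrors the paper's own presentation: the paper's remark following the corollary explicitly states that, ``except for the simple formula for the bias,'' the result is a direct corollary of Theorem~\ref{sobolthm} because $\Slab,\Ssam,\Sini$ and $\var$ are all sums of variance components, and your ordering identities $\Slab=V_{sl}+V_{sli}$, $\Ssam=V_s+V_{si}$, $\Sini=V_i$ are exactly the right ones; the algebra you sketch (cancellation of the $\tlam,\tth_i$ pieces, then the $\pi^2+\pi(1-\pi)=\pi$ collapse) checks out, and your derivative factoring $\frac{d\,\mse}{d\lambda}=2\alpha^2\pi(\lambda-\delta\kappa)\int x/(x+\lambda)^3\,dF_\gamma(x)$ matches the paper's computation for $\lambda^*$.

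Two small imprecisions in the bias step are worth flagging, though they do not invalidate the argument. First, you describe what is needed as showing that ``$\tr(M^\top M)/d$ concentrates on $(\tr(M)/d)^2$,'' but since your $M$ is already the deterministic expectation $\E_{X,W}[W^\top R W X^\top X/n]$, nothing concentrates; the correct mechanism, used in the paper's Lemma~\ref{2llm1}, is that Haar symmetry forces $\E M$ to be a scalar multiple of the identity (via $\E_V VAV^\top=\tr(A)\,I_d/d$), whence $\|\E M - I\|_F^2/d = (\tr(\E M)/d - 1)^2$ identically, not approximately. Second, the reference to the deterministic equivalents of \cite{couillet2012random} is misplaced for the bias: those Haar deterministic-equivalent results enter only in the more intricate Lemmas~\ref{fnormewtm}--\ref{fnormewm} (controlling $\E_W\tM$ and $\E_W M$); for $\E M$ one only needs the elementary Haar-average identity combined with the Marchenko--Pastur theorem (or, in the non-Gaussian case, the Rubio--Mestre equivalent \citep{rubio2011spectral}).
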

{\bf Remark.} 
Except for the simple formula for the bias, theorem \ref{2lthm1} is direct corollary of theorem \ref{sobolthm}, since  $\Slab,\Ssam,\Sini$ and $\mathrm{Var}$ are all sums of several variance components.

{\bf Almost sure results over random true parameter.} Above, we provide average-case results over the true parameters $\theta\sim\N(0,\alpha^2I_d/d)$. With additional work, we show below a corresponding almost sure result. For the next result, we  assume that $X$ has iid Gaussian entries. 
\begin{theorem}[Almost sure result over true parameter $\theta$]\label{fixthetathm}
For each triple $(p_d,d,n_d)$, suppose that the true parameter $\theta$ is a sample drawn from $\N(0,\alpha^2I_d/d)$. Suppose in addition that each entry of $X$ is iid standard normal. Then as $d\to\infty$, Theorems
\ref{sobolthm} and \ref{2lthm1} still hold almost surely over the selection of $\theta$.
\end{theorem}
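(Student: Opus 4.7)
The plan is to combine rotational symmetry of the joint law of $(X,W,x)$ with concentration of $\|\theta\|^2$, upgrading the average-over-$\theta$ limits of Theorems~\ref{sobolthm} and~\ref{2lthm1} to almost-sure limits. The starting observation is that, for any fixed $\theta$, each quantity appearing in those theorems---the bias, total variance, ANOVA components $V_a,V_{ab},V_{abc}$, and ordered components $\Slab,\Ssam,\Sini$---is a polynomial of degree at most two in $\theta$, because $\hat{f}(x)=x^\top W^\top R W X^\top(X\theta+\Ep)/n$ is affine in $\theta$. After integrating out $X,\Ep,W,x$, each such quantity $q(\theta)$ therefore admits a representation
\begin{equation*}
q(\theta) \;=\; \theta^\top M_d\,\theta + r_d,
\end{equation*}
for a non-random matrix $M_d\in\R^{d\times d}$ and a non-random scalar $r_d$ (the latter absorbing the $\sigma^2$-only label-noise contributions that carry no $\theta$-dependence).

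The key step is to show that $M_d$ is a scalar multiple of $I_d$. Since $X$ has iid standard normal entries, $XO^\top\stackrel{d}{=}X$ for any orthogonal $d\times d$ matrix $O$; since $W$ is uniform on the Stiefel manifold, $WO^\top$ is again Haar Stiefel; and the Gaussian test feature satisfies $Ox\stackrel{d}{=}x$. Under the joint substitution $(X,W,x)\mapsto(XO^\top,WO^\top,Ox)$ the resolvent $R=(WX^\top X W^\top/n+\lambda I_p)^{-1}$ is unchanged, and direct expansion shows that $\hat f$ evaluated at parameter $\theta$ is mapped precisely to the estimator built from the new variables and evaluated at parameter $O\theta$. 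Hence the integrand defining $q$ is preserved up to relabeling, giving $q(O\theta)=q(\theta)$ for every orthogonal $O$ and every $\theta$. Combined with the quadratic form, this forces $O^\top M_d O = M_d$ for all orthogonal $O$; by Schur's lemma (applied to the standard representation of $O(d)$ on $\R^d$), we obtain $M_d=c_d I_d$ with $c_d=\tr(M_d)/d$. Therefore $q(\theta)=c_d\|\theta\|^2+r_d$.

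The final step is concentration. Theorems~\ref{sobolthm} and~\ref{2lthm1} applied to $\E_\theta q(\theta)=c_d\alpha^2+r_d$ yield $c_d\alpha^2+r_d\to L_q$, where $L_q$ is the stated limit of $q$. Writing $\theta=(\alpha/\sqrt{d})z$ with $z\sim\N(0,I_d)$, the strong law of large numbers applied to the iid $z_i^2$ gives $\|\theta\|^2\to\alpha^2$ almost surely. A uniform operator-norm bound on $M_d$---coming from the resolvent bound $\|R\|_{\op}\leq 1/\lambda$ and the boundedness of the relevant moments of $X$---keeps $c_d$ uniformly bounded in $d$, so
\begin{equation*}
q(\theta)-L_q \;=\; (c_d\alpha^2+r_d-L_q)\;+\;c_d\bigl(\|\theta\|^2-\alpha^2\bigr)\;\longrightarrow\;0\quad\text{a.s.}
\end{equation*}
Since Theorems~\ref{sobolthm} and~\ref{2lthm1} together involve only finitely many quantities, their almost-sure convergence holds simultaneously. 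The main obstacle is the careful verification of the joint rotational invariance: one must check that $(XO^\top,WO^\top,Ox)\stackrel{d}{=}(X,W,x)$, that $R$ is invariant under the paired substitution, and trace through how $\hat f$ transforms, so that the resulting integrals coincide with those obtained by replacing $\theta$ with $O\theta$. With this in place, the remainder---uniform boundedness of $c_d$ from resolvent estimates and application of the SLLN---is routine.
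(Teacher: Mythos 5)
Your proof is correct and takes a genuinely different route from the paper's. The paper establishes the multiple-of-identity property separately for each of the matrices $\E M^\top M$, $\E_W(\E_X M^\top\E_X M)$, $\E_X(\E_W M^\top\E_W M)$, by concrete computation: it exploits the Haar identity $\E_W W^\top A W = (\tr A/d)\,I_d$, the orthogonal decomposition $X = XW^\top W + X(I-W^\top W)$, the SVD of $XW^\top$, and a commutativity lemma. You instead package all of these into a single symmetry argument: the joint law of $(X, W, x)$ is invariant under $(X,W,x)\mapsto(XO^\top, WO^\top, Ox)$, the resolvent $R$ and the products $x^\top W^\top$, $WX^\top$ are preserved under the paired substitution, the prediction $\hat f$ is mapped to the same estimator evaluated at $O^\top\theta$, and therefore every quantity $q(\theta)$ in Theorems~\ref{sobolthm} and~\ref{2lthm1} (with $\theta$ held fixed) is rotation-invariant; since each $q(\theta)$ is a quadratic form $\theta^\top M_d\theta + r_d$, Schur's lemma forces $M_d = c_d I_d$. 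This is cleaner and explains \emph{why} the identity-multiple structure appears, at the cost of needing to verify the invariance of the iterated conditional expectations $\E_{-a}(\hat f\mid a)$ under the paired substitution---which does go through, but is the kind of bookkeeping your plan flags. Both approaches then reduce to showing $\|\theta\|^2\to\alpha^2$ almost surely.

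One small technical flaw: you invoke the strong law of large numbers for $\|\theta\|^2 = (\alpha^2/d)\sum_i z_i^2$, but the theorem statement allows $\theta$ to be a \emph{fresh} draw from $\N(0,\alpha^2 I_d/d)$ for each $d$, so this is a triangular array rather than a running average of a fixed iid sequence. The SLLN does not directly apply. The correct argument (which the paper uses) is a $\chi^2$ tail bound combined with the first Borel--Cantelli lemma, $\sum_d \PP{\big|\|\theta^{(d)}\|^2 - \alpha^2\big| > \epsilon} < \infty$, which holds for any coupling of the $\theta^{(d)}$ since it depends only on marginals. This is a one-line fix but worth stating precisely.
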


\subsubsection{Optimal Regularization Parameter; Monotonicity and Unimodality}

In this section, we present some theoretical results about the risks when using an optimal regularization parameter. Moreover, we study the monotonicity and unimodality of certain variance components in that setting.

We can find explicit formulas for the asymptotic bias and variance at the optimal $\lambda^*$, by plugging in the expressions of $\lambda^*$, $\theta_1$ into equations (\ref{biasf}), (\ref{varf}):
\begin{align}
	\lim_{d\to\infty}\bias^2(\lambda^*) 
 	&=
 \alpha^2(1-\pi+\lambda^*\pi\theta_1)^2\nonumber\\
	&=
\alpha^2\left(\frac{\delta (1-\sigma^2/\alpha^2)-1+\sqrt{(\delta (1+\sigma^2/\alpha^2)+1)^2-4\gamma}}{2\delta }\right)^2.\label{biasexpr}
\end{align}
\begin{align*}
\lim_{d\to\infty}\var(\lambda^*)
&=-\sigma^2\pi+(\alpha^2+\sigma^2\delta )\left(\frac{\delta (2\pi-1-\sigma^2/\alpha^2)-1+\sqrt{(\delta (1+\sigma^2/\alpha^2)+1)^2-4\gamma}}{2\delta ^2}\right).
\end{align*}
\begin{align}
\lim_{d\to\infty}\mse(\lambda^*)
& =\alpha^2\left[1-\pi+ \lambda^*\pi\theta_1\right]+\sigma^2\nonumber\\
&=
\alpha^2\left(\frac{\delta (1-\sigma^2/\alpha^2)-1+\sqrt{(\delta (1+\sigma^2/\alpha^2)+1)^2-4\gamma}}{2\delta }\right)+\sigma^2.\label{mseexpr}
\end{align}


\renewcommand{\arraystretch}{1.2}
\begin{table}[htb]
\centering
	\begin{tabular}{|l|l|l|}
		\hline
		\diagbox{Function}{Variable}    & 
       parametrization $\pi = \lim p/d$ &aspect ratio $\delta= \lim d/n$  \\ \hline
       MSE & $\searrow$ &$\nearrow$\\ \hline
       $\bias^2$ & $\searrow$ & $\nearrow$  \\ \hline
      Var &\makecell[l]{ $\delta <2\alpha^2/(\alpha^2+2\sigma^2)$: $\wedge$, max\\ at $[2+\delta (1+2\sigma^2/\alpha^2)]/4.$\\
       $\delta \geq2\alpha^2/(\alpha^2+2\sigma^2)$: $\nearrow.$ \
      } &\makecell[l]{$\pi\leq0.5:$ $\searrow.$\\$\pi>0.5$: $\wedge$, max at\\ $2(2\pi-1)/[1+2\sigma^2/\alpha^2].$ } \\ \hline
      $\Slab$ &$\nearrow$ & $\wedge$: max at $\alpha^2/(\alpha^2+\sigma^2)$\\ \hline
      $\Sini$ &$\wedge$ &$\searrow$ \\ \hline
      $\Ssam$ &conjecture: $\nearrow$ or $\wedge$ & conjecture: $\wedge$ \\ \hline

	\end{tabular}
	\caption{Monotonicity properties of various components of the risk  at the optimal $\lam^*$, as a function of $\pi$ and $\delta$, while holding all other parameters fixed. $\nearrow$: non-decreasing. $\searrow$: non-increasing. $\wedge$: unimodal. Thus, e.g., the MSE is non-increasing as a function of the parameterization level $\pi$, while holding $\delta$ fixed.}
	\label{allmonotonicity}
\end{table}

\begin{figure}[htb]
	\centering
	\includegraphics[width=.49\textwidth]{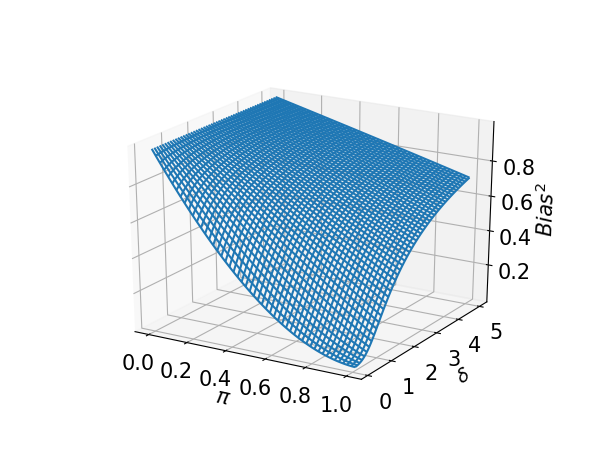}
	\includegraphics[width=.49\textwidth]{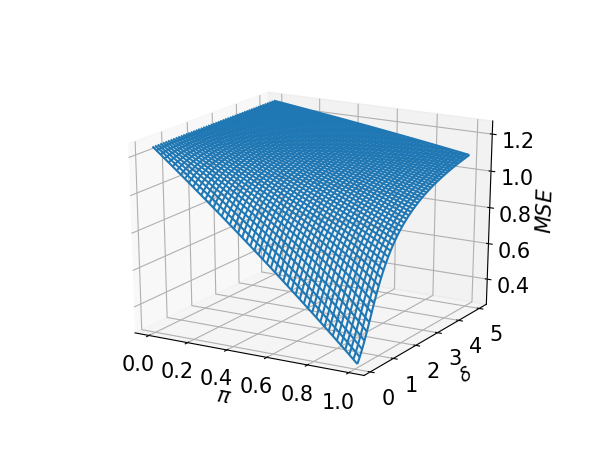}\\
	\includegraphics[width=.49\textwidth]{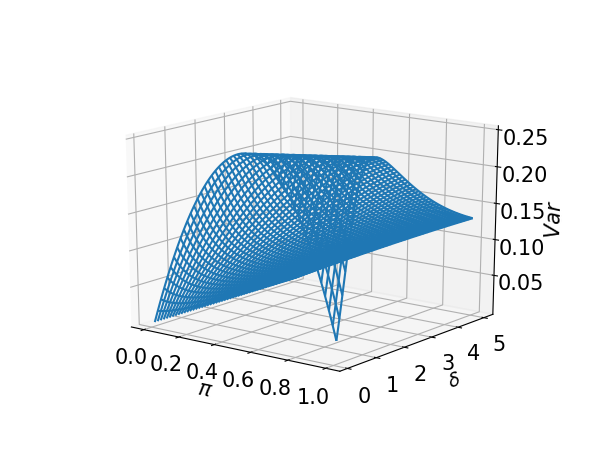}
	\includegraphics[width=.49\textwidth]{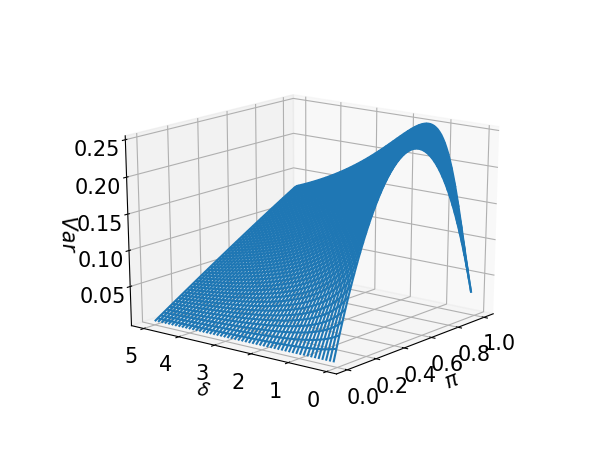}
	\caption{Perspective plots of the performance characteristics. Top row: $\bias^2$ (left), MSE (right). Bottom row: variance, from two perspectives. As functions of $\pi, \delta$, at the optimal $\lambda^*=\delta (1-\pi+\sigma^2/\alpha^2)$, when $\alpha=1$ and $\sigma=0.5$.}
	\label{fig: 2layer bias var mse}
\end{figure}

From Theorem \ref{2lthm1}, we know that the optimal ridge penalty is $\lam^*=\delta(1-\pi+\sig^2/\alpha^2)$. Thus, by plugging the expression of $\lam^*$ into \eqref{biasf}---\eqref{msef}, we are able to study the properties of the MSE, bias and variance components at the optimal $\lam^*$ as functions of $\pi,\delta$. Our results are summarized in Table \ref{allmonotonicity}. See Figure \ref{fig: 2layer bias var mse} for illustration.

As for the monotonicity and unimodality properties, we have the following statement, where the properties are summarized in Table \ref{allmonotonicity} for clarity.

\begin{theorem}[Monotonicity and unimodality]\label{2lmon}
 Under the assumptions above, the MSE, Bias, and components of the sequential variance decomposition in the $l-i-s$ order have the monotonicity and unimodality properties summarized in Table \ref{allmonotonicity}. For instance, the MSE is non-increasing as a function of the parameterization level $\pi$, while holding $\delta$ fixed.
\end{theorem}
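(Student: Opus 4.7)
The plan is to build on the closed-form expressions \eqref{biasexpr}--\eqref{mseexpr} already given for $\mse(\lam^*), \bias^2(\lam^*)$, and to derive analogous closed forms for $\Slab(\lam^*), \Sini(\lam^*)$ from Corollary \ref{2lthm1} by substituting $\lam^*$. The crucial simplification is that at $\lam^*$ one has $\lam^*+\gamma+1=\delta c'+1$ with $c':=1+\sigma^2/\alpha^2$, so the Marchenko--Pastur discriminant $D:=\sqrt{(\lam+\gamma+1)^2-4\gamma}$ appearing in \eqref{exprtheta1}--\eqref{exprtheta2} collapses to $u:=\sqrt{(\delta c'+1)^2-4\delta\pi}$. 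Setting $g:=[\delta(1-\sigma^2/\alpha^2)-1+u]/(2\delta)$, one recovers $\mse(\lam^*)/\alpha^2=g+\sigma^2/\alpha^2$ and $\bias^2(\lam^*)/\alpha^2=g^2$ directly from \eqref{biasexpr}--\eqref{mseexpr}, and a direct computation using $\theta_1-\lam\theta_2=\tfrac{d}{d\lam}(\lam\theta_1)=[(\lam+\gamma+1)-D]/(2\gamma D)$ together with the identity $(\delta c'+1)^2-u^2=4\delta\pi$ yields
\begin{equation*}
\var(\lam^*)/\alpha^2=g(1-g),\qquad \Sini(\lam^*)=\frac{4\alpha^2\pi(1-\pi)}{(\delta c'+1+u)^2},\qquad \Slab(\lam^*)=\frac{\sigma^2(\delta c'+1-u)}{2u}.
\end{equation*}
The bounds $0\le g\le 1$ follow from $u^2-(1-\delta c)^2=4\delta(1+\delta\sigma^2/\alpha^2-\pi)\ge 0$ (where $c=1-\sigma^2/\alpha^2$) and $u\le\delta c'+1$.

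All monotonicity claims about MSE, $\bias^2$, $\var$ then reduce to properties of $g(\pi,\delta)$. Direct differentiation gives $\partial u/\partial\pi=-2\delta/u$, so $\partial g/\partial\pi=-1/u<0$. For $\partial g/\partial\delta\ge 0$, the key estimate is $u\ge\delta c'+1-2\delta\pi$, which upon squaring reduces to $4\delta^2\pi(c'-\pi)\ge 0$, trivially true since $c'\ge 1\ge\pi$. Since $g\ge 0$, MSE and $\bias^2$ inherit the monotonicity of $g$: decreasing in $\pi$, non-decreasing in $\delta$. For $\var=\alpha^2 g(1-g)$, the map $g\mapsto g(1-g)$ is unimodal on $[0,1]$ with peak at $g=1/2$; composing with the monotonicity of $g$ and the boundary values $g|_{\pi=0}=1$, $g|_{\delta\to 0^+}=1-\pi$, $g|_{\delta\to\infty}=1$ yields the conditional shapes in Table \ref{allmonotonicity}. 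The threshold $\delta\le 2\alpha^2/(\alpha^2+2\sigma^2)$ and the peak locations $\pi^*=[2+\delta(1+2\sigma^2/\alpha^2)]/4$ and $\delta^*=2(2\pi-1)/(1+2\sigma^2/\alpha^2)$ are obtained by solving $g=1/2$ explicitly.

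For $\Slab$, setting $t:=u/(\delta c'+1)\in(0,1)$ gives $\Slab=\sigma^2(1-t)/(2t)$, strictly decreasing in $t$. Since $t^2=1-4\delta\pi/(\delta c'+1)^2$ is strictly decreasing in $\pi$, $\Slab$ is strictly increasing in $\pi$. A direct calculation gives $\partial(t^2)/\partial\delta=-4\pi(1-\delta c')/(\delta c'+1)^3$, whose sign changes at $\delta=1/c'=\alpha^2/(\alpha^2+\sigma^2)$; hence $t$ has a minimum there and $\Slab$ is unimodal in $\delta$ with peak at that value. For $\Sini$, the $\delta$-monotonicity follows from $\partial(\delta c'+1+u)/\partial\delta=[c'(\delta c'+1+u)-2\pi]/u$, whose non-negativity reduces after squaring the inequality $c'u\ge 2\pi-c'(\delta c'+1)$ to $4\pi(c'-\pi)\ge 0$, true since $c'\ge 1\ge\pi$; so the denominator of $\Sini$ is increasing in $\delta$ and $\Sini$ is decreasing.

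The main technical obstacle is the $\pi$-unimodality of $\Sini$. Since $\Sini(0)=\Sini(1)=0$ and $\Sini>0$ on $(0,1)$, an interior maximum exists. Setting the log-derivative $d\ln\Sini/d\pi=(\delta c'+1)/(u\pi)-1/(1-\pi)$ to zero and squaring gives the critical-point equation $(\delta c'+1)^2(2\pi-1)=4\delta\pi^3$, which has no solution in $(0,1/2]$ because the left side is then nonpositive while the right is positive. On $(1/2,1)$, analyzing $h(\pi):=4\delta\pi^3/(2\pi-1)$ shows it is $U$-shaped with minimum $27\delta/8$ at $\pi=3/4$ and boundary value $h(1)=4\delta$. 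Verifying $(\delta c'+1)^2\ge 4\delta$ (equivalent to $\delta^2 c'^2+2\delta(c'-2)+1\ge 0$, whose discriminant $16(1-c')\le 0$ since $c'\ge 1$) shows $(\delta c'+1)^2\ge h(1)>h(3/4)$, forcing a single crossing of $h$ in $(1/2,3/4)$ and none in $[3/4,1)$; hence a unique interior critical point and unimodality. The conjectural claim for $\Ssam$ is left open by this plan.
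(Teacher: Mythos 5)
Your proof is correct and follows essentially the same route as the paper: substitute the closed forms for $\theta_1,\theta_2$ at $\lambda=\lambda^*$, observe that the Marchenko--Pastur discriminant collapses to $u=\sqrt{(\delta c'+1)^2-4\gamma}$, and then differentiate in $\pi$ and $\delta$ and analyze signs. Your presentation is cleaner in two ways that are worth noting. First, by naming $g=(\mse-\sigma^2)/\alpha^2$ and writing $\mse=\alpha^2g+\sigma^2$, $\bias^2=\alpha^2g^2$, $\var=\alpha^2g(1-g)$ explicitly, the variance unimodality becomes immediate from the monotonicity and range of $g$ composed with the parabola $g\mapsto g(1-g)$; the paper uses this factorization implicitly in the line $\frac{d}{d\pi}\var=(\frac{d}{d\pi}\mse)(1-\frac{2}{\alpha^2}(\mse-\sigma^2))$ but does not isolate $g$. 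Second, your closed form $\Slab(\lambda^*)=\sigma^2(v-u)/(2u)$ is the correct one; the paper's printed final expression $\sigma^2c/\sqrt{c^2-4\gamma}$ is an algebra slip (it should be $\sigma^2(c-\sqrt{c^2-4\gamma})/(2\sqrt{c^2-4\gamma})$, as a direct check of $\theta_1-\lambda^*\theta_2=(v-u)/(2\gamma u)$ shows). Since the two expressions are related by a positive affine map they have the same sign of derivative, so the paper's monotonicity and peak-location conclusions for $\Slab$ are nonetheless unchanged, and your derivation recovers them correctly. Your handling of the $\pi$-unimodality of $\Sini$ via the $U$-shaped $h(\pi)=4\delta\pi^3/(2\pi-1)$ is a small reorganization of the paper's analysis of $f(\pi)=4\delta\pi^3-2c^2\pi+c^2$ (note $f(\pi)=-(2\pi-1)h(\pi)+ \text{const}$ in spirit; more precisely $f(\pi)=4\delta\pi^3-c^2(2\pi-1)$, so $f=0\Leftrightarrow c^2=h(\pi)$ on $\pi>1/2$), and is if anything slightly more careful than the paper about the regime where $f$ is not globally decreasing on $(0,1]$. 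No gaps; the one item you explicitly leave open, $\Ssam$, is also left as a conjecture in the paper.
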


We provide some observations below.

{\bf Consistency with prior work.} The MSE result is consistent with optimal regularization mitigating double descent, which was shown in finite samples in a certain two-layer Gaussian  model in \cite{nakkiran2021optimal}. However, our result holds for more general distributions of data matrices with arbitrary iid entries, while only proven asymptotically. 

{\bf Variance as a function of $\delta$.} For fixed parametrization level $\pi = \lim p/d$, as a function of the ``dimensions-per-sample" parameter $\delta  = \lim d/n$, the variance is monotonically decreasing when $\pi<0.5$, and unimodal with a peak at $2(2\pi-1)/[1+2{\sigma^2}/{\alpha^2}]]$ when $\pi\geq0.5$. This prompts the question why $\pi=1/2$ is special? The special role of this value was also noted in \cite{yang2020rethinking}. Recall that $d$ is the original dimension, while $p$ is the number of features in the intermediate layer, and $\pi = \lim p/d$. While the role of $\pi=1/2$  does not seem straightforward to understand, qualitatively for large $\pi$ we keep a lot of features in the inner layer. This is close to a ``well-specified" model. Thus, when we increase the size of the data set  (and thus decrease $\delta = \lim d/n$), it is reasonable that the variance decreases. In contrast, regardless of $\pi$, when we severely decrease the size of the data set   (and thus increase $\delta = \lim d/n$), the optimal ridge estimator will regularize more strongly, and thus it is possible that its variance may decrease (which is what we indeed observe). 


{\bf Variance as a function of $\pi$.} For small $\delta$, the variance is unimodal with respect to $\pi$. A possible heuristic is as follows. Recall that $\pi = \lim p/d$ ($d$ is data dimension, $p$ is number of features in inner layer) denotes the amount of ``parametrization" we allow. When $\pi\approx0$, the number of features in the inner layer is very small, which effectively corresponds to a ``low signal strength" problem (see also our added noise interpretation below). The optimal ridge estimator thus employs strong regularization, and acts like a constant estimator, thus the variance is almost zero. When $\pi=1$, we are using the correct number of features to estimate $\theta$, thus the variance is also small. The variance is zero when $\sigma=0$, and we can plot it when $\sigma>0$ (see Figure \ref{fig: 2layer bias var mse}). The above reasoning also suggests the variance may be larger for intermediate values of $\pi$. Thus, the unimodality of the variance with respect to $\pi$ is perhaps reasonable.

 Beyond our results on $\Slab$ and $\Sini$, we conjecture based on numerical experiments that $\Ssam$ is unimodal as a function of $\delta$ and can be either unimodal or monotone as a function of $\pi$. However, this appears more challenging to establish. 

{{\bf Comparison with \cite{rocks2020memorizing}.} In their paper, they suggest that the training process  $W$ should be separated from the sampling of the training data $X,\varepsilon$ when studying the variance. Thus, 
 they calculate the variance by fixing $\theta,W$, computing conditional variances (due to $X,\ep$), then taking expectation over $\theta,W$. This can, in principle, still be recovered from our general framework, if we look at the variance components conditioned on $\theta,W$. 
 In constrast, we consider the randomness arising from all components together. Our approach allows us to study some problems that do not easily fall within the scope of the conditional approach. For instance, we can provide intuition for why ensembling works; namely that it can reduce the interaction effect $V_{si}$.}

{{\bf Multiple descent.} It has been argued that  other possible shapes of the test error, such as multiple descents, can arise. \cite{liang2020multiple} study kernel regression under the limiting regimes $d\sim n^{c}, c\in (0,1)$. They provide an upper bound for the MSE and show its multiple descent behavior as $c$ increases. \cite{adlam2020neural} study the neural tangent kernel under the limiting regimes $p\sim n^{c}$, $c=1,2$ and observe that the MSE has a triple descent shape as a function of $p$. To conclude, the MSE may exhibit multiple descent when considering different asymptotic regimes. However, since we only consider the proportional limit where $p/d\to\pi,d/n\to\delta$, we have not found evidence of multiple descent in our setting.}

{\bf Remark: Fully linear regression.} The monotonicity of the MSE and bias, and the unimodality of variance at the optimal $\lam^*$ also appear in the simpler (one-layer) linear setting.
Namely, we consider the usual linear model $Y=X\theta+\Ep$, where $X\in\R^{n\times d}$ is the data matrix and $Y\in\R^{n\time 1}$ is the response. We fit a linear regression of $Y$ on $X$, which can be seen as a special case of our two layer setting with $W=I_d$ (and $d=p$). We use the same assumptions (except the assumptions on $W$) and notations as in the two layer setting. In particular, we assume $n,p\to\infty$ with $p/n\to \gamma>0$, where the aspect ratio $\gamma$ is now a measure of the parametrization level. We have the following result.

\begin{proposition}[Properties of the limiting MSE, bias \& variance in linear setting]\label{lin}  Under the same assumptions as in the two-layer setting, the limiting characteristics of optimally tuned ridge regression $(\lam^*=\ga/\alpha^2)$ have the following properties as a function of the degree of parameterization $\gamma$:
\benum
\item
 $\mse(\gamma)$ is increasing  as a function of $\gamma$.
 \item $\bias^2(\gamma)$ is increasing as a function of $\gamma$.
 \item $\var(\gamma)$ is unimodal as a function of $\gamma$, with maximum at $\gamma = \alpha^2/(\alpha^2+1)$. 
\item At the maximum, the bias equals the variance: $\bias^2[\alpha^2/(\alpha^2+1)]=\var[\alpha^2/(\alpha^2+1)]$. 
 \eenum
\end{proposition}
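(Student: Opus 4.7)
The plan is to specialize Corollary \ref{2lthm1} to $\pi=1$ (so that $W=I_d$ and $\gamma=\delta$) and reduce all three quantities of interest to a single auxiliary variable. Under the proposition's normalization $\sigma^2 = 1$ (which is what makes the two-layer formula $\lam^* = \delta(1-\pi+\sigma^2/\alpha^2)$ collapse to $\lam^* = \gamma/\alpha^2$), I would introduce $\xi := \gamma\, \theta_1(\gamma, \gamma/\alpha^2)$. The formulas \eqref{biasf}, \eqref{varf}, \eqref{msef} simplify substantially at $\pi=1$ (the $1-\pi$ factors vanish) and at $\lam=\lam^*$ (the $\theta_2$ terms in the MSE cancel), yielding the clean trio
\[
\mse(\lam^*) = 1 + \xi, \qquad \bias^2(\lam^*) = \xi^2/\alpha^2, \qquad \var(\lam^*) = \xi - \xi^2/\alpha^2.
\]

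Second, I would invoke the identity $\lam\gamma\theta_1^2 + (\lam-\gamma+1)\theta_1 - 1 = 0$ from the remark after Definition \ref{resolvent}, at $\lam = \gamma/\alpha^2$, to derive the quadratic
\[
\xi^2 + (1 - \alpha^2 + \alpha^2/\gamma)\xi - \alpha^2 = 0.
\]
Taking the unique positive root and differentiating implicitly gives $d\xi/d\gamma > 0$: letting $B := 1-\alpha^2+\alpha^2/\gamma$, the term $2\xi+B$ equals the positive square root of the discriminant, and the remaining piece is proportional to $\alpha^2\xi/\gamma^2 > 0$. Since $\xi$ is thus strictly increasing in $\gamma$, both $\mse(\lam^*) = 1+\xi$ and $\bias^2(\lam^*) = \xi^2/\alpha^2$ are strictly increasing, which establishes points (1) and (2).

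For points (3) and (4), observe that $\var = \xi(1-\xi/\alpha^2)$ is a downward-opening parabola in $\xi$, uniquely maximized at $\xi = \alpha^2/2$ with maximum value $\alpha^2/4$. Combined with the monotonicity of $\xi$ in $\gamma$, this makes $\var$ unimodal in $\gamma$, with the location of the maximum recovered by substituting $\xi = \alpha^2/2$ into the quadratic and solving for $\gamma$. At the maximum, $\bias^2 = (\alpha^2/2)^2/\alpha^2 = \alpha^2/4$ coincides with $\var = \alpha^2/4$, giving (4). The main obstacle is minor algebraic bookkeeping rather than a conceptual difficulty; the essential insight is that under optimal tuning, both $\mse$ and $\bias^2$ become functions of the single scalar $\xi$, which makes the downward parabola structure of $\var(\xi)$—and hence both the unimodality and the equality $\bias^2 = \var$ at the peak—completely transparent.
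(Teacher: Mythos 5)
Your plan has a genuine flaw that would surface the moment you actually solve the quadratic in step (3): the bias and variance you obtain by setting $\pi=1$ in Corollary~\ref{2lthm1} are \emph{not} the bias and variance that Proposition~\ref{lin} is about. The two-layer section defines $\Bl$ with $\E_{X,\Ep,W}$ taken \emph{inside} the square, so at $\pi=1$ you get $\bias^2 = \alpha^2\lambda^2\theta_1^2$, whereas the linear proposition (following \cite{liu2019ridge}) uses the classical decomposition where the expectation over $X$ sits \emph{outside}, giving $\bias^2 = \alpha^2\lambda^2\theta_2$ and $\var = \gamma(\theta_1-\lambda\theta_2)$. The term $\alpha^2\lambda^{*2}(\theta_2-\theta_1^2)\ge 0$ is the sampling fluctuation of $\hbeta$ due to $X$, which the two-layer convention counts as variance and the linear proposition counts as bias. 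Since the MSE is convention-independent, your parts (1) and (2) (as conclusions) survive, and your equality $\bias^2=\var$ at the peak is a pleasant coincidence that also holds in your convention. But the location of the variance peak is convention-dependent: carrying out your step (3) by substituting $\xi=\alpha^2/2$ into $\xi^2+(1-\alpha^2+\alpha^2/\gamma)\xi-\alpha^2=0$ gives $\gamma = 2\alpha^2/(\alpha^2+2)$, not the claimed $\gamma=\alpha^2/(\alpha^2+1)$. (For $\alpha=1$: $2/3$ versus $1/2$.) The paper's proof instead starts directly from the Liu--Dobriban forms, writes $\theta_1,\theta_2$ explicitly, and after changes of variables reduces the variance to $g_t(x)=(x+1)/\sqrt{x^2+2tx+1}$ with a sign change of $g_t'$ exactly at $x=(c+1)\gamma=1$, which is what produces $\gamma=\alpha^2/(\alpha^2+1)$.

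Your $\xi := \gamma\theta_1$ substitution and the use of the identity $\lam\ga\theta_1^2+(\lam-\ga+1)\theta_1-1=0$ to close the system is genuinely slicker than the paper's brute-force differentiation, and your observation that under optimal tuning the whole trio collapses to $\mse=1+\xi$, $\bias^2=\xi^2/\alpha^2$, $\var=\xi-\xi^2/\alpha^2$ with $\xi$ monotone in $\gamma$ is a nice unifying idea. But to prove Proposition~\ref{lin} as stated you need to run the same computation with the conditional-on-$X$ bias $\alpha^2\lambda^2\theta_2$ and variance $\gamma(\theta_1-\lambda\theta_2)$; your $\xi$-based shortcut does not apply there because those two pieces are no longer both polynomials in the single scalar $\gamma\theta_1$.
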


Figure 6 in \cite{liu2019ridge} shows the MSE, variance and bias at optimal $\lam^*$. However, that work only studied it visually, and not theoretically.  The result on the MSE has appeared before as Proposition 6 of \cite{dobriban2020wonder}, in a different context. However, the results on the bias and variance have not been considered in that work.

\subsection{Further Properties of the Bias, Variance and MSE}
In this section, we report some further properties of the bias, variances and MSE, including but not limited to the optimal ridge parameter setting.

\subsubsection{Relation Between MSE and Bias at Optimum}
\label{surp}

We present a somewhat surprising relation between MSE and bias at the optimal $\lambda^*=\lambda^*(\delta,\pi,\alpha^2,\sigma^2)$: Let $\bias^2:=\lim_{d\to\infty}\bias^2(\lambda^*)$, $\bias=|\sqrt{\bias^2}|$,  and denote $\mse$$:=\lim_{d\to\infty}$$\mse(\lambda^*)$. In general for all $\lambda$, we have that {$\mse(\lambda) = \Bl + \var(\lambda)+\sigma^2$}. {Also, in prominent problems such as in non-parametric statistics, optimal rates are achieved by balancing bias and variance \cite[e.g.,][etc]{ibragimov2013statistical}. Thus we are interested to see if the bias and variance are also balanced at the optimal $\lambda$ in our case.} However, based on the explicit expressions above, the bias and variance are balanced via the signal strength $\alpha^2$ via
\begin{align*}
\var &=\bias \cdot (\alpha-\bias).
\end{align*}
This holds for any $\pi,\delta $ and $\alpha$. Thus, the MSE and bias are linked in a nontrivial way at the optimal $\lambda$.  We see that the optimal squared bias and variance are in general not equal at the optimum. Instead, we have the above relation, which also balances the bias with the \emph{signal strength} $\alpha^2$. We think that this explicit relation is remarkable.

\subsubsection{Fixed Regularization Parameter}
\begin{figure}[htb]
\centering
\includegraphics[width=0.49\linewidth]{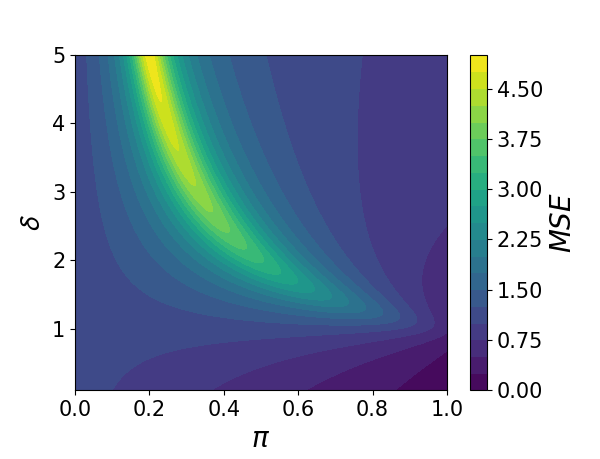}
\includegraphics[width=0.45\linewidth]{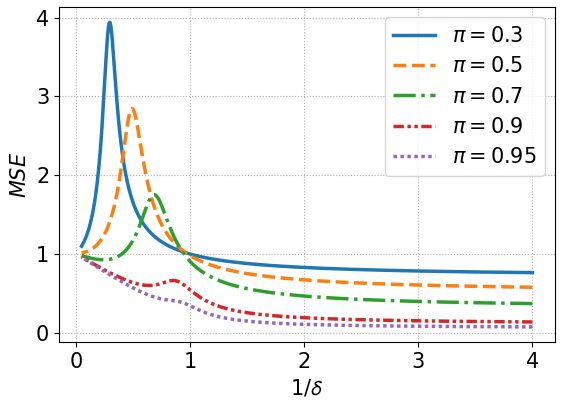}
\caption{{\bf Left}: Asymptotic MSE of ridge models when $\lambda=0.01$, $\sigma=0.3$, $\alpha=1.$ {\bf Right}: Asymptotic MSE as function of $1/\delta $ when $\lambda=0.01$, $\sigma=0.3$, $\alpha=1$. {(Note: this figure is plotted as a function of $1/\delta$, instead of $\delta$ as before. Increasing $1/\delta$ is equivalent to increasing the number of samples $n$.)}}
\label{fig: 2layer_fixed_lambda_mse}
\end{figure}

From Figure \ref{fig: 2layer bias var mse} and Theorem \ref{2lmon} above, we can see that the MSE is monotone decreasing with respect to the parametrization level $\pi=\lim p/d$ if we choose the optimal $\lambda^*$. This is consistent with ``double descent being mitigated", as in the results of \cite{nakkiran2021optimal} for a different problem. 

Here we provide additional results for a \emph{suboptimal} choice of $\lambda$. Specifically, we consider the simplest case when $\lambda$ is fixed across problem sizes. In contrast, we find that double descent is \emph{not} mitigated, and may occur when we use a small regularization parameter (also referred to as the ridgeless limit).
In Figure \ref{fig: 2layer_fixed_lambda_mse} (left), we fix $\lambda=0.01$ and plot a heatmap of the asymptotic MSE as function of the two variables $\pi = \lim p/d$ and $\delta = \lim d/n$. Clearly, the MSE is in general not monotone with respect to $\pi$ or $\delta $. Note the peak in the MSE around the curve $\gamma = \delta\pi=1$, or equivalently $\delta=1/\pi$. This corresponds to the ``interpolation threshold" where $\lim p/n =1$, and the number of learned parameters $p$ is close to the number of samples $n$. Thus, we fit just enough parameters to interpolate the data.

Besides, we see in Figure \ref{fig: 2layer_fixed_lambda_mse} (right) that double descent (which we interpret as a change of monotonicity, or a peak in the risk curve) with respect to $1/\delta =\lim n/d$ occurs when $\pi$ is suitably large, e.g., $\pi=0.9$, while the MSE is unimodal when $\pi$ is small, e.g., $\pi=0.5$. The intuition is, as in many previous works on double descent (e.g., \citealp{lecun1991second,hastie2019surprises}), that the suboptimal regularization can lead to a somewhat ill-conditioned problem, which increases the error (see also section \ref{understand_opt} for more explanation). Thus, we see that here as in prior works, using a suboptimal penalty $\lambda$ may lead to non-monotone MSE.

Moreover, we can obtain some quantitative results about the bias and variance with fixed values of the regularization parameter $\lambda$.
\begin{theorem}[Bias and variance of ridge models given a fixed $\lambda$]
Under the assumptions in our two layer setting, we have 
\begin{enumerate}
\item For any fixed $\lambda>0$, $\lim_{d\to\infty}\Bl$ is monotonically decreasing as a function of $\pi$ and is monotonically increasing as a function of $\delta$. 

\item $\lim_{\lambda\to0}\lim_{d\to\infty}\var(\lambda)=\infty$ on the curve $\delta=1/\pi$ (the interpolation threshold where $\lim p/d =1$). {More specifically, when $\lambda\to0$, $V_{si},V_{sli}$ goes to infinity while other variance components converge to some finite limits on the curve $\delta=1/\pi$.}
\end{enumerate}
\label{bias_var_for_fixed_lambda}
\end{theorem}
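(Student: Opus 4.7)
The plan is to prove both parts by directly manipulating the explicit formulas from Theorem \ref{sobolthm} and Corollary \ref{2lthm1}, exploiting the algebraic identities satisfied by the resolvent moments (stated in the Remark after Definition \ref{resolvent}).

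For Part 1, starting from $\lim_{d\to\infty}\Bl = \alpha^2(1-\pi+\lambda\pi\theta_1)^2$ with $\theta_1 = \theta_1(\pi\delta,\lambda)$, I would set $g(\pi,\delta) := 1-\pi+\lambda\pi\theta_1$ and first verify $g\geq 0$ by rewriting $g = 1-\pi(1-\lambda\theta_1)$ and noting $\lambda\theta_1 = \int \lambda/(x+\lambda)\,dF_\gamma(x)\in[0,1]$. Differentiating the identity $\lambda\gamma\theta_1^2+(\lambda-\gamma+1)\theta_1 = 1$ in $\gamma$, then using the identity itself to simplify, yields
\[
\partial_\gamma\theta_1 = \frac{\theta_1^2(1-\lambda\theta_1)}{\lambda\gamma\theta_1^2+1} \geq 0.
\]
Applying the chain rule with $\gamma=\pi\delta$, one obtains $\partial_\pi g = -(1-\lambda\theta_1) + \lambda\pi\delta\,\partial_\gamma\theta_1$, which (factoring and using $\pi\delta=\gamma$) collapses to $-(1-\lambda\theta_1)/(\lambda\gamma\theta_1^2+1)\leq 0$. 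Similarly, $\partial_\delta g = \lambda\pi^2\,\partial_\gamma\theta_1\geq 0$. Since $g\geq 0$, these signs pass through to $g^2$, giving both monotonicities.

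For Part 2, I would first establish the boundary behavior of the resolvent moments on the curve $\gamma=\pi\delta=1$ as $\lambda\to 0$. From $\lambda\theta_1^2+\lambda\theta_1=1$ one reads off $\theta_1\sim\lambda^{-1/2}$, so $\lambda\theta_1\to 0$; differentiating gives $\theta_2 = 1/[\lambda^2(2\theta_1+1)]\sim\lambda^{-3/2}/2$, so $\lambda^2\theta_2\to 0$ but $\theta_1-\lambda\theta_2\sim\lambda^{-1/2}/2\to\infty$. For the adjusted quantities, focusing on $\pi<1$ (so $\delta=1/\pi>1$), a short expansion of the definition yields $\tlam\sim\tfrac{1-\pi}{\pi}\sqrt{\lambda}\to 0$. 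Using the analogous implicit identity $\tlam\delta\tth_1^2+(\tlam-\delta+1)\tth_1 = 1$, one algebraically derives the cancellation identity
\[
\tth_1 - \tlam\tth_2 = \frac{1-\tlam\tth_1}{2\tlam\delta\tth_1+\tlam-\delta+1}.
\]
Since $F_\delta$ has an atom at $0$ of mass $1-1/\delta$ for $\delta>1$, one gets $\tlam\tth_1\to 1-1/\delta$ and $\tlam^2\tth_2\to 1-1/\delta$, while $\tth_1-\tlam\tth_2$ converges to the finite limit $1/[\delta(\delta-1)]$. Substituting these into the formulas of Theorem \ref{sobolthm} shows $V_l=V_{li}=0$ and $V_s,V_i,V_{sl}$ finite, while $V_{si}$ and $V_{sli}$ inherit the divergent contributions $\alpha^2\pi(1-\pi)\delta(\theta_1-\lambda\theta_2)$ and $\sigma^2\pi\delta(\theta_1-\lambda\theta_2)$ respectively, both of order $\lambda^{-1/2}\to\infty$.

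The main technical obstacle is controlling the adjusted pair $(\tth_1,\tth_2)$: unlike $\theta_i$, which are given by the closed forms \eqref{exprtheta1}--\eqref{exprtheta2}, the adjusted quantities must be handled through the implicit Marchenko-Pastur identity combined with the specific edge scaling $\tlam\asymp\sqrt{\lambda}$. The crucial point is that although $\tth_1$ and $\tlam\tth_2$ individually diverge at rate $1/\tlam$, their leading terms coincide and the cancellation identity above delivers a finite limit for the difference; this is precisely what keeps $V_s$ and $V_{sl}$ bounded in the regime $\lambda\to 0,\ \gamma\to 1$. Once this cancellation is in hand, Part 2 reduces to careful substitution into the formulas of Theorem \ref{sobolthm}.
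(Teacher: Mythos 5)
Your proposal is correct and reaches the same conclusions as the paper, but by a genuinely different and in places cleaner route. For Part~1, the paper substitutes the closed-form expression \eqref{exprtheta1} for $\theta_1$ into $\Bl$ and differentiates the resulting square-root expression, reading off signs by comparing terms under the radical; you instead differentiate the implicit Marchenko--Pastur relation $\lambda\gamma\theta_1^2+(\lambda-\gamma+1)\theta_1=1$ to obtain $\partial_\gamma\theta_1=\theta_1^2(1-\lambda\theta_1)/(\lambda\gamma\theta_1^2+1)\geq0$, after which the chain rule collapses $\partial_\pi g$ and $\partial_\delta g$ to manifestly signed expressions. This avoids wrestling with $\sqrt{(-\lambda+\gamma-1)^2+4\lambda\gamma}$ and makes strict monotonicity immediate from $1-\lambda\theta_1=\int x/(x+\lambda)\,dF_\gamma>0$. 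For Part~2, the paper computes $\var(\lambda)$ directly from \eqref{varf}, shows it is $O(\lambda^{-1/2})$ on $\gamma=1$, and then disposes of the component-level claim with a one-line remark about ``similar calculations.'' You fill in exactly that gap: the cancellation identity $\tth_1-\tlam\tth_2=(1-\tlam\tth_1)/(2\tlam\delta\tth_1+\tlam-\delta+1)$, derived from the implicit relation for $\tth_1$, together with the atom of mass $1-1/\delta$ at zero in $F_\delta$, yields the finite limits $\tlam\tth_1,\tlam^2\tth_2\to1-1/\delta$ and $\tth_1-\tlam\tth_2\to1/[\delta(\delta-1)]$, from which one reads off directly which ANOVA components stay bounded. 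Two small remarks: (i) $V_s$ does not actually need the cancellation identity --- it involves only $\tlam\tth_1$ and $\tlam^2\tth_2$, each bounded on its own; the cancellation is what controls $V_{sl}$. (ii) Your restriction to $\pi<1$ is the right one: at $\pi=1$ one has $\tlam=\lambda$ and $\tth_i=\theta_i$, so $V_{si}=V_{sli}\equiv0$ and the divergence instead comes from $V_{sl}$; this boundary case is likewise left implicit in the paper.
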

The first part implies that more samples or a larger degree of parameterization can always reduce the prediction bias, which is consistent with our intuition that larger models can, in principle, approximate any function better. 

{For the variance components, it is natural to expect that some interaction exists, because even the expressions $W,X$ in the prediction function $f(X) = (WX)^\top \beta$ interact non-additively. But we do not fully understand why the interaction terms $V_{si}, V_{sli}$ are large. This can be viewed as a surprising discovery of our paper.  One somewhat tautological perspective is that the interaction terms are the part of variance that are most affected by ``under-regularization". For instance, the main effect $V_{i}$ comes from the randomness of initializations. Thus, one has to average---or ensemble---over the choices of initialization $W$ to reduce $V_{i}$. However, for $V_{si}$ and $V_{sli}$, both ensembling \emph{and} optimally tuned ridge regularization can reduce their values significantly. Thus, these components seem to be more affected by the ``under-regularization" due to using a sub-optimal ridge parameter. However, this is still a somewhat circular explanation, because the entire reason that they diverge is that they are sensitive to under-regularization.
}

For any fixed $\lambda>0$, we conjecture based on numerical results that $\lim_{d\to\infty}\var(\lambda)$ is unimodal as both a function of $\pi$ and a function of $\delta$. This appears to be more challenging to show. Here the unimodality would be mainly due to being close to the interpolation threshold $p/n\approx 1$ for $\delta \approx 1/\pi$, which leads to ill-conditioned feature matrices and a large risk.


\subsubsection{Added Noise Interpretation}
\label{add_noise}

The random projection step in the initialization can be interpreted as creating additional noise. Thus, we can find a ridge model without the projection step (i.e. without the first layer) with larger training set label noise $\sigma'^2$ and the same test point label noise $\sigma^2$ that has the same asymptotic bias, variance, and MSE as the model with random projection step (i.e. with the first layer). To obtain the  ``effective noise" level $\sigma'^2$, in equation (\ref{biasexpr}), (\ref{mseexpr}), let us equate the formula determining $\lim_{d\to\infty}\mse(\lambda^*) =\alpha^2[1-\pi+ \lambda^*\pi\theta_1]+\sigma^2$ for two sets of parameters $\alpha^2,\sigma^2,\pi,\delta$ and $\alpha^2,\sigma'^2,\pi=1,\delta$. This leads to the equation
\begin{align*}
1-\pi+\lambda^*\pi\theta_1&=\lambda'^*\theta_1'.
\end{align*}
After simple calculation, we obtain \begin{equation*}\sigma'^{2}
=\sigma^2
+
\Delta
:=\sigma^2
+
\alpha^2(1-\pi)\frac{\delta (1+\sigma^2/\alpha^2)+1+\sqrt{(\delta (1+\sigma^2/\alpha^2)+1)^2-4\gamma}}{2\gamma}.\end{equation*}
Note that $\mathrm{Variance=MSE-Bias^2-\sigma^2}$, hence the random projection model has the same asymptotic bias, variance and MSE as the ordinary ridge model with additional training set label noise $\Delta$. However, the variance components are specific to the two-layer case, and do not carry over to the one-layer case.

\subsubsection{Understanding the Effect of the Optimal Ridge Penalty}
\label{understand_opt}

In this section, we provide some intuitions for why unimodality and the double descent shape appears in the MSE of ridge models when using a fixed small penalty $\lam$ (close to the ridgeless limit), and how the optimal $\lam^*$ helps eliminate the non-monotonicity of the MSE. We illustrate this with numerical results.

\begin{figure}[htb]
\centering
\begin{subfigure}{0.33\textwidth}
  \centering
  \includegraphics[width=\linewidth]{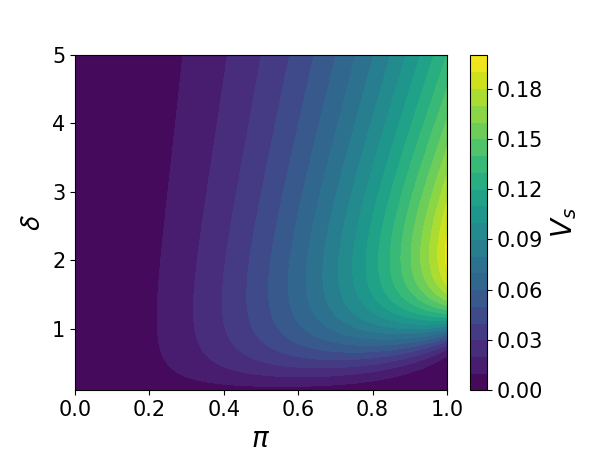}
  \caption{$V_s$}
  \label{sobol_sub_1}
\end{subfigure}%
\begin{subfigure}{0.33\textwidth}
  \centering
  \includegraphics[width=\linewidth]{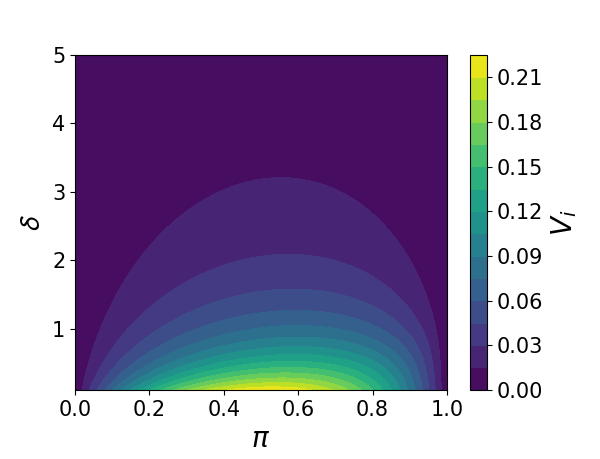}
  \caption{$V_i$}
  \label{sobol_sub_2}
\end{subfigure}
\medskip
\begin{subfigure}{0.33\textwidth}
  \centering
  \includegraphics[width=\linewidth]{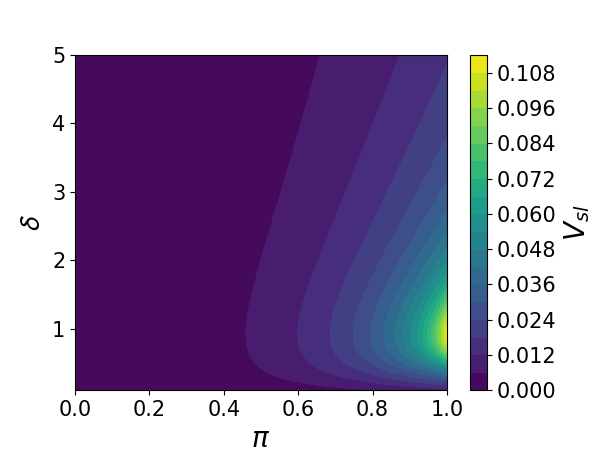}
  \caption{$V_{sl}$}
  \label{sobol_sub_3}
\end{subfigure}
\begin{subfigure}{0.32\textwidth}
  \centering
  \includegraphics[width=\linewidth]{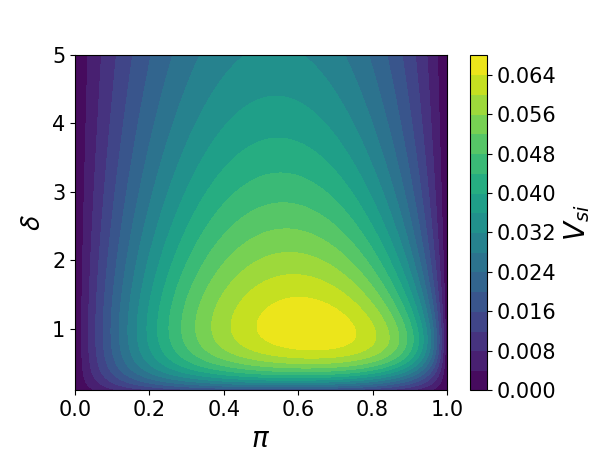}
  \caption{$V_{si}$}
  \label{sobol_sub_4}
\end{subfigure}
\begin{subfigure}{0.32\textwidth}
  \centering
  \includegraphics[width=\linewidth]{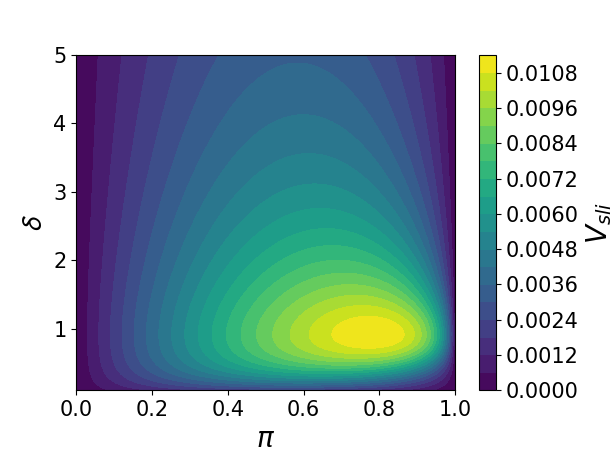}
  \caption{$V_{sli}$}
  \label{sobol_sub_5}
\end{subfigure}
\begin{subfigure}{0.32\textwidth}
  \centering
  \includegraphics[width=\linewidth]{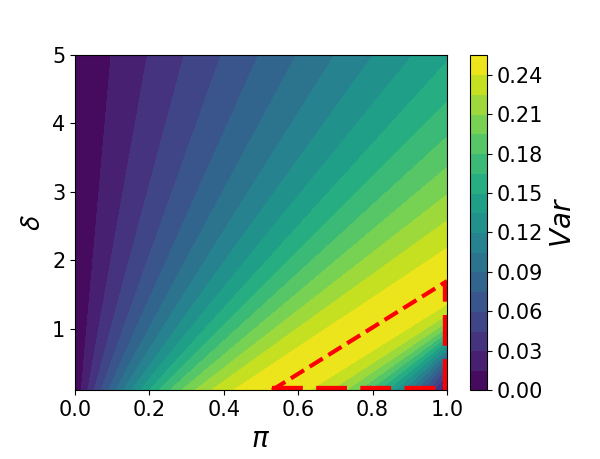}
  \caption{$\var$}
  \label{sobol_sub_6}
\end{subfigure}
\centering
\begin{subfigure}{0.33\textwidth}
  \centering
  \includegraphics[width=\linewidth]{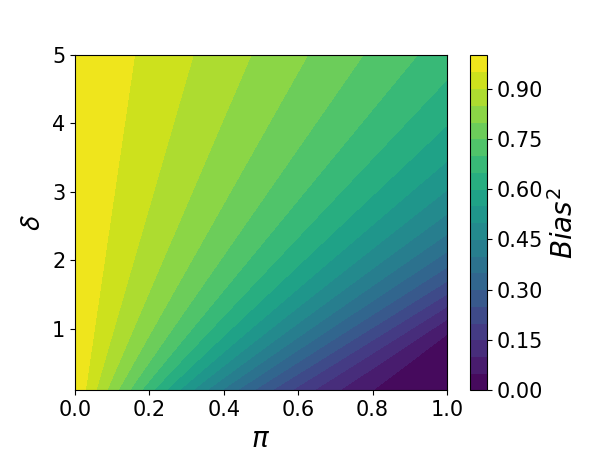}
  \caption{$\bias^2$}
  \label{sobol_sub_7}
\end{subfigure}
\begin{subfigure}{0.33\textwidth}
  \centering
  \includegraphics[width=\linewidth]{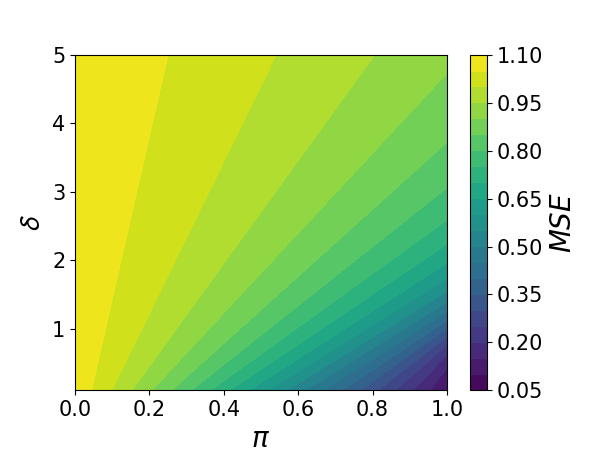}
  \caption{$\mse$}
  \label{sobol_sub_8}
\end{subfigure}

\caption{Heatmaps of the performance characteristics for the optimal regularization parameter $\lambda=\lambda^*$. Variance components, variance, bias and the MSE as functions of $\pi$ and $\delta$ when $\alpha=1,\sigma=0.3$. ($\var=V_s+V_i+V_{sl}+V_{si}+V_{sli}$. $\mse=\bias^2+\var+\sigma^2$.)}
\label{sobol_indices_2d}
\end{figure}
\begin{figure}[htb]
\centering
\begin{subfigure}{0.33\textwidth}
  \centering
  \includegraphics[width=\linewidth]{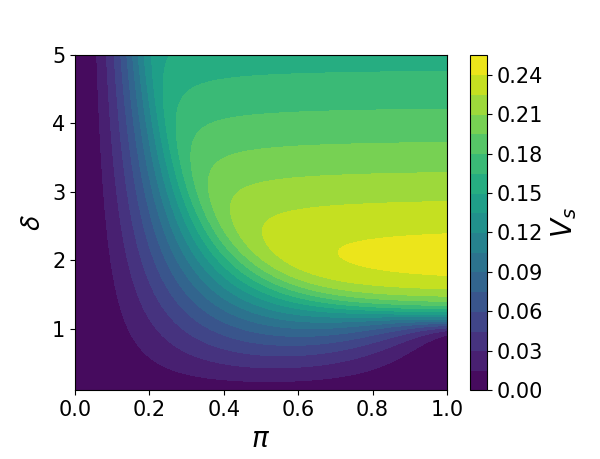}
  \caption{$V_s$}
  \label{sobol_sub_21}
\end{subfigure}%
\begin{subfigure}{0.33\textwidth}
  \centering
  \includegraphics[width=\linewidth]{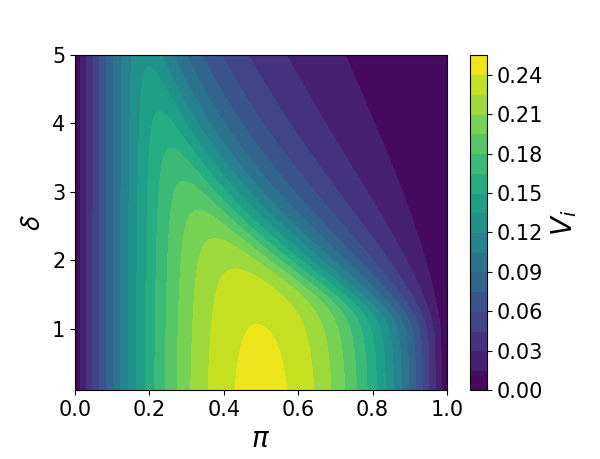}
  \caption{$V_i$}
  \label{sobol_sub_22}
\end{subfigure}
\medskip
\begin{subfigure}{0.33\textwidth}
  \centering
  \includegraphics[width=\linewidth]{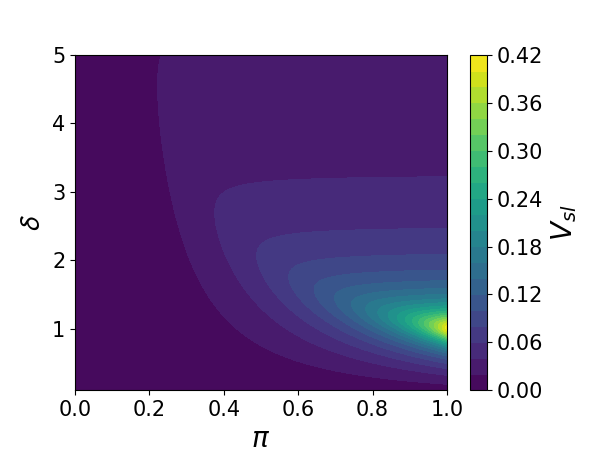}
  \caption{$V_{sl}$}
  \label{sobol_sub_23}
\end{subfigure}
\begin{subfigure}{0.32\textwidth}
  \centering
  \includegraphics[width=\linewidth]{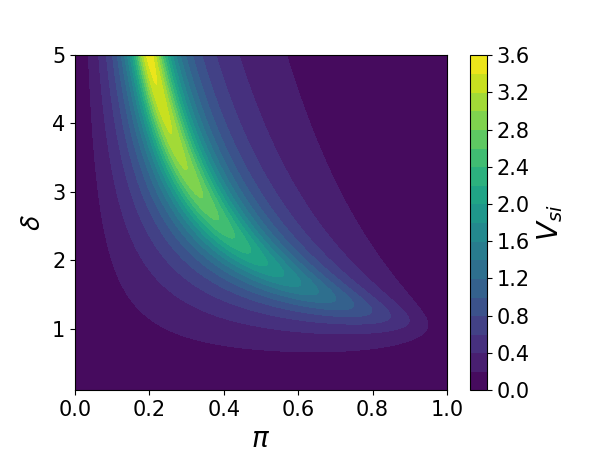}
  \caption{$V_{si}$}
  \label{sobol_sub_24}
\end{subfigure}
\begin{subfigure}{0.32\textwidth}
  \centering
  \includegraphics[width=\linewidth]{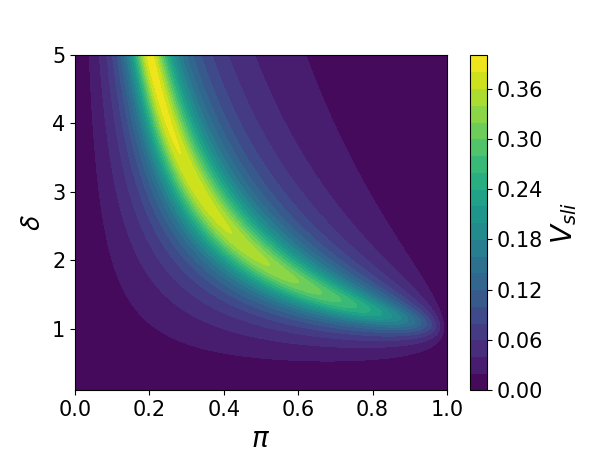}
  \caption{$V_{sli}$}
  \label{sobol_sub_25}
\end{subfigure}
\begin{subfigure}{0.32\textwidth}
  \centering
  \includegraphics[width=\linewidth]{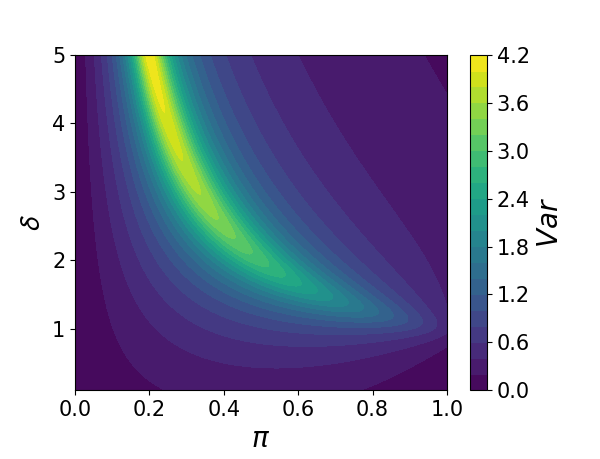}
  \caption{$\var$}
  \label{sobol_sub_26}
\end{subfigure}
\centering
\begin{subfigure}{0.33\textwidth}
  \centering
  \includegraphics[width=\linewidth]{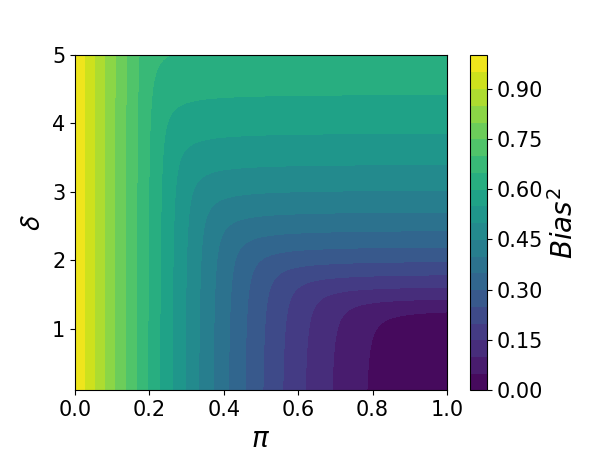}
  \caption{$\bias^2$}
  \label{sobol_sub_27}
\end{subfigure}
\begin{subfigure}{0.33\textwidth}
  \centering
  \includegraphics[width=\linewidth]{mse_fix.png}
  \caption{$\mse$}
  \label{sobol_sub_28}
\end{subfigure}

\caption{Heatmaps of the performance characteristics for a fixed parameter $\lambda=0.01$. Variance components, variance, bias and the MSE as functions of $\pi$ and $\delta$ when $\alpha=1,\sigma=0.3$. ($\var=V_s+V_i+V_{sl}+V_{si}+V_{sli}$. $\mse=\bias^2+\var+\sigma^2$.)}
\label{sobol_indices_2_2d}
\end{figure}
To qualitatively understand the effect of the optimal penalty $\lambda^*$, we plot the variance components, variance, bias and the MSE under two different scenarios. In the first scenario, we use the optimal penaly $\lambda^*$ for all ridge models (see Figure \ref{sobol_indices_2d}). It is readily verified that $V_s$ and $V_i$ contribute to a large portion of the variance, while the contributions of $V_{si}$ and $V_{sli}$ are relatively small. 

In the second scenario, we choose $\lam=0.01$ for all ridge models. From Figure \ref{sobol_indices_2_2d}, we see that, perhaps surprisingly, it is the \emph{interaction term} $V_{si}$ between sample and initialization that dominates the variance. In particular, we think that it is surprising that this interaction term can be larger than the main effects $V_s$ and $V_i$ of sample and initialization. Also, $V_{si}$ and $V_{sli}$ lead to the modes of the variance on the curve $\delta=1/\pi$ (the interpolation threshold).

 Comparing Figure \ref{sobol_indices_2d} and \ref{sobol_indices_2_2d}, we can see that $V_s, V_i$ and $V_{sl}$ are almost on the same scale in the two scenarios. However, $V_{si}$ and $V_{sli}$ are much larger  when $\lam=0.01$ than when $\lam=\lam^*$. These two terms are the main reason why the variance is significantly larger when $\lam=0.01$. Moreover, Figure \ref{sobol_sub_7} and \ref{sobol_sub_27} show that the bias is even relatively smaller when we use $\lam=0.01$ instead of the optimal $\lam^*$. Intuitively, the reason is that the optimal regularization parameter is large, to achieve a better bias-variance tradeoff, and thus makes the bias slightly larger while decreasing the variance a great amount.

Therefore, we may conclude that, under a reasonable assumption on the label noise (e.g., $\sigma=0.3\alpha$ here),
\begin{enumerate}
\item Using a fixed small penalty $\lam$ for all ridge models can lead to unimodality/double descent shape in the MSE. The modes of the MSE as a function of $\delta$ are close to the interpolation limit curve $\delta=1/\pi$.  
\item The unimodality/double descent shape of the MSE given a fixed small penalty $\lam$ is due to the variance. The bias is typically  smaller when using a fixed small penalty $\lam$ instead of the optimal penalty $\lam^*$. As mentioned, this is because the bias and variance are balanced out for the optimal $\lambda^*$, and thus we can increase the bias a bit, while significantly decreasing the variance.
\item Compared with choosing the same small ridge penalty for all models, through using the optimal penalty $\lam^*$, one can reduce the variance significantly, especially along the interpolation threshold curve. The unimodality/double decent shape of the MSE will vanish as a result; but the variance itself may still be unimodal.
\item Using the optimal penalty for all ridge models reduces the variance mainly by reducing the interaction component $V_{si}$. {This component is large in an absolute sense, and thus a reduction has a significant effect. The component $V_{sli}$ is also reduced in a relative sense; however, because it is of a smaller magnitude, this reduction has a more limited effect.}
\item 
There is a special region where the bias and variance (for the optimal $\lambda^*$) change in the same direction, in the sense that increasing the parametrization $\pi = \lim p/d$ or decreasing the aspect ratio $\delta = \lim d/n$ decrease \emph{both} the bias and the variance. See Figures \ref{sobol_sub_6} and \ref{sobol_sub_7}.

This special region is characterized by the ``triangle" $0<\pi\le1$, $\delta>0$, with $\delta  \le 2(2\pi-1)/[1+2\sigma^2/\alpha^2]$. In finite samples, this is approximated by the inequality $d/n \le 2(2p/d-1)/[1+2\sigma^2/\alpha^2]$ between the sample size $n$, data dimension $d$ and the number of parameters $p$. This can be interpreted as saying---for instance---that the parameter dimension $p$ should be large enough. Thus, in that setting, with more parametrization we can get simultaneously better bias and better variance. In a sense, this can indeed be viewed as a blessing of overparametrization.  
\item  There is a ``hotspot"  around $\pi=1/2$, where in $V_i$, and $V_{si}$ both take large values (see Figures \ref{sobol_sub_2} and \ref{sobol_sub_4}). The variance due to initialization is large for intermediate values of the projection dimension $p$. Roughly speaking, one can consider an analogy with Bernoulli random variables, which have large variance for intermediate values of the success probability.

\item For fixed $\lam$, when $\delta<1$ ($d<n$), numerical experiments show that the MSE is decreasing as $\pi$ increases, which means that more parametrization can always give us small MSE when we have enough samples.  See Figure \ref{sobol_sub_28}. It appears that there may be no double descent for fixed $\lambda$ when $\delta$ is sufficiently small; however investigating this is beyond our current scope.
\end{enumerate}


Recall that, in the noiseless case, $V_{s}$ can be interpreted as the variance of an ensemble estimator, and $V_{si}+V_{i}$ is the variance that can be reduced through ensembling. Therefore, the unimodality/double descent shape in the MSE is not intrinsic, and can be removed through regularization techniques such as ensembling, (consistent with \citealt{dascoli2020double}) or optimal ridge penalization.

\subsection{Ridge is Optimal}

We have obtained precise asymptotic results for optimally tuned ridge regression. However, is ridge regression optimal, or are there other methods that outperform it? In fact, we can prove that the ridge estimator is asymptotically optimal.
\begin{theorem}[Ridge is optimal]\label{ridgeopt}
Suppose that the samples are drawn from the standard normal distribution, i.e., $x$ and $X$ both have i.i.d. $\N(0,1)$ entries. Given the projection $W$, projected matrix $XW^\top $ and response $Y$, we define the optimal regression parameter $\beta_{opt}$ as the one minimizing the MSE over the posterior distribution $p(\theta|XW^\top ,W,Y)$ of the parameter $\theta$,
\begin{align}
	\beta_{opt}:&=\argmin_{\beta}\E_{p(\theta|XW^\top ,W,Y)}\E_{x,\ep}[(Wx)^\top \beta-(x^\top \theta+\ep)]^2, \label{optdef}
\end{align} 
where  $x\sim \N(0,I_d)$, $\ep\sim \N(0,\sigma^2)$ and $x$, $\ep$ are independent. We will check that this can be expressed in terms of the posterior of $\theta$ as
\begin{equation}\beta_{opt}=W\cdot\E_{p(\theta|XW^\top ,W,Y)}\theta.\label{optexpr}\end{equation} 
The optimal ridge estimator $\hbeta=(n^{-1}WX^\top  XW^\top+\lambda^* I_p)^{-1}WX^\top  Y/n$ (Theorem \ref{2lthm1}) satisfies the almost sure convergence in the  mean squared error
\begin{align}
	\lim_{d\to\infty}\E_{XW^\top ,W,Y}\|\hbeta-\beta_{opt}\|_2^2=0,  \label{asymopt}
\end{align}
and is thus asymptotically optimal.
Here $d\to\infty$ means $p,d,n\to\infty$ proportionally as in Theorem \ref{2lthm1}.
\end{theorem}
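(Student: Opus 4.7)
My plan is to first verify \eqref{optexpr} by direct calculation, then establish the asymptotic equivalence between $\hbeta$ and $\beta_{opt}$ through a posterior concentration argument.

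For \eqref{optexpr}, I expand the inner integrand of \eqref{optdef}. Since $x$ has iid mean-zero unit-variance entries and $\ep$ is independent of $x$ with variance $\sigma^2$, the inner expectation equals $\|W^\top \beta - \theta\|^2 + \sigma^2$. Using $WW^\top = I_p$, this simplifies to $\|\beta\|^2 - 2\beta^\top W\theta + \|\theta\|^2 + \sigma^2$. Taking the posterior expectation in $\theta$ and differentiating in $\beta$ yields $\beta_{opt} = W\cdot \E_{p(\theta|XW^\top,W,Y)}\theta$.

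For the asymptotic analysis, I decompose the model along $W$ and its orthogonal complement. Let $W_\perp\in\R^{(d-p)\times d}$ have orthonormal rows with $WW_\perp^\top=0$, so the $d\times d$ matrix stacking $W$ and $W_\perp$ is orthogonal. Set $\eta:=W\theta$, $\eta_\perp:=W_\perp\theta$, $Z:=XW^\top$, and $\widetilde X:=XW_\perp^\top$. Under the Gaussian assumptions on $X$ and $\theta$, these four quantities are mutually independent, with $\eta\sim\N(0,\alpha^2 I_p/d)$, $\eta_\perp\sim\N(0,\alpha^2 I_{d-p}/d)$, and $Z,\widetilde X$ having iid standard normal entries. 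The response splits as $Y=Z\eta+\widetilde X\eta_\perp+\Ep$, and since $W\theta=\eta$, the identity \eqref{optexpr} gives $\beta_{opt}=\E[\eta\mid Z,W,Y]$, reducing the problem to computing the posterior mean of $\eta$.

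The key technical step is to show that $p(\eta\mid Z,Y)$ is asymptotically equivalent to the Gaussian posterior that would arise if $\widetilde X\eta_\perp+\Ep$ were replaced by $\N(0,\rho I_n)$ with the deterministic variance $\rho:=\alpha^2(1-\pi)+\sigma^2$. Conditionally on $\eta_\perp$, the noise is $\N(0,(\|\eta_\perp\|^2+\sigma^2)I_n)$ and independent of $(Z,\eta)$, so marginalizing over $\eta_\perp$ gives a scale mixture indexed by $U:=\|\eta_\perp\|^2$. Since $U$ is a scaled chi-square with mean $\alpha^2(d-p)/d\to\alpha^2(1-\pi)$ and variance $O(1/d)$, it concentrates sharply; a Laplace-type expansion of the log-marginal likelihood around $U=\alpha^2(1-\pi)$ shows that $p(\eta\mid Z,Y)$ converges to the Gaussian posterior with noise variance $\rho$. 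The latter is Gaussian with mean $(Z^\top Z+(\rho d/\alpha^2)I_p)^{-1}Z^\top Y=(n^{-1}Z^\top Z+\lambda^* I_p)^{-1}n^{-1}Z^\top Y$, which is exactly $\hbeta$ since $\lim \rho d/(n\alpha^2)=\delta(1-\pi+\sigma^2/\alpha^2)=\lambda^*$.

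The main obstacle is making the Laplace approximation quantitative enough to convert posterior convergence into convergence of posterior means, and then to control $\E_{Z,W,Y}\|\hbeta-\beta_{opt}\|_2^2$ uniformly. A natural route is to work on a high-probability event where $U$ is close to $\alpha^2(1-\pi)$ (using chi-square tail bounds), bound the ratio of the two Gaussian likelihoods sharply there, and handle the complement via crude tail estimates. Combined with standard random-matrix operator-norm control of the resolvent $(n^{-1}Z^\top Z+\lambda^* I_p)^{-1}$ and of $n^{-1}Z^\top Y$, this should yield the stated almost sure convergence.
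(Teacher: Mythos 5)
Your setup matches the paper's: you decompose via $\eta = W\theta$, $\eta_\perp = W_\perp\theta$, $Z = XW^\top$, $\widetilde{X} = XW_\perp^\top$, observe that conditional on $\eta_\perp$ the effective noise in $Y = Z\eta + \widetilde{X}\eta_\perp + \Ep$ is $\N(0,(\|\eta_\perp\|^2+\sigma^2)I_n)$ independent of $(Z,\eta)$, and reduce to computing $\beta_{opt} = \E[\eta\mid Z,W,Y]$. The paper reaches exactly the same point: conditional on $\eta_\perp$, the posterior mean of $\eta$ is the ridge estimator $R(\|\eta_\perp\|^2) := (n^{-1}Z^\top Z + \lambda(\eta_\perp)I_p)^{-1} n^{-1}Z^\top Y$ with data-dependent penalty $\lambda(\eta_\perp) = (\|\eta_\perp\|^2+\sigma^2)d/(n\alpha^2)$, so $\beta_{opt} = \E_{p(\eta_\perp\mid Z,W,Y)}[R(\|\eta_\perp\|^2)]$ and $\hbeta = R(\alpha^2(1-\pi))$.

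The gap is the final asymptotic step. You propose to show that the posterior $p(\eta_\perp\mid Z,Y)$ concentrates via a Laplace-type expansion, then convert to convergence of posterior means, then integrate over $(Z,W,Y)$. As you acknowledge, each piece needs substantial work: the curvature of the log-prior in $\|\eta_\perp\|^2$ and of the log-marginal-likelihood are both of order $d$, so they compete and the Laplace argument is genuinely delicate; and you still need uniform integrability of the matrix-valued function $R(\cdot)$ to pass from distributional concentration to convergence of means, plus control of the outer expectation.

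The paper removes all of that with a single Jensen step. Since $\hbeta$ is deterministic given $(Z,W,Y)$,
\[
\E_{Z,W,Y}\|\hbeta - \beta_{opt}\|_2^2 \;=\; \E_{Z,W,Y}\left\|\E_{p(\eta_\perp\mid Z,W,Y)}\bigl[R(\|\eta_\perp\|^2)-R(\alpha^2(1-\pi))\bigr]\right\|^2 \;\le\; \E_{Z,W,Y,\eta_\perp}\left\|R(\|\eta_\perp\|^2)-R(\alpha^2(1-\pi))\right\|^2,
\]
and the crucial observation is that the right-hand side is an expectation under the \emph{joint} law, under which $\eta_\perp$ has its prior $\N(0,\alpha^2 I_{d-p}/d)$. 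Hence $\|\eta_\perp\|^2$ is a scaled $\chi^2(d-p)$ variable concentrating at $\alpha^2(1-\pi)$, and no posterior analysis is required at all. A Cauchy--Schwarz step then separates $\E(\lambda(\eta_\perp)-\lambda^*)^4$ (which vanishes by elementary $\chi^2$ moment bounds) from bounded Wishart-type moments of the resolvents and of $n^{-1}Z^\top Y$. You should look for this Jensen reduction: it removes precisely the unfinished and hardest part of your proposal, replacing posterior concentration by prior concentration.
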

{\bf Remark.}
 In Theorem \ref{ridgeopt}, the optimal parameter $\beta_{opt}$ minimizes the mean squared error over the posterior of $\theta$ given the projection $W$, projected matrix $XW^\top $ and response $Y$.  From the proof of Lemmas \ref{2llm2}, \ref{2llm3}, we know that $\E\|\hbeta\|^2$ converges to some positive constant as $d\to\infty$. Thus, $\hbeta$ has a constant scale as $d\to\infty$, and the result that $\|\hbeta-\beta_{opt}\|^2\to0$ of Theorem \ref{ridgeopt} shows that $\beta_{opt}$ is indeed non-trivially well approximated. This result implies the asymptotic optimality of ridge regression. 

In addition, if we are given the original data matrix $X$ instead of $XW^\top $, then from the optimality of ridge regression in ordinary linear regression with Gaussian prior and noise, we have $\beta_{opt}=W(n^{-1}X^\top  X+d\sigma^2 I_p/[n\alpha^2])^{-1}X^\top  Y/n$ and ridge regression \emph{over projected data} is not asymptotically optimal. However, in our two-layer model, we only exploit the information of $X$ through $XW^\top $, thus it is reasonable to consider the situation above, in which we are only given $XW^\top $.

\section{Nonlinear Activation}
\label{nonlin} 

It is also possible to  consider the bias-variance decomposition for a two-layer neural network with certain scalar  nonlinear activation functions $\sigma(x)$ after the first layer. Namely, suppose that the data are generated through the same process, but instead of using a two-layer linear network, we use 
\begin{align}\label{nonlineardef}
\hat{f}(x)=\sigma(Wx)^\top\beta
\end{align} as the predictor. 
Here $\sigma:\R\to\R$ is an activation function applied to $Wx$ entrywise.
As before, we assume $W\in\R^{p\times d}$ has orthonormal rows, so $p\le d$,  and we only train $\beta$. This can be viewed as a random features model. We apply ridge regression to estimate $\beta$, therefore our prediction function is
\begin{align}\label{nonlinearexpdef}
\hat{f}(x)&:=\sigma(Wx)^\top\hat{\beta}
=\sigma(x^\top W^\top)\left(\frac{\sigma(WX^\top)  \sigma(XW^\top) }{n}+\lam I_p\right)^{-1}\frac{\sigma(WX^\top)Y}{n}.\end{align}
For simplicity, we further assume that $\E\sigma(Z)=0$, where $Z\sim\N(0,1)$ is a standard normal random variable. The results for activation functions with arbitrary mean can be obtained through similar techniques, but are much more cumbersome. This assumption does not capture the ReLU activation function $\sigma_+(x) = \max(x,0)$, but it can handle the function $\sigma_+(x) - \E\sigma_+(Z)$, which only differs from the ReLU by a constant. In particular, the mean of our prediction function $\hat f(x)$ with the current restriction is always zero, i.e., the prediction function does not have an intercept term. In our model, the true regression function $f^*(x)= \theta^\top x$ does not have an intercept term either; thus we think that the zero-mean restriction may not be significant in the current setting. 

Moreover, we suppose that there are constants $c_1,c_2>0$ such that  $\sigma,\sigma'$ grow at most exponentially, i.e.,  $|\sigma(x)|,|\sigma'(x)|\leq c_1e^{c_2|x|}$. Define the moments
\begin{align}
&\mu:=\E Z\sigma(Z), 
&v:=\E \sigma^2(Z),   \label{nlmu1vdef}
\end{align}
where $Z\sim\N(0,1)$. Also, we suppose that the samples are drawn from the standard normal distribution $\N(0,I_d)$, i.e., $X$ and $x$ both have i.i.d. $\N(0,1)$ entries. As before, we can write down the MSE, bias and variance:
\begin{align}
\mse(\lam)&:=\E_{\theta,x,W,X,\Ep}(\hat{f}(x)-x^\top\theta)^2+\sigma^2\nonumber\\
\bias^2(\lam)&:=\E_{\theta,x}(\E_{W,X,\Ep}\hat{f}(x)-x^\top\theta)^2\nonumber\\
\var(\lam)&:=\E_{\theta,x,W,X,\Ep}(\hat{f}(x)-\E_{X,W,\Ep}\hat{f}(x))^2.\nonumber\end{align}
Our main result in this section gives asymptotic formulas for their limits.
\begin{theorem}[Bias-Variance Decomposition for two-layer nonlinear NN]\label{nlbiasvardecomp}
Under \\the previous assumptions (i.e., in the setting of Theorem \ref{sobolthm}), with the further assumption that the samples are drawn from $\N(0,I_d)$, i.e., $x$ and $X$ have i.i.d. $\N(0,1)$ entries,  we have the following 
limits for the bias, variance, and mean squared error. Recall that we have an $n\times d$ feature matrix $X$ and a two-layer nonlinear neural network $f(x) = \sigma(Wx)^\top \beta$, with $p$ intermediate activations, and $p\times d$ orthogonal matrix $W$ of first-layer weights with $WW^\top = I_p$. Here $n,p,d\to\infty$ and $p/d\to \pi\in(0,1]$ (parametrization level), $d/n\to\delta>0$ (data aspect ratio), with $\alpha^2$ the signal strength, $\sigma^2$ the noise level, $\lambda$ the regularization parameter, $\theta_i$ the resolvent moments, and $\mu, v$ the Gaussian moments of the activation function $\sigma$ from  \eqref{nlmu1vdef}. Then
\begin{align}
\lim_{d\to\infty}\mse(\lam)&=
\alpha^2\pi\left[\frac{1}{\pi}-1+\delta(1-\pi)\theta_1+\frac{\lam}{v}\left(\frac{\lam\mu^2}{v^2}-\delta(1-\pi)\right)\theta_2\nonumber\right.\\&\left.+{(v-\mu^2)}\left(\frac{\ga}{v}\theta_1+\frac{1}{v}-\frac{\lam\ga}{v^2}\theta_2\right)\right]+\sig^2\ga\left(\theta_1-\frac{\lam}{v}\theta_2\right)+\sigma^2,\label{nlmsef}\\
\lim_{d\to\infty}\bias^2(\lam)&=\alpha^2\left[\pi\frac{\mu^2}{v}\left(1-\frac{\lam}{v}\theta_1\right)-1\right]^2, \label{nlbiasf}\\
\lim_{d\to\infty}\var(\lam)
&=\alpha^2\pi\left[\frac{2\mu^2}{v}-1+\left(-\frac{2\lam\mu^2}{v^2}+\delta(1-\pi)\right)\theta_1+\frac{\lam}{v}\left(\frac{\lam\mu^2}{v^2}-\delta(1-\pi)\right)\theta_2\nonumber\right.\\&\left.-\frac{\pi\mu^4}{v^2}\left(1-\frac{\lam}{v}\theta_1\right)^2+{(v-\mu^2)}\left(\frac{\ga}{v}\theta_1+\frac{1}{v}-\frac{\lam\ga}{v^2}\theta_2\right)\right]+\sig^2\ga\left(\theta_1-\frac{\lam}{v}\theta_2\right), \label{nlvarf}
\end{align}
where $\theta_1:=\theta_1(\ga,\lam/v)$, $\theta_2:=\theta_2(\ga,\lam/v)$, $\ga=\pi\delta$. Similar to the linear case, the limiting MSE has a unique minimum at $\lam^*:=\frac{v^2}{\mu^2}\left[\delta(1-\pi+\sigma^2/\alpha^2)+\frac{(v-\mu^2)\ga}{v}\right].$
\end{theorem}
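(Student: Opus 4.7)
}

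\textbf{Step 1: Gaussian equivalent of the feature map.} The strategy is to reduce the nonlinear random features to a linear-plus-noise surrogate. Because $X$ has i.i.d.\ standard Gaussian rows and $W$ has orthonormal rows, the product $G := XW^\top \in \mathbb{R}^{n\times p}$ is distributionally a matrix with i.i.d.\ $\mathcal{N}(0,1)$ entries. Apply the Hermite decomposition $\sigma(z) = \mu\, z + \sigma_\perp(z)$ where $\mu = \mathbb{E} Z\sigma(Z)$, so $\sigma_\perp$ is orthogonal to the linear component in $L^2(\gamma)$ and $\mathbb{E}\sigma_\perp(Z)^2 = v - \mu^2$. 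The plan is to invoke the Gaussian equivalence principle (in the spirit of the deterministic equivalents developed for random feature models via Stieltjes transforms) to replace $\sigma(G)$ by $\mu\, G + \sqrt{v-\mu^2}\, N$ for a Gaussian noise matrix $N$ with i.i.d.\ $\mathcal{N}(0,1)$ entries, independent of $G$, for the purpose of computing resolvent traces. This reduction is legitimate because we only need second-moment information about $\sigma(G)$ to evaluate bias and variance, and these moments agree with the surrogate up to the needed accuracy.

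\textbf{Step 2: Reduction of bias and MSE to resolvent traces.} Let $\Phi = \sigma(XW^\top)$ and $R = (\Phi^\top \Phi/n + \lambda I_p)^{-1}$. Expand
\[
\hat{f}(x) = \sigma(Wx)^\top R \,\Phi^\top (X\theta + \Ep)/n.
\]
Write the MSE as $\mathbb{E}[\hat{f}(x)^2] - 2\mathbb{E}[\hat{f}(x)\, x^\top\theta] + \mathbb{E}[(x^\top\theta)^2] + \sigma^2$. Each of these terms becomes a trace against the resolvent $R$ of matrices such as $\Phi^\top X$, $X^\top X$, and $\Phi^\top \Phi$ after averaging over $\theta \sim \mathcal{N}(0,\alpha^2 I_d/d)$ and over the test point $x$. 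For instance, using Step 1 the cross term $\mathbb{E}_{x}[\sigma(Wx)\sigma(Wx)^\top] \to \mu^2 W W^\top + (v-\mu^2) I_p = \mu^2 I_p + (v-\mu^2) I_p = v I_p$, since $WW^\top = I_p$; while $\mathbb{E}_x[\sigma(Wx) x^\top] = \mu W$ by Stein's lemma applied to the Gaussian $x$. Similarly, $\Phi^\top X/n \approx \mu X^\top X /n$ modulo lower-order fluctuations.

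\textbf{Step 3: Evaluation of the leading resolvent quantities.} With the surrogate $\Phi = \mu\, XW^\top + \sqrt{v-\mu^2}\, N$, the matrix $\Phi^\top\Phi/n$ behaves asymptotically like a rescaled sample covariance whose limiting spectrum is the Marchenko--Pastur distribution with ratio $\gamma = \pi\delta$, scaled by $v$, plus a rank-$p$ contribution from $\mu^2\, WX^\top XW^\top/n$. More precisely, one can show (using the calculus of deterministic equivalents of \cite{dobriban2018understanding} and the Haar-matrix techniques of \cite{couillet2012random} already developed for the linear case in Theorem \ref{sobolthm}) that
\[
\tfrac{1}{p}\tr R \asymp \tfrac{1}{v}\,\theta_1(\gamma,\lambda/v), \qquad \tfrac{1}{p}\tr R^2 \asymp \tfrac{1}{v^2}\,\theta_2(\gamma,\lambda/v),
\]
and analogous deterministic equivalents hold for $\tfrac{1}{p}\tr(R\, \Phi^\top X/n)$, $\tfrac{1}{p}\tr(R\,\Phi^\top \Phi\, R/n)$, and $\tfrac{1}{p}\tr(R\, \Phi^\top X X^\top \Phi\, R/n^2)$. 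The key algebraic identity $\lambda\gamma\theta_1^2 + (\lambda-\gamma+1)\theta_1 - 1 = 0$ (with $\lambda$ replaced by $\lambda/v$) is used repeatedly to collapse the resulting expressions.

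\textbf{Step 4: Assembling the formulas and finding $\lambda^*$.} Substituting the deterministic equivalents from Step 3 into the expansions in Step 2 yields \eqref{nlbiasf}, then \eqref{nlmsef}, and finally \eqref{nlvarf} follows from $\var = \mse - \bias^2 - \sigma^2$. For the minimizer, differentiate \eqref{nlmsef} in $\lambda$, use $\theta_2 = -d\theta_1/d\lambda$ together with the identity $1+(\gamma-1)\theta_1 - 2\lambda^2\gamma\theta_1\theta_2 - \lambda(\lambda-\gamma+1)\theta_2 = 0$, and solve the resulting linear equation in $\lambda$ to obtain $\lambda^* = (v^2/\mu^2)\,[\delta(1-\pi + \sigma^2/\alpha^2) + (v-\mu^2)\gamma/v]$.

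\textbf{Main obstacle.} The principal difficulty is rigorously justifying the Gaussian equivalence in Step 1 for the orthogonally-initialized random features model, and controlling the correlation between the test features $\sigma(Wx)$ and the training features $\sigma(XW^\top)$ which are coupled through the shared orthogonal $W$. In the linear setting this is handled by the Haar deterministic equivalents from \cite{couillet2012random}; here one must combine those with a Hermite-expansion argument to show that the higher-order Hermite components contribute only to the ``isotropic noise'' piece $v-\mu^2$ and do not produce extra resolvent structure at the leading asymptotic scale. Once this equivalence is established, the remaining manipulations in Steps 2--4 are analogous to (but more involved than) the linear proof of Theorem \ref{sobolthm}.
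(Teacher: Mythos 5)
Your plan follows the Gaussian-equivalence (linearization) route used in several other papers (e.g.\ \cite{adlam2020understanding,mei2019generalization}), but this is \emph{not} the route taken here, and the paper's approach is in fact simpler and sidesteps the difficulty you flag as the ``main obstacle.'' The key observation the paper exploits, which your plan does not use, is that for an orthogonal $W$ ($WW^\top=I_p$) and Gaussian $X$, the matrix $G=XW^\top$ has \emph{exactly} i.i.d.\ $\N(0,1)$ entries, and hence $\Phi=\sigma(G)$ has \emph{exactly} i.i.d.\ entries with mean $0$ and variance $v$. Thus the Marchenko--Pastur law applies directly to $\Phi^\top\Phi/n$ (giving $\tr R/p\to\theta_1(\gamma,\lambda/v)/v$, etc.), with no surrogate replacement and no Gaussian-equivalence argument to justify. (Incidentally, your claim that $\Phi^\top\Phi/n$ picks up a ``rank-$p$ contribution from $\mu^2\,WX^\top XW^\top/n$'' on top of the scaled MP law is incorrect: the surrogate $\mu G+\sqrt{v-\mu^2}N$ itself has i.i.d.\ variance-$v$ entries, so its Gram matrix has the same MP spectrum and there is no extra spike.)

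The genuinely non-trivial part is the coupling between the nonlinear features and $X$ (for the bias/MSE cross term) and between $\sigma(Wx)$ and $x$ (for the test point). The paper handles these without linearization, as follows: (i) for the test point it averages over $W$ first, writing $x=U(\sqrt{d_1},0,\dots,0)^\top$, using $W\overset{d}{=}WU$ and symmetry of the Haar measure to reduce the $W$-average to the scalar quantity $\E_{W_{11}}[W_{11}\sigma(\sqrt{d_1}W_{11})]$, which by L\'evy concentration and $d_1/d\to1$ converges to $\mu$; (ii) for the training-side correlations, the proofs of Lemmas~\ref{nlt1expr}--\ref{nlt2expr} use the conditional-independence decomposition $X=XW^\top W+X(I-W^\top W)$ and a leave-one-column-out Sherman--Morrison step to reduce quadratic forms such as $\tu^\top C^{-1}u/n$ (with $u=\tX_{\cdot1}$, $\tu=\sigma(\tX_{\cdot1})$) to deterministic limits, where the factor $\mu$ appears because $\E u_i\tu_i=\E Z\sigma(Z)=\mu$; (iii) the resolvent $C^{-1}$ built from the remaining columns is shown to be a deterministic multiple of $I_n$ via Rubio--Mestre-type deterministic equivalents, which pins down all the trace limits. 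Your Steps~2 and~4 (the reduction to $\E\hat f(x)^2$, $\E\hat f(x)x^\top\theta$, $\E(\E\hat f(x))^2$, and the derivation of $\lambda^*$) match the paper, and the Stein identities $\E_x[\sigma(Wx)x^\top]=\mu W$, $\E_x[\sigma(Wx)\sigma(Wx)^\top]=vI_p$ are used in both arguments. What each route buys: the Gaussian-equivalence route is more portable (it does not require the orthogonality of $W$ and is what one must use when $W$ is Gaussian), but it requires a substantial universality argument. The paper's route is specific to orthogonal initialization but bypasses that argument entirely, needing only the classical Marchenko--Pastur law, Haar averaging, and leave-one-out resolvent manipulations.
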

{\bf Remarks.} (1). When expanding the function $\sigma(x)$ in the Hermite polynomial basis, $\mu$ is the coefficient of the second basis function $x$, and $\sqrt{v}$ is $\sigma(x)$'s norm in the Hilbert space. Thus $v\geq\mu^2$ and the equality holds iff $\sigma(x)=kx$. (2). When $\sigma(x)=kx$ (i.e. $v=\mu^2$), the results in theorem \ref{nlbiasvardecomp} reduce to those in theorem \ref{2lthm1}.

\renewcommand{\arraystretch}{1.3}
\begin{table}[htb]
\centering
	\begin{tabular}{|l|l|l|}
		\hline
		\diagbox{Function}{Variable}    & 
       parametrization $\pi = \lim p/d$ &aspect ratio $\delta= \lim d/n$  \\ \hline
       $\mse$ & $\searrow$ &$\nearrow$\\ \hline
       $\bias^2$ & $\searrow$ & $\nearrow$  \\ \hline
      $\var$ &\makecell[l]{ $\delta <2\dfrac{\mu^2}{v}\left(2\dfrac{\mu^2}{v}-1\right)/\left(1+2\sigma^2/\alpha^2\right)$: $\wedge$, max\\ at $\dfrac{v}{\mu^2}\left[2+\dfrac{\delta v}{\mu^2} \left(1+\dfrac{2\sigma^2}{\alpha^2}\right)\right]/4.$\\
       $\delta \geq2\dfrac{\mu^2}{v}\left(2\dfrac{\mu^2}{v}-1\right)/\left(1+2\sigma^2/\alpha^2\right)$: $\nearrow.$ 
      } 
      &\makecell[l]{$\pi\leq \dfrac{v}{2\mu^2}:$ $\searrow.$\\$\pi>\dfrac{v}{2\mu^2}$: $\wedge$, max at\\ $\dfrac{2\mu^2(2\pi\mu^2/v-1)}{v(1+2\sigma^2/\alpha^2)}.$ } \\ \hline
	\end{tabular}
	\caption{Monotonicity properties of various components of the risk for a two-layer network with nonlinear activation  at the optimal $\lam^*$, as a function of $\pi$ and $\delta$, while holding all other parameters fixed. $\nearrow$: non-decreasing. $\searrow$: non-increasing. $\wedge$: unimodal. Thus, e.g., the MSE is non-increasing as a function of the parameterization level $\pi$, while holding $\delta$ fixed.}
	\label{allmonotonicitynonlinear}
\end{table}

Also, we have  monotonicity properties similar to in the linear case (Table \ref{allmonotonicitynonlinear}):
\begin{theorem}[Monotonicity and unimodality for non-linear net at optimal $\lam^*$]\label{2lmonnl}
 Under the assumptions from Theorem \ref{nlbiasvardecomp}, for the optimal $\lambda=\lam^*$, the MSE, Bias, and variance have the monotonicity and unimodality properties summarized in Table \ref{allmonotonicitynonlinear}. The MSE and bias are decreasing as a function of the parametrization level $\pi$, and increasing as a function of the data aspect ratio $\delta$. The variance is either monotone or unimodal depending on the setting.
\end{theorem}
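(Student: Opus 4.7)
The plan is to mirror Theorem~\ref{2lmon} and exploit a key identity $\var(\lambda^*) = \bias(\lambda^*)\cdot(\alpha - \bias(\lambda^*))$ extending Section~\ref{surp} to the nonlinear case. Set $r := v/\mu^2 \ge 1$, $s := \sigma^2/\alpha^2$, and $\tilde\lambda := \lambda/v$. A short computation gives $\tilde\lambda^* = r\delta(1+s) - \pi\delta$, so that $\tilde\lambda^* + 1 - \gamma = r\delta(1+s) - 2\pi\delta + 1$ and the discriminant appearing in \eqref{exprtheta1} simplifies to $[r\delta(1+s)+1]^2 - 4\gamma$. Substituting into \eqref{nlbiasf} yields the closed form
\begin{equation*}
\lim_{d\to\infty}\bias(\lambda^*) \;=\; \alpha \cdot \frac{r\delta(1-s) - 1 + D}{2r\delta}, \qquad D := \sqrt{[r\delta(1+s)+1]^2 - 4\pi\delta} \ge 0.
\end{equation*}
For the MSE I would rewrite \eqref{nlmsef} as $C\bigl[1 + \gamma(\theta_1 - \tilde\lambda\theta_2)\bigr] + \alpha^2\pi\tilde\lambda^2\theta_2/r$ with $C := \alpha^2(1-\pi/r) + \sigma^2$, observe that the first-order condition $\partial\mse/\partial\tilde\lambda = 0$ factors as $(2\theta_2 + \tilde\lambda\,d\theta_2/d\tilde\lambda)(\alpha^2\pi\tilde\lambda/r - C\gamma) = 0$ with relevant root $\tilde\lambda^* = Cr\delta/\alpha^2$, and verify that the $\theta_2$-contributions cancel to give $\lim_{d\to\infty}\mse(\lambda^*) = \alpha\cdot\lim_{d\to\infty}\bias(\lambda^*) + \sigma^2$. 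Subtracting $\bias^2 + \sigma^2$ then yields the identity above.

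Monotonicity of the bias follows immediately. $\partial D/\partial\pi = -2\delta/D < 0$ makes $\bias$ strictly decreasing in $\pi$; for $\delta$, squaring the target inequality $D \ge r\delta(1+s)+1 - 2\pi\delta$ reduces to $\delta(r(1+s)-\pi) \ge 0$, which holds since $r \ge 1 \ge \pi$, and hence $\bias$ is non-decreasing in $\delta$. The $\searrow$/$\nearrow$ entries for $\bias^2$ and $\mse$ in Table~\ref{allmonotonicitynonlinear} follow since $\mse = \alpha\bias + \sigma^2$. For the variance, the identity $\var = \bias(\alpha - \bias)$ reduces unimodality/monotonicity to tracking when the monotone function $\bias$ crosses $\alpha/2$. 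Evaluating boundary values, $\bias \to \alpha(1-\pi/r)$ as $\delta \to 0$, $\bias \to \alpha$ as $\delta \to \infty$, and $\bias = \alpha$ at $\pi = 0$. The equation $\bias(\lambda^*) = \alpha/2$ reduces to $D = 1 + r\delta s$; squaring yields the critical points $\delta_{\max} = 2(2\pi-r)/[r^2(1+2s)]$ and $\pi_{\max} = r[2 + r\delta(1+2s)]/4$, matching Table~\ref{allmonotonicitynonlinear}. The thresholds $\pi \le v/(2\mu^2)$ and $\delta \ge 2(\mu^2/v)(2\mu^2/v-1)/(1+2s)$ are precisely the conditions under which these critical points leave the valid parameter range, forcing monotonicity rather than unimodality.

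The main technical obstacle is the algebraic reduction of \eqref{nlmsef} to $\alpha\cdot\bias(\lambda^*) + \sigma^2$, which is the lynchpin of the argument. I would handle this by working systematically in the variables $(\tilde\lambda, r)$ and repeatedly using the quadratic identity $\tilde\lambda\gamma\theta_1^2 + (\tilde\lambda - \gamma + 1)\theta_1 - 1 = 0$ from the remark after Definition~\ref{resolvent}, together with $\theta_2 = -d\theta_1/d\tilde\lambda$, to eliminate $\theta_2$-terms in favor of $\theta_1$. Once this identity is in hand, the remaining steps are elementary calculus; specializing to $r = 1$ recovers the linear case of Theorem~\ref{2lmon} as a sanity check.
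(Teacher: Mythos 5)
Your proposal is correct and matches the paper's intended approach: the paper defers the proof of this theorem to the same argument as Theorem~\ref{2lmon}, whose essential ingredients are exactly the closed-form bias at $\lambda^*$, the identity $\mse(\lambda^*)-\sigma^2 = \alpha\cdot\bias(\lambda^*)$, and the consequent $\var(\lambda^*) = \bias\cdot(\alpha - \bias)$, which you reproduce in the nonlinear setting with the substitution $r = v/\mu^2$ and $\tilde\lambda = \lambda/v$. Your algebra (the formula for $\tilde\lambda^*$, the discriminant $D$, the factorization of $\partial\mse/\partial\tilde\lambda$, and the critical points $\delta_{\max}$ and $\pi_{\max}$) all check out and, as you note, reduce to Theorem~\ref{2lmon} when $r=1$.
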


Thus, comparing Tables \ref{allmonotonicitynonlinear} and \ref{allmonotonicity}, we see that with stronger Gaussian assumptions on the data distribution, optimal ridge regularization has similar effects in the nonlinear and linear cases. For instance, $\lam^*$ can eliminate the ``peaking" shape of the MSE.

\section{Numerical Simulations}
\label{simu}

In this section, we perform several numerical experiments, to check the correctness of our theoretical results.
\subsection{Verifying the Theoretical Results for the MSE}\label{mseest}
\begin{figure}[htb]
\centering
\includegraphics[width=0.49\linewidth]{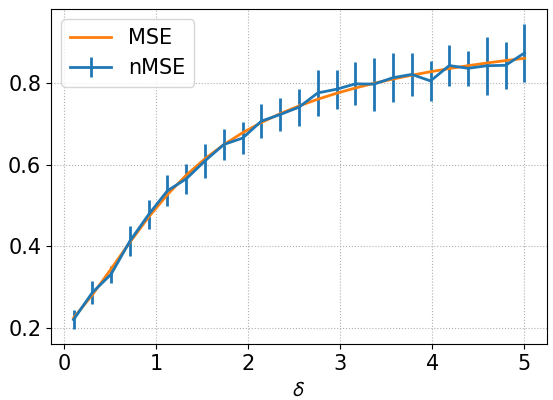}
\includegraphics[width=0.49\linewidth]{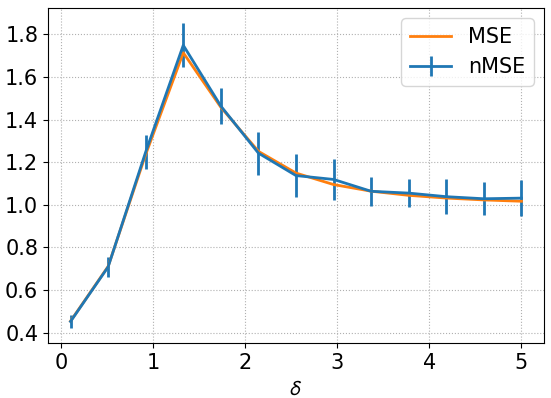}
\caption{Numerical verification of the theoretical results for MSE. We display, as a function of $\delta=\lim d/n$, the theoretical formula and the numerical mean and standard deviation over 20 repetitions. Parameters: $\alpha=1,\sigma=0.3,\pi=0.8,n=150,d=\lfloor n\delta \rfloor ,p=\lfloor d\pi \rfloor.$ {\bf  Left}: linear, $\sigma(x)=x$, $\lam=\lam^*$(optimal). {\bf Right}: nonlinear, $\sigma(x)=\sigma_+(x)-\E_{x\sim\N(0,1)}\sigma_+(x)$, $\sigma_+(x)=\max(x,0)$, $\lam=0.01$.}
\label{diremse}
\end{figure}
To check the correctness of the MSE formula, we estimate the MSE from its definition directly. For simplicity, we subtract the test point label noise $\sigma^2$ from the MSE formula. We randomly generate $k=400$ i.i.d. tuples of random variables $(x_i,\theta_i,\ep_i,X_i,W_i)$, $1\leq i\leq k$, from their assumed distributions (we assume $X$ and $x$ have i.i.d. $\N(0,1)$ entries in numerical simulations), and estimate the MSE by calculating:
\begin{align*}
\widehat{\MSE}=\frac{1}{k}\sum_{i=1}^{k}(\hat{f}_i(x_i)-x_i^\top \theta_i)^2,
\end{align*}
where $k=400$ and $\hat{f}_{i}(x_i)=\sigma(x_i^\top W_i^\top)\left(n^{-1}\sigma(W_iX_i^\top)\sigma(X_iW_i^\top)+\lam I_p\right)^{-1}n^{-1}\sigma( W_iX_i^\top) (X_i\theta_i+\Ep_i)$, for $\sigma(x)$ both linear and nonlinear. We repeat this process $20$ times and plot the mean and standard error in Figure \ref{diremse}. The regularization parameter $\lambda$ is set optimally or fixed. We also plot our theoretical MSE formula from \eqref{msef}. The parameters in the experiment are shown in the captions of the figure. We can see from Figure \ref{diremse} that our theoretical prediction of the MSE is quite accurate.

\subsection{Bias-variance Decomposition and the Variance Components}\label{deriest}
\begin{figure}[htb]
\includegraphics[width=0.48\linewidth]{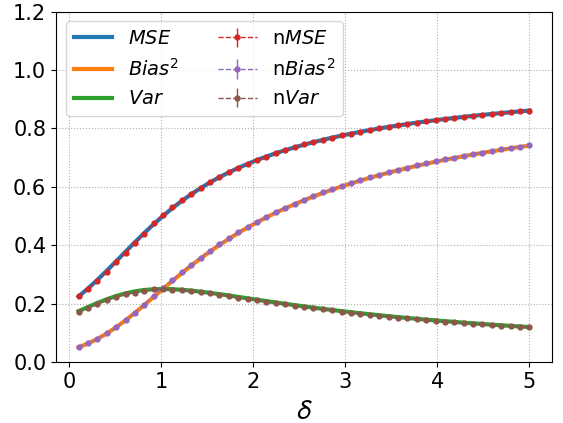}
\includegraphics[width=0.50\linewidth]{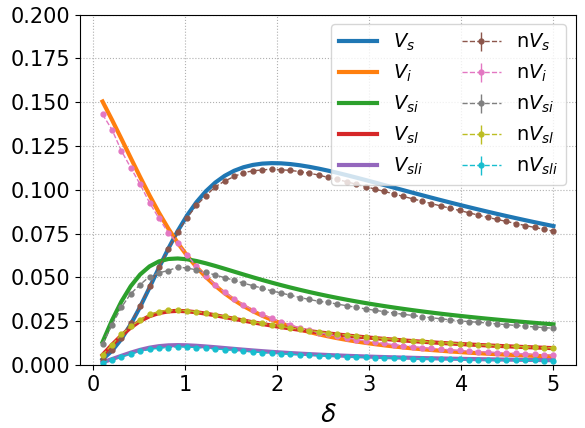}
\caption{{\bf Left}: numerical simulation verifying the accuracy of the bias, variance and MSE formulas. {\bf Right}: simulations with variance components. For each ANOVA component, symbolized by $\star$, we show two curves: $\star$: theory, $n\star$: numerical (averaged over 5 runs). Parameters: $\alpha=1,\sigma=0.3,\pi=0.8,n=150,d=\lfloor n\delta \rfloor ,p=\lfloor d\pi \rfloor.$}
\label{num_bvms}
\end{figure}
We next study the accuracy of the formulas for the bias, variance and the variance components in the linear case. Estimating them directly requires many samples. For example, to estimate the bias based on the defining formula from Section \ref{bvsob}, we may need to generate, say, $100$ pairs of $(x,\theta)$, and for each $(x,\theta)$ generate $500$ triples of i.i.d. $(X,W,\Ep)$. Thus, we may need to simulate $50,000$ samples in total to obtain accurate results. This is beyond our current scope.
\begin{table}[htb]
\centering
  \begin{tabular}{|l|l|}
    \hline
    {Functional} &{Estimator}  \\ \hline
       $\E\tr(\cM\cM^\top)$ & $\dfrac{1}{k}\sum\limits_{i=1}^{k} \tr(\cM_i\cM_i^\top)$\\ \hline
       $\|\E \cM\|_{F}^2$ & $\left\|\dfrac{1}{k}\sum\limits_{i=1}^{k} \cM_i\right\|_F^2$\\ \hline
       $\E_{W}\|\E_X \cM\|_{F}^2$ & $\dfrac{1}{k}\sum_{j=1}^{k}\left\|\dfrac{1}{k}\sum\limits_{i=1}^{k} \cM_{ij}\right\|_F^2$\\ \hline 
       $\E_{X}\|\E_W \cM\|_{F}^2$ & $\dfrac{1}{k}\sum_{i=1}^{k}\left\|\dfrac{1}{k}\sum\limits_{j=1}^{k} \cM_{ij}\right\|_F^2$\\ \hline                
  \end{tabular}
  \caption{Empirical estimators of functionals of interest. Here $\cM$ is a generic matrix that can be  $M$ or $\tM$. For the bias, variance and the MSE, we take $k=100$ and $\cM_i$ denotes the appropriate matrix $\cM$ obtained from the $i$-th pair of $(X,W)$. For variance components, $k=20,50$ and $\cM_{ij}$ denotes the appropriate matrix $\cM$ obtained from the $i$-th $X$ and the $j$-th $W$. Estimators of the quantities in \eqref{2lbias}---\eqref{2lmse}, \eqref{sobolvsf}---\eqref{sobolvslif} are obtained by combining the above.}
  \label{numerical_details}
\end{table}
Therefore, for simplicity, we check instead the formulas that we have derived in Appendix \ref{all_proofs} in equations \eqref{2lbias}---\eqref{2lmse}, \eqref{sobolvsf}---\eqref{sobolvslif}.  We omit the results for $V_l$ and $V_{li}$ since they converge to $0$.
In all experiments, we choose $n=150$. For the bias, variance, and MSE,  we generate $100$ i.i.d. copies of $X,W$ of certain dimensions (we assume $X$ has i.i.d. $\N(0,1)$ entries in numerical simulations)  and estimate the expectations from the proof of Theorem \ref{2lthm1} in Appendix \ref{all_proofs} (\ref{2lbias}---\ref{2lmse}) using the Monte Carlo mean. As for the variance components,  we randomly generate $k$ i.i.d. $X_i$-s and $W_j$-s, and use them to form $k^2$ pairs of $(X_i,W_j)$, where $k=20$ in Figure \ref{num_bvms} (right) and $k=50$ in Figure \ref{filled} (left). Similarly, we also estimate the expectations from the proof of Theorem \ref{sobolthm} in \eqref{sobolvsf}---\eqref{sobolvslif} via the Monte Carlo  mean (see the details in Table \ref{numerical_details}).
{}

Figure \ref{num_bvms} shows the results averaged over five runs.  We can see that the numerical results are quite close to the theoretical predictions. Moreover, the standard deviations over $5$ runs are uniformly less than $0.001$ in all settings we considered, which implies that the variance due to the randomness of $(X,W)$ is negligible. The slight discrepancies between the theoretical predictions and experiments are mostly owing to the bias in our estimators (e.g., the second, third and fourth estimators in Table \ref{numerical_details} are biased, because they are of the form $g(\E M)$, which is estimated by $g(k^{-1}\sum_{i=1}^k M_i)$, and $g$ is nonlinear), and they can be reduced if we have more samples $(X,W)$.  

One may wonder: Why is the standard deviation of different runs of the simulation so small (e.g., less than $0.001$)? The reason is that the Marchenko-Pastur theorem, on which our theoretical predictions depend, has fast convergence rates \cite[e.g.,][]{bai1993convergence,gotze2004rate}.
Since the terms we estimated are expectations of various functionals of the eigenvalue spectrum, we can expect the simulation results to be quite precise.

\subsection{General Covariance}
\begin{figure}[htb]
\includegraphics[width=0.49\linewidth]{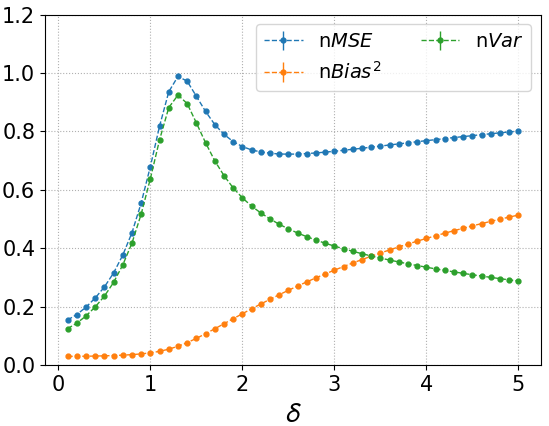}
\includegraphics[width=0.5\linewidth]{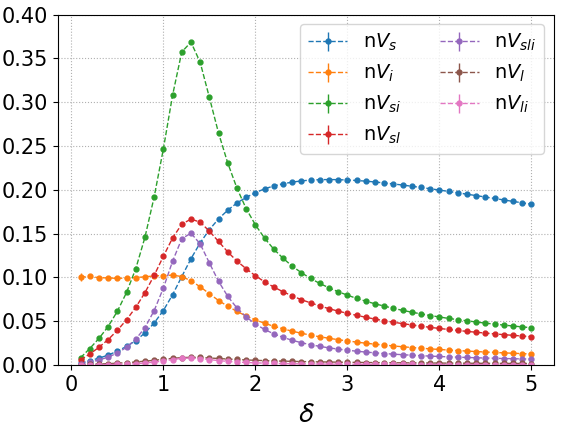}
\includegraphics[width=0.49\linewidth]{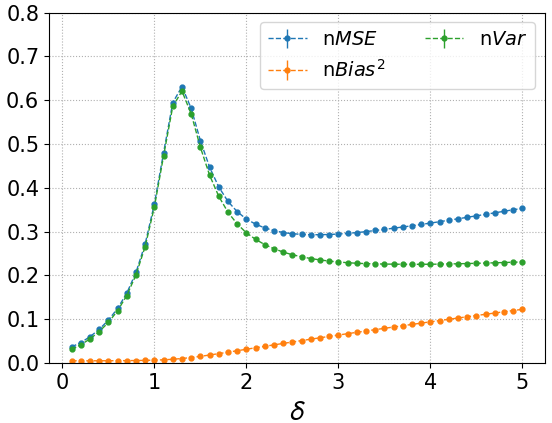}
\includegraphics[width=0.5\linewidth]{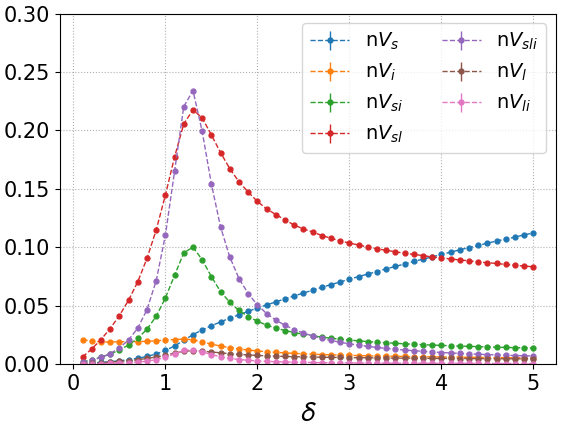}
\caption{Bias-variance decomposition and the variance components under an AR-1 covariance assumption. {\bf  Left}: numerical simulation of the bias, variance and MSE formulas. {\bf Right}: simulations with variance components. Upper: $r=0.5,\lam=0.01$. Down: $r=0.9,\lam=0.001$. For each term, symbolized by $\star$, we show $n\star$: numerical (averaged over 5 runs). Parameters: $\alpha=1,\sigma=0.3,\pi=0.8,n=150,d=\lfloor n\delta \rfloor ,p=\lfloor d\pi \rfloor.$}
\label{general_cov}
\end{figure}
{Although our theoretical results are proved under an assuming the data distribution is isotropic, we also study the model under general covariance assumptions numerically.
Namely, we assume the samples are drawn i.i.d. from $\N(0,\Sigma(r))$, where $\Sigma(r)_{ij}=r^{|i-j|}$ is an AR-1 covariance matrix. We numerically study the bias-variance decomposition and the variance components the same way as in Section \ref{deriest}. The only distinction is that the formulas we estimate are slightly different due to the non-identity covariance. More specifically, one can show that the formulas for the general covariance case are the same as their counterparts in equations \eqref{2lbias}---\eqref{2lmse}, \eqref{sobolvsf}---\eqref{sobolvslif} with $\|\cdot\|_{F}^2$ replaced by $\tr(\cdot\cdot^{\top}\Sigma(r))$.} 

{In the experiment, we choose fixed small penalties $\lam=0.01,0.001$ to mimic the ridgeless limit. Figure \ref{general_cov} shows the numerical results when $r=0.5$, $0.9$. We see that many observations in the isotropic case (e.g. monotonic bias, non-monotonicity of the MSE and variance, the interaction terms dominating the variance at the interpolation threshold) still hold in the general covariance case. However, the terms contributing the most to the total variance are $V_{sl}$ and $V_{sli}$ when $r=0.9$, while they are $V_{si}$ when $r=0.5$ or in the isotropic case. We conjecture that this is because the covariance matrix can implicitly change the ratio between the noise $\sigma$ and signal $\alpha$. However, more work is needed in the future for understanding the generalization error under a general covariance assumption.}

\subsection{Experiments on Empirical Data}

In this section, we present an empirical data example to study several phenomena observed in our theoretical analysis. We use the Superconductivity Data Set \citep{hamidieh2018data} retrieved from the UC Irvine Machine Learning Repository in our  data analysis.

The original data set contains $N=21,263$ superconductors as samples and $d=81$ features for each sample. Our goal is to predict the critical temperatures of the superconductors based on their features. Before doing regression, we  preprocess the data set in the following way. We first randomly shuffle the samples. We then separate the data set into a training set containing the first  $90\%$ of the samples, and a test set containing the rest. Finally, we normalize the features and responses so that they all have zero mean and unit variance. Since $N$ is quite a bit larger than $d$, we can estimate the variances of the features quite well; and thus we can standardize new test datapoints from this distribution using the estimated variances. After these steps, we are ready to start our experiments.

Similar to our theoretical setting,  for each experiment setting $(p,n)$, we randomly select $n$ samples from the training set to form a data matrix $X\in\R^{n\times d}$, map it into a random $p$-dimensional subspace multiplying it by a projection matrix $W$ with orthogonal rows and then perform ridge regression on the $p$-dimensional subspace. For each setting, we generate $50$ i.i.d. sample matrices $X_i$, $1\leq i\leq 50$,  and $50$ i.i.d. random projections $W_j$, $1\leq j \leq 50$, and combine them to form $2500$  sample-initialization pairs $(X_i,W_j)$, $1\leq i,j\leq 50$. Denote by $y_i$ the response vector of $X_i$. Then for each $(X_i,W_j)$, we have the ridge estimator
\begin{align*}
&\hat{f}_{ij}(x)=x^\top\left(\frac{W_jX_i^\top X_iW_j^\top}{n}+\lam I_p\right)^{-1}\frac{W_jX_i^\top y_i}{n},& 1\leq i,j\leq50.
\end{align*}
We use these estimators to make predictions on the test set and estimate the MSE, bias, and some of the variance components as $\widehat{\MSE}:=L^{-1}\sum_{k=1}^{L}\hat{\E}_{}(\hat{f}_{ij}(x_k)-y_k)^2$, and:  
\begin{align*}
&\widehat{\mathrm{Var}}:=\frac{1}{L}\sum_{k=1}^{L}\hat{\E}_{}(\hat{f}_{ij}(x_k)-\hat{\E}\hat{f}_{ij}(x_k))^2,
&\widehat{\mathrm{Bias}}^2:=\frac{1}{L}\sum_{k=1}^{L}(\hat{\E}\hat{f}_{ij}(x_k)-y_k)^2,\\
&\widehat{V_s}:=\frac{1}{Ln_s}\sum_{k=1}^{L}\sum_{i=1}^{n_s}(\hat{\E}_j\hat{f}_{ij}(x_k)-\hat{\E}\hat{f}_{ij}(x_k))^2,
&\widehat{V_i}:=\frac{1}{Ln_i}\sum_{k=1}^{L}\sum_{j=1}^{n_i}(\hat{\E}_i\hat{f}_{ij}(x_k)-\hat{\E}\hat{f}_{ij}(x_k))^2,
\end{align*}
where $\hat{\E}$ denotes the Monte Carlo mean, $n_i=n_s=50$, $L$ is the test set size, and $x_k,y_k$ are test features and responses. 
\begin{figure}[htb]
\centering
\includegraphics[width=.49\textwidth]{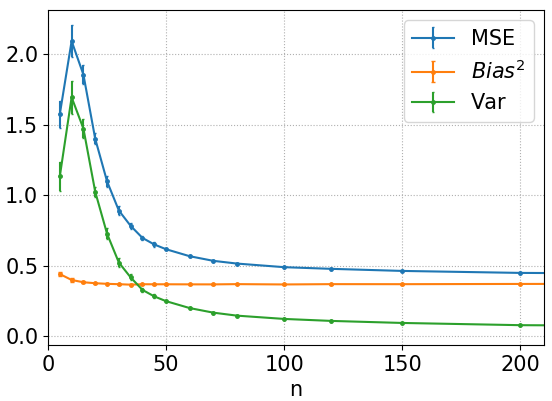}
\includegraphics[width=.49\linewidth]{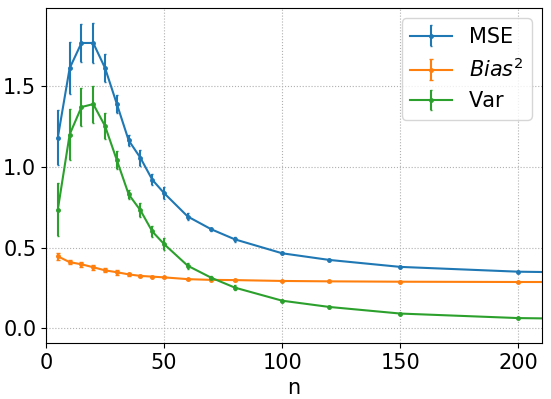}
\caption{Empirically estimated MSE, variance and bias as functions of number of samples $n$. We display the mean and one standard deviation of the numerical results over $10$ repetitions. {\bf  Left}: $\pi=0.2,\lam=0.01$. {\bf Right}: $\pi=0.9,\lam=0.01$. {(Both panels are from the same simulation.)}}
\label{superconfig1}
\end{figure}
\begin{figure}[htb]
\centering
\includegraphics[width=.32\textwidth]{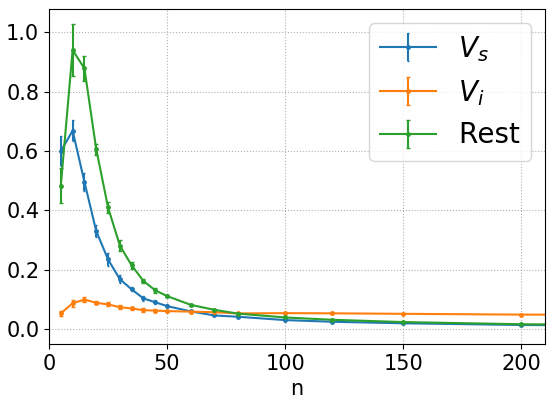}
\includegraphics[width=.32\textwidth]{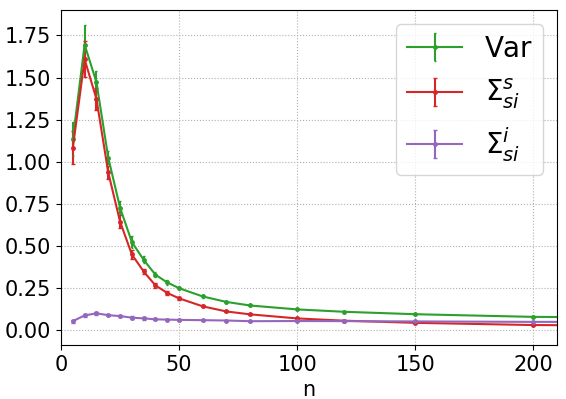}
\includegraphics[width=.32\textwidth]{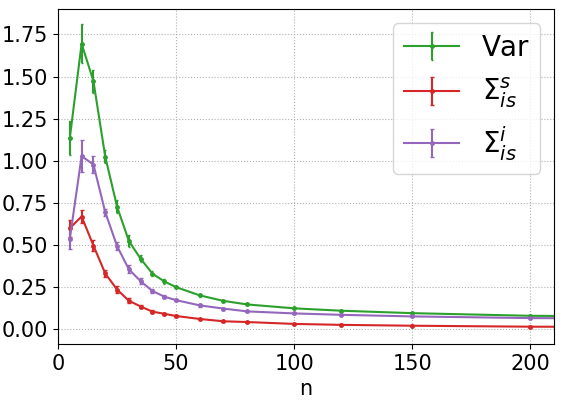}
\caption{Numerically estimated variance components as a function of the sample size $n$. {\bf  Left}: three components of variance ($V_s,V_i$, and ``Rest": the variance due to interaction and response noise, $\mathrm{Rest}:=\mathrm{Var}-V_s-V_i$). Middle and right: two orders of variance decomposition. Middle: sample, initialization. {\bf Right}: initialization, sample. $\Sigma_{ab}^{a}:=\mathrm{Var}-V_{b},\Sigma_{ab}^b:=V_b, \{a,b\}=\{s,i\}$. Parameters: $\pi=0.2, \lam=0.01$.  We display the mean and one standard deviation over $10$ repetitions. {(All three panels are from the same simulation.)}}
\label{superconfig2}
\end{figure}
\begin{figure}[htb]
\centering
\includegraphics[width=.5\textwidth]{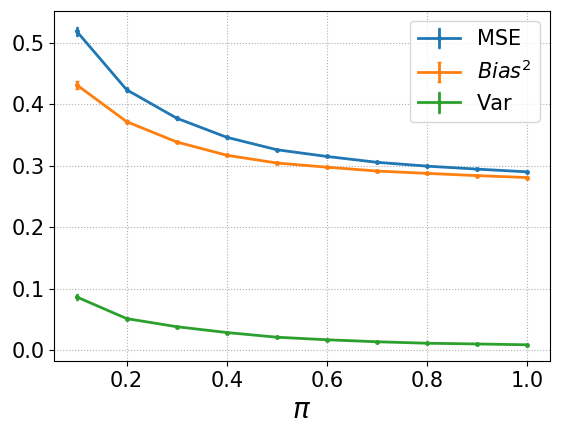}
\caption{Empirically estimated MSE, variance and bias as functions of degree of parameterization $\pi_d = p/d$. We show the the mean and one standard deviation over $10$ repetitions.}
\label{superconfig3}
\end{figure}
\begin{figure}[htb]
\centering
\includegraphics[width=.49\textwidth]{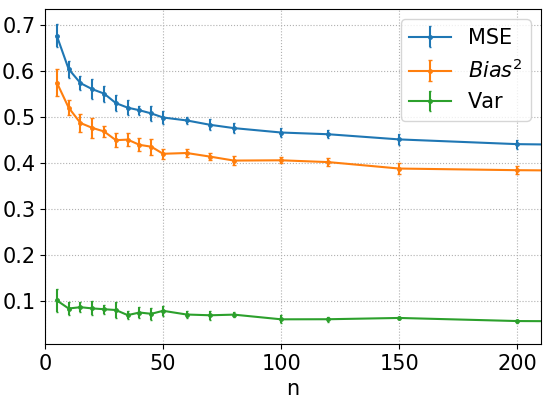}
\includegraphics[width=.49\linewidth]{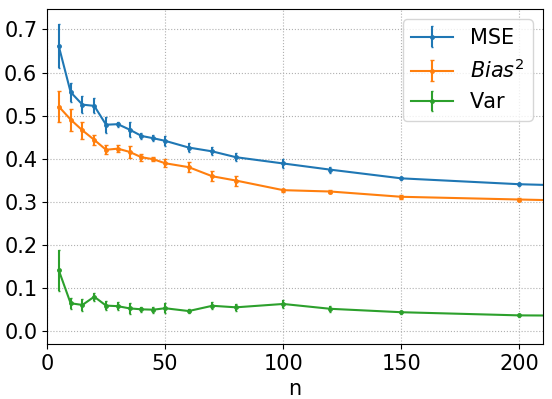}
\caption{Empirically estimated MSE, variance and bias as functions of number of samples $n$ using the optimal $\lam^*$. We display the mean and one standard deviation of the numerical results over $10$ repetitions. {\bf  Left}: $\pi=0.2$. {\bf Right}: $\pi=0.9$. {(Both panels are from the same simulation.)}}
\label{superconfig4}
\end{figure}

Since we have no information about the true noise of the responses, we only study the variance introduced by the choice of the data matrix $X$ and initialization $W$.
Figure \ref{superconfig1} shows the empirically estimated bias, variance and MSE as functions of the number of samples $n$ given a fixed amount of parameterization (fixed $\pi = \lim p/d$). From this figure, we observe the following:
\begin{enumerate}
\item The MSE is unimodal  as a function of number of samples $n$, which corroborates that increasing the number of training samples can sometimes lower the model's performance when we do not have enough  samples and do not regularize well (e.g. use a small $\lam=0.01$). This unimodality is quite similar to the unimodality we  observed in Figure \ref{fig: 2layer_fixed_lambda_mse} in our theoretical setting. It is also consistent with the general phenomenon of sample-wise double descent \citep{nakkiran2019more}.
\item  The bias is decreasing as a function of $n$, when $n$ is small, and stays roughly constant when $n$ is larger. The reason is that the data does not truly come from a linear model, and thus the linear model that we use has a nonzero approximation bias.  
\item The bias is also decreasing  as a function of $1/\delta=n/d$. This suggests that more samples can reduce the bias of the ridge estimator. 
 Furthermore, the variance is the main contributor to the unimodality of the MSE. These two observations are also consistent with our theoretical results from Theorem \ref{bias_var_for_fixed_lambda}, which suggests  that the bias is increasing as a function of $\delta$  and the variance can be very large along the interpolation threshold $\delta\pi=1$ when $\lam$ is small.
\end{enumerate}

 Figure \ref{superconfig2} (left) shows estimates of three components of variance in our data example. In this low parameterization setting ($\pi=0.2$), when $n$ is small ($<100$), the variance $V_s$ due purely to sampling is large, the variance $V_i$ due purely to initialization is small, and the variance $V_{is}$ due to their interaction and also the response noise is large. Thus,  combining these variances together, we can see from Figure \ref{superconfig2} (middle and right) that different orders of decomposition can indeed lead to different interpretations of the variances introduced by sampling and  initialization. 

  Figure \ref{superconfig3} exhibits empirical estimates of the bias, variance, and MSE as functions of the degree of parameterization $\pi = \lim p/d$ when given enough samples, here $n=1000$, so that $\delta_d = d/n$ is small. We see that  all three terms decrease as $\pi$ increases, which means more parameters can improve the estimator's performance when we have enough samples ($\delta$ is small). This is also close to what we have observed in Figure \ref{sobol_sub_28}, i.e., that the MSE is decreasing as $\pi$ increases when $\delta<1$. 

 {We also study the effect of optimal ridge penalty on the empirical data. For simplicity, we select the ridge parameter from the set $\{i\times 10^{-j}|i=1,2,5; j=0,1,2,3\}$ such that it minimizes the empirical MSE. Figure \ref{superconfig4} shows the MSE, variance and bias obtained using the optimal ridge penalty. Compared with Figure \ref{superconfig1}, we see that the optimal ridge penalty can mitigate the non-monotonicity of MSE and keep the bias decreasing as the number of sample $n$ increases. These observations are consistent with what we have shown in our theoretical setting.}

  To conclude, although our theoretical results are based on quite strong assumptions on the data distribution, many conclusions and insights still carry over to certain problems involving  empirical data.

\acks{
We thank the associate editor for handling our paper. We are very grateful for the reviewers for detailed and thorough feedback, which has lead to numerous important improvements.
We thank Yi Ma, Song Mei, Zitong Yang, Chong You, Yaodong Yu for helpful discussions. This work was partially supported by a Peking University Summer Research award, and by the NSF-Simons Collaboration on the Mathematical and Scientific Foundations of Deep Learning THEORINET (NSF 2031985). This work was performed when LL was a student at Peking University.}

\appendix
\section{Comparison of Orthogonal and Gaussian Initialization Models}
\label{orthogonal}
Here we provide a comparison of the orthogonal and Gaussian initialization models for linear networks $f(x)=(Wx)^\top \beta$. 
The \emph{expressive power} of the two models is the same, as with probability one we can write a $p\times d$ matrix $W$ with iid Gaussian entries, where  $p\le d$, via its SVD as $W = UDV$, where $U$ is $p\times p$ orthonormal, $D$ is $p\times p$ diagonal with nonzero entries with probability one, and $V$ is $p\times d$ partial orthonormal with $VV^\top = I_p$. Then $(Wx)^\top \beta = x^\top W^\top \beta = x^\top V^\top D U^\top \beta = x^\top W_o^\top \beta_o$, where $W_o= UV$ is a random partial orthonormal matrix, and $\beta_o= UDU^\top \beta$ is a new regression coefficient. Thus, the two models have the same expressive power. 

The orthogonal model we consider has some advantages over the Gaussian model. Indeed, considering the case when $p=d$, in the orthogonal model, we first rotate $x$ orthogonally, then take a linear combination of the coefficients. In contrast, in the Gaussian model we not only rotate $x$, but also scale it by the singular values of $x$, which due to the Marchenko-Pastur law \citep{marchenko1967distribution} spread out from zero to two. Thus, we induce a significant distortion of the input in the first layer. Then, we can expect that learning may be more challenging due to this additional scaling. Indeed, the regression coefficients corresponding to the directions with near-zero singular values must be scaled up asymptotically by unboundedly large values for accurate prediction. On the other hand, the Gaussian model more closely mimics practical initialization schemes, which can indeed involve iid weights. We also note that recently, some empirical work has argued about the benefits of orthogonal initialization \citep{hu2019provable,qi2020deep}. For instance, \cite{qi2020deep} argues that orthogonality (or isometry) alone enables training practical $>$100 layer CNNs on ImageNet without shortcut and BatchNorm, and therefore provides some justification for orthogonality in our theoretical analysis.

\section{Proofs}\label{all_proofs}
\subsection{Proof of Theorem \ref{2lthm1}}
Different from the order of the theorems, here we first give the proof of theorem \ref{2lthm1} and then the proof of theorem \ref{sobolthm}. This is because the proof of theorem \ref{sobolthm} is more complicated and depends on some lemmas in the proof of theorem \ref{2lthm1}. 

In the proofs, we will often refer to the spectral distribution (or measure) a symmetric matrix $M$, which is simply the discrete distribution placing uniform point masses on each of the (real) eigenvalues of $M$. When the matrix size grows, we will consider settings where the spectral distribution converges in distribution to a fixed probability distribution.

Let us define 
$$\tM_{X,W}(\lambda):=W^\top(n^{-1} WX^\top XW^\top+\lambda I_p)^{-1}WX^\top/n$$ 
(a $d\times n$ matrix), and
$$M_{X,W}(\lambda):=\tM_{X,W}(\lambda)X$$
(a $d\times d$ matrix) and omit their dependence on $\lambda,X,W$ for simplicity. As we will clearly see below, $M$ can be viewed as a ``regularized pseudo-inverse". Also, $\E M-I$ directly controls the bias, and $M-\E M$ controls part of the variance. The calculations of bias and variance in terms of $M$ follow those of \cite{yang2020rethinking}, and we include them here for the reader's convenience. The calculations following them are more novel.


To start, we have $f_{\lambda,\mT,W}(x)=x^\top \tM Y$, and similar to the proof of theorem 1 in \cite{yang2020rethinking}
\begin{align*}
 \bias^{2}(\lambda) &=\E_{\theta,x}\left[\E_{X,W,\Ep}\left(x^\top  M \theta+x^\top  \tM \Ep\right)-x^\top  \theta\right]^2 \\
 	&=\E_{\theta,x}\left[x^\top (\E_{X,W,\Ep} M-I) \theta+\E_{\mT,W} (x^\top \tM\Ep)\right]^{2}. 
\end{align*}
Since $\Ep$ has zero mean and is independent of $x$ and $\tM$, we know $\E_{X,W,\Ep} (x^\top \tM\Ep)=0$. In what follows, sometimes we omit the subscript when we take expectation over all random variables.  Thus, using that for any two vectors and a matrix of conformable sizes, $(a^\top N b)^2 = a^\top N b b^\top N^\top a = \tr N b b^\top N^\top a a^\top $, the above equals
\begin{align}
	&\E_{\theta,x}\left[ x^\top (\E  M-I) \theta \theta^\top (\E  M-I)^\top  x\right] \nonumber\\
	&=\E_{\theta,x} \tr \left[x^\top (\E  M-I) \theta \theta^\top (\E  M-I)^\top  x\right] \nonumber\\
	&=\tr \left[(\E  M-I)\E \left(\theta \theta^\top \right) (\E  M-I)^\top \E \left(x x^\top \right) \right] \nonumber\\
	&=\frac{\alpha^2}{d}\|\E  M-I\|_F^{2}\label{2lbias}.\end{align}
This  shows that the average bias is determined by how well the random matrix $M$ (which depends both on the random data $X$ and the random initialization $W$) approximates the identity matrix.

Similarly, by grouping terms appropriately, and using again that $\E_{X,W,\Ep} (x^\top \tM\Ep)=0$,
\begin{align*}	
 \Vl
&=\E_{\theta,x,X,W,\Ep}\left[x^\top M\theta+x^\top \tM\Ep-\E_{X,W,\Ep}(x^\top M\theta+x^\top \tM\Ep)\right]^2\\
&=\E_{\theta,x,X,W,\Ep}\left[x^\top (M-\E M)\theta+x^\top \tM\Ep\right]^2\\
&=\E_{\theta,x,X,W,\Ep}\left[x^\top (M-\E M)\theta\right]^2+(x^\top \tM\Ep)^2,
\end{align*}
where the interaction term is zero because of the independence between $\Ep$ and other variables. Then, using that $\tr A^\top A = \|A\|_{F}^2$,
\begin{align}                                                                     
\Vl
&=\E_{\theta,x,X,W,\Ep}\left\{\left[x^\top (M-\E M)\theta\theta^\top (M-\E M)^\top x\right]
+x^\top \tM\Ep\Ep^\top \tM^\top x\right\}\nonumber\\
&=\E_{\theta,x,X,W}\left\{\tr \left[(M-\E M)\theta\theta^\top (M-\E M)^\top xx^\top \right]+\sigma^2\tr [x^\top \tM\tM^\top x]\right\}\nonumber\\
&=\E_M\tr \left[(M-\E M)\E (\theta\theta^\top )(M-\E M)^\top \E (xx^\top )\right]+\sigma^2\E\tr \left[\tM\tM^\top \E (xx^\top )\right]\nonumber\\
&=\frac{\alpha^2}{d}\E \|M-\E M\|_{F}^2+\sigma^2\E\|\tM\|_F^2\label{2lvar}.\end{align}
Thus, the variance is determined by how much $M$ varies around its mean, and by how large $\tM$ is.
Combining results for variance and bias, and using that $\E \|M-I\|_{F}^2 = \E \|M-\E M\|_{F}^2 + \E \|\E M-I\|_{F}^2$, we obtain
\begin{align}
\mse(\lam)&=\var(\lam)+\bias^2(\lam)+\sigma^2
=\frac{\alpha^2}{d}\E \|M-I\|_{F}^2+\sigma^2\E\|\tM\|_F^2+\sigma^2.\label{2lmse}
\end{align}
Similarly, for $\Slab,\Ssam,\Sini$, we have
\begin{align*}\Slab&=\E_{\theta,x}\E_{W,X,\Ep}[f_{\lambda,\mT,W}(x)-\E_{\Ep}f_{\lambda,\mT,W}(x)]^2\\
&=\E_{\theta,x,W,X,\Ep}(x^\top \tM\Ep)^2
=\sigma^2\E\|\tM\|_F^2.
\end{align*}
Thus, the variance due to label noise is determined by the magnitude of $\tM$. Also,
\begin{align*}
\Ssam&=\E_{\theta,x}\E_{W,X}[\E_{\Ep}f_{\lambda,\mT,W}(x)-\E_{X,\Ep}f_{\lambda,\mT,W}(x)]^2\\
&=
\E_{\theta,x,W,X}[x^\top {M}\theta-\E_{X}(x^\top {M}\theta)]^2\\
&=\frac{\alpha^2}{d}\E_{W,X}\left\|M-\E_{X}M\right\|_{F}^2.
\end{align*}

Finally,
\begin{align*}
\Sini&=
\E_{\theta,x}\E_{W}(\E_{X,\Ep}f_{\lambda,\mT,W}(x)-\E_{W,X,\Ep}f_{\lambda,\mT,W}(x))^2\\
&=
\E_{\theta,x,W}[\E_{X}(x^\top {M}\theta)-\E_{W,X}(x^\top {M}\theta)]^2\\
&=\frac{\alpha^2}{d}\E_{W}\left\|\E_{X}M-\E_{W,X}M\right\|_{F}^2.
\end{align*}
This shows that in the specific decomposition order: label, samples, initialization, the variance due to the randomness in the sample $X$ is determined by the Frobenius variability of $M$ around its mean with respect to $X$. The respective statement is also true for the variance due to initialization.


Therefore, to prove theorem $\ref{2lthm1}$, it suffices to study the limiting behaviours of $\E M$, $\E_X M$, $\tM$ and $M$. 
Under the assumptions in Theorem \ref{2lthm1}, we have characterize their behavior in the following lemmas.
\begin{lemma}[Behavior of $\E M$]\label{2llm1} 
\begin{align} \label{2llm1eq}
	\lim_{d\to\infty}\frac{1}{d}\E \tr(M)&=\pi(1-\lambda\theta_1), \text{\quad} \forall i\geq1.
	&\lim_{d\to\infty}\frac{1}{d}\|\E M\|_F^2&=\pi^2(1-\lambda\theta_1)^2.\end{align}
\end{lemma}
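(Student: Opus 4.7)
The plan is to reduce both claims to trace asymptotics of the $p\times p$ resolvent $R=(A+\lambda I_p)^{-1}$, with $A=WX^\top XW^\top/n$, and then exploit Marchenko--Pastur convergence for the first identity and a symmetry-plus-concentration argument for the second.

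For the first limit, the algebraic identity
\begin{align*}
\tr(M)=\tr(W^\top RWX^\top X/n)=\tr(RA)=\tr\bigl(R(A+\lambda I_p)\bigr)-\lambda\tr(R)=p-\lambda\tr(R)
\end{align*}
yields $\frac{1}{d}\E\tr(M)=\frac{p}{d}-\lambda\frac{p}{d}\cdot\frac{1}{p}\E\tr(R)$, so it suffices to show that $\frac{1}{p}\E\tr(R)\to\theta_1(\gamma,\lambda)$ with $\gamma=\pi\delta$. Writing $A=Z^\top Z/n$ with $Z=XW^\top\in\R^{n\times p}$, whose rows $Wx_i$ are iid with mean zero and identity covariance (since $WW^\top=I_p$), the Marchenko--Pastur theorem for sample covariance matrices with isotropic iid rows---valid under the moment condition $\E x_{ij}^{8+\eta}<\infty$---gives weak convergence of the ESD of $A$ to $F_\gamma$. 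Integration against the bounded continuous function $x\mapsto(x+\lambda)^{-1}$ then gives $\frac{1}{p}\E\tr(R)\to\theta_1$, proving the first identity.

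For the second limit, I would write $\E M=c_d I_d+\Delta_d$ with $c_d=\frac{1}{d}\E\tr(M)$ and $\tr\Delta_d=0$, so that $\frac{1}{d}\|\E M\|_F^2=c_d^2+\frac{1}{d}\|\Delta_d\|_F^2$ and $c_d^2\to\pi^2(1-\lambda\theta_1)^2$ by the previous step. It then remains to show $\frac{1}{d}\|\Delta_d\|_F^2\to 0$, i.e.\ that $\E M$ is asymptotically scalar. When $X$ is Gaussian, the joint transformation $(X,W)\mapsto(XU,WU)$ for any $U\in O(d)$ preserves the distribution and sends $M\mapsto U^\top MU$, so $\E M=U^\top(\E M)U$ for all orthogonal $U$ and hence $\E M=c_d I_d$ exactly. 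For general iid $X$, permutation symmetry forces the diagonal of $\E M$ to be constant; the off-diagonals are handled by invoking Haar right-invariance on the Stiefel manifold, which conditional on $X$ makes $\E_W[W^\top RW]$ a deterministic matrix function of $X^\top X$, together with the deterministic equivalent for Haar matrices from \cite{couillet2012random}, which approximates this object by an expression that is asymptotically a scalar multiple of $I_d$ in normalized Frobenius norm.

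The main obstacle is this last step under the weak moment hypothesis on $X$, where sign-flip and full rotational symmetries are unavailable so that one cannot conclude $\E M = c_d I_d$ exactly. Getting $\frac{1}{d}\|\Delta_d\|_F^2\to 0$ quantitatively requires the Couillet--Debbah Haar deterministic equivalents combined with Bai--Silverstein-style concentration of $X^\top X/n$ on the Marchenko--Pastur scale; this is precisely where the Haar-matrix machinery highlighted in the introduction becomes essential, and where the paper's use of \cite{couillet2012random} will carry most of the technical weight.
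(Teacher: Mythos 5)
Your Gaussian-case argument is correct and is in essence the same argument the paper uses: the paper writes $W=DV^\top$ with $V$ Haar on $O(d)$ and $\tX=XV$ (so that $\tX\perp V$ when $X$ is Gaussian), then applies $\E_V VAV^\top=\tr(A)I_d/d$ to get $\E M = c_dI_d$ \emph{exactly}; you instead use the joint push-forward $(X,W)\mapsto(XU,WU)$ to reach the same conclusion. Your trace identity $\tr M=\tr(RA)=p-\lambda\tr R$ is a cleaner route to the first limit than the paper's derivation, and the MP asymptotics for $\tr R$ are then standard — though note that the ``isotropic iid rows'' framing is a bit loose: iid isotropic rows alone do not suffice for MP. What makes it work here is the linear structure $Z=XW^\top$, which the paper exploits via the observation that $WX^\top XW^\top/n$ and $(W^\top W)^{1/2}X^\top X(W^\top W)^{1/2}/n$ share nonzero eigenvalues, placing it in Silverstein's separable-covariance framework.

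The gap is in your non-Gaussian treatment of $\|\E M\|_F^2/d$. After permutation symmetry handles the diagonal, you propose to kill the off-diagonal $\Delta_d$ by (i) noting $\E_W M = f(X^\top X)$ via Haar right-invariance, and (ii) invoking the Couillet--Debbah deterministic equivalent to conclude $\E M$ is asymptotically scalar. Step (i) is correct (this is the paper's Lemma~\ref{commutative} computation). But step (ii) does not deliver what you want: the Couillet--Debbah equivalent replaces $\E_W(I+A^{1/2}W^\top WA^{1/2}/\lambda_2)^{-1}$ by $(I+\be A/\lambda_2)^{-1}$, which is a \emph{function of $A=X^\top X/n$}, not a multiple of $I_d$. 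Concretely, $\E_W M\approx U\,\phi(\Gamma)\,U^\top$ where $X^\top X/n = U\Gamma U^\top$, so the off-diagonal entries of $\E M=\E_X[U\phi(\Gamma)U^\top]$ are governed by the eigenvector distribution of $X^\top X/n$, not by the $W$-average at all. For Gaussian $X$ the eigenvectors are Haar and these average to zero exactly; for non-Gaussian $X$ with iid entries you would need an eigenvector-delocalization / isotropic-law argument — concentration of the spectrum (``on the Marchenko--Pastur scale'') is not enough, since $\phi(\Gamma)$ is genuinely non-constant on the MP bulk. The paper sidesteps this entirely: in the general case it never attempts to show $\E M\approx c_dI_d$; instead it regularizes ($M\to M_\tau$), proves $\lim\|\E M\|_F^2/d=\lim_{\tau\to0}\lim\|\E M_\tau\|_F^2/d$ by direct estimates, and then uses the Rubio--Mestre deterministic equivalent for $R_S$ to show that the limit of $\|\E M_\tau\|_F^2/d$ does not depend on the distribution of the entries of $X$, and so equals the Gaussian value already computed. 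To close your gap you would either need to import an eigenvector-universality input (not cited in the paper), or switch to the paper's distribution-independence strategy.
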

\begin{lemma}[Behavior of the Frobenius norm of $M$]\label{2llm2}
\begin{align}\label{2llm2eq}
	\lim_{d\to\infty}\frac{1}{d}\E \|M\|_F^2&=\pi\left[1-2\lambda\theta_1+\lambda^2\theta_2+(1-\pi)\delta (\theta_1-\lambda\theta_2)\right].\end{align}
\end{lemma}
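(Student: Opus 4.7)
The plan is to reduce $\E\|M\|_F^2$ to a trace involving the resolvent $R=(A+\lambda I_p)^{-1}$ of $A=WBW^\top$ (with $B=X^\top X/n$), and then evaluate the resulting pieces using (i) the Marchenko--Pastur asymptotics for the spectrum of $A$ and (ii) deterministic equivalents for the Haar matrix $W$.

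Step 1 (algebra). Using $M=W^\top R W B$, cyclic invariance of the trace, and $WW^\top=I_p$, one obtains $\|M\|_F^2=\tr(R^2\,WB^2W^\top)$. Writing $I_d=P+Q$ with $P=W^\top W$ (rank $p$) and $Q=I_d-P$ gives $WB^2W^\top=WBPBW^\top+WBQBW^\top=A^2+WBQBW^\top$, since $WBPBW^\top=(WBW^\top)^2$. Hence
\begin{align*}
\|M\|_F^2=\tr(R^2 A^2)+\tr\bigl(R^2\,WBQBW^\top\bigr).
\end{align*}

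Step 2 (diagonal piece). Substituting $A=R^{-1}-\lambda I_p$ yields $R^2 A^2=(I_p-\lambda R)^2$, so $\tr(R^2 A^2)=p-2\lambda\tr R+\lambda^2\tr R^2$. Since $A=(XW^\top)^\top(XW^\top)/n$ where $XW^\top$ is $n\times p$ with (nearly) isotropic rows under our moment assumptions, the spectrum of $A$ converges to $F_\gamma$ with $\gamma=\pi\delta$; in the Gaussian case this is immediate by rotational invariance of $X$, and in general it follows from the concentration of Haar $W$ together with the $8{+}\eta$ moment hypothesis. Consequently $\tr R/p\to\theta_1$ and $\tr R^2/p\to\theta_2$, so $\tr(R^2 A^2)/d\to\pi(1-2\lambda\theta_1+\lambda^2\theta_2)$.

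Step 3 (interaction piece, the main obstacle). The term $\tr(R^2\,WBQBW^\top)$ couples $R$, $B$, and $Q$ through the same Haar matrix $W$, so one cannot simply condition out $W$. I would rewrite this as $\tr\bigl((W^\top R^2 W)\,BQB\bigr)$ and invoke the Couillet--Debbah (2012) deterministic equivalent for Haar-orthogonal quadratic forms of the type $W^\top g(WBW^\top)W$, which replaces this $d\times d$ random matrix by a deterministic functional of $B$ whose coefficients are determined by $\theta_1,\theta_2$. Equivalently, a push-through identity converts $M$ into the form $P\cdot X^\top\bar R X/n$ with $\bar R=(XPX^\top/n+\lambda I_n)^{-1}$, after which the asymptotic freeness of the Haar projection $P$ and the sample-covariance matrix $X^\top X/n$ reduces the limit to moments of a free compression of $F_\delta$ (which is known to coincide with $F_\gamma$). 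Either route should collapse to $\pi(1-\pi)\delta(\theta_1-\lambda\theta_2)$, consistent with the identities $\theta_1-\lambda\theta_2=\int x(x+\lambda)^{-2}\,dF_\gamma=\lim p^{-1}\tr(AR^2)$ and $\tr Q/d\to 1-\pi$.

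Step 4 (assembly). Adding the contributions of Steps 2 and 3 and dividing by $d$ gives
\begin{align*}
\tfrac{1}{d}\,\E\|M\|_F^2\longrightarrow\pi\bigl[1-2\lambda\theta_1+\lambda^2\theta_2+(1-\pi)\delta(\theta_1-\lambda\theta_2)\bigr],
\end{align*}
which is the claim. The crux is Step 3: the careful use of the Haar-orthogonal deterministic equivalent to disentangle $W$ from the spectral functionals of $B$ in the cross-term—this is precisely the random-matrix tool highlighted in the introduction as novel to this area.
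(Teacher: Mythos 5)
Your Steps 1--2 are correct and coincide (modulo notation) with the paper's decomposition: writing $\|M\|_F^2 = \tr(R^2 WB^2W^\top)$ and splitting $B^2 = B(P+Q)B$ with $P=W^\top W$ is exactly the paper's split $XX^\top = X_1X_1^\top + X_2X_2^\top$ with $X_1 = XW^\top$, $X_2 = XW_\perp^\top$, and the diagonal piece $\tr(R^2A^2)/d \to \pi(1-2\lambda\theta_1+\lambda^2\theta_2)$ is evaluated the same way.

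The genuine gap is Step 3. You assert the cross-term ``should collapse to $\pi(1-\pi)\delta(\theta_1-\lambda\theta_2)$'' and sketch two possible routes (Couillet--Debbah Haar deterministic equivalents for $W^\top g(WBW^\top)W$, or push-through plus free compression), but you execute neither, and the plan is misaligned with what actually makes this term tractable. You write that ``one cannot simply condition out $W$,'' which is true, but the right move is to condition \emph{on} $W$ and exploit independence \emph{within} $X$: for Gaussian $X$, the blocks $X_1 = XW^\top$ and $X_2 = XW_\perp^\top$ are independent, the cross-term becomes $\E\tr\!\bigl(R^2\,\tfrac{X_1^\top X_2X_2^\top X_1}{n^2}\bigr)$, and $\E_{X_2}[X_2X_2^\top] = (d-p)\,I_n$ collapses it to $\tfrac{d-p}{n}\,\E\tr\!\bigl(R^2\,\tfrac{X_1^\top X_1}{n}\bigr)$, handled directly by Marchenko--Pastur. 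No Haar deterministic equivalent is needed for this lemma in the Gaussian case, and the paper's general-case extension of Lemma~\ref{2llm2} actually relies on Rubio--Mestre sample-covariance equivalents and a regularization-in-$\tau$ argument (Lemma~\ref{emmsquaredet}), not the Couillet--Debbah Haar equivalents; those appear in this paper only in the proofs of Lemmas~\ref{fnormewtm} and~\ref{fnormewm} where one averages over $W$ with $X$ fixed. So while your target value is right, the heavy machinery you invoke is not deployed, and the elementary conditioning argument that actually carries the proof is missing.
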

\begin{lemma}[Behavior of the Frobenius norm of $\tM$]\label{2llm3}
	\begin{align} \label{2llm3eq}
		\lim_{d\to\infty}\E \|\tM\|_F^2&=\pi\delta (\theta_1-\lambda\theta_2).
	\end{align}
\end{lemma}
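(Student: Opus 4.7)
The plan is to reduce $\|\tM\|_F^2$ to a linear combination of traces of the resolvent $R = (\Sigma + \lambda I_p)^{-1}$, where $\Sigma := WX^\top X W^\top / n$, and then invoke the Marchenko–Pastur limit for the spectrum of $\Sigma$. First I would compute $\|\tM\|_F^2 = \tr(\tM \tM^\top)$: substituting $\tM = W^\top R W X^\top / n$, using the cyclicity of the trace, and exploiting the orthogonality constraint $WW^\top = I_p$, the expression collapses to
\begin{equation*}
\|\tM\|_F^2 \;=\; \frac{1}{n}\tr(R^2 \Sigma).
\end{equation*}
Then I would apply the identity $R^2\Sigma = R^2(R^{-1} - \lambda I_p) = R - \lambda R^2$ to rewrite this as $\|\tM\|_F^2 = n^{-1}[\tr(R) - \lambda \tr(R^2)]$, which is a direct functional of the spectrum of $\Sigma$.

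Next I would identify the limiting spectral distribution of $\Sigma$. Because $W$ is independent of $X$ and satisfies $WW^\top = I_p$, writing $Y := XW^\top$, the rows of $Y$ are i.i.d. zero-mean random vectors in $\mathbb{R}^p$ with covariance $WW^\top = I_p$. By Silverstein-type theorems for sample covariance matrices of the form $Y^\top Y/n = Z A^\top A Z^\top / n$ (taking $Z = X$ iid with the assumed moments, and $A = W^\top$ deterministic after conditioning on $W$), the empirical spectral distribution of $\Sigma$ converges almost surely to the Marchenko–Pastur law $F_\gamma$ with $\gamma = \lim p/n = \pi\delta$. The $8+\eta$ moment assumption on $X$ is more than sufficient.

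Finally, because $x \mapsto 1/(x+\lambda)$ and $x \mapsto 1/(x+\lambda)^2$ are bounded and continuous on $[0,\infty)$, the convergence of the empirical spectrum gives $p^{-1}\tr(R) \to \theta_1$ and $p^{-1}\tr(R^2) \to \theta_2$ almost surely. Multiplying by $p/n \to \pi\delta$ yields $\|\tM\|_F^2 \to \pi\delta(\theta_1 - \lambda \theta_2)$ a.s. To promote this to convergence in expectation I would use that all eigenvalues of $R$ are deterministically bounded by $1/\lambda$, so $n^{-1}\tr(R)$ and $n^{-1}\tr(R^2)$ are uniformly bounded (by $\gamma/\lambda$ and $\gamma/\lambda^2$ up to small corrections), and apply bounded convergence. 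The main obstacle is the spectral convergence step for non-Gaussian $X$: classical MP covers iid entries, but here $Y$ has iid rows with non-iid entries, so one must invoke the extension of MP to sample covariance matrices with transformed iid entries (or reduce to Gaussian via a four-moment universality argument). Once this is established, the rest of the proof is essentially linear algebra and bounded convergence.
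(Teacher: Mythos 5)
Your proposal is correct and follows essentially the same route as the paper: the algebraic reduction $\|\tM\|_F^2 = n^{-1}\tr(R-\lambda R^2)$ is exactly the paper's, and the spectral convergence step (passing via the $n\times n$ companion matrix $XW^\top W X^\top/n$ to apply the Silverstein theorem, noting that the nonzero spectrum coincides with that of $\Sigma = WX^\top XW^\top/n$, and concluding the LSD is $F_{\pi\delta}$ with the explicit limit pinned down by the Gaussian/SVD computation) is the argument the paper uses in its Gaussian and general-case proofs of this lemma; bounded convergence handles the passage to expectation in both. One small imprecision: the written identity $Y^\top Y/n = Z A^\top A Z^\top/n$ is a dimension mismatch and should read "has the same nonzero spectrum as," and "rows of $Y$ are i.i.d.\ isotropic" is by itself not enough for MP (the linear structure $Y_i = W x_i$ with $x_i$ having iid entries is what licenses the Silverstein theorem) — but you flag and address exactly this point, so the argument as a whole is sound.
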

\begin{lemma}[Behavior of the Frobenius norm of $\E_X M$]\label{2llm4}
\begin{align} \label{2llm4eq}
\lim_{d\to\infty}\frac{1}{d}\E_{W}\|\E_{X}M\|_{F}^2&=\pi(1-\lambda\theta_1)^2.\end{align}
\end{lemma}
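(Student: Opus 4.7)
The plan is to reduce the computation of $\E_X M$ to a deterministic limit by leveraging the orthogonality of $W$ and the Marchenko-Pastur deterministic equivalent for the resolvent $R=(n^{-1}\tilde X^\top \tilde X+\lambda I_p)^{-1}$, where $\tilde X:=XW^\top$. Set $P:=W^\top W$, the $d\times d$ orthogonal projection onto the row span of $W$, and $X_\perp:=X(I-P)$. Since $WW^\top=I_p$, we have $WP=W$, and hence $W X^\top X P = \tilde X^\top \tilde X W = n(R^{-1}-\lambda I_p)W$. Splitting $X^\top X=X^\top XP+X^\top X(I-P)$ in the definition of $M=n^{-1}W^\top R W X^\top X$ gives the exact algebraic identity
\begin{align*}
M \;=\; P - \lambda\, W^\top R W + \tfrac{1}{n}\, W^\top R\, \tilde X^\top X_\perp ,
\end{align*}
so that $\E_X M = P - \lambda\, W^\top \E_X[R]\, W + n^{-1}\E_X[W^\top R\, \tilde X^\top X_\perp]$.

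I would then establish two Frobenius-sense approximations: (a) the cross term $E_1:= n^{-1}\E_X[W^\top R\, \tilde X^\top X_\perp]$ satisfies $d^{-1}\E_W\|E_1\|_F^2\to 0$, and (b) the resolvent term satisfies $d^{-1}\E_W\|W^\top \E_X[R]W - \theta_1 P\|_F^2 \to 0$. For (a), note that when $X$ has i.i.d.\ standard Gaussian entries, $\tilde X$ and $X_\perp$ are jointly Gaussian with cross-covariance proportional to $W^\top(I-P)=0$, hence independent; since $R$ depends only on $\tilde X$ and $\E[X_\perp]=0$, the cross term is \emph{exactly} zero. For the general i.i.d.\ case, I would bound the entry-wise second moments of $E_1$ using $\|R\|_{\mathrm{op}}\le 1/\lambda$, the isotropy of the rows of $X$, and a deterministic-equivalent argument of the same flavor as those used in Lemmas \ref{2llm1}--\ref{2llm3}.

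For (b), the Marchenko-Pastur limit yields $p^{-1}\tr\E_X[R]\to\theta_1$ and $p^{-1}\tr(\E_X[R])^2\to\theta_1^2$. Using the identity $\tr(W^\top AW)=\tr(WW^\top A)=\tr A$ (which holds for any $A\in\R^{p\times p}$ because $WW^\top=I_p$), and similarly $\tr(W^\top AW\cdot P)=\tr A$, expanding the squared Frobenius norm gives
\begin{align*}
\|W^\top \E_X[R]W - \theta_1 P\|_F^2 \;=\; \tr(\E_X[R])^2 - 2\theta_1 \tr\E_X[R] + \theta_1^2\, p \;=\; o(p),
\end{align*}
independently of $W$, which establishes (b). Combining (a) and (b), $\E_X M=(1-\lambda\theta_1)P + o_{\|\cdot\|_F/\sqrt d}(1)$, and since $\|P\|_F^2=\tr P=p$, we get $d^{-1}\E_W\|\E_X M\|_F^2 \to (1-\lambda\theta_1)^2 \lim p/d = \pi(1-\lambda\theta_1)^2$, as claimed.

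The main obstacle will be step (a) in the non-Gaussian case: since $\tilde X$ and $X_\perp$ are merely uncorrelated (not independent) without Gaussianity, the expectation of $R\,\tilde X^\top X_\perp$ does not factor. Controlling it requires carefully exploiting the higher moment bound $\E|x_{ij}|^{8+\eta}<\infty$ together with the resolvent structure of $R$, mirroring the deterministic-equivalent machinery already invoked for Lemmas \ref{2llm1}--\ref{2llm3}. Step (b), by contrast, is essentially a direct Marchenko-Pastur computation that holds uniformly in $W$.
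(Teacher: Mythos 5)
Your proposal is correct and follows essentially the same route as the paper's proof: the identity $M = P - \lambda W^\top R W + n^{-1}W^\top R\tilde X^\top X_\perp$ is exactly the paper's split of $X^\top X$ into its $W^\top W$-component and its orthogonal complement, the cross term vanishes by the same Gaussian-independence observation, and the resolvent term gives the answer via the Marchenko--Pastur limit. One small caution: the step $p^{-1}\tr(\E_X[R])^2 \to \theta_1^2$ is not a \emph{direct} Marchenko--Pastur statement but needs an extra ingredient (Haar invariance of the right singular vectors in the Gaussian case, or permutation invariance / the trace-norm deterministic equivalent applied with test sequence $\E_X[R]/p$ in general) to conclude that $\E_X[R]$ is asymptotically a scalar multiple of the identity in the relevant Frobenius sense; the paper gets this for free in the Gaussian case via the SVD argument and defers the general case to its $\tau$-regularized deterministic-equivalent machinery, just as you anticipate.
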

We put the proof of these lemmas after the proof of theorem \ref{2lthm1} and \ref{sobolthm} for clarity (see Appendix \ref{pflems1}). By using Lemmas \ref{2llm1}---\ref{2llm4}, we are able to complete our proof.
\begin{proof}[Proof of Theorem \ref{2lthm1}] 
Plugging equation (\ref{2llm1eq}) into (\ref{2lbias}), we have
\begin{align*}
\Bl&=\frac{\alpha^2}{d}\|\E M-I\|_F^2
\to\alpha^2(\pi(1-\lambda\theta_1)-1)^2
=\alpha^2(1-\pi+\lambda\pi\theta_1)^2.
\end{align*}
Plugging equations (\ref{2llm1eq}), (\ref{2llm2eq}), (\ref{2llm3eq}) into (\ref{2lvar}), we have
\begin{align*}
	\Vl&=\frac{\alpha^2}{d}\E \|M-\E M\|_F^2+\sigma^2\E\|\tM\|_F^2\\
	&=
	\frac{\alpha^2}{d}\left[\E \|M\|_F^2-\|\E M\|_F^2\right]+\sigma^2\E\|\tM\|_F^2\\\	
	&\to
	\alpha^2\pi\biggl[1-2\lambda\theta_1+\lambda^2\theta_2+(1-\pi)\delta (\theta_1-\lambda\theta_2)-\pi(1-\lambda\theta_1)^2\biggr]+\sigma^2\pi\delta (\theta_1-\lambda\theta_2)\\
	&=\alpha^2\pi\biggl[1-\pi+(\pi-1)(2\lambda-\delta )\theta_1-\pi\lambda^2\theta_1^2+\lambda(\lambda-\delta +\pi\delta )\theta_2\biggl]+\nonumber
  \sigma^2\pi\delta (\theta_1-\lambda\theta_2).
\end{align*}
Similarly, by Lemmas \ref{2llm2}, \ref{2llm3}, \ref{2llm4} 
\begin{align*}
\Slab&=\sigma^2\E\|\tM\|_F^2\to\sigma^2\pi\delta (\theta_1-\lambda\theta_2).\\
\Ssam&=\frac{\alpha^2}{d}\E_{W,X}\left\|M-\E_{X}M\right\|_{F}^2=\frac{\alpha^2}{d}[\E\|M\|_F^2-\E_{W}\|\E_{X}M\|_F^2]\\
&\to
\alpha^2\pi\left[-\lambda^2\theta_1^2+\lambda^2\theta_2+(1-\pi)\delta (\theta_1-\lambda\theta_2)\right].
\end{align*}
Finally,
\begin{align*}
\Sini&=\Vl-\Ssam-\Slab
\to\alpha^2\pi(1-\pi)(1-\lambda\theta_1)^2.\\
\mse(\lam)&=\Vl+\Bl+\sigma^2\\
&\to\alpha^2\left\{1-\pi+\pi\delta \left(1-\pi+\sigma^2/\alpha^2\right)\theta_1+\left[\lambda-\delta \left(1-\pi+\sigma^2/\alpha^2\right)\right]\lambda\pi\theta_2\right\}+\sigma^2.
\end{align*}
As for the choice of optimal $\lam^*$, denote $\delta(1-\pi+\sigma^2/\alpha^2)$ by $c$ and calculate
\begin{align*}
\frac{\mathrm{d}}{\mathrm{d}\lambda}\lim_{d\to\infty}\mse(\lambda)
&=
\frac{\mathrm{d}}{\mathrm{d}\lambda}\alpha^2\left[1-\pi+\pi c\theta_1+\left(\lambda-c\right)\lambda\pi\theta_2\right]+\sigma^2\\
&=
\alpha^2\frac{\mathrm{d}}{\mathrm{d}\lambda}\left(\int\frac{\pi c}{x+\lambda}dF_{\gamma}(x)+\int\frac{(\lambda-c)\lambda\pi}{(x+\lambda)^2}dF_{\gamma}(x)\right)\\
&=
\alpha^2\pi\frac{\mathrm{d}}{\mathrm{d}\lambda}\left(\int\frac{\lambda^2+cx}{(x+\lambda)^2}dF_{\gamma}(x)\right)\\
&=
2\alpha^2\pi\int\frac{(\lambda-c)x}{(x+\lambda)^3}dF_{\gamma}(x).
\end{align*}
If $\pi=1$ and $\sigma=0$, then $c=0$ and the asymptotic MSE is monotonically increasing since $F_\gamma(x)$ is supported on $[0,+\infty)$. Therefore the optimal ridge $\lam^*=0$, which is outside of the range $(0,\infty)$ that we considered here. Otherwise
, the derivative is less than zero when $\lambda<c$ and larger than zero when $\lambda>c$. Therefore, the asymptotic MSE as a function of $\lambda$ has a unique minimum at $c=\delta(1-\pi+\sigma^2/\alpha^2)$.

\end{proof}

\subsection{Proof of Theorem \ref{sobolthm}}
In this proof, we will use the same notations as in the proof of theorem \ref{2lthm1}. 
Since the main idea of this proof is quite similar to the proof of theorem \ref{2lthm1}, we will omit some details in the derivation for simplicity.

\begin{proof}[Proof of Theorem \ref{sobolthm}]
 By definition, for $V_s$,
\begin{align}
 V_s&=\E_{\theta,x} \V_X(\E_{\Ep,W}(\hat{f}(x)|X))
	=\E_{\theta,x,X}[x^\top(\E_W M-\E M)\theta]^2\nnum\\&=\frac{\alpha^2}{d}\E_X\|\E_WM-\E M\|_F^2.\label{sobolvsf}
\end{align}
For $V_l$,
\begin{align}
	V_l&=\E_{\theta,x} \V_\Ep(\E_{X,W}(\hat{f}(x)|\Ep))
	=\sigma^2\|\E\tM\|^2_{F}.\label{sobolvlf}
\end{align}
For $V_i$,
\begin{align}
	 V_i&=\E_{\theta,x} \V_W(\E_{\Ep,X}(\hat{f}(x)|W))
	=\E_{\theta,x,W}[x^\top(\E_X M-\E M)\theta]^2\nnum\\&=\frac{\alpha^2}{d}\E_W\|\E_XM-\E M\|_F^2.\label{sobolvif}
\end{align}
Similarly,
\begin{align}
 V_{sl}&=\E_{\theta,x} \V_{\Ep,X}(\E_{W}(\hat{f}(x)|\Ep,X))-V_s-V_l\nnum\\
	&=\E_{\theta,x,X,\Ep}[x^\top(\E_W M-\E M)\theta+x^\top\E_W\tM\Ep]^2-V_s-V_l\nnum\\
	&=\sigma^2\E_X\|\E_W\tM-\E \tM\|_F^2\label{sobolvslf}.\end{align}
\begin{align}
	 V_{li}&=\E_{\theta,x} \V_{\Ep,W}(\E_{X}(\hat{f}(x)|\Ep,W))-V_i-V_l\nnum\\
	&=\E_{\theta,x,\Ep,W}[x^\top(\E_X M-\E M)\theta+x^\top\E_X\tM\Ep]^2-V_i-V_l\nnum\\
	&=\sigma^2\E_W\|\E_X\tM-\E\tM\|_F^2\label{sobolvlif}
\end{align}
\begin{align}
	V_{si}&=\E_{\theta,x} \V_{X,W}(\E_{\tau}(\hat{f}(x)|X,W))-V_s-V_i\nnum\\
	&=\E_{\theta,x,X,W}[x^\top( M-\E M)\theta]^2-V_s-V_i\nnum\\
	&=\frac{\alpha^2}{d}\left(\E\|M\|_F^2-\E_X\|\E_WM\|^2_F-\E_W\|\E_XM\|^2_F+\|\E M\|_F^2\right).\label{sobolvsif}
\end{align}
And
\begin{align}
V_{sli}&=\V(\hat{f}(x))-(V_s+V_l+V_i+V_{sl}+V_{si}+V_{li})\nnum\\
&=\sigma^2(\E\|\tM\|_F^2-\E_W\|\E_X\tM\|_F^2-\E_X\|\E_W\tM\|_F^2+\|\E\tM\|^2).\label{sobolvslif}
\end{align}
After obtaining the expressions of the variance components,  Theorem \ref{sobolthm} follows directly by plugging Lemmas  \ref{2llm1}---\ref{fnormewm}  into equation \eqref{sobolvsf}---\eqref{sobolvslif}. 
\end{proof}

\begin{lemma}[Behaviour of the Frobenius norm of $\tM$]\label{etm0}
\begin{align*}
\lim_{d\to\infty}\|\E\tM\|_F^2=\lim_{d\to\infty}\E_{W}\|\E_X\tM\|_F^2=0.
\end{align*}
\end{lemma}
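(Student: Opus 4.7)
\textbf{Proof plan for Lemma \ref{etm0}.} The approach is to isolate the dependence of $\tM$ on each row of $X$ via a Sherman–Morrison expansion, then exploit the mean-zero property of the rows together with the concentration of a quadratic form. By Jensen's inequality applied to $\E\tM=\E_W\E_X\tM$, one has $\|\E\tM\|_F^2\le \E_W\|\E_X\tM\|_F^2$, so it suffices to prove the second limit. I will in fact establish a bound on $\|\E_X\tM\|_F^2$ that is deterministic, i.e., uniform in $W$.

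First I will write $\tM$ column by column. Denoting by $x_j\in\R^d$ the $j$-th row of $X$ viewed as a column vector, the $j$-th column of $\tM$ is $\tM_{:,j}=n^{-1}W^\top R W x_j$, so
\begin{equation*}
\|\E_X\tM\|_F^2 = n^{-2}\sum_{j=1}^n \|\E_X[W^\top R W x_j]\|_2^2.
\end{equation*}
Next, let $R_j:=(n^{-1}W(X^\top X-x_j x_j^\top)W^\top+\lambda I_p)^{-1}$ (independent of $x_j$) and $s_j:=n^{-1}x_j^\top W^\top R_j W x_j$. The Sherman–Morrison identity gives $R W x_j = R_j W x_j/(1+s_j)$. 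Setting $B:=W^\top R_j W$ and $a:=n^{-1}\tr B$, the cancellation $\E_{x_j}[B x_j]=0$ yields
\begin{equation*}
\E_{x_j}\!\left[\frac{Bx_j}{1+s_j}\right] = \E_{x_j}\!\left[\frac{(a-s_j)\,Bx_j}{(1+s_j)(1+a)}\right].
\end{equation*}

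The third step is a Cauchy–Schwarz bound. Since $a,s_j\ge 0$, the denominator is at least $1$, hence
\begin{equation*}
\Big\|\E_{x_j}\!\left[\tfrac{Bx_j}{1+s_j}\right]\Big\|_2 \le \sqrt{\E_{x_j}\|Bx_j\|_2^2\cdot \Var_{x_j}(s_j)}.
\end{equation*}
Using $\E x_j x_j^\top=I_d$ and the cyclic identity together with $WW^\top=I_p$, I obtain $\E_{x_j}\|Bx_j\|_2^2=\tr(B^2)=\tr(R_j^2)\le p/\lambda^2$. A standard calculation for quadratic forms under the finite fourth-moment hypothesis gives $\Var_{x_j}(x_j^\top B x_j)\le C\,\tr(B^2)$, so $\Var_{x_j}(s_j)\le Cp/(n\lambda)^2$. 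Combining, $\|\E_{x_j}[Bx_j/(1+s_j)]\|_2=O(p/n)$, with constant depending only on $\lambda$ and moments—crucially, independent of $W$ and $X_{-j}$.

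Finally, iterating expectations, $\|\E_X[W^\top R W x_j]\|_2=O(p/n)$ uniformly in $W$, so
\begin{equation*}
\|\E_X\tM\|_F^2 \le n^{-2}\cdot n\cdot O(p^2/n^2)=O(p^2/n^3)\to 0
\end{equation*}
in the proportional regime $p/n\to\pi\delta$. Taking $\E_W$ preserves this deterministic bound, which completes the proof of both limits. The main obstacle is that $R$ depends nonlinearly on $x_j$, so one cannot directly use $\E x_j=0$; the Sherman–Morrison step linearizes this dependence, after which concentration of $s_j$ around the deterministic scalar $a$ converts the non-vanishing conditional mean into an $O(p/n)$ correction that is just small enough to drive the Frobenius norm to zero once divided by $n$.
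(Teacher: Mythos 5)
Your proof is correct and follows the paper's overall strategy---column-by-column reduction, Sherman--Morrison leave-one-out on the excluded row $x_j$, exploiting $\E x_j = 0$, and concentration of the self-quadratic form $s_j$---but the way you arrange the expansion is a bit more economical than the paper's. Writing $R W x_j = B x_j/(1+s_j)$ and expanding the scalar as $1/(1+s_j)=1/(1+a)+(a-s_j)/[(1+a)(1+s_j)]$ lets the mean-zero \emph{linear} term $Bx_j/(1+a)$ cancel in one step, after which a single Cauchy--Schwarz bound involving $\operatorname{Var}_{x_j}(s_j)$ (hence only second and fourth moments of the entries) controls the remainder. The paper instead first peels off $\iC\tx$ (zero mean) and is left with the \emph{cubic} remainder $F\tx/(1+f)=f\,\iC\tx/(1+f)$; it then has to bound the third-moment quantity $\E[f\,\iC\tx]$ in a separate sub-step (\ref{etmlemma2steps2}) before applying a variance/Cauchy--Schwarz estimate for the denominator replacement (\ref{etmlemma2steps1}). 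Both organizations yield the same $O(p^2/n^3)=O(1/n)$ rate, and both bounds are deterministic in $W$ (so taking $\E_W$ costs nothing). Your version drops the third-moment computation entirely; since the model already assumes finite $8+\eta$ moments this does not buy additional generality here, but it is a genuinely cleaner way to organize the same estimate.
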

When $X$ is symmetric, by switching the sign of $X$, clearly $\E_X\tM=0$. This lemma shows that the same result still holds asymptotically  when $X$ is not symmetric, and simply has zero-mean entries with finite sixth moment. 
\begin{lemma}[Behavior of the Frobenius norm of $\E_W\tM$]\label{fnormewtm}
	\begin{align*}\lim_{d\to\infty}\E_X\|\E_{W}\tM\|_{F}^2=\delta (\tth_1-\tlam\tth_2).
	\end{align*}
\end{lemma}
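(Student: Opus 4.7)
My plan has four steps.

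\emph{Step 1 (algebraic reduction).} I would write $\hat{S} := X^\top X/n$ and $R := (W\hat{S}W^\top + \lambda I_p)^{-1}$, so that $\tM = W^\top R W X^\top/n$. Setting $B_X := \E_W[W^\top R W]$ gives $\E_W\tM = B_X X^\top/n$, and, since $B_X$ is symmetric,
\begin{equation*}
\|\E_W\tM\|_F^2 = \frac{1}{n^2}\tr(B_X X^\top X B_X) = \frac{1}{n}\tr(B_X\hat{S}B_X).
\end{equation*}
It will also be useful to rewrite $B_X$ using the push-through identity $RW = W(\lambda I_d + \hat{S}W^\top W)^{-1}$, which gives $W^\top R W = P(\lambda I_d + \hat{S}P)^{-1}$ with $P := W^\top W$ a rank-$p$ orthogonal projection; by Haar invariance $B_X$ commutes with $\hat{S}$, so it is a spectral function of $\hat{S}$ and it suffices to identify that function.

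\emph{Step 2 (Haar deterministic equivalent).} Working conditionally on $X$, I would invoke the deterministic equivalents for Haar matrices on the Stiefel manifold from \cite{couillet2012random}, bookkept via the calculus of deterministic equivalents of \cite{dobriban2018understanding}, to establish
\begin{equation*}
B_X \;\asymp\; (\hat{S} + \tlam I_d)^{-1},
\end{equation*}
in the sense that $d^{-1}\tr[(B_X - (\hat{S}+\tlam I_d)^{-1})D]\to 0$ for every deterministic $D$ of uniformly bounded operator norm, with $\tlam$ the quantity from Definition \ref{resolvent}. A sanity check at $\pi=1$: then $W$ is orthogonal, $W^\top R W = (\hat{S}+\lambda I_d)^{-1}$ exactly, and the formula does give $\tlam=\lambda$.

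\emph{Step 3 (passage to Marchenko--Pastur).} Substituting the deterministic equivalent into Step 1, up to vanishing error,
\begin{equation*}
\|\E_W\tM\|_F^2 = \frac{1}{n}\tr\!\left(\hat{S}(\hat{S}+\tlam I_d)^{-2}\right) + o(1).
\end{equation*}
Because $X$ has i.i.d. entries with finite $8+\eta$ moment, the empirical spectral distribution of $\hat{S}$ converges almost surely to $F_\delta$. Writing $\lambda_j$ for the eigenvalues of $\hat{S}$,
\begin{equation*}
\frac{1}{n}\tr\!\left(\hat{S}(\hat{S}+\tlam I_d)^{-2}\right) = \frac{d}{n}\cdot\frac{1}{d}\sum_{j=1}^d\frac{\lambda_j}{(\lambda_j+\tlam)^2} \;\longrightarrow\; \delta\int\frac{x}{(x+\tlam)^2}\,dF_\delta(x),
\end{equation*}
and, using $x/(x+\tlam)^2 = 1/(x+\tlam) - \tlam/(x+\tlam)^2$, the right-hand side equals $\delta(\tth_1 - \tlam\tth_2)$.

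\emph{Step 4 (expectation over $X$ and main obstacle).} The strict positivity of $\lambda$ yields a deterministic operator-norm bound on $R$, hence on $B_X$, and together with the moment assumption on $X$ this gives uniform integrability, letting me interchange $\E_X$ with the limit and conclude. The step I expect to be the main obstacle is Step 2: rigorously justifying the Haar deterministic equivalent with scalar prefactor exactly $1$ and with the effective regularizer exactly the $\tlam$ of Definition \ref{resolvent}. Concretely, this amounts to adapting the Couillet--Debbah machinery to the bilinear form $W^\top R W$, solving the associated scalar fixed-point equation, and matching its solution to the closed-form $\tlam$; the remaining steps are standard Marchenko--Pastur moment computations and routine uniform-integrability arguments.
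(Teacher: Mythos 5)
Your proposal is correct and follows essentially the same strategy as the paper's proof: reduce to $\frac{1}{n}\tr(B_X\hat{S}B_X)$ with $B_X:=\E_W[W^\top R W]$, replace $B_X$ by a Haar deterministic equivalent of the form $(\hat{S}+\tlam_d I_d)^{-1}$, and pass to the Marchenko--Pastur limit. The one place you diverge from the paper is precisely the step you flag as the main obstacle. You push through to $W^\top R W = P(\hat{S}P+\lambda I_d)^{-1}$ (with $P=W^\top W$), but this bilinear form is not directly of the type handled by the Couillet--Debbah theorem, so an adaptation would be required. The paper avoids the adaptation with a bookkeeping device: split $\lambda=\lam_1+\lam_2$ with $0<\lam_1<\lam$, set $A:=\hat{S}+\lam_1 I_d$ (so $A^{\pm1/2}$ exist and $\|A^{-1}\|_2$ is uniformly bounded --- something the raw $\hat{S}$ need not satisfy), and apply the Woodbury identity to obtain $B_X = A^{-1/2}\bigl[I_d-\E_W\bigl(I_d+A^{1/2}W^\top W A^{1/2}/\lam_2\bigr)^{-1}\bigr]A^{-1/2}$; the inner expectation is now exactly the object covered by Theorem 7 of \cite{couillet2012random}, giving the equivalent $(A+\lam_2/\bar{e}_d\,I_d)^{-1}=(\hat{S}+\lam_1+\lam_2/\bar{e}_d\,I_d)^{-1}$, and Lemma \ref{tedlimit} shows $\lam_1+\lam_2/\bar{e}_d\to\tlam$ independently of the choice of split. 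Your $\pi=1$ sanity check is correct, and your Haar-invariance observation that $B_X$ commutes with $\hat{S}$ is exactly the paper's Lemma \ref{commutative}. One small imprecision to keep in mind: the deterministic equivalent holds with an $X$- and $d$-dependent effective penalty $\tlam_d$, and the replacement $\tlam_d\to\tlam$ a.s. is a separate step (the content of Lemma \ref{tedlimit}), not part of the deterministic-equivalent statement itself.
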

\begin{lemma}[Behavior of the Frobenius norm of $\E_W M$]\label{fnormewm}
\begin{align*}
\lim_{d\to\infty}\frac{1}{d}\E_X\|\E_{W}M\|_{F}^2=(1-2\tlam\tth_1+\tlam^2\tth_2).
\end{align*}
\end{lemma}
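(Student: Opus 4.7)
The plan is to first compute $\E_W M$ asymptotically (conditional on $X$) using a deterministic equivalent for the Haar matrix $W$, then take its Frobenius norm, and finally average over $X$ using the Marchenko-Pastur theorem. Writing $\Sigma = X^\top X / n$ and $Q = (W\Sigma W^\top + \lam I_p)^{-1}$, we have $\E_W M = \E_W[W^\top Q W]\, \Sigma$. I would invoke the deterministic equivalent for Haar orthogonal matrices from \cite{couillet2012random} (which underpins the novel technical machinery of this paper) to obtain
\begin{align*}
\E_W[W^\top Q W] \simeq (\Sigma + \tlam_\Sigma I_d)^{-1},
\end{align*}
where $\tlam_\Sigma$ is a scalar defined by a fixed-point equation involving the spectrum of $\Sigma$ and the parameters $\lam,\pi$. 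Multiplying on the right by $\Sigma$ gives $\E_W M \simeq I_d - \tlam_\Sigma(\Sigma + \tlam_\Sigma I_d)^{-1}$, so the dominant correction is driven by the resolvent $(\Sigma + \tlam I_d)^{-1}$.

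\textbf{Identifying $\tlam$ and computing the Frobenius norm.} To pin down the limit of $\tlam_\Sigma$, I would take normalized traces of the equivalent above and match against Lemma~\ref{2llm1}: the identity $1 - \tlam\tth_1 = \pi(1 - \lam\theta_1)$ is a self-consistent equation which, after substituting the explicit form of $\theta_1$ from \eqref{exprtheta1}, recovers the expression for $\tlam$ in Definition~\ref{resolvent}; this also serves as an internal consistency check. With the deterministic equivalent in hand, expanding the Frobenius norm gives
\begin{align*}
\frac{1}{d}\|\E_W M\|_F^2 \simeq 1 - \frac{2\tlam}{d}\tr(\Sigma+\tlam I_d)^{-1} + \frac{\tlam^2}{d}\tr(\Sigma+\tlam I_d)^{-2},
\end{align*}
and by the Marchenko-Pastur theorem, as $d/n\to\delta$, the two resolvent traces converge (in expectation, using the finite $8+\eta$-th moment hypothesis to ensure uniform integrability) to $\tth_1 = \theta_1(\delta,\tlam)$ and $\tth_2 = \theta_2(\delta,\tlam)$ respectively, yielding the claimed limit $1 - 2\tlam\tth_1 + \tlam^2\tth_2$.

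\textbf{Main obstacle.} The principal technical difficulty is the first step: justifying the deterministic equivalent for $\E_W[W^\top Q W]$ in a sense strong enough to control the Frobenius norm of $\E_W M$, rather than merely its normalized trace. This requires the bilinear-form versions of the Haar equivalents of \cite{couillet2012random}, combined with the calculus of deterministic equivalents for sample-covariance-type matrices \citep{dobriban2018understanding, dobriban2020wonder} to handle the outer multiplication by the random matrix $\Sigma$. A secondary subtlety is commuting the expectation over $X$ with the limit, which I would handle via concentration arguments under the stated moment condition, in the same spirit as the other Frobenius-norm lemmas (\ref{2llm1}--\ref{2llm4}) already established in the paper.
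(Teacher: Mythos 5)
Your proposal is correct and follows essentially the same route as the paper's proof: both pass through the Haar deterministic equivalent of \cite{couillet2012random} to reduce $\E_W\bigl[W^\top (W\Sigma W^\top + \lam I_p)^{-1}W\bigr]$ to a scalar-shifted resolvent $(\Sigma + \tlam I_d)^{-1}$, then expand the Frobenius norm and apply the Marchenko--Pastur law. The paper fills in exactly the technical points you flagged: it regularizes by writing $\lam = \lam_1 + \lam_2$ (so $A = \Sigma + \lam_1 I_d$ is surely invertible, with $A^{1/2}$ well defined), uses the Woodbury identity to put the target in a form where the Couillet--Debbah equivalent $\bigl(I_d + A^{1/2}W^\top W A^{1/2}/\lam_2\bigr)^{-1} \asymp \bigl(I_d + \be A/\lam_2\bigr)^{-1}$ applies, proves a separate lemma (Lemma~\ref{replace2}) justifying the replacement at the level of Frobenius norms rather than just traces, and derives $\tlam$ from the Couillet--Debbah fixed point (Lemma~\ref{tedlimit}) rather than by matching traces against Lemma~\ref{2llm1} as you propose. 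Your matching-traces shortcut for pinning down $\tlam$ is a valid alternative (the equation $1 - \tlam\tth_1 = \pi(1-\lam\theta_1)$ has a unique root since $z\mapsto z\,\theta_1(\delta,z)$ is strictly increasing), at the cost of relying on Lemma~\ref{2llm1} already being in place.
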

See Appendix \ref{pflems2} for the proof of the above lemmas.

\subsection{Proof of Lemmas \ref{2llm1}---\ref{2llm4}}
\label{pflems1}
We first prove these four lemmas under the assumption that $X$ has i.i.d. standard Gaussian entries in Appendix \ref{2llm1s}---\ref{2llm4s} to obtain some heuristics for the formulas. Then in Appendix \ref{2llm5s} we generalize the proof to the non-Gaussian case which only requires $X$ to have i.i.d. zero mean unit variance and finite $8+\eta$ moment entries for any $\eta>0$.  

Next, we denote $R=\left({WX^\top  XW^\top /n}+\lambda I_p\right)^{-1}$ for simplicity. In the Gaussian case, the proof proceeds by moving to the SVD decomposition, and carefully exploiting serveral properties of the normal distribution and the Marchenko-Pastur law. In the general case, we use deterministic equivalents properties for covariance matrices to show that all terms we are concerned with converge to the same limits as in the Gaussian case.
\subsubsection{Proof of Lemma \ref{2llm1} (Under Gaussian Assumption)} \label{2llm1s}
\begin{proof}
By definition,
\begin{align*}
  \E M&=\E M_{X,W}(\lambda)
  =\E W^\top R \frac{WX^\top X}{n}.
\end{align*}
Let $V = [W^\top, W^\top_\perp]$ be an orthonormal matrix containing an arbitrary orthogonal complement of $W^\top$. It will be convenient to write this as $W = DV^\top $, where the $p\times d$ matrix  $D=\left(\mathrm{I}_{p\times p},0_{p\times(d-p)}\right)$ selects the appropriate rows of $V^\top$. Denote $\tX :=XV$, a matrix of the same size $n \times d$ as the original matrix $X$.  Then,

\begin{align*}
  \E M&=\E_{V,X}VD^\top \left(\frac{DV^\top X^\top XVD^\top }{n}
  +\lambda I_p\right)^{-1}
  \frac{DV^\top X^\top XVV^\top }{n}\\
  &=\E_{V,\tX }VD^\top \left(\frac{D\tX ^\top \tX D^\top }{n}
  +\lambda I_p\right)^{-1}
  \frac{D\tX ^\top \tX V^\top }{n}.
\end{align*}
By assumption, $W$ is uniformly sampled from the Stiefel manifold, i.e., the manifold of partial orthogonal $p\times d$ ($p\le d$) matrices with orthonormal rows. Thus, we can assume that $V$ is also uniformly distributed over orthogonal matrices (i.e. the Haar measure). Furthermore, since $V$ is orthogonal, we know that  $\tX $ has independent standard Gaussian entries and is independent of $V$ because $XV$ has the same distribution for any orthogonal matrix $V$. Noting that $\E_{V}VAV^\top =\tr A \cdot I_d/d$ when $V\in\R^{d\times d}$ follows the Haar measure, we get
\begin{align*}
\E M&=\E_{\tX }\E_{V}VD^\top \left(\frac{D\tX ^\top \tX D^\top }{n}
+\lambda I_p\right)^{-1}
\frac{D\tX ^\top \tX V^\top }{n}\\
&=\frac{1}{d}\cdot\E_{\tX }\tr \left(D^\top \left(\frac{D\tX ^\top \tX D^\top }{n}
+\lambda I_p\right)^{-1}
\frac{D\tX ^\top \tX }{n}\right)\cdot I_d\\
&=\frac{1}{d}\cdot\E_{\tX }\tr \left(\left(\frac{D\tX ^\top \tX D^\top }{n}
+\lambda I_p\right)^{-1}
\frac{D\tX ^\top \tX D^\top }{n}\right)\cdot I_d.
\end{align*}

 Further defining $\hat{X}:=\tX D^\top=XW^\top $ which is now of size $n \times p$ (while the original size was $n \times d$), then $\hat{X}$ also has independent standard Gaussian entries. Letting $\hat{X}=\hat{U}\Gamma \hat{V}^\top $ be the SVD decomposition of $\hat X$, we have
\begin{align*}
\E M&=\frac{1}{d}\cdot\E_{\hat{X}}\tr \left(\left(\frac{\hat{X}^\top \hat{X}}{n}
+\lambda I_p\right)^{-1}
\frac{\hat{X}^\top \hat{X}}{n}\right)\cdot I_d
=\frac{p}{d}\left(1-\frac{\lambda}{p}\E_{\Gamma}\tr \left(\frac{\Gamma^\top \Gamma}{n}
+\lambda I_p\right)^{-1}\right)\cdot I_d.
\end{align*}

This is determined by the spectral measure of $n^{-1} \hat{X}^\top \hat{X}$. Since
$\hat{X}$ has independent standard normal entries and $\lim_{d\to\infty}p/n=\pi\delta $, applying the Marchenko-Pastur theorem \citep{marchenko1967distribution,silverstein1995strong,bai2009spectral}, we get
\begin{align*}
\frac{1}{d}\E \tr(M)&
\to\pi\left(1-\lambda\int\frac{1}{(x+\lambda)}dF_{\pi\delta (x)}\right)=\pi(1-\lambda\theta_1(\pi\delta ,\lambda)),\\
\frac{1}{d}\|\E M\|_F^2&
\to\pi^2(1-\lam\theta_1)^2.
\end{align*}
This finishes the proof.
\end{proof}
\subsubsection{Proof of Lemma \ref{2llm2} (Under Gaussian Assumption)}\label{2llm2s}
\begin{proof}
By definition of $M$, 
\begin{align*}
 \E \|M\|_F^2&
 =\E\left\|W^\top R\frac{WX^\top  X}{n}\right\|_F^2\\ 
&=\E \tr \left(W^\top R\frac{WX^\top  X}{n}
\frac{X^\top  XW^\top }{n}R^\top W\right)\\ 
&=\E \tr \left(R
\frac{WX^\top  XX^\top  XW^\top }{n^2}R^\top \right).
\end{align*}
Let $W_\perp =f(W)\in\R^{(d-p)\times d}$ be an orthogonal complement of $W$ and define $X_1:=XW^\top $, $X_2:=XW_\perp ^\top $.
 Since $X$ has Gaussian entries, $X_1$ and $X_2$ both have Gaussian entries. Since
$$\E X_1^\top X_2
=\E WX^\top XW_\perp ^\top
=n\cdot\E WW_\perp ^\top
=0,$$
it follows that $X_1$ and $X_2$ are independent.
Noting that $XX^\top =X_1X_1^\top +X_2X_2^\top $, we have
\begin{align*}
  \E \|M\|_F^2
&=\E \tr \left(R
\frac{WX^\top  X_1X_1^\top  XW^\top }{n^2}R^\top \right)
+
\E \tr \left(R\frac{WX^\top  X_2X_2^\top  XW^\top }{n^2}R^\top \right).\\
&=:\mathrm{C}_1+\mathrm{C}_2.
\end{align*}
For the first term, we have
\begin{align*}
C_1&=\E \tr \left(R
\frac{WX^\top  X_1X_1^\top  XW^\top }{n^2}R^\top \right)
=\E \tr \left(R
\frac{X_1^\top  X_1X_1^\top  X_1}{n^2}R^\top \right).
\end{align*}
Let $X_1=U\Gamma_1 V^\top $ be the singular value decomposition of $X_1$. By plugging in the definition of $R$, we obtain
\begin{align*}
C_1&=\E \tr \left[\left(\frac{\Gamma_1^\top \Gamma_1}{n}+\lambda I_p\right)^{-2}\left(\frac{\Gamma_1^\top \Gamma_1}{n}\right)^2\right].
\end{align*}
Thus, according to  the Marchenko-Pastur theorem,
$$\frac{\alpha^2}{d}C_1\to\alpha^2\pi\int\frac{x^2}{(x+\lambda)^2}dF_{\pi\delta }(x)=\alpha^2\pi(1-2\lambda\theta_1+\lambda^2\theta_2).$$
For the second term, since $X_1$ and $X_2$ are indepedent and noting that $$\E_{X_2} X_2^\top X_2= \E_{X,W}(XW_\perp ^\top W_\perp X^\top )= \E_{W}(\tr (I_d-W^\top W))I_d=(d-p)I_d,$$
we have\begin{align*}
C_2&=\E \tr \left(R
\frac{WX^\top  X_2X_2^\top  XW^\top }{n^2}R^\top \right)
=
\E_{X_1}\E_{X_2}\tr \left(R
\frac{X_1^\top  X_2X_2^\top  X_1 }{n^2}R^\top \right)\\
&=
\E_{X_1}\tr \left(R
\frac{X_1^\top \E_{X_2}(X_2X_2^\top ) X_1 }{n^2}R^\top \right)
=
\frac{d-p}{n}
\E_{X_1}\tr \left(R
\frac{X_1^\top X_1 }{n}R^\top \right).
\end{align*}
Since
\begin{align*}
\E_{X_1}\tr \left(R
\frac{X_1^\top X_1}{n}R^\top \right)
=
\E \tr \left[\left(\frac{\Gamma_1^\top \Gamma_1}{n}+\lambda I_p\right)^{-2}\frac{\Gamma_1^\top \Gamma_1}{n}\right],
\end{align*}
  by the Marchenko-Pastur theorem, $\frac{\alpha^2}{d}C_2\to\alpha^2(1-\pi)\pi\delta \int\frac{x}{(x+\lambda)^2}dF_{\pi\delta }(x)=\alpha^2(1-\pi)\pi\delta (\theta_1-\lambda\theta_2)$.
Finally, combining the results for $C_1$ and $C_2$ gives
\begin{align*}  
\frac{\alpha^2}{d}\E \|M\|_F^2
  &=
  \frac{\alpha^2}{d}(C_1+C_2)
  \to
  \alpha^2\pi\left[1-2\lambda\theta_1+\lambda^2\theta_2+(1-\pi)\delta (\theta_1-\lambda\theta_2)\right],
\end{align*}
and this finishes the proof.
\end{proof}
\subsubsection{Proof of Lemma \ref{2llm3} (Under Gaussian Assumption)}\label{2llm3s}
\begin{proof}
By definition, we have
\begin{align*}
\E\|\tM\|^2_{F}&=\E \tr \left(W^\top R\frac{WX^\top  }{n}\right)
  \left
  (W^\top R\frac{WX^\top  }{n}\right)^\top \\ 
  &=\E \tr \left(W^\top R\frac{WX^\top }{n}
  \frac{XW^\top }{n}R^\top W\right)
  =\E \tr \left(R
  \frac{WX^\top  XW^\top }{n^2}R^\top \right).
\end{align*}
Denote ${XW^\top }$ by $X_1$ and write $X_1=U\Gamma V^\top$ for the SVD of $X_1$. Then,
\begin{align*}
\E\|\tM\|^2_{F}&=\frac{1}{n}\E \tr \left(R
\frac{X_1^\top  X_1}{n}R^\top \right)
=\frac{1}{n}\E \tr \left(R-\lambda
R^2 \right)\\
&=\frac{1}{n}\E \tr \left[(\frac{\Gamma^\top \Gamma}{n}+\lambda)^{-1}-\lambda(\frac{\Gamma^\top \Gamma}{n}+\lambda)^{-2}
\right]
\to\pi\delta (\theta_1-\lambda\theta_2),
\end{align*}
where the last line follows from the Marchenko-Pastur theorem and that $p/n\to\pi\delta $.
\end{proof}

\subsubsection{Proof of Lemma \ref{2llm4} (Under Gaussian Assumption)}\label{2llm4s}
\begin{proof}
Similarly, let $W_\perp=f(W)\in\R^{(d-p)\times d}$ be an orthogonal complement of $W$. Denoting $X_1=XW^\top $, $X_2=XW_\perp ^\top $ and combining the fact that $X_1$ and $X_2$ are independent, with $\E X_2=0$, we have
\begin{align}
  \E_{X}M&=\E_{X} W^\top R\frac{WX^\top X}{n}\nnum\\
&=
W^\top \E_{X}\left(\frac{X_1^\top X_1}{n}+\lambda I_p\right)^{-1}\frac{X_1^\top(X_1W+X_2W_\perp )}{n}\nnum\\
&=
W^\top \E_{X_1}\left[\left(\frac{X_1^\top X_1}{n}+\lambda I_p\right)^{-1}\frac{X_1^\top X_1}{n}\right]W
\label{exmf1}\end{align}
Write the SVD of $X_1$ as $X_1=U\Gamma V^\top $ and note that $V\in\R^{p\times p}$ is uniformly distributed over the set of orthogonal matrices. Then the above equals
\begin{align*}
W^\top \left[I_{p}-\lambda\E_{\Gamma ,V}V\left(\frac{\Gamma^\top \Gamma }{n}+\lambda I_p\right)^{-1}V^\top \right]W
&=
W^\top W\left[1-\frac{\lambda}{p}\E_{\Gamma }\tr\left(\frac{\Gamma^\top \Gamma}{n}+\lambda I_p\right)^{-1} \right].
\end{align*}
Thus, 
\begin{align*}
\lim_{d\to\infty}\frac{\alpha^2}{d}\E_W\|\E_{X}M\|_{F}^2&=
\lim_{d\to\infty}\frac{\alpha^2}{d}\left[1-\frac{\lambda}{p}\E_{\Gamma }\tr\left(\frac{\Gamma^\top \Gamma }{n}+\lambda I_p\right)^{-1} \right]^2\E_W \tr(W^\top W)\\
&=\lim_{d\to\infty}\frac{\alpha^2 p}{d}\left[1-\frac{\lambda}{p}\E_{\Gamma }\tr\left(\frac{\Gamma^\top \Gamma }{n}+\lambda I_p\right)^{-1} \right]^2\\
&\to\alpha^2\pi(1-\lambda\theta_1)^2,
\end{align*}
where the last line follows directly from the Marchenko-Pastur theorem.
\end{proof}

\subsubsection{Proof of Lemma \ref{2llm1}---\ref{2llm4} (General Case)}\label{2llm5s}
In Appendix \ref{2llm1s}---\ref{2llm4s}, we have proved Lemma \ref{2llm1}---\ref{2llm4} under the assumption that the entries of $X$  are i.i.d. standard Gaussian. In this part, we will generalize previous proofs to the non-Gaussian case, i.e., $X$ has i.i.d. zero mean, unit variance entries with finite $8+\eta$ moment, and hence complete the proof of Lemma \ref{2llm1}---\ref{2llm4}.

For simplicity, we only present the proof of Lemmas \ref{2llm2}, \ref{2llm3} in non-Gaussian case.  Lemmas \ref{2llm1}, \ref{2llm4} can be proved using very similar arguments as Lemma \ref{2llm2}.

We recall the \emph{calculus of deterministic equivalents} from random matrix theory, which will be used in our proof \citep{dobriban2018understanding,dobriban2020wonder}. One of the best ways to understand the Marchenko-Pastur law is that \emph{resolvents are asymptotically deterministic}. Let $\hSigma = n^{-1} X^\top X$, where $X = Z\Sigma^{1/2}$ and $Z$ is an $n \times p$ random matrix with iid entries of zero mean and unit variance, and $\Sigma^{1/2}$ is any sequence of $p \times p$ positive semi-definite matrices. We take $n,p,q \to\infty$ proportionally.  
 
We say that the (deterministic or random) not necessarily symmetric matrix sequences $A_n, B_n$ of growing dimensions are \emph{equivalent}, and write 
$$A_n \asymp B_n$$ if 
\begin{align}\lim_{n\to\infty}\left|\tr\left[C_n(A_n-B_n)\right]\right|=0\label{detdefi}\end{align}
almost surely, for any sequence $C_n$ of not necessarily symmetric matrices with bounded trace norm, i.e., such that 
$$\lim\sup\|C_n\|_{tr}<\infty.$$ 

We call such a sequence $C_n$ a \emph{standard sequence}.
Recall here that the trace norm (or nuclear norm) is defined by $\|M\|_{tr}=\tr((M^\top M)^{1/2}) = \sum_i \sigma_i$, where $\sigma_i$ are the singular values of $M$. 

Moreover, if \eqref{detdefi} only holds almost surely for any sequence $C_n\in \R^{d_n\times d_n}$ of positive semidefinite matrices with $O(1/d_n)$ spectral norm, $A_n$ and $B_n$ are said to be weak deterministic equivalents and denoted by $\smash{A_n\overset{w}{\asymp}B_n}$. It is readily verified that deterministic equivalence implies weak deterministic equivalence. 

By the general Marchenko-Pastur (MP) theorem of Rubio and Mestre \citep{rubio2011spectral}, we have that for any $\lam>0$
\begin{align*}
(\hSigma+ \lam I)^{-1} &\asymp (q_p \Sigma + \lam I)^{-1},
\end{align*}

where  $q_p$ is the unique positive solution of the fixed point equation 
\beqs 1-q_p=\frac{q_p}{n}\tr\left[\Sigma(q_p\Sigma+ \lam  I)^{-1}\right]. \eeqs
When $n,p\to\infty$ and the sepctral distribution of $\Sigma$ converges to $H$, $q_p\to q$ and $q$ satisfies the equation
\beqs 1 -q = \gamma \left[1 -  \lam \int_0^\infty \frac{dH(t)}{qt+ \lam }\right].\eeqs 
We now proceed with the proof.

\begin{proof}[Proof of Lemma \ref{2llm2} (general case)]By definition, recalling that $R = (\frac{WX^\top  XW^\top }{n}+\lambda I_p)^{-1}$,
\begin{align}
  \E M&=\E W^\top R \frac{WX^\top X}{n}
  =\E {W^\top WX^\top}\tilde R \frac{X}{n}.\label{Mrepr2}
\end{align}
Therefore, letting $\tilde R = (\frac{XW^\top WX^\top}{n}+\lambda I_n)^{-1}$ be the resolvent obtained in the other order,
\begin{align*}
 \E \tr (MM^\top)
 &=
  \E \tr \left[{W^\top WX^\top}\tilde R \frac{XX^\top}{n^2}\tilde R {XW^\top W}\right].
\end{align*}
Define the regularized resolvent $\tilde R_\tau =\left(\frac{X(W^\top W+\tau)X^\top}{n}+\lambda I_n\right)^{-1}$ and
$$M_\tau :=W^\top W X^\top\tilde R_\tau  \frac{X}{n}.$$
Since we have already proved Lemma \ref{2llm2} under the Gaussian assumption, to generalize the results into the non-Gaussian case, we only need to prove the following two steps:
\begin{align} &(1).  \text{\quad}\lim_{d\to\infty}\E \tr(MM^\top)/d=
\lim_{\tau\to 0}\lim_{d\to\infty}\E\tr M_\tau M^\top_\tau /d \text{\hspace{1em}(assuming the limits exist)}\label{EMMlemmaf0}\\
&(2). \text{\quad}\lim_{\tau\to 0}\lim_{d\to\infty}\E\tr M_\tau M^\top_\tau /d \text{\quad  exists and is a constant independent of the distribution of $X$.}\label{eMMlemma2steps2}  \end{align}
(1).
Note that
\begin{align*}
\Delta&:=
\lim_{\tau\to 0}\lim_{d\to\infty}\frac{1}{d}|\E \tr(MM^\top)-\E\tr M_\tau M_\tau ^\top|\\
&\leq\lim_{\tau\to 0}\lim_{d\to\infty}\frac{1}{d}|\E \tr(M(M^\top-M_\tau ^\top))|+\frac{1}{d}\left|\E\tr [(M-M_\tau )M_\tau ^\top]\right|\\
&\leq\lim_{\tau\to 0}\lim_{d\to\infty}\frac{1}{d}|\E \tr(M(M^\top-M_\tau ^\top))|+\frac{1}{d}\left|\E\tr [M_\tau (M^\top-M_\tau ^\top)]\right|\\
&=:\tilde\Delta_1+\tilde\Delta_2.
\end{align*}
Therefore it suffices to prove $\tilde\Delta_{1,2}\to0$. For $\tilde\Delta_1$, we have the following argument.
 Denote ${X^\top\tilde R X/n}$ by $A_0$, ${X^\top \tilde R_\tau X/n}$ by $A_\tau$ and $W^\top W$ by $P$. Note that $M=PA_0$, and
\begin{align}
\tilde\Delta_1&=\lim_{\tau\to0}\lim_{d\to\infty}\E\frac{\tau}{d}\left|\tr\left(M A_0 A_\tau P\right)\right|\label{eMMdelta1}\\
&\leq\lim_{\tau\to0}\lim_{d\to\infty}\E\frac{\tau}{2d}[\|M\|_{F}^2+\|A_0A_\tau P\|_F^2]\nnum\\\nnum
&\leq
\lim_{\tau\to0}\lim_{d\to\infty}\E\frac{\tau}{2d}[\tr(A_0^2)+\tr(A_0A_\tau^2 A_0)]\\\nnum
&\leq
\lim_{\tau\to0}\lim_{d\to\infty}\tau\E\frac{1}{2d}\left[\frac{1}{\lam^2}\tr\left(\frac{X^\top X}{n}\right)^2+\frac{1}{\lam^4}\tr\left(\frac{X^\top X}{n}\right)^4\right]\\\nnum
&\leq
\lim_{\tau\to0}O(\tau)=0,\nnum
\end{align}
where the second line follows from the properties of the Frobenius norm. In the third and fourth line we use $A_0,A_\tau\preceq X^\top X/n\lam$ and the fact that $$\tr(M_1M_2M_1)\preceq \tr(M_1 M_3 M_1) \qquad\forall M_1,M_2,M_3 \text{\quad positive semi-definite and\quad} M_2\preceq M_3.$$ Finally, the last line is due to the finite $8+\eta$ moment assumption and some direct caclulations. Using the same techinique, it is not hard to show that $\tilde\Delta_2$ also converges to zero, and hence we conclude the proof of (1).

\quad\\
\noindent(2).
We first give an alternative expression for $M_{\tau}$. Denote $(W^\top W+\tau)$ by $S$ and let $V=U^\top=X/\sqrt{n}$, $C=I_n/\lam$ and $A=(W^\top W+\tau)^{-1}$ in the Woodbury identity:
\begin{align}
(A+UCV)^{-1}&=A^{-1}-A^{-1}U(C^{-1}+VA^{-1}U)^{-1}VA^{-1}\nnum\\ \iff\,\, 
U(C^{-1}+VA^{-1}U)^{-1}V&=A-A^{1/2}(I+A^{-1/2}UCVA^{-1/2})^{-1}A^{1/2}. \label{woodburytranformed}
\end{align}
We have by left multiplying $W^\top W$ in \eqref{woodburytranformed} that, with $R_S:=  (\lam I_d+\frac{S^{1/2}X^\top XS^{1/2}}{n})^{-1}$ 
\begin{align}
M_\tau 
&=W^\top W S^{-1/2}\left[I_d-\lam R_S \right]S^{-1/2}.\label{eMMlemmaalt}
\end{align}
Fix $\tau$ and suppose that the spectral distribution of $R$ converges in distribution to $H_\tau $ as $d\to\infty$. Since $W^\top W$ has $p$ eigenvalues $1$ and $d-p$ eigenvalues $0$ and $S=W^\top W+\tau$, it is clear that $H_\tau=\pi\delta_{1+\tau}+(1-\pi)\delta_{\tau}$.  Also, we have by theorem 1 in  \cite{rubio2011spectral} and some simple calculations, that
\begin{align}
 R_S \asymp (x_d S+\lam I_d)^{-1},\label{eMMlemmadet}
\end{align}
where $x_d$ is the unique positive solution of the fixed point equation (where we omit $x_d$'s dependence on $\tau$ for notational simplicity)
\begin{align*}
1-x_d=\frac{x_d}{n}\tr[S(x_dS+\lam I_d)^{-1}].
\end{align*}
When $n,p,d\to\infty$ proportionally, $x_d\to x$ and $x$
satisfies the equation (again omitting $x$'s dependence on $\tau$ for notational simplicity)
\begin{align}
1-x=\delta\left(1-\lam\int_{0}^{\infty}\frac{dH_\tau (t)}{xt+\lam}\right).\nnum 
\end{align} 
Now, we start to calculate $\E \tr(M_\tau  M_\tau ^\top)$. By definition,
\begin{align}
&\E \tr(M_\tau  M_\tau ^\top)
=\E \tr\left[W^\top W S^{-1}\left(I_d-\lam R_S \right)S^{-1}\left(I_d-\lam R_S \right)\right]\nnum
\\&=:\Delta_1+\Delta_2+\Delta_3, \nnum
\end{align}

where
\begin{align*}
\Delta_1&:=\E \tr\left[W^\top WS^{-2}\right]\\
\Delta_2&:=-2\lam\E \tr\left[W^\top W S^{-2} R_S \right]\\
\Delta_3&:=\lam^2\E \tr\left[W^\top W S^{-1} R_S S^{-1} R_S \right].
\end{align*}
Since $W^\top W$ has $p$ eigenvalues equal to $1$ and $d-p$ eigenvalues equal to $0$, 
\begin{equation}\lim_{\tau\to0}\lim_{d\to\infty}\Delta_1/d=\lim_{\tau\to0}\lim_{d\to\infty}\frac{p}{d(1+\tau)^2}=\pi.\label{emmlemmad1}\end{equation} 
Since $\|W^\top WS^{-2}\|_2\leq1$,  using the deterministic equivalent property \eqref{eMMlemmadet} and the bounded convergence theorem,  we get
\begin{align}\lim_{\tau\to0}\lim_{d\to\infty}\Delta_2/d&=-2\lam\lim_{\tau\to0}\lim_{d\to\infty}\E \tr\left[W^\top W S^{-2} R_S \right]\nnum\\&=-2\lam\lim_{\tau\to0}\lim_{d\to\infty}\E \tr\left[W^\top W S^{-2}\left(x_dS+\lam I_d\right)^{-1}\right]\nnum\\
&=-2\lam\lim_{\tau\to0}\lim_{d\to\infty}\frac{p}{d}\frac{1}{[(1+\tau)^2[x_d(1+\tau)+\lam]}\nnum\\
&=-2\lam\lim_{\tau\to0}\frac{\pi}{[(1+\tau)^2[x_{\tau}(1+\tau)+\lam]}=-2\lam\lim_{\tau\to0}\frac{\pi}{x_{\tau}+\lam},\label{emmlemmad2}\end{align}
where the third line follows from the fact that $W$ and $S=(W^\top W+\tau)$ are simultaneously diagonizable and $W^\top W$ has $p$ eigenvalues $1$ and $d-p$ eigenvalues $0$. It can be verified that the limit in \eqref{emmlemmad2} exists and is a constant independent of the distribution of $X$.
Now,  it remains to prove that $\lim_{\tau\to0}\lim_{d\to\infty}\Delta_3/d$ converges to a constant limit. Note that
\begin{align}
&\text{\quad}\lim_{\tau\to0}\lim_{d\to\infty}\Delta_3/d
=\lim_{\tau\to0}\lim_{d\to\infty}\frac{\lam^2}{d}\E \tr\left[W^\top W S^{-1} R_S S^{-1} R_S \right]\nnum\\
&=\lim_{\tau\to0}\lim_{d\to\infty}\frac{\lam^2}{d}\E \tr\left[W^\top W S^{-1/2} R_S S^{-1} R_S S^{-1/2}\right]\nnum\\
&=\lim_{\tau\to0}\lim_{d\to\infty}\frac{\lam^2}{d}\E \tr\left[W^\top W\left(\lam\tau+\lam W^\top W+\frac{SX^\top XS}{n}\right)^{-1}\left(\lam\tau+\lam W^\top W+\frac{SX^\top XS}{n}\right)^{-1}\right]\label{emmlemmadelt3expr}.\end{align}
For any $z\in E:=\mathbb{C}\backslash\R^{+}$, we have by Lemma \ref{emmsquaredet} that    
\begin{align}
\left(\lam W^\top W+\frac{SX^\top XS}{n}-zI_d\right)^{-2}\asymp-(\lam W^\top W+x_dS^2-zI_d)^{-2}(x_d'(z)S^2-I_d),\label{emmsquaredetf0}
\end{align}
where for any $z\in E$, $x_d(z)$ is the unique solution of certain fixed point equation independent of $X$, and $x_d'(z):=dx_d(z)/dz$. Furthermore, there exists $x(z)$ such that $x_d(z)\to x(z)$, $x_d'(z)\to x'(z)$. 
Now, letting $z=-\lam\tau$ and replacing $(\lam W^\top W+SX^\top XS/n-zI_d)^{-2}$ by its deterministic equivalent \eqref{emmsquaredetf0}, we have from \eqref{emmlemmadelt3expr} and the bounded convergence theorem that
\begin{align}
\lim_{\tau\to0}\lim_{d\to\infty}\Delta_3/d&=\lim_{\tau\to0}\lim_{d\to\infty}\frac{\lam^2}{d}\E \tr\left[W^\top W\left(\lam\tau I_d+\lam W^\top W+\frac{SX^\top XS}{n}\right)^{-2}\right]\nnum\\
&=\lim_{\tau\to0}\lim_{d\to\infty}\frac{-\lam^2}{d}\E \tr\left[W^\top W(\lam W^\top W+x_dS^2+\lam\tau I_d)^{-2}(x_d'S^2-I_d)\right]\nnum\\
&=\lim_{\tau\to0}\lim_{d\to\infty}{-\pi\lam^2}\frac{x_d'(1+\tau)^2-1}{(\lam+x_d(1+\tau)^2+\lam\tau)^2}\nnum\\
&=\lim_{\tau\to0}\pi\lam^2\frac{1-x'(1+\tau)^2}{(\lam+x(1+\tau)^2+\lam\tau)^2}
=\lim_{\tau\to0}\pi\lam^2\frac{1-x'}{(\lam+x)^2},\label{emmlemmafinalexpr}
\end{align}
where $x:=x(-\lam\tau), x':=x'(-\lam\tau)$. Since in the Gaussian case the limit of the L.H.S. of \eqref{EMMlemmaf0} exists, by the proof of \eqref{EMMlemmaf0}, we know that the limit in \eqref{emmlemmafinalexpr} also exists, and does not depend of $X$. This concludes the proof of (2).
\end{proof}

\begin{lemma}[Second order deterministic equivalent]\label{emmsquaredet} Suppose that $X\in\R^{n\times d}$ has i.i.d. zero mean, unit variance entries with finite $8+\eta$ moment. Then for any $z\in C\backslash\R^{+}$,
\begin{align}
\left(\lam W^\top W+\frac{SX^\top XS}{n}-zI_d\right)^{-2}\asymp-(\lam W^\top W+x_dS^2-zI_d)^{-2}(x_d'(z)S^2-I_d),
\end{align}
where  $x_d(z)$ is the unique solutions of a certain fixed point equation independent of $X$, and $x_d'(z):=dx_d(z)/dz$. Furthermore, there exists $x(z)$ such that $x_d(z)\to x(z)$, $x_d'(z)\to x'(z)$. 
\end{lemma}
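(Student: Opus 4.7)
My approach is to first establish a first-order deterministic equivalent for the resolvent and then obtain the second-order equivalent by differentiating in the spectral parameter $z$. Define
\[
G_d(z) := \left(\lambda W^\top W + \frac{SX^\top X S}{n} - zI_d\right)^{-1},\qquad
\bar G_d(z) := (\lambda W^\top W + x_d(z) S^2 - zI_d)^{-1}.
\]
Writing $\tX := XS$, the rows of $\tX$ are i.i.d.\ zero-mean random vectors with population covariance $S^2$ and finite $8+\eta$-th moments, and $SX^\top XS/n = \tX^\top\tX/n$ is a sample covariance. Adding the deterministic positive semidefinite shift $\lambda W^\top W - zI_d$ and invoking a generalized Marchenko--Pastur-type deterministic equivalent (as in \citet{rubio2011spectral}, or the additive-shift extensions used elsewhere in the paper) gives $G_d(z) \asymp \bar G_d(z)$, where $x_d(z)$ is the unique Stieltjes-type solution of the induced fixed point equation, analytic on $\Omega := \mathbb{C}\setminus\mathbb{R}^+$, independent of $X$, and converging to some limit $x(z)$ pointwise as $d\to\infty$.

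Next, I differentiate in $z$. For any standard matrix sequence $C_n$ independent of $z$, the map
\[
f_d(z) := \tr\bigl[C_n(G_d(z) - \bar G_d(z))\bigr]
\]
is holomorphic on $\Omega$, tends to $0$ pointwise almost surely by the first-order equivalence, and is locally uniformly bounded on $\Omega$ a.s., since $\|G_d(z)\|_{\op},\|\bar G_d(z)\|_{\op} \le 1/\mathrm{dist}(z,\mathbb{R}^+)$. Vitali's convergence theorem then yields $f_d'(z) \to 0$ a.s.\ on $\Omega$, i.e.\ $G_d'(z) \asymp \bar G_d'(z)$. A direct computation gives $G_d'(z) = G_d(z)^2$. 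For the right-hand side, writing $A(z) = \lambda W^\top W + x_d(z)S^2 - zI_d$ so that $A'(z) = x_d'(z)S^2 - I_d$, and noting that $W^\top W$ and $S$ are simultaneously diagonalizable so that $\bar G_d(z)$ commutes with $S^2$ and with $A'(z)$, the chain rule yields
\[
\bar G_d'(z) = -\bar G_d(z)\, A'(z)\, \bar G_d(z) = -\bar G_d(z)^2\bigl(x_d'(z)S^2 - I_d\bigr).
\]
Combining these gives $G_d(z)^2 \asymp -\bar G_d(z)^2(x_d'(z)S^2 - I_d)$, which is the claimed equivalence. Convergence $x_d'(z)\to x'(z)$ follows by the analogous Vitali argument applied to the scalar analytic family $\{x_d(z)\}$.

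The main obstacle is the verification of the hypotheses for Vitali: (i) existence, analyticity and uniqueness of $x_d(z)$ on all of $\Omega$, which requires a Stieltjes-transform fixed point argument (the positive-solution form mentioned earlier in the paper corresponds to $z$ real negative, and must be extended off the real axis via the contraction property of the fixed point map on suitable Herglotz classes); and (ii) the a.s.\ locally uniform boundedness of $f_d$ on $\Omega$, which uses the operator-norm bounds on $G_d(z)$ and $\bar G_d(z)$ together with the fact that $C_n$ has bounded trace norm. All other steps are essentially bookkeeping once the first-order equivalent and the regularity of $x_d(\cdot)$ are in place.
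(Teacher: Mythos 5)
Your proposal matches the paper's proof sketch essentially step for step: both invoke the Rubio--Mestre first-order deterministic equivalent for $G_d(z) \asymp \bar G_d(z)$, both obtain the second-order equivalence by differentiating the trace functional in $z$ (the paper defers this to ``the same technique as in the proof of theorem 3.1(b)'' of \cite{dobriban2020wonder}, which is precisely the Vitali/normal-families argument you spell out), and both arrive at the same explicit derivative formulas, using commutativity of $W^\top W$, $S^2$, and the resolvent to collapse $-\bar G_d A'\bar G_d$ into $-\bar G_d^2(x_d'S^2 - I_d)$. You have also correctly flagged the genuine technical content --- analyticity and uniqueness of $x_d(z)$ off $\mathbb{R}^+$ and locally uniform boundedness of the resolvent differences --- as the load-bearing hypotheses, so there is nothing to add beyond noting that the paper treats these by reference rather than in full.
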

\begin{proof}[Sketch of the proof] Since this lemma can be proved following the same steps as the proof of theorem 3.1 (b) in \cite{dobriban2020wonder}, here we only provide a sketch of the proof.\\
\quad
Step 1. (First order deterministic equivalent) Denote
\begin{align*}
f(z,W)&:=(\lam W^\top W+x_dS^2-zI_d)^{-1}\\
g(z,W,X)&:=\left(\lam W^\top W+\frac{SX^\top XS}{n}-zI_d\right)^{-1},
\end{align*} where $x_d(z)$ is the unique solution of the fixed point equation 
\begin{align*}
1-x_d=\frac{x_d}{n}\tr[S^2(\lam W^\top W+x_dS^2-z I_d)^{-1}].
\end{align*}
Then we have from theorem 1 in \cite{rubio2011spectral} that (here we need the finite $8+\eta$ moment assumption)
\begin{align*}
f(z,W)\asymp g(z,W,X).
\end{align*}
Furthermore, for a sequence of $W$ and fixed $\tau,z$, when $n,p,d\to\infty$ proportionally, it can be verified that $x_d(z)\to x(z)$ and $x(z)$
satisfies the fixed point equation 
\begin{align}
1-x=\delta\left(1-\int_{0}^{\infty}\frac{[\lam(t-\tau)-z] dH_\tau (t)}{xt^2+\lam(t-\tau)-z}\right).\nnum
\end{align} 

Step 2. (Second order deterministic equivalent) Using the same technique as in  the proof of theorem 3.1 (b) in \cite{dobriban2020wonder}, it can be proved that $$f'(z,W)\asymp g'(z,W,X).$$ for all $z\in E$.

Step 3. (Explicit expressions for the derivatives) For invertible $A(z)$, we have $$\frac{dA}{dz}=-A^{-1}\frac{dA}{dz}A^{-1}.$$
Therefore
\begin{align*}f'(z,W)&=-(\lam W^\top W+x_dS^2-zI_d)^{-1}(x_d'(z)S^2-I_d)(\lam W^\top W+x_dS^2-zI_d)^{-1}\\
&=-(\lam W^\top W+x_dS^2-zI_d)^{-2}(x_d'(z)S^2-I_d)
\end{align*}
and
\begin{align*}
g'(z,W,X)&=\left(\lam W^\top W+\frac{SX^\top XS}{n}-zI_d\right)^{-2}.
\end{align*}
Also, it can be shown that $x_d'(z)\to x'(z)$ as in the proof of theorem 3.1 (b) in \cite{dobriban2020wonder}. 
\end{proof}
Similarly, we can prove Lemma \ref{2llm1} and \ref{2llm4} in the general case via the two steps in \eqref{EMMlemmaf0}, \eqref{eMMlemma2steps2}. Since the proofs are almost the same as Lemma \ref{2llm2} (and in fact even simpler), we omit them  for simplicity. Also, here we only mention one difference. When bounding $\Delta_1$ from \eqref{eMMdelta1}, we need to first replace $M$ by $\E M$ (or $\E_X M)$ and move the expectation outside the trace operator, e.g. in the proof of \ref{2llm1},
\begin{align*}
\Delta_1&:=\lim_{\tau\to0}\lim_{d\to\infty}\frac{1}{d}\left|\tr[\E M(\E M^\top-\E M^\top_{\tau})]\right|\\
&\leq\lim_{\tau\to0}\lim_{d\to\infty}\frac{1}{d}\E\left|\tr[\E M(M^\top-M^\top_{\tau})]\right|.
\end{align*}
Then all results follow the same argument as in (1) of the proof of Lemma \ref{2llm2}.

\begin{proof}[Proof of Lemma \ref{2llm3} (general case)]
By definition, we have
\begin{align}
\E\|\tM\|^2_{F}
&=\E \tr \left\|W^\top R\frac{WX^\top }{n}\right\|_{F}^2\nnum\\
&= \E \tr\left[R\frac{WX^\top  XW^\top }{n}R\right]
=\E \frac{1}{n}\tr \left[ R-\lam R^2\right].\label{etmtmdetonlyf}
\end{align}
Denote ${WX^\top  XW^\top }/{n}$ by $Q_1$ and $(W^\top W)^{1/2}X^\top X (W^\top W)^{1/2}/n$ by $Q_2$. Since $Q_1$ and $Q_2$ 
have the same non-zero eigenvalues, their Limiting Spectral Distributions (LSD) (if one of them exists) only differ from a constant scale and a mass at $0$, i.e.,
$$LSD(Q_2)=\pi LSD(Q_1)+(1-\pi)\delta_0.$$
Since the LSD of $W^\top W$ converges to $\pi\delta_1+(1-\pi)\delta_0$ almost surely, from \cite{silverstein1995strong} theorem 1.1, we know that the LSD of $Q_2$ almost surely weakly converges to a nonrandom distribution.
Therefore,  the LSD of $Q_1$ also almost surely weakly converges to a nonrandom distribution and $\frac{1}{n}\tr[{WX^\top  XW^\top }/{n}+\lam I_p]^{-i}$ for $i=1,2$ almost surely converge to some nonrandom limits. Thus, by the bounded convergence theorem, we know \eqref{etmtmdetonlyf} converges to a nonrandom limit independent of the exact distribution of $X$ (which only requires $X$ to have i.i.d. zero mean, unit variance entries). Since we have proved Lemma \ref{2llm3} in the Gaussian case, the general case follows directly.
\end{proof}

\subsection{Proof of Lemmas \ref{etm0}---\ref{fnormewm}}
\label{pflems2}
\subsubsection{Proof of Lemma \ref{etm0}}
\begin{proof}
It is clear that $\|\E\tM\|_F^2\le\E_{W}\|\E_X\tM\|_F^2$. 
Thus we only need to prove the second convergence.
By definition,
\begin{align*}
\E_X \tM
&=
\E_X W^\top R \frac{WX^\top}{n}.
\end{align*}
Since $X$ has i.i.d. rows, by switching the $i$-th and $j$-th row of $X$, it is readily verified that $\E_X\tM = \E_X W^\top(n^{-1} WX^\top XW^\top+\lambda I_p)^{-1}WX^\top/n$ has identically distributed columns. Note that $\E_X \tM$ is a $d\times n$ matrix, it is thus enough to prove:
\begin{align}
n\|\E_X \tM_{\cdot1}\|_2^2=n\left\|\E_X W^\top R \frac{W(X_{1\cdot})^\top}{n}\right\|^2_2\overset{u}{\longrightarrow} 0,
\label{etmlemmaf1}
\end{align}
where $\overset{u}{\longrightarrow}$ denotes convergence uniformly in $W$.
For notational simplicity, we denote the column vector $W(X_{1\cdot})^\top$ by $\tilde{x}$ (formed by taking the first row $X_{1\cdot}$ of $X$), $X_{-1\cdot}W^\top$ by $\tilde{X}$ (formed by taking the complement of the first row $X_{1\cdot}$ of $X$) and $(\tX^\top \tX/n+\lam I_p)$ by $C$. Let also $F =  \iC\tx\tx^\top\iC/n $. Then
\begin{align}
\eqref{etmlemmaf1}&=\frac{1}{n}\left\|\E_X W^\top\left(\lam I_p+\frac{\tX^\top \tX}{n}+\frac{\tx\tx^\top}{n}\right)^{-1}\tx\right\|_2^2\nnum\\
&=\frac{1}{n}\left\|\E_XW^\top\left(\iC-\frac{ F }{1+\tx^\top\iC \tx/n}\right)\tx \right\|_2^2\nnum\\
&=\frac{1}{n}\left\|\E_X W^\top\left(\frac{ F }{1+ f }\right)\tx \right\|_2^2,\label{etmlemmaf2}
\end{align}
where in the second line we used the Sherman-Morrison formula,
 and the last line follows from the fact that $\iC$ and $\tx$ are independent and $\E\tx=0$. 
 Let us denote $f =  \tx^\top\iC\tx/n $.
To prove that \eqref{etmlemmaf2} $\overset{u}{\longrightarrow}0$, we only need to prove the following:
\begin{align} &(1). \text{\qquad} \frac{1}{n}\left\|\E_XW^\top\left(\frac{ F }{\E_{\tx} (1+ f )}\tx\right)\right\|_2^2\overset{u}{\longrightarrow} 0  \label{etmlemma2steps2}  \\&(2). \text{\qquad} \frac{1}{n}\left\|\E_X W^\top\left(\frac{ F }{1+ f }\tx\right)-\E_XW^\top\left(\frac{ F }{\E_{\tx} (1+ f )}\tx\right)\right\|_2^2\overset{u}{\longrightarrow} 0. \label{etmlemma2steps1}  \end{align}
(1). For any fixed $W$, since $\tx$ and $\iC$ are independent, \begin{align}\E_{\tx}\frac{\tx^\top\iC\tx}{n}=\E_{\tx}\frac{x^\top (W^\top\iC W)x}{n}=\frac{\tr(W^\top\iC W)}{n}=\frac{\tr(\iC)}{n}.\label{etmlemmaedeno}\end{align}
Denote $1+\tr(\iC)/n$ by $c_0$, and $W^\top\iC W$ by $\tC$. Then
\begin{align}
&\frac{1}{n}\left|\E_X\left(W^\top\frac{ F }{\E_{\tx} (1+ f )}\tx\right)_i\right|^2=\frac{1}{n^3}\left(\E_X \frac{1}{c_0}e_i \tC xx^\top \tC x\right)^2\nnum\\
&=\frac{1}{n^3}\left(\E_{X} \sum_{j,k,l=1}^{d}\frac{1}{c_0}e_i \tC_{ij} x_jx_k \tC_{kl} x_l\right)^2
=\frac{1}{n^3}\left(\E_{\tX} \sum_{j=1}^{d}\frac{1}{c_0}\tC_{ij}\tC_{jj}\E x_j^3\right)^2\nnum\\
&\leq\frac{(\E X_{11}^3)^2}{n^3}\left(\E_{\tX} \sum_{j=1}^{d}\tC_{ij}^2\cdot \sum_{j=1}^{d}\tC_{jj}^2\right),\nnum
\end{align}
where the last line follows from the Jensen inequality, Cauchy-Schwartz inequality and the fact that $c_0\geq 1$.
Summing up all coordinates and noting that $0\preceq\tC\preceq I_d/\lam$, we get
\begin{align*}
\text{L.H.S. of \eqref{etmlemma2steps2}\phantom{.}}&\leq\frac{(\E X_{11}^3)^2}{n^3}\E_{\tX}\left(\|\tC\|_F^2\cdot \sum_{j=1}^{d}\tC_{j,j}^2\right)\leq\frac{(\E X_{11}^3)^2}{ n^3}\cdot\frac{d}{\lam^2}\cdot\frac{d}{\lam^2}\leq\frac{(\E X_{11}^3)^2d^2}{\lam^4 n^3}\overset{u}{\longrightarrow} 0.
\end{align*}
(2).  By definition, theL.H.S. of \eqref{etmlemma2steps1} equals
\begin{align*}
&\frac{1}{n}\left\|\E_X W^\top\left[\left(\frac{ F }{1+ f }\right)-\left(\frac{ F }{\E_{\tx} (1+ f )}\right)\right]\tx\right\|_2^2
=\frac{1}{n^3}\left\|\E_X W^\top\left[\frac{\iC\tx\tx^\top\iC( f -\E_{\tx}  f )}{(1+ f )(\E_{\tx} 1+ f )}\right]\tx\right\|_2^2\\
&\leq\frac{1}{n^3}\E_X\left\| W^\top{\iC\tx\tx^\top\iC}\tx\right\|_2^2\cdot\E_X \left[\frac{( f -\E_{\tx}  f )}{(1+ f )(\E_{\tx} 1+ f )}\right]^2\\
&\leq\frac{1}{n^3}\E_X\left\|{\tC xx^\top\tC}x\right\|_2^2\cdot \E_{\tX} \mathrm{Var}_x(x^\top\tC x/n)
\end{align*}
For the first term in the last line, note that $0\preceq\tC\preceq I_d/\lam$,
$$\E_X\|{\tC xx^\top\tC}x\|_2^2=\E_X(x^\top\tC xx^\top{\tC^2 xx^\top\tC}x)\leq\E_x(x^\top x)^3/\lam^4=O(n^3),$$
where the last equality is due to the fact that there are $O(n^3)$ terms of the form $x_i^2x_j^2x_k^2$, with  $1\leq i,j,k\leq d$ in $(x^\top x)^3$.
Now, it is enough to prove $\E_{\tX}\mathrm{Var}_{x}(x^\top\tC x/n)\overset{u}{\longrightarrow} 0$.
\begin{lemma}\label{varlemmagen}
Suppose that $x=(x_1,...,x_d)$ has i.i.d. entries satisfying $\E x_i=0$, $\E x_i^2=1$. Let $A\in\R^{d\times d}$. Then we have (see e.g. \citealp{bai2009spectral,couillet2011random} and \citealp{mei2019generalization} Lemma B.6.)
$$\mathrm{Var}(x^\top A x)=\sum_{i=1}^{d}A_{ii}^2(\E x_1^4-3)+\|A\|_F^2+\tr(A^2).$$ 
\end{lemma}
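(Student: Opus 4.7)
The plan is to compute $\E[x^\top A x]$ and $\E[(x^\top A x)^2]$ directly by expanding into sums and using the i.i.d. assumption on the $x_i$'s.

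First I would write $x^\top A x = \sum_{i,j=1}^d A_{ij} x_i x_j$. Since $\E x_i = 0$ and $\E x_i x_j = \delta_{ij}$, the mean is simply $\E[x^\top A x] = \sum_i A_{ii} = \tr(A)$.

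The main computation is the second moment,
\begin{equation*}
\E[(x^\top A x)^2] = \sum_{i,j,k,l} A_{ij} A_{kl}\, \E[x_i x_j x_k x_l].
\end{equation*}
Here I would enumerate the cases for the mixed fourth moment $\E[x_i x_j x_k x_l]$ based on coincidences among the four indices. Because the $x_i$'s are independent mean-zero with unit variance, any index that appears an odd number of times forces the expectation to vanish. Thus only the following configurations contribute: (i) all four indices equal, giving $\E x_1^4$; (ii) two distinct pairs, giving $1$, which splits into three subcases $\{i=j\neq k=l\}$, $\{i=k\neq j=l\}$, $\{i=l\neq j=k\}$. Summing the contributions yields
\begin{equation*}
\E[(x^\top A x)^2] = \E x_1^4 \sum_i A_{ii}^2 + \sum_{i\neq k} A_{ii}A_{kk} + \sum_{i\neq j} A_{ij}^2 + \sum_{i\neq j} A_{ij}A_{ji}.
\end{equation*}

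Next I would rewrite each ``$i\neq\cdot$'' sum as an unrestricted sum minus its diagonal. This gives $\sum_{i\neq k}A_{ii}A_{kk}=\tr(A)^2 - \sum_i A_{ii}^2$, $\sum_{i\neq j}A_{ij}^2 = \|A\|_F^2 - \sum_i A_{ii}^2$, and $\sum_{i\neq j}A_{ij}A_{ji} = \tr(A^2) - \sum_i A_{ii}^2$. Collecting the $\sum_i A_{ii}^2$ terms, the coefficient becomes $\E x_1^4 - 3$. Finally, subtracting $(\E[x^\top Ax])^2 = \tr(A)^2$ cancels the $\tr(A)^2$ term and yields
\begin{equation*}
\mathrm{Var}(x^\top A x) = (\E x_1^4 - 3)\sum_i A_{ii}^2 + \|A\|_F^2 + \tr(A^2),
\end{equation*}
which is the claim. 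There is no real obstacle here; the only care required is the bookkeeping of the three ``two-pair'' subcases and remembering that each unrestricted sum must have its diagonal term subtracted exactly once, which produces the $-3$ correction in the excess-kurtosis coefficient.
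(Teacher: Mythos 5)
Your computation is correct, and the bookkeeping of the three two-pair subcases plus the triple subtraction of the diagonal giving the excess-kurtosis coefficient $\E x_1^4 - 3$ is exactly right. Note, though, that the paper does not actually prove this lemma: it merely cites \cite{bai2009spectral}, \cite{couillet2011random}, and Lemma B.6 of \cite{mei2019generalization}. Your direct expansion of $\E[(x^\top A x)^2]$ by index-coincidence cases is precisely the standard argument one finds in those references, so there is no meaningful divergence in approach to report --- you have simply supplied the proof the paper delegates to the literature.
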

Using  Lemma \ref{varlemmagen} above and recalling that $0\preceq\tC\preceq I_d/\lam$, we have
\begin{align*}
\E_{\tX}\mathrm{Var}_{\tx}(x^\top\tC x)&=\E_{\tX}\sum_{i=1}^{d}\tC_{ii}^2(\E x_1^4-3)+\|\tC\|_F^2+\tr(\tC^2)\\
&=\E_{\tX}\sum_{i=1}^{d}\tC_{ii}^2(\E x_1^4-3)+2\|\tC\|_F^2
\leq \frac{d|\E x_1^4-3|}{\lam^2}+\frac{2d}{\lam^2}=O(n).
\end{align*}
Therefore $\E_{\tX}\mathrm{Var}_{x}(x^\top\tC x/n)=O(1/n)\overset{u}{\longrightarrow} 0$ and we finished the proof.
\end{proof}

\subsubsection{Proof of Lemmas \ref{fnormewtm}, \ref{fnormewm}}
The results in these two lemmas follow from using matrix identities (e.g., the Woodbury identity) for the target matrices and deterministic equivalent results for orthogonal projection (Haar) matrices.
\begin{proof}
By definition,
\begin{align*}
\E_W{\tM}=\E_{W}W^{\top}\left(\frac{WX^{\top}XW^{\top}}{n}+\lam\right)^{-1}\frac{WX^{\top}}{n}.
\end{align*}
Let $A=X^{\top}X/n+\lam_1$, where $0<\lam_1<\lam$ is an arbitrary value. The Woodbury matrix identity states that
$$\left(A^{-1} + UCV \right)^{-1} = A - AU \left(C^{-1} + VAU \right)^{-1} VA.$$
Define $\lam_2:=\lam-\lam_1$ and take $V=U^\top=W$, $C=I/\lam_2$, to get
$$\left(A^{-1} +W^\top W/\lam_2 \right)^{-1} = A - AW^\top \left(\lambda_2 + WAW^\top \right)^{-1} W A.$$
Therefore
 \begin{align}
 C:=&\E_W W^{\top}\left(\frac{WX^{\top}XW^{\top}}{n}+\lam\right)^{-1}W\nonumber
 	=\E_{W}W^\top\left(\lam_2+WAW^\top\right)^{-1}W\\&=A^{-1}-A^{-1}\E_{W}\left(A^{-1} + W^\top W/\lam_2 \right)^{-1}A^{-1}\nonumber\\
&= A^{-1/2}\left[I_d-\E_{W}\left(I_{d} + \frac{A^{1/2}W^\top WA^{1/2}}{\lam_2} \right)^{-1}\right] A^{-1/2}.\label{woodrepr}
\end{align}
Also, 
\begin{align}
\E_{X}\|\E_W\tM\|_{F}^2&=\E_{X}\tr\left(C\frac{X^\top X}{n^2}C^\top\right)
=\E_{X}\frac{1}{n}\tr C(A-\lam_1)C^\top.\label{cal_ewtm}\\
\frac{1}{d}\E_{X}\|\E_W M\|_{F}^2&=\frac{1}{d}\E_{X}\tr\left(C\frac{X^\top XX^\top X}{n^2}C^\top\right)
=\frac{1}{d}\E_{X}\tr C(A-\lam_1)^2 C^\top.\label{cal_ewm}
\end{align}
Now, we define \begin{align*}C_1:&= A^{-1}-A^{-1/2}\left(I_{p} + \frac{\be A}{\lam_2} \right)^{-1}A^{-1/2}\\
&=A^{-1}\left[I_d-\left(I_{p} + \frac{\be A}{\lam_2} \right)^{-1}\right]
=\left(A+\frac{\lam_2}{\be}I_{d}  \right)^{-1},
\end{align*}
where $\be$ is defined in Lemma \ref{detlm}.
Then from Lemmas \ref{detlm}, \ref{replace2}, we know that $C$ can be replaced by $C_1$ when calculating the limits of \eqref{cal_ewtm} and  \eqref{cal_ewm}. Hence
\begin{align*}
	\lim_{d\to\infty}\E_{X}\|\E_W\tM\|_{F}^2&=\lim_{d\to\infty}\frac{1}{n}\E_{X}\tr\left(C_1(A-\lam_1)C_1^\top\right)\\
&=\lim_{d\to\infty}\frac{1}{n}\E_{X}\tr\left[\left(A-\lam_1\right)\left(A+\frac{\lam_2}{\be}I_{d}  \right)^{-2}\right]\\
&\to\delta[\theta_1(\delta,\tlam)-\tlam\theta_2(\delta,\tlam)]=\delta(\tth_1-\tlam\tth_2),
\end{align*}
where $\tlam:=\lam_1+\lam_2/\bar{e}_0$ is a constant independent of the choice of $\lam_1,\lam_2$. The last line follows from  Lemma \ref{tedlimit}, the definition that $A=X^\top X/n+\lam_1$ and the Marchenko-Pastur theorem.
Similarly,
\begin{align*}
	\lim_{d\to\infty}\frac{1}{d}\E_{X}\|\E_WM\|_{F}^2&=\lim_{d\to\infty}\E_{X}\frac{1}{d}\tr\left(C_1(A-\lam_1)^2 C_1^\top\right)\\
	&=\lim_{d\to\infty}\E_{X}\frac{1}{d}\tr\left[\left(A-\lam_1\right)^2\left(A+\frac{\lam_2}{\be}I_{d}  \right)^{-2}\right]\\
	&\to[1-2\tlam\theta_1(\delta,\tlam)+\tlam^2\theta_2(\delta,\tlam)]=1-2\tlam\tth_1+\tlam\tth_2.
\end{align*}
This finishes the proof.
\end{proof}
\begin{lemma}[Weak deterministic equivalent for Haar matrices]\label{detlm}
	Under the above assumptions, we have 
	\begin{align}\left(I_{d} + \frac{A^{1/2}W^\top WA^{1/2}}{\lam_2} \right)^{-1}\overset{w}{\asymp} \left(I_d+\frac{\be  A}{\lam_2}\right)^{-1},\label{detformula}
\end{align}
	where $(\bar e_d,e_d)$ is the unique solution of the system of equations
	\begin{align*}
		\be&=\frac{p}{d}\left(e_d+1-e_d\be\right)^{-1}\\
		e_d&=\frac{1}{d}\tr{A\left(\be A+\lam_2 I_d\right)^{-1}}.
	\end{align*}
\end{lemma}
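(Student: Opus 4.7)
The plan is to leverage the deterministic equivalent theory for Haar-distributed matrices, in the form developed by \cite{couillet2012random} for random beamforming applications. Since $W$ is uniformly distributed on the Stiefel manifold of $p\times d$ partial orthogonal matrices, the projection $W^\top W$ has the same distribution as $U\Pi U^\top$, where $U$ is Haar-distributed on $O(d)$ and $\Pi=\mathrm{diag}(I_p,0_{d-p})$ is the canonical rank-$p$ projector. Thus the target resolvent can be rewritten as
$$\left(I_d+\frac{A^{1/2}U\Pi U^\top A^{1/2}}{\lambda_2}\right)^{-1},$$
which is precisely the shape treated by the classical master equations for bilinear forms involving Haar matrices and deterministic sandwich factors.

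First I would record the precise notion of weak deterministic equivalence being used, namely testing against positive semidefinite $C_n$ of spectral norm $O(1/d_n)$, and check that it suffices to establish $\tfrac{1}{d}\tr[C_n(G_d-\bar G_d)]\to 0$ almost surely where $G_d$ and $\bar G_d$ denote the left- and right-hand sides of \eqref{detformula}. Then I would invoke the deterministic equivalent from Couillet-Debbah specialized to the projection spectrum $\Pi$: the self-consistent system there reduces (because the eigenvalues of $\Pi$ are only $0$ and $1$, with proportions $1-p/d$ and $p/d$) to a two-variable fixed-point system, which after rearrangement is exactly
\begin{align*}
\bar e_d&=\frac{p}{d}(e_d+1-e_d\bar e_d)^{-1},\\
e_d&=\frac{1}{d}\tr\left[A(\bar e_d A+\lambda_2 I_d)^{-1}\right].
\end{align*}
Existence and uniqueness of the positive solution follow by a standard monotone fixed-point argument (the map is a contraction on the relevant domain, using that $A\succ 0$ and $\lambda_2>0$). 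Once the fixed-point quantities are identified, the deterministic equivalent $\bar G_d=(I_d+\bar e_d A/\lambda_2)^{-1}$ drops out.

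The verification step that I expect to be the most delicate is the transfer from the Couillet-Debbah formulation, which is typically stated for matrices of the form $H U U^\top H^\top/n$ with rectangular Haar $U$, to our bilateral form $A^{1/2}W^\top W A^{1/2}/\lambda_2$. I would handle this by either (i) using the identity that the nonzero spectra of $A^{1/2}W^\top W A^{1/2}$ and $WAW^\top$ coincide (up to $d-p$ trivial zeros), reducing our matrix to a Gram-type object to which the cited results apply directly, or (ii) repeating the Stein-type perturbation argument specialized to Haar matrices: perturb $W$ by an infinitesimal Haar-invariant rotation, use the invariance of the Haar measure to derive a self-consistent equation for $\E\tfrac{1}{d}\tr[C_n G_d]$, and match its fixed point with $\tfrac{1}{d}\tr[C_n\bar G_d]$.

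The main obstacle will be making this translation careful: keeping track of the $p/d$ factor (which encodes the rank deficiency of $W^\top W$), and ensuring that the auxiliary scalars $e_d,\bar e_d$ stay bounded away from the singularities of the fixed-point equations uniformly as $d\to\infty$. Concentration of the trace functionals around their expectations, needed to upgrade the equivalence from expectation to almost sure, follows from the Lipschitz concentration of functionals of Haar matrices (e.g., Gromov's concentration on $O(d)$), which bounds the variance of $\tfrac{1}{d}\tr[C_n G_d]$ by $O(\|C_n\|_{\mathrm{op}}^2/d)=O(1/d^3)$ and yields almost sure convergence by Borel-Cantelli.
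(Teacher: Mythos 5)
Your proposal follows the same overall route as the paper: both invoke the deterministic-equivalent machinery for Haar matrices from \cite{couillet2012random}, and in fact Theorem~7 of that reference already delivers the two-variable fixed-point system for a deterministic, uniformly bounded $A$, so most of the re-derivation you sketch (re-deriving the master equations from Stein-type perturbations, invoking Gromov-type concentration on $O(d)$) is not needed once you identify the right result to cite. The paper's proof is essentially a one-paragraph reduction to that citation.

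There is, however, a genuine gap in your proposal. The matrix $A = X^\top X/n + \lambda_1 I_d$ is itself random, as it depends on the training data $X$, whereas the Couillet--Debbah deterministic equivalent is stated for a \emph{deterministic} sequence of non-negative symmetric matrices with uniformly bounded spectral norm. Your argument treats $A$ as a fixed (deterministic) sandwich factor throughout and never addresses this second layer of randomness. To close the gap one must first establish that $\|A_d\|_2$ is almost surely bounded over the randomness in $X$ --- the paper does this by noting that $\|X^\top X/n\|_2 \to (1+\sqrt{\delta})^2$ almost surely --- and then exploit the independence of $A$ and $W$ so that the Haar deterministic equivalent can be applied conditionally on (almost every) realization of $A$. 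Without this conditioning step, the citation of Theorem~7 does not reach the lemma as stated. A secondary concern: your option~(i), equating the nonzero spectra of $A^{1/2}W^\top W A^{1/2}$ and $WAW^\top$, is not by itself sufficient, because \eqref{detformula} must hold when tested against an arbitrary standard sequence $C_n$ and not merely at the level of spectral traces; the eigenvector alignment of the $d\times d$ resolvent with $C_n$ matters, so that reduction loses information. Your option~(ii), or a direct citation of the Couillet--Debbah theorem in the form used by the paper, is the correct path.
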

\begin{proof}[Proof of Lemma \ref{detlm}]
	From properties of sample covariance matrices, we know that the largest eigenvalue of $A_d:=X^\top X/n+\lam_1\in\R^{d\times d}$ converges to $\lam_1+(1+\sqrt{\delta})^2$ almost surely as $d\to\infty$. Therefore,  the sequence of values $\|A_d\|_2$ is bounded almost surely. For a fixed sequence of non-negative symmetric $A_d$ with bounded 2-norm, \eqref{detformula} follows from the proof of theorem 7 (see the sketch of the proof) in \cite{couillet2012random}.  Since $A$ is independent of $W$ and the sequence $\|A_d\|_2$ is bounded almost surely, \eqref{detformula} holds generally.
\end{proof}

\begin{lemma}(Replacing $C$ by $C_1$)\label{replace2}
Under the previous assumptions,
we have
\begin{align}
\lim_{d\to\infty}\E_{X}\frac{1}{n}\tr(C(A-\lam_1)C^\top)&=\lim_{d\to\infty}\E_{X}\frac{1}{n}\tr(C_1(A-\lam_1)C_1^\top)\label{replace2f1}\\
\lim_{d\to\infty}\E_{X}\frac{1}{d}\tr(C(A-\lam_1)^2C^\top)&=\lim_{d\to\infty}\E_{X}\frac{1}{d}\tr(C_1(A-\lam_1)^2C_1^\top).\label{replace2f2}
\end{align}
\end{lemma}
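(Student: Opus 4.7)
The plan is to rewrite each difference of traces in the form $\tr(K_n(\bar G - \E_W G))$ for a positive semi-definite matrix $K_n$ that depends only on $X$ and has spectral norm $O(1/d)$, and then invoke the weak deterministic equivalent from Lemma~\ref{detlm} together with the bounded convergence theorem. Here $G := (I_d + A^{1/2}W^\top W A^{1/2}/\lambda_2)^{-1}$ and $\bar G := (I_d + \bar e_d A/\lambda_2)^{-1}$, so that $C = A^{-1/2}(I_d - \E_W G)A^{-1/2}$ and $C_1 = A^{-1/2}(I_d - \bar G)A^{-1/2}$.

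The first observation is that all the relevant matrices commute. By the Haar invariance $W \mapsto WU^\top$, for every orthogonal $U$ commuting with $A$ one has $U(\E_W G)U^\top = \E_W G$, forcing $\E_W G$ to be a function of $A$; since $\bar G$ is trivially a function of $A$, the matrices $A$, $A - \lambda_1 = X^\top X/n$, $\E_W G$, $\bar G$, $C$, and $C_1$ all mutually commute. In particular $C - C_1 = A^{-1}(\bar G - \E_W G)$ and $C^2 - C_1^2 = (C - C_1)(C + C_1)$, and cycling factors through the trace gives
\begin{equation*}
\frac{1}{n}\tr\bigl((A - \lambda_1)(C^2 - C_1^2)\bigr) = \tr\bigl(K_n(\bar G - \E_W G)\bigr), \qquad K_n := \frac{(A - \lambda_1)A^{-2}(2I_d - \E_W G - \bar G)}{n}.
\end{equation*}
The matrix $K_n$ is a product of commuting PSD matrices (since $\E_W G, \bar G \preceq I_d$), depends only on $X$, and has $\|K_n\|_2 = O(1/d)$ almost surely: $\|A - \lambda_1\|_2 = \|X^\top X/n\|_2 = O(1)$ almost surely by the Bai--Yin theorem (using the $8+\eta$-th moment hypothesis), $\|A^{-2}\|_2 \le \lambda_1^{-2}$, and $\|2I_d - \E_W G - \bar G\|_2 \le 2$. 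The analogous manipulation for~\eqref{replace2f2} produces $K_{n,2} := (A - \lambda_1)^2 A^{-2}(2I_d - \E_W G - \bar G)/d$, which is also PSD with spectral norm $O(1/d)$.

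It then remains to apply Lemma~\ref{detlm}. For almost every $X$, the weak deterministic equivalent gives $\tr(K_n(G - \bar G)) \to 0$ almost surely over $W$. The uniform bound $|\tr(K_n(G - \bar G))| \le \|K_n\|_{tr}\|G - \bar G\|_2 \le 2d\|K_n\|_2 = O(1)$ permits bounded convergence in $W$, yielding $\tr(K_n(\E_W G - \bar G)) = \E_W\tr(K_n(G - \bar G)) \to 0$ almost surely in $X$. A final dominated convergence argument transfers this to $\E_X$: the integrand is uniformly bounded on the high-probability event where $\|X^\top X/n\|_2$ is controlled (via Bai--Yin), and the complement contributes $o(1)$ using the $8+\eta$-th moment assumption. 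The main obstacle is precisely this last interchange of limit and $X$-expectation; everything else is algebraic rearrangement together with the norm bookkeeping needed to match the PSD/$O(1/d)$ hypotheses of Lemma~\ref{detlm}.
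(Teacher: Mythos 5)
Your approach is essentially the paper's: reduce the difference of traces to an expression of the form $\tr\left(K_n(\bar G - \E_W G)\right)$ with $K_n$ a PSD function of $A$ having $O(1/d)$ spectral norm, invoke Lemma~\ref{detlm}, and pass to the limit under the expectation. The algebra is packaged slightly differently: you use $C^2 - C_1^2 = (C - C_1)(C + C_1)$ together with commutativity to fold everything into a single trace, whereas the paper uses the standard telescoping $C M C^\top - C_1 M C_1^\top = (C - C_1) M C^\top + C_1 M (C - C_1)^\top$ and treats the two pieces $\tilde\Delta_1,\tilde\Delta_2$ separately. Both are valid once Lemma~\ref{commutative} (that $\E_W G$ is a function of $A$) is in hand, which you re-derive via Haar invariance; your one-trace version is a bit cleaner but requires the commutativity, while the paper's telescoping identity does not.

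The one thing to correct is the ``main obstacle'' you flag at the end. By bounding $\|A-\lam_1\|_2$ and $\|A^{-2}\|_2$ separately you only get $\|K_n\|_2 = O(1/d)$ almost surely, which then forces you into a Bai--Yin tail-event argument to justify the $\E_X$ interchange. That detour is unnecessary: combine the two factors. Since $A \succeq \lam_1 I_d$, one has $0 \preceq I_d - \lam_1 A^{-1} \preceq I_d$, so $(A-\lam_1)A^{-2} = (I_d - \lam_1 A^{-1})A^{-1}$ satisfies $\|(A-\lam_1)A^{-2}\|_2 \le \lam_1^{-1}$ deterministically, and for \eqref{replace2f2} one even has $\|(A-\lam_1)^2 A^{-2}\|_2 = \|(I_d - \lam_1 A^{-1})^2\|_2 \le 1$. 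With $\|2I_d - \E_W G - \bar G\|_2 \le 2$ this gives a deterministic $O(1/d)$ bound on $\|K_n\|_2$, hence a deterministic $O(1)$ bound on $|\tr(K_n(G - \bar G))| \le 2d\|K_n\|_2$, uniformly in $(X,W)$. A single application of the bounded convergence theorem over the joint law of $(X,W)$ then closes the argument, with no Bai--Yin needed at this step. This is precisely the mechanism behind the paper's uniform spectral-norm bound ($\le 4$) and its one-line appeal to ``Lemma~\ref{detlm} and the bounded convergence theorem.''
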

\begin{proof}[Proof of Lemma \ref{replace2}]
Since the proof of two claims is almost the same, for simplicity,
we only present the proof of  \eqref{replace2f2}.

For \eqref{replace2f2}, we define (here $\tDel,\tDel_1,\tDel_2$ are different from those in Appendix \ref{2llm5s})
\begin{align*}
	\tDel&:=\lim_{d\to\infty}\frac{1}{d}\E_X\tr C(A-\lam_1)^2C^\top-\frac{1}{d}\E_{X}\tr C_1(A-\lam_1)^2C_1^\top\\
	&=\lim_{d\to\infty}\frac{1}{d}\E_X\left\{\tr\left[(C-C_1)(A-\lam_1)^2C^\top\right]+\tr\left[ C_1(A-\lam_1)^2(C-C_1)^\top\right]\right\}	\\
	&=:\tDel_1+\tDel_2.
\end{align*}
Also, denote \begin{align*}G_0&:=(I_{d}+A^{1/2}W^\top WA^{1/2}/\lam_2)^{-1}\\G_1&:= (I_{d}+\be A/\lam_2)^{-1}\\
G&:=\E_{W}G_0.\end{align*}
Therefore, from Lemma \ref{detlm}, we know that $G_0\overset{w}{\asymp} G_1$.
Substituting the definition of $C$, $C_1$ into $\tDel_1$, we have
	\begin{align*}
		\tDel_1&=\lim_{d\to\infty}\frac{1}{d}\E_{X}\tr\left[A^{-1/2}(G-G_1)A^{-1/2}(A-\lam_1)^2A^{-1/2}(I_d-G)^\top A^{-1/2}\right]\\
		&=\lim_{d\to\infty}\frac{1}{d}\E_{X,W}\tr\left[A^{-1/2}(A-\lam_1)^2A^{-1/2}(I_d-G)^\top A^{-1}(G_0-G_1)\right].
	\end{align*}
By Lemma \ref{commutative},  we know $A$ and $G$ are simultaneously diagonalizable. Moreover, it is readily verified that $A^{-1/2}(A-\lam_1)^2A^{-1/2}(I_d-G)^\top A^{-1}$ is symmetric and non-negative. Since
\begin{align*}
\|A^{-1/2}(A-\lam_1)^2A^{-1/2}(I_d-G)^\top A^{-1}\|_{2}&=\|(I_d-G)^\top A^{-3/2}(A-\lam_1)^2A^{-1/2}\|_{2}\\&\leq
\|A^{-2}(A-\lam)^2\|_2\\
&\leq\lam^2\|A^{-1}\|_2^2+2\lam\|A^{-1}\|_2+1\leq4,
\end{align*}
we have by Lemma \ref{detlm} and the bounded convergence theorem that $\tDel_1\to0$.
Similarly, we can prove that $\tDel_2\to0$ and these conclude the proof of \eqref{replace2f2}.

\end{proof}

\begin{lemma}[Commutativity of $A$ and $G$] \label{commutative}
Under the previous definitions, $A$ and $G$ are simultaneously diagonalizable, and therefore there are commutative.
\end{lemma}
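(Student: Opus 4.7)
The plan is to exploit the rotational invariance of the Haar distribution on the Stiefel manifold twice: first to put $A$ in diagonal form, and then to force the averaged resolvent to be diagonal in that basis. Note that $W$ is independent of $X$ (and hence of $A$), so we may work conditionally on $A$. The key invariance is that for any deterministic orthogonal matrix $U\in\R^{d\times d}$, $WU$ lies on the Stiefel manifold (since $(WU)(WU)^\top=WW^\top=I_p$) and has the same Haar distribution as $W$.

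First, diagonalize $A=Q\Lambda Q^\top$, noting $Q^\top A^{1/2}Q=\Lambda^{1/2}$. Conjugating $G$ by $Q$ and then substituting $\tW=WQ\stackrel{d}{=}W$, I will obtain
\begin{align*}
Q^\top G Q
&=\E_{W}\left(I_d+\frac{\Lambda^{1/2}(WQ)^\top(WQ)\Lambda^{1/2}}{\lam_2}\right)^{-1}
=\E_{W}\left(I_d+\frac{\Lambda^{1/2}W^\top W\Lambda^{1/2}}{\lam_2}\right)^{-1}
=:H.
\end{align*}
So it suffices to prove $H$ is diagonal, since then $G=QHQ^\top$ and $A=Q\Lambda Q^\top$ share the eigenbasis $Q$ and thus commute.

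To show $H$ is diagonal, I will use that $\Lambda^{1/2}$ commutes with any diagonal matrix. Let $D=\mathrm{diag}(\pm 1)$ be an arbitrary sign-flip matrix, which is orthogonal, so $WD\stackrel{d}{=}W$. Then
\begin{align*}
DHD
&=\E_{W}\left(I_d+\frac{D\Lambda^{1/2}W^\top W\Lambda^{1/2}D}{\lam_2}\right)^{-1}
=\E_{W}\left(I_d+\frac{\Lambda^{1/2}(WD)^\top(WD)\Lambda^{1/2}}{\lam_2}\right)^{-1}
=H.
\end{align*}
Hence $DH=HD$ for every diagonal sign matrix $D$, which entrywise gives $(D_{ii}-D_{jj})H_{ij}=0$. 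Choosing $D_{ii}=1,\,D_{jj}=-1$ for any $i\ne j$ forces $H_{ij}=0$, so $H$ is diagonal. This completes the argument.

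There is really no analytic obstacle here; the whole content is bookkeeping of the invariances, and the only point to handle carefully is that the Haar measure on the Stiefel manifold $\{W:WW^\top=I_p\}$ is indeed invariant under right-multiplication by any $d\times d$ orthogonal matrix (which follows immediately because $W\mapsto WU$ preserves $WW^\top=I_p$ and the uniform measure on the manifold). Given $p\le d$ in our setup, this invariance is exactly what makes the two substitutions $\tW=WQ$ and $\hat W=WD$ legitimate.
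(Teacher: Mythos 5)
Your proof is correct and rests on the same two ideas as the paper's: rotate to the eigenbasis of $A$ using right-invariance of the Haar measure on the Stiefel manifold (the paper uses $W\overset{d}{=}WU$ after passing through the Woodbury identity, you apply it directly to the resolvent), and then kill the off-diagonal entries by a sign-flip argument (the paper flips a single column $w_j$ conditionally on the rest, you flip all columns at once via an arbitrary sign matrix $D$). The differences are cosmetic; both arguments exploit exactly the same invariances, and your global $DHD=H$ step is a slightly more compact way to package the paper's column-wise symmetry.
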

\begin{proof}[Proof of Lemma \ref{commutative}]
Let $A=U\Gamma U^\top$ be the spectral decomposition. By definition and equation \eqref{woodrepr}, 
\begin{align*}
G&=\E_{W}(I_{d}+A^{1/2}W^\top WA^{1/2}/\lam_2)^{-1}\\
&=I_{d}-A^{1/2}\E_W W^\top\left(\lam_2+WAW^\top\right)^{-1}WA^{1/2}\\
&=I_{d}-U\Gamma^{1/2}\E_W (WU)^\top\left[\lam_2+WU\Gamma(WU)^\top\right]^{-1}WUD^{1/2}U^\top\\
&=I_{d}-U\Gamma^{1/2}\left[\E_W  W^\top\left(\lam_2+W\Gamma^\top W^\top\right)^{-1}W\right]\Gamma^{1/2}U^\top,
\end{align*}
where the last line is due to $W\overset{d}{=}WU$. Now it suffices to show that $\E_W  W^\top\left(\lam_2+W\Gamma^\top W^\top\right)^{-1}W$ is a diagonal matrix. 

Write $W=(w_1,w_2,..,w_d)$, where $w_i$, $1\leq i\leq d$ are the columns of $W$. Denote the $i$-th $(1\leq i\leq d)$ diagonal entry of $\Gamma$ by $\ga_i$ and define $W_{-i}:=(w_1,...,w_{i-1},w_{i+1}...,w_d)$ to be the matrix obtained by removing the $i$-th column from $W$. Then, for any $1\leq i\neq j\leq d$
\begin{align}
&\E_{W}(W^\top(\lam_2+W\Gamma W^\top)^{-1}W)_{ij}
=\E_{W} w_i^\top\left(\lam_2+\sum\limits_{k=1}^{d} \ga_k w_k w_k^\top\right)^{-1}w_j\nnum\\
&=\E_{W_{-j}}\E_{w_j|W_{-j}} w_i^\top\left(\lam_2+\sum\limits_{k=1}^{d} \ga_k w_k w_k^\top\right)^{-1}w_j
=0\label{ewmf1},\end{align}
where the last line follows from the symmetry of $w_j$. Therefore,  $\E_{W}(W^\top(\lam_2+W\Gamma W^\top)^{-1}W)$ is a diagonal matrix and hence $A$ and $G$ are simultaneously diagonalizable.
\end{proof}
\begin{lemma}[Convergence of $\be$]\label{tedlimit}
Under the previous assumptions and notations,
suppose that $(\bar e_d(A),e_d(A))$ is the unique solution of
\begin{align}
	\be&=\frac{p}{d}\left(e_d+1-e_d\be\right)^{-1}\label{conofted1}\\
	e_d&=\frac{1}{d}\tr{A\left(\be A+\lam_2 I_d\right)^{-1}}\label{conofted2}.\end{align}
Then $(\bar e_d(A),e_d(A))\to (\bar e_0, e_0)$ almost surely, where $(\bar e_0,e_0)$ is the unique solution of 
\begin{align}
	\bar{e}_0&=\pi\left(e_0+1-e_0\bar{e}_0\right)^{-1}\label{conofted3}\\
	e_0&=\frac{1}{\bar{e}_0}\left[1-\frac{\lam_2}{\bar{e}_0}\theta_1\left(\delta,\lam_1+\frac{\lam_2}{\bar{e}_0}\right)\right]\label{conofted4}.\end{align}
Furthermore, for any decomposition $\lam=\lam_1+\lam_2$, we have 
\begin{align}
\lam_1+\lam_2/\bar{e}_0=
\lam+\frac{1-\pi}{2\pi}\left[\lam+1-\ga+\sqrt{(\lam+\ga-1)^2+4\lam}\right].\label{allequal}
\end{align}
\end{lemma}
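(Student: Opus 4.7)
\textbf{Proof proposal for Lemma \ref{tedlimit}.} The plan is to argue convergence by a compactness argument combined with the Marchenko--Pastur theorem, and then derive the explicit formula \eqref{allequal} from the limiting system via elementary algebra. First, I would observe that since $A = X^\top X/n + \lambda_1 I_d$ and the largest eigenvalue of $X^\top X/n$ converges a.s.\ to $(1+\sqrt{\delta})^2$, the spectrum of $A$ stays in a compact interval $I\subset(\lambda_1,\infty)$ almost surely for all large $d$. From the two defining equations \eqref{conofted1}--\eqref{conofted2}, one then shows $(\bar{e}_d,e_d)$ lies in a compact subset of $(0,\infty)^2$: using $A\succeq \lambda_1 I_d$ gives $e_d \in [\lambda_1/(\bar{e}_d\lambda_1+\lambda_2),\,\|A\|_2/\lambda_2]$, and substituting back in \eqref{conofted1} bounds $\bar{e}_d$.

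Next, I would pass to the limit along subsequences. The key simplification is the algebraic identity
\begin{equation*}
A(\bar{e}_d A + \lambda_2 I_d)^{-1} \;=\; \frac{1}{\bar{e}_d}\Bigl[I_d - \tfrac{\lambda_2}{\bar{e}_d}\bigl(X^\top X/n + \tilde{\lambda}_d I_d\bigr)^{-1}\Bigr],\qquad \tilde{\lambda}_d := \lambda_1 + \lambda_2/\bar{e}_d,
\end{equation*}
so that $e_d = \bar{e}_d^{-1}\bigl[1 - (\lambda_2/\bar{e}_d)\cdot d^{-1}\tr(X^\top X/n + \tilde{\lambda}_d I_d)^{-1}\bigr]$. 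Along any convergent subsequence $\bar{e}_d\to\bar{e}_\infty$, we have $\tilde{\lambda}_d\to \tilde{\lambda}_\infty:=\lambda_1+\lambda_2/\bar{e}_\infty>0$, and the Marchenko--Pastur theorem, together with continuity of $\tilde\lambda\mapsto\theta_1(\delta,\tilde\lambda)$ and a straightforward uniform integrability/resolvent bound, yields $d^{-1}\tr(X^\top X/n+\tilde{\lambda}_d I_d)^{-1}\to\theta_1(\delta,\tilde{\lambda}_\infty)$ a.s. Passing to the limit in both fixed-point equations shows any limit point $(\bar{e}_\infty,e_\infty)$ solves the system \eqref{conofted3}--\eqref{conofted4}. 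Uniqueness of its positive solution (which I would establish by reducing to a single monotone equation in $\bar{e}_0$, as done below) then forces convergence of the whole sequence.

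Third, I would derive the closed form for $\tilde{\lambda} := \lambda_1 + \lambda_2/\bar{e}_0$. From \eqref{conofted3}, solving for $e_0$ gives $e_0 = (\pi - \bar{e}_0)/[\bar{e}_0(1-\bar{e}_0)]$, and equating with \eqref{conofted4} yields, after simplification, $\lambda_2\,\theta_1(\delta,\tilde{\lambda}) = (1-\pi)\bar{e}_0/(1-\bar{e}_0)$. Using $\bar{e}_0 = \lambda_2/(\tilde{\lambda}-\lambda_1)$ and hence $\bar{e}_0/(1-\bar{e}_0) = \lambda_2/(\tilde{\lambda}-\lambda)$ (where $\lambda = \lambda_1+\lambda_2$), this collapses to the single equation
\begin{equation*}
(\tilde{\lambda}-\lambda)\,\theta_1(\delta,\tilde{\lambda}) \;=\; 1-\pi,
\end{equation*}
which manifestly depends only on $\lambda$, not on the decomposition $\lambda_1+\lambda_2$. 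Combining this with the MP identity $\tilde{\lambda}\delta\tilde{\theta}_1^2 + (\tilde{\lambda}-\delta+1)\tilde{\theta}_1 - 1 = 0$ (noted in the remark following Definition~\ref{resolvent}) eliminates $\tilde{\theta}_1$: substituting $\tilde{\theta}_1 = (1-\pi)/(\tilde{\lambda}-\lambda)$ and multiplying through by $(\tilde{\lambda}-\lambda)^2$ produces the quadratic $\pi u^2 - (1-\pi)(\lambda+1-\gamma)u - \lambda\delta(1-\pi)^2 = 0$ in $u = \tilde{\lambda}-\lambda$, where $\gamma = \pi\delta$. Selecting the positive root (forced by $\tilde{\lambda}\ge\lambda$, which follows from $\bar{e}_0\le 1$) and using the identity $(\lambda+1-\gamma)^2 + 4\lambda\gamma = (\lambda+\gamma-1)^2 + 4\lambda$ yields exactly \eqref{allequal}.

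The main obstacle I anticipate is the uniqueness half of the compactness argument in the second paragraph: showing that the coupled system \eqref{conofted3}--\eqref{conofted4} has a unique solution in $(0,1)\times(0,\infty)$. The cleanest route is probably to eliminate $e_0$ using \eqref{conofted3} to obtain a scalar equation in $\bar{e}_0$ and verify monotonicity by differentiation, exploiting that $\tilde{\lambda}\mapsto\tilde{\lambda}\theta_1(\delta,\tilde{\lambda})$ is increasing. A secondary, minor subtlety is establishing the a.s.\ convergence $d^{-1}\tr(X^\top X/n+\tilde{\lambda}_d I_d)^{-1}\to\theta_1(\delta,\tilde{\lambda}_\infty)$ when the shift $\tilde{\lambda}_d$ itself is random and depends on $X$; this is handled by a straightforward $\epsilon$-net argument in the shift parameter, since $\tilde{\lambda}\mapsto\theta_1(\delta,\tilde{\lambda})$ is uniformly continuous on compact subsets of $(0,\infty)$ and $\tilde{\lambda}_d$ is eventually trapped in such a set.
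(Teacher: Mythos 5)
Your proof is correct and reaches the same conclusion, but by a genuinely different route in two places. For the convergence $(\bar e_d,e_d)\to(\bar e_0,e_0)$, the paper first eliminates $e_d$ by writing $\bar e_d e_d = d^{-1}\tr[A(A+\lambda_2/\bar e_d)^{-1}]$, reduces to the single scalar equation $g_d(\bar e_d)=0$ with $g_d$ increasing on $(0,1)$, and uses this monotonicity to get uniform convergence of $g_d$ to $g$ and hence convergence of the zero; you instead keep the coupled system, trap $(\bar e_d,e_d)$ in a compact set of $(0,\infty)^2$, pass to subsequential limits, and invoke uniqueness of the limit system. Both work, but the paper's monotone scalar reduction handles the boundary case $\pi=1$ cleanly (the zero of $g_d$ is $>p/d$, so it is pushed to $1$), whereas your compactness argument would need a separate word about $\bar e_0$ hitting the boundary there. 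Note also that your lower bound $e_d\ge\lambda_1/(\bar e_d\lambda_1+\lambda_2)$ tacitly uses $\bar e_d\le 1$; this is true (from \eqref{conofted1}, $\bar e_d(1+e_d(1-\bar e_d))=p/d$ forces $\bar e_d<p/d$ once $e_d>0$), but should be established before invoking the bound to avoid circularity. For the explicit formula, your route via $u=\tilde\lambda-\lambda$ and the single relation $(\tilde\lambda-\lambda)\theta_1(\delta,\tilde\lambda)=1-\pi$ is a nice improvement over the paper's: it makes the $\lambda=\lambda_1+\lambda_2$ decomposition-invariance manifest from the start rather than observing it after the fact, and eliminating $\tilde\theta_1$ via the Marchenko--Pastur quadratic identity $\tilde\lambda\delta\tilde\theta_1^2+(\tilde\lambda-\delta+1)\tilde\theta_1-1=0$ (rather than substituting the explicit closed form for $\theta_1$, as the paper does) produces the same quadratic $\pi u^2-(1-\pi)(\lambda+1-\gamma)u-\lambda\delta(1-\pi)^2=0$ with less clutter. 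The root selection and the identity $(\lambda+1-\gamma)^2+4\lambda\gamma=(\lambda+\gamma-1)^2+4\lambda$ check out.
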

We introduce $\lam_1,\lam_2$ only to ensure the invertibility of $A$ and the uniform boundedness of $\|A^{-1}\|_2$. Equation \eqref{allequal} shows that different decompositions of $\lam$ do not affect the results.
\begin{proof}
Plugging \eqref{conofted2} into \eqref{conofted1}, we obtain
\begin{align}	\frac{1}{d}\tr\left(\frac{A}{A+\lam_2/\be}\right)=\frac{p/d-\be}{1-\be}.\label{conofted5}
\end{align}
The uniqueness of the solution is guaranteed by theorem 7 in \cite{couillet2012random}. 
Now, define for $x\in\R$,
$$g_d(x):=\frac{1}{d}\tr\left(\frac{A}{A+\lam_2/x}\right)-\frac{p/d-x}{1-x}.$$ 
$$g(x):=\left[1-\frac{\lam_2}{x}\theta_1\left(\delta,\lam_1+\frac{\lam_2}{x}\right)\right]-\frac{\pi-x}{1-x}.$$ 
First, we consider the case when $0<\pi<1$. Noting that  $g_d(0+)=-p/d$, $g_d(1-)>0$ and $g_d(x)$ is increasing on $(0,1)$, it follows that $g_d(x)$ has a unique zero $\bar e_d$ on $(0,1)$.  Similarly, we can conclude that $g(x)$ has a unique zero $\bar e_0$ on $(0,1)$.

Since $A=X^\top X/n+\lam_1$ and $p/d\to\pi$, by applying the Marchenko-Pastur theorem, we know that $g_d(x)\to g(x)$ for any $x\in(0,1)$, almost surely. Since $g_d(x)$ are increasing functions on $(0,1)$, we further have, on any closed interval in $(0,1)$, $g_d(x)$ converges uniformly to $g(x)$, almost surely.

 Due to the uniformly convergence of $g_d(x)$ on any closed interval and the fact that the zeros of $g_d(x), g(x)$ are in $(0,1)$, we conclude that the zeros of $g_d(x)$ converge to the zero of $g(x)$ almost surely, i.e., $\be\to\bar{e}_0$ almost surely, and hence $(\be,e_d)\to(\bar e_0,e_0)$ almost surely. 

 Plugging  \eqref{conofted3} into \eqref{conofted4}, we obtain
$$\left[1-\frac{\lam_2}{\bar{e}_0}\theta_1\left(\delta,\lam_1+\frac{\lam_2}{\bar{e}_0}\right)\right]=\frac{\pi-\bar{e}_0}{1-\bar{e}_0}, $$
Thus, 
\begin{equation}\theta_1\left(\delta,\lam_1+\frac{\lam_2}{\bar{e}_0}\right)=\frac{1-\pi}{\lam_2(1/\bar{e}_0-1)}\label{theta1equation}.
\end{equation}
Plugging the explicit expression of $\theta_1(\delta,\lam_1+\lam_2/\bar{e}_0)$ into \eqref{theta1equation}  and reorganizing the result, we obtain
\begin{align*}
\pi\left(\lam_1+\frac{\lam_2}{\bar{e}_0}\right)^2-[(1+\pi)\lam+(1-\pi)(1-\ga)]\left(\lam_1+\frac{\lam_2}{\bar{e}_0}\right)+\lam[\lam+(1-\delta)(1-\pi)]=0.
\end{align*}
Thus, defining $$h(x):=\pi x^2-[(1+\pi)\lam+(1-\pi)(1-\ga)] x+\lam[\lam+(1-\delta)(1-\pi)],$$
we get that $\lam_1+\lam_2/\bar{e}_0$ is a solution of $h(x)=0$. Since $\bar{e}_0\in(0,1)$, $\lam_1+\lam_2/\bar{e}_0>\lam$. Note that $h(\lam)=-\delta(1-\pi)^2\lam\leq0$. From the properties of quadratic functions, we conclude that $\lam_1+\lam_2/\bar{e}_0$ is the larger solution of $h(x)=0$. Therefore,
\begin{align*}
	\lam_1+\lam_2/\bar{e}_0=
	\lam+\frac{1-\pi}{2\pi}\left[\lam+1-\ga+\sqrt{(\lam+\ga-1)^2+4\lam}\right]=:\tlam.
\end{align*}
If $\pi=1$, it can be readily verified that the unique solution of \eqref{conofted3}, \eqref{conofted4} is $$(\bar{e}_0,{e}_0)=(1,1-\lam_2\theta_1(\delta,\lam_1+\lam_2)).$$
Thus, equation \eqref{allequal} follows directly.
As for the convergence of $(\bar e_d,\e_d)$, since $g_d(x)<0$ for $x<p/d$, the zero of $g_d(x)$ (i.e. $\bar e_d$) is larger than $p/d$ and hence converges to $1$ as $p/d\to\pi=1$. Therefore, we have shown that $\be\to\bar{e}_0$, and hence $e_d\to e_0$ follows  from equation \eqref{conofted2} and the Marchenko-Pastur theorem. This finishes the proof.
\end{proof}

\subsection{Proof of Theorem \ref{2lmon}}

\begin{proof}
In the following proof, we will calculate the derivatives of variances and bias separately and obtain the results in Table \ref{allmonotonicity} based on them.
Throughout the proof, we denote $c:=\delta (1+\sigma^2/\alpha^2)+1$ for simplicity.

(1) MSE.  Plugging the explicit expressions of $\theta_1,\theta_2,\lambda^*$ into equation \eqref{biasf} and taking derivatives with respect to $\pi$ and $\delta $ separately 
\begin{align}
&\mathrm{\frac{d}{d\pi}}\lim_{d\to\infty}\mse(\lambda^*)
=\mathrm{\frac{d}{d\pi}}[\alpha^2(1-\pi+\lambda^*\pi\theta_1)+\sigma^2]\nnum\\
&=\alpha^2\mathrm{\frac{d}{d\pi}}\left(\frac{2\delta -c+\sqrt{c^2-4\gamma}}{2\delta }\right)
=-\frac{\alpha^2}{\sqrt{c^2-4\gamma}}\label{dmsedpimon}\leq0,
\end{align}
Thus, $\lim_{d\to\infty}\mse(\lambda^*)$ is monotonically decreasing as a function of $\pi$. Similarly,
\begin{align}
\mathrm{\frac{d}{d\delta }}\lim_{d\to\infty}\mse(\lambda^*)&
=\mathrm{\frac{d}{d\delta }}[\alpha^2(1-\pi+\lambda^*\pi\theta_1)+\sigma^2]
=\frac{\alpha^2}{2\delta ^2}\left(1+\frac{2\gamma-c}{\sqrt{c^2-4\gamma}}\right)\label{dmseddelmon}.
\end{align}
Since  $c^2-4\gamma\geq c^2-4\gamma(c-\gamma)=(c-2\gamma)^2$, the derivative is larger than 0. Therefore, $\lim_{d\to\infty}\mse(\lambda^*)$ is increasing as a function of $\delta $.

(2) Bias$^2$. 
Since $(\lim_{d\to\infty}\mse(\lambda^*)-\sigma^2)^2=\alpha^2\lim_{d\to\infty}\bias^2(\lambda^*)$, the monotonicity of the MSE implies the monotonicity of Bias$^2$.

(3) Var. Note that $\var(\lambda^*)=\mse(\lambda^*)-\bias^2(\lambda^*)-\sigma^2$
\begin{align*}
	\mathrm{\frac{d}{d\pi }}\lim_{d\to\infty}\var(\lambda^*)&
=\mathrm{\frac{d}{d\pi }}\left(\lim_{d\to\infty}\mse(\lambda^*)-\lim_{d\to\infty}\bias^2(\lambda^*)-\sigma^2\right)\\
&=\mathrm{\frac{d}{d\pi }}\left[\lim_{d\to\infty}(\mse(\lambda^*)-\sigma^2)\left(1-\frac{1}{\alpha^2}\lim_{d\to\infty}(\mse(\lambda^*)-\sigma^2)\right)\right]\\
&=\left(\mathrm{\frac{d}{d\pi }}\lim_{d\to\infty}\mse(\lambda^*)\right)\left(1-\frac{2}{\alpha^2}\lim_{d\to\infty}(\mse(\lambda^*)-\sigma^2)\right)\\
&=-\frac{\alpha^2}{\sqrt{c^2-4\gamma}}
\cdot\left(\frac{\delta \sigma^2/\alpha^2+1-\sqrt{c^2-4\gamma}}{\delta }\right).
\end{align*}
Since $\sqrt{c^2-4\gamma}$ is decreasing as $\pi$ increases, simple calculations reveal that, as a function of $\pi$, $\lim_{d\to\infty}\var(\lambda^*)$ is monotonically increasing when $\delta \geq2\alpha^2/(\alpha^2+2\sigma^2)$, while it is increasing on $(0,[2+\delta (1+2\sigma^2/\alpha^2)]/4]$ and decreasing on $([2+\delta (1+2\sigma^2/\alpha^2)]/4,1]$  when $\delta <2\alpha^2/(\alpha^2+2\sigma^2)$. 
Similarly,
\begin{align*}
	\mathrm{\frac{d}{d\delta }}\lim_{d\to\infty}\var(\lambda^*)&
=\left(\mathrm{\frac{d}{d\delta }}\lim_{d\to\infty}\mse(\lambda^*)\right)\left(1-\frac{2}{\alpha^2}\lim_{d\to\infty}(\mse(\lambda^*)-\sigma^2)\right)\\
&=\frac{\alpha^2}{2\delta ^2}\left(1+\frac{2\gamma-c}{\sqrt{c^2-4\gamma}}\right)\cdot\left(\frac{\delta \sigma^2/\alpha^2+1-\sqrt{c^2-4\gamma}}{\delta }\right).
\end{align*}
From \eqref{dmseddelmon}, we know the first term in the last line is non-negative. 
 Plugging in the expression of $c$, it follows from some simple calculations that, as a function of $\delta$, $\lim_{d\to\infty}\var(\lambda^*)$ is monotonically decreasing  when $\pi<0.5$, while it is increasing on $(0,2(2\pi-1)/[1+2\sigma^2/\alpha^2]]$ and decreasing on $(2(2\pi-1)/[1+2\sigma^2/\alpha^2],+\infty)$  when $\pi>0.5$.

(4) $\Slab$. Plugging the definition of $\theta_1$, $\theta_2$ and $\lambda^*$ into $\Slab(\lambda^*)$, we obtain
\begin{align*}
	\lim_{d\to\infty}\Slab(\lambda^*)&
	=\sigma^2\pi\delta (\theta_1-\lambda^*\theta_2)\\
	&=\frac{\sigma^2}{2\lambda^*}\left(\sqrt{c^2-4\gamma}-\frac{(\gamma+1)\lambda^*+(\gamma-1)^2}{\sqrt{c^2-4\gamma}}\right)
	=\frac{\sigma^2c}{\sqrt{c^2-4\gamma}}.
\end{align*}
Therefore, as a function of $\pi$, 	$\lim_{d\to\infty}\Slab(\lambda^*)$ is monotonically increasing on $(0,1]$. Since 
\begin{align*}
\mathrm{\frac{d}{d\delta }}	\lim_{d\to\infty}\Slab(\lambda^*)=\frac{2\pi\sigma^2(1-\delta (1+\sigma^2/\alpha^2))}{\sqrt{c^2-4\gamma}},
\end{align*}
it follows that $\lim_{d\to\infty}\Slab(\lambda^{*})$ is increasing on $(0,\alpha^2/(\sigma^2+\alpha^2)] $ and decreasing on\\ $(\alpha^2/[\sigma^2+\alpha^2],+\infty)$ as a function of $\delta $.

(5) $\Sini$. We have
\begin{align*}
\mathrm{\frac{d}{d\pi}}\lim_{d\to\infty}\Sini(\lambda^*)&=\mathrm{\frac{d}{d\pi}}\alpha^2\pi(1-\pi)(1-\lambda^*\theta_1)^2\\
&=\alpha^2(1-\lambda^*\theta_1)[(1-2\pi)(1-\lambda^*\theta_1)-2\pi(1-\pi)(\lambda^*\theta_1)']\\
&=2\alpha^2(1-\lambda^*\theta_1)\left(\frac{4\delta \pi^3-2c^2\pi+c^2}{(c^2-4\gamma\pi+c\sqrt{c^2-4\gamma})\sqrt{c^2-4\gamma}}\right).
\end{align*}
Since $f(t):=4\delta t^3-2c^2t+c^2$ satisfies the following properties (a). $f(0)=c^2>0$, (b). $f(1)=4\delta -c^2\leq-(\delta (1+\sigma^2/\alpha^2)-1)^2\leq0$, (c). $f(t)$ has a unique extremum on $(0,+\infty)$; we know that $f$ is decreasing and has a unique zero on $(0,1]$. Therefore, $\lim_{d\to\infty}\Sini(\lambda^*)$ is unimodal as a function of $\lambda_1$.
Now, taking derivatives with respect to $\delta $,
\begin{align*}
	\mathrm{\frac{d}{d\delta }}\lim_{d\to\infty}\Sini(\lambda^*)&=\mathrm{\frac{d}{d\pi}}\alpha^2\pi(1-\pi)(1-\lambda^*\theta_1)^2\\
	&=-2\alpha^2\pi(1-\pi)(1-\lambda^*\theta_1)(\lambda^*\theta_1)'\\
	&=-\frac{\alpha^2}{\delta ^2}(1-\pi)(1-\lambda^*\theta_1)\left(\frac{2\gamma-c}{\sqrt{c^2-4\gamma}}+1\right)\leq0,
\end{align*}
where the last inequality follows from the fact that $c^2-4\gamma\geq(2\gamma-c)^2$. Thus, $\lim_{d\to\infty}\Sini(\lambda^*)$ is monotonically decreasing as a function of $\delta $.

\end{proof}

\subsection{Proof of Proposition \ref{lin}}
\begin{proof}[Proof of Proposition \ref{lin}]
 It is known that if the design matrix $X$ has independent entries of zero mean and unit variance, then as $n,p\to\infty$ proportionally, i.e., $p/n\to \gamma>0$, the MSE converges almost surely to the expression \citep{tulino2004random,couillet2011random,dobriban2018high}

\begin{equation}\mse(\gamma)=\gamma m_\gamma(-\lam^*)=\gamma \theta_1(\gamma,\lam^*)+\sigma^2,\label{linearexprmse}\end{equation}
where $\lambda^* = \gamma/\alpha^2$ is the limit of the optimal regularization parameters, and $m_\gamma$ is the Stieltjes transform  of the limiting eigenvalue distribution $F_\gamma$ of $\hSigma=n^{-1} X^\top X$, i.e., the Stieltjes transform of the standard Marchenko-Pastur distribution with aspect ratio $\gamma$.

Furthermore, as shown in the proof of theorem 2.2 in \cite{liu2019ridge}, the specific forms of the bias and variance are: with $\theta_i:=\theta_i(\ga,\lam)$, $i=1,2$
\begin{align}
&(a).\, \bias^2=\alpha^2\int\frac{\lambda^2}{(x+\lambda)^2}dF_\gamma(x)=\alpha^2\lam^2\theta_2,
\,\, &(b).\,
\var=\gamma
\int\frac{x}{(x+\lambda)^2}dF_\gamma(x)=\ga(\theta_1-\lam\theta_2).\label{linearexprbiasvar} \end{align}
Therefore, we can obtain the explicit formulas of the bias, variance and MSE by plugging $\lam^*=\lam/\alpha^2$, equation \eqref{exprtheta1}, \eqref{exprtheta2} into equation \eqref{linearexprmse}, \eqref{linearexprbiasvar}. All results in Proposition \ref{lin} can be derived by calculating the derivatives as we have done in the proof of theorem \ref{2lmon}. However, since the proofs are simpler in this special case, we present them here for the convenience of readers. Throughout this proof, we denote $1/\alpha^2$ by $c$.

{\bf MSE:}
Let $\tau:=(1/\alpha^2-1)\ga-1$, substitute \eqref{exprtheta1} into \eqref{linearexprmse} and take derivatives:
\begin{align*}
\frac{\mathrm{d}}{\mathrm{d}\ga}\mse(\gamma)=\frac{\mathrm{d}}{
\mathrm{d}\ga}[\gamma \theta_1(\ga,\lam^*)+\sigma^2]
&=\frac{\mathrm{d}}{\mathrm{d}\ga}\frac{\tau+\sqrt{\tau^2+4c \gamma^2}}
{2c\gamma}\\
&=\frac{\tau+1+[\tau(\tau+1)+4c\ga^2]/\sqrt{\tau^2+4c\ga^2}-(\tau+\sqrt{\tau^2+4c\ga^2})}{2c\ga^2}\\
&=\frac{\tau+\sqrt{\tau^2+4c\ga^2}}{2c\ga^2\sqrt{\tau^2+4c\ga^2}}\geq0.
\end{align*}
Thus, the MSE is strictly increasing as $\ga$ increases.

{\bf Bias:}
Plugging equation \eqref{exprtheta2} into \eqref{linearexprbiasvar}(a) and denoting $1/\ga$  by $x$ , we have
\begin{align}
\bias^2(\gamma)=\alpha^2\lam^{*2}\theta_2(\gamma,\lam^*)
&=\frac{\alpha^2}{2}\left(1-\frac{1}
{\gamma }
+
\frac{c\gamma(\gamma+1)+(\gamma-1)^2}
{\gamma\sqrt{(c+1)^2\gamma^2+2(c-1)\gamma+1}}\right)\nnum\\
&=\frac{\alpha^2}{2}\left(1-x
+
\frac{c(x+1)+(x-1)^2}
{\sqrt{(c+1)^2+2(c-1)x+x^2}}\right)=:\frac{\alpha^2}{2}(1+f(x,c)).\nnum
\end{align}
Thus it is enough to show that $f(x,c)$ is decreasing for $x> 0$.
Taking derivatives with respect to $x$, we have after some calculations that
\begin{align*}
\frac{d^2}{dx^2}f(x,c)=\frac{6(c+1)^3+12c^2 x}{({(c+1)^2+2(c-1)x+x^2})^{\frac{5}{2}}}>0,     \forall x>0.
\end{align*}
Thus, for any fixed $c>0$, $f(x,c)$ is a strictly convex function with respect to $x$ on $(0,+\infty)$. \\
Furthermore, fixing $c>0$ and letting $x\to\infty$, we get
\begin{align*}
\lim_{x\to\infty}f(x,c)&=\lim_{x\to\infty}-x
+
\frac{c(x+1)+(x-1)^2}
{\sqrt{(c+1)^2+2(c-1)x+x^2}}\\
&=\lim_{x\to\infty}
\frac{c(x+1)+(x-1)^2-{x\sqrt{(c+1)^2+2(c-1)x+x^2}}}
{\sqrt{(c+1)^2+2(c-1)x+x^2}}\\
&=\lim_{x\to\infty}
\frac{(x^2+(c-2)x+c+1)-{x(x+c-1)\sqrt{1+\frac{4c}{(x+c-1)^2}}}}
{x}\\
&=\lim_{x\to\infty}
\frac{(x^2+(c-2)x+c+1)-{x(x+c-1){(1+O(\frac{1}{x^2}))}}}
{x}\\
&
=\lim_{x\to\infty}
\frac{-x+O(1)}{x}
=-1.
\end{align*}
Combining the results above and using the fact that a strictly convex function with a finite limit is strictly decreasing, it follows that $f(x,c)$ is both strictly decreasing and convex. Therefore, we have proved that $\bias^2(\gamma)$ is strictly increasing on $(0,+\infty)$.\\
{\bf Variance:}
Similarly, plugging \eqref{exprtheta1}, \eqref{exprtheta2} into \eqref{linearexprbiasvar} (b), we get
\begin{align*}
\var(\gamma)&=-\frac{1}{2}+\frac{(c+1)\gamma+1}{2\sqrt{(\gamma(1-c)-1)^2+4\gamma^2 c}}\\
&=-\frac{1}{2}+\frac{(c+1)\gamma+1}{2\sqrt{(c+1)^2\gamma^2+2(c-1)\gamma+1}}.
\end{align*}
Let $x=(c+1)\gamma$ and $t=\frac{c-1}{c+1}$. Then it suffices to prove the unimodality of
\begin{align*}g_t(x):&=\frac{x+1}{\sqrt{x^2+2tx+1}},   &x\in(0,+\infty),t\in(-1,1).\end{align*}
Differentiating $g_t(x)$ with respect to $x$ gives
\begin{align*}
\frac{d}{dx}g_t(x)&=\frac{(1-t)(1-x)}{({x^2+2tx+1})^\frac{3}{2}}.
\end{align*}
Since $g_t'(x)>0$ when $x<1$, and $g_t'(x)<0$ when $x>1$, we see that $g_t(x)$ is strictly increasing on $(0,1)$ and strictly decreasing on $(1,+\infty)$. Correspondingly, by changing $x$ back to $\gamma$, $c$ back to $1/\alpha^2$, we have shown that $\var(\gamma)$ is strictly increasing on $(0,\alpha^2/(\alpha^2+1))$ and strictly decreasing on $[\alpha^2/(\alpha^2+1),+\infty)$. Moreover, noting that\begin{align*}
\lim_{\gamma\to+\infty}\var(\gamma)&=\lim_{\gamma\to+\infty}-\frac{1}{2}+\frac{(c+1)\gamma+1}{2\sqrt{(c+1)^2\gamma^2+2(c-1)\gamma+1}}\\
&=\lim_{\gamma\to+\infty}-\frac{1}{2}+\frac{(c+1)\gamma}{2(c+1)\gamma}
=0.
\end{align*}
Therefore, we conclude that $\var(\gamma)$ is a unimodal function converges to zero at infinity with a unique maximum point at $\alpha^2/(\alpha^2+1)$.  Finally, proposition \ref{lin}(4) can be obtained by evaluating $\var(\ga)$ and $\bias^2(\ga)$ at $\alpha^2/(\alpha^2+1)$.

\end{proof}

\subsection{Proof of Theorem \ref{bias_var_for_fixed_lambda}}

\begin{proof}
(1). By plugging the  expression of $\theta_1$  into equation \eqref{biasf} and taking derivatives with respect to $\delta$ in equation \eqref{biasf}, we obtain
\begin{align*}
\deri{\pi}\lim_{d\to\infty}\Bl&=\deri{\pi}\alpha^2(1-\pi+\lam\pi\theta_1)^2\\
&={\alpha^2}(1-\pi+\lam\pi\theta_1)\left(\frac{\lam+\ga-1}{\sqrt{(-\lam+\ga-1)^2+4\lam\ga}}-1\right)<0
,\end{align*}
where the inequality follows from the facts that $1-\pi+\lam\pi\theta_1=\int{[\lam+(1-\pi)x]/(x+\lam)dF_{\ga}(x)}\geq0$ and $(\lam+\ga-1)^2-[(-\lam+\ga-1)^2+4\lam\ga]=-4\lam<0$. Similarly, 
\begin{align*}
\deri{\delta}\lim_{d\to\infty}\Bl&=\deri{\delta}\alpha^2(1-\pi+\lam\pi\theta_1)^2\\
&=\frac{\alpha^2}{\delta^2}(1-\pi+\lam\pi\theta_1)\left(\lam+1-\frac{\lam^2+(\ga+2)\lam+(1-\ga)}{\sqrt{(-\lam+\ga-1)^2+4\lam\ga}}\right)>0.
\end{align*}

Therefore,  the limiting $\bias^2$ is monotonically increasing as a function of $\delta$ and monotonically decreasing as a function of $\pi$.

(2).  Denote $\lambda^*=\delta(1-\pi+\sigma^2/\alpha^2)$ as before. From \eqref{varf}, we have \begin{align*}\lim_{d\to\infty}\var(\lam)&=\alpha^2\pi\left\{1-\pi+\frac{\lam}{\delta}+\left[(\pi-1)(2\lam-\delta)+\frac{\lam(\lam-\ga+1)}{\delta}+\frac{\delta\sigma^2}{\alpha^2}\right]\theta_1+\right.\\&
\left.\lam\left[
\lam-\delta\left(1-\pi+\sigma^2/\alpha^2\right)\right]\theta_2\right\}\\
&=\alpha^2\pi\left\{1-\pi+\frac{\lam}{\delta}+\left[\left(2\pi-2+\frac{\lam-\ga+1}{\delta}\right)\lam+\lambda^*\right]\theta_1(1,\lam)+\lam(\lam-\lambda^*)\theta_2\right\}
\end{align*}
Now suppose $\delta=1/\pi$ and let $\lam\to0+$, \begin{align*}\lim_{\lam\to0}\lim_{d\to\infty}\var(\lam)&=\lim_{\lam\to0}\alpha^2\pi\left\{1-\pi+\frac{\lam}{\delta}+\left[\left(2\pi-2+\frac{\lam-\ga+1}{\delta}\right)\lam+\lambda^*\right]\theta_1+\lam(\lam-\lambda^*)\theta_2\right\}\\
&=\lim_{\lam\to0}\alpha^2\pi[1-\pi+\lambda^*(\theta_1(1,\lam)-\lambda\theta_2(1,\lam))]\\
&=\lim_{\lam\to0}\alpha^2\pi\left[1-\pi+\lambda^*\left(\frac{4\lam}{(\sqrt{\lam^2+4\lam}+\lam)^2\sqrt{\lam^2+4\lam}}\right)\right]
=\lim_{\lam\to0}O(\frac{1}{\lam^{1/2}})=\infty.
\end{align*}
{Finally, letting $\lam\to0$ in Theorem \ref{sobolthm}, we obtain after some similar calculations that $V_{si}$ and $V_{sli}$ go to infinity while $V_s,V_{i}$ and $V_{sl}$ converge to some finite limits as $d\to\infty,\lam\to0$.}
\end{proof}

\subsection{Proof of Theorem \ref{ridgeopt}}

\begin{proof}[Proof of Theorem \ref{ridgeopt}]
For notational simplicity, we sometimes denote the 2-norm of vectors and the Frobenius norm of matrices by $\|\cdot\|$ in this proof.
From definition (\ref{optdef}), we have 
\begin{align*}
\beta_{opt}:&=\argmin_{\beta}\E_{p(\theta|XW^\top ,W,Y)}\E_{x,\ep}[(Wx)^\top \beta-(x^\top \theta+\ep)]^2\\
&=\argmin_{\beta}\E_{p(\theta|XW^\top ,W,Y)}\E_{x}[(Wx)^\top \beta-x^\top \theta]^2\\
&=\argmin_{\beta}\E_{p(\theta|XW^\top ,W,Y)}\E_{x}[({W}^\top \beta-\theta)^\top xx^\top ({W}^\top \beta-\theta)]^2\\
&=\argmin_{\beta}\E_{p(\theta|XW^\top ,W,Y)}\|{W}^\top \beta-\theta\|_2^2\\
&=W\E_{p(\theta|XW^\top ,W,Y)}\theta.
\end{align*} 
Thus, we have proved equation (\ref{optexpr}).  

Now, to prove \eqref{asymopt}, it suffices to do the following:
\begin{align*}
&(1). \text{\hspace{1em} Calculate the posterior $p(\theta|XW^\top ,W,Y)$.}\\
&(2). \text{\hspace{1em} Bound the difference between $\beta_{opt}$ and $\hat{\beta}$.}
\end{align*}

(1). Let $W_\perp =f(W)\in\R^{(d-p)\times d}$ be a deterministic orthogonal complement of $W$, such that $W_\perp ^\top W_\perp +W^\top W=I_{d}$. Then we have
\begin{align}
&p(\theta|XW^\top ,W,Y)\propto p(\theta)p(XW^\top ,W,Y|\theta)\nonumber
=p(\theta)p(XW^\top ,W|\theta)p(Y|XW^\top ,W,\theta)\nonumber\\
&\propto \exp\left(-\frac{\|\theta\|_2^2}{2\alpha^2/d}-\frac{\|XW^\top \|_{F}^2}{2}\right)\cdot p(Y|XW^\top ,W,\theta)\label{thetapost}\\ 
&\propto\exp\left(-\frac{\|\theta\|_2^2}{2\alpha^2/d}\right)\cdot \int p(Y|XW_\perp ^\top ,XW^\top,W, \theta)\cdot p(XW_\perp ^\top |XW^\top ,W,\theta)d(XW_\perp ^\top ) \nonumber\\
&\propto\exp\left(-\frac{\|\theta\|_2^2}{2\alpha^2/d}\right)\cdot \int \exp\left(-\frac{\|Y-X\theta\|_2^2}{2\sigma^2}\right)\cdot p(XW_\perp ^\top |XW^\top ,W,\theta)d(XW_\perp ^\top ),\nonumber\end{align}
where in the second line we used the facts that $XW^\top $ and $W$ are independent conditioned on $\theta$, $XW^\top $ has i.i.d. $\N(0,1)$ entries, and $W$ is uniformly distributed over partial orthogonal matrices.  Denote $XW^\top$ by $A$ and $XW_\perp ^\top $ by $A_1$. Then using the fact that $A,A_1$ have i.i.d. $\N(0,1)$ entries and $A$, $A_1$ are independent conditioned on $W$ and $\theta$, we obtain
\begin{align*}
p(XW_\perp ^\top |XW^\top ,W,\theta)
&\propto\exp\left(-\frac{\|XW_\perp ^\top \|_F^2}{2}\right)=\exp\left(-\frac{\|A_1\|_F^2}{2}\right).
\end{align*}
Therefore,
\begin{align*}
&p(Y|XW^\top ,W,\theta)=
\int p(Y|X,\theta)\cdot p(XW_\perp ^\top |XW^\top ,W,\theta)d(XW_\perp ^\top )\\
&=
\int \exp\left(-\frac{\|(Y-AW\theta)-A_1W_\perp \theta\|_2^2+\sigma^2\|A_1\|_F^2}{2\sigma^2}\right)dA_1\\
&=
\exp\left(-\frac{\|Y-AW\theta\|_2^2}{2\sigma^2}\right)
\int\exp\left\{-\frac{\sigma^2\|A_1\|_F^2+\|A_1W_\perp \theta\|_F^2-2\tr[A_1W_\perp \theta(Y-AW\theta)^\top ]}{2\sigma^2}\right\}dA_1
\end{align*}
Further denote $W\theta$ by $\tth $ , $W_\perp \theta$ by $\tth_ 1$ and $(Y-A\tth )\tth_ 1^\top (\tth_ 1\tth_ 1^\top +\sigma^2)^{-1}$ by $B$. For a fixed $Y$, conditioned on $A,W,\theta$, by separating $A_1$ into $n$ rows, applying Fubini's theorem and using properties of the p.d.f. of a normal distribution, we get
\begin{align}
&p(Y|XW^\top ,W,\theta)\propto\exp\left(-\frac{\|Y-A\tth \|_2^2}{2\sigma^2}\right)\cdot\nonumber\\
&
\int\exp\left\{-\frac{\tr[(A_1-B)(\tth_ 1\tth_ 1^\top +\sigma^2)(A_1-B)^\top ]}{2\sigma^2}+\frac{\tr[B(\tth_ 1\tth_ 1^\top +\sigma^2)B^\top ]}{2\sigma^2}\right\}dA_1\label{ypost1}\\
&\propto \exp\left(-\frac{\|Y-A\tth \|^2}{2\sigma^2}+\frac{\tr[B(\tth_ 1\tth_ 1^\top +\sigma^2)B^\top ]}{2\sigma^2}\right)\cdot\det(\tth_ 1\tth_ 1^\top +\sigma^2)^{-\frac{n}{2}}\nonumber\\
&=\exp\left(-\frac{\|Y-A\tth \|^2}{2\sigma^2}
+
\frac{\tr[(Y-A\tth )\tth_ 1^\top (\tth_ 1\tth_ 1^\top +\sigma^2)^{-1}\tth_ 1(Y-A\tth )^\top ]}{2\sigma^2}\right)
\cdot
\det(\tth_ 1\tth_ 1^\top +\sigma^2)^{-\frac{n}{2}}.\nonumber\end{align}
Let $\tth_ 1=UDV^\top $ be the SVD of $\tth_ 1$. Then 	
\begin{align*}\det(\tth_ 1\tth_ 1^\top +\sigma^2)&=\det(DD^\top +\sigma^2)=(\|\tth_ 1\|^2+\sigma^2)\prod_{i=2}^{d-p} \sigma^2\propto\|\tth_ 1\|^2+\sigma^2.\\
\tth_ 1^\top (\tth_ 1\tth_ 1^\top +\sigma^2)^{-1}\tth_ 1&=D^\top (DD^\top +\sigma^2)^{-1}D=\|\tth_ 1\|^2(\|\tth_ 1\|^2+\sigma^2)^{-1}.
\end{align*}
Thus,
\begin{align}
p(Y|XW^\top ,W,\theta)	
&\propto
\exp\left\{[-1+\|\tth_ 1\|^2(\|\tth_ 1\|^2+\sigma^2)^{-1}]\frac{\|Y-A\tth \|^2}{2\sigma^2}
\right\}
\cdot
(\|\tth_ 1\|^2+\sigma^2)^{-\frac{n}{2}}.\label{ypost2}
\end{align}
Finally, by substituting \eqref{ypost2} into \eqref{thetapost}, we  obtain the posterior:
\begin{align}
&p(\theta|XW^\top ,W,Y)
\propto	
 \exp\left(-\frac{\|\theta\|_2^2}{2\alpha^2/d}-\frac{\|Y-A\tth \|^2}{2(\|\tth_ 1\|^2+\sigma^2)}\right)
 \cdot
 (\|\tth_ 1\|^2+\sigma^2)^{-\frac{n}{2}}\nonumber\\
 &=
  \exp\left(-\frac{\|\tth \|^2}{2\alpha^2/d}-\frac{\|Y-A\tth \|^2}{2(\|\tth_ 1\|^2+\sigma^2)}\right)
 \cdot
 \exp(-\frac{\|\tth_ 1\|^2}{2\alpha^2/d})
\cdot
 (\|\tth_ 1\|^2+\sigma^2)^{-\frac{n}{2}}\label{post2}.\end{align}
\\

(2). 
Since $\beta_{opt}=W\E_{p(\theta|XW^\top ,W,Y)}\theta=
\E_{p(\theta|XW^\top ,W,Y)}\tth $, it suffices to calculate the posterior mean of $\tth $. In equation (\ref{post2}), we can see that conditioned on $\tth_ 1$, $\tth $ follows a normal distribution. Moreover, using the same technique as in equation (\ref{ypost1}), it is not hard to  verify that the expectation of $\tth $ conditioned on $\tth_ 1$ is \begin{equation}\left(\frac{d}{\alpha^2}+\frac{A^\top A}{\|\tth_ 1\|^2+\sigma^2}\right)^{-1}\frac{A^\top Y}{\|\tth_ 1\|^2+\sigma^2}.\label{postmean0}\end{equation}

 Now changing $A$, $\tth $ back to $X,W$, we obtain \begin{equation}\E_{p(\theta|\tth_1 , XW^\top ,W,Y)}\tth = \left[\frac{WX^\top XW^\top }{n}
+
\frac{(\|\tth_ 1\|^2+\sigma^2)d}{n\alpha^2}\right]^{-1}\frac{WX^\top Y}{n}.\label{condmean}\end{equation} 
Therefore, the conditional expectation of $\tth $ is the ridge estimator with $\lambda=\lambda(\tth_ 1):=(\|\tth_ 1\|^2+\sigma^2)d/[n\alpha^2]$. Thus $\beta_{opt}$ is in fact a weighted ridge estimator. Besides, note  that $W_\perp \theta$ has i.i.d. $\N(0,\alpha^2/d)$ entries. Hence letting $\chi^2(k)$ be a chi squared random variable with $k$ degrees of freedom,
\begin{equation}\|\tth_ 1\|^2=\|W_\perp \theta\|^2=\frac{\alpha^2}{d}\sum_{i=1}^{d-p}\left(\frac{\sqrt{d}W_\perp \theta}{\alpha}\right)_i^2\overset{d}{=}\frac{\alpha^2}{d}\chi^2(d-p)\overset{w}{\longrightarrow}\alpha^2(1-\pi).\label{theta1norm}
\end{equation}
Denote $\tilde R^* = (\frac{WX^\top XW^\top }{n}+\lambda^*)^{-1}$.
Thus, we may guess that the posterior of $\|\tth_ 1\|$ is close to $\alpha^2(1-\pi)$ with high probability and 
$$\beta_{opt}\approx\left[\frac{WX^\top XW^\top }{n}
+
\delta (1-\pi+\sigma^2/\alpha^2)\right]^{-1}\frac{WX^\top Y}{n}=\tilde R^*\frac{WX^\top Y}{n}=\hat{\beta},$$ which is the optimal ridge estimator. 

We  formalize this idea by bounding the difference between $\beta_{opt}$ and $\hbeta$. Denote the conditional mean \eqref{condmean} of $\tth $ given $\|\tth_ 1\|^2=c$ by $R(c)$. Let also $\tilde R_1 = (\frac{WX^\top XW^\top }{n}+\lambda(\tth_ 1))^{-1}$. Then the optimal ridge estimator is $\hbeta=R(\alpha^2(1-\pi))$ and we have
\begin{align*}
&\E_{XW^\top ,W,Y}\|\hbeta-\beta_{opt}\|_2^2=\E_{XW^\top ,W,Y}
\left\|\int [R(\|\tth_ 1\|^2)-R(\alpha^2(1-\pi))]\cdot p(\tth_ 1|XW^\top ,W,Y)d\tth_ 1\right\|^2\\
&\leq\E_{XW^\top ,W,Y,\tth_1}\left
\|[R(\|\tth_ 1\|^2)-R(\alpha^2(1-\pi))\right\|^2
=\E_{XW^\top ,W,Y,\tth_1}\left\|\left[\tilde R^*-\tilde R_1\right]\frac{WX^\top Y}{n}\right\|^2,
\end{align*}
where we used the Jensen inequality in the second line. Note that $A^{-1}-B^{-1}=A^{-1}(B-A)B^{-1}$ and omit the subscripts of the expectation. The above equals
\begin{align}
\E\left\|\left[\tilde R^*\left(\lambda^*-\lambda(\tth_ 1)\right)
\tilde R_1\right]\frac{WX^\top Y}{n}\right\|^2. \label{postbound}
\end{align}
Furthermore, by Cauchy-Schwartz inequality and the fact that $\|AB\|_F\leq \|A\|_2\|B\|_F$, the square of this quantity is upper bounded by
\begin{align}
&\E\left(\lambda^*-\lambda(\tth_ 1)\right)^4\cdot
\E\left\|\left[\tilde R^*
\cdot
\tilde R_1\right]\frac{WX^\top Y}{n}\right\|^4.\nnum\\
&\leq 
\E\left(\lambda^*-\lambda(\tth_ 1)\right)^4
\cdot\E\left\|\frac{1}{\lam(\tth_1)\lam^*}
\frac{WX^\top Y}{n}\right\|^4\nnum\\
&\leq \frac{1}{\lam^{*4}}
\E\left(\lambda^*-\lambda(\tth_ 1)\right)^4
\cdot\left(\E\frac{1}{\lam(\tth_1)^8}\right)^{1/2}\cdot\left(\E\left\|
\frac{WX^\top Y}{n}\right\|^8\right)^{1/2}.\label{postbound2}
\end{align}
Recall that $Y=X\theta+\Ep$ and note that $\E\|WX^\top X/{n}\|^k_{F}$, $\E\|WX^\top/\sqrt{n}\|^k_{F}$, $\E\|\theta\|_2^k$, $\E\|\Ep/\sqrt{n}\|_2^k$ are all uniformly bounded (as $d\to\infty$) for any non-negative integer $k$ because of the Gaussian asusmption and the boundeness of  moments of Wishart matrices \citep{muirhead2009aspects,bai2009spectral} etc. It follows directly by several applications of the Cauchy-Schwartz inequality that $\E \|WX^\top Y/n\|^8$ is bounded as $d\to\infty$, i.e.,  the third term in the R.H.S. of \eqref{postbound2} is bounded.

If $\sigma>0$, then $\lam(\tth_1)\geq\lam(0)=\sigma^2d/[n\alpha^2]>0$, and hence the second term in \eqref{postbound2} is bounded. If $\sigma=0$ and $\pi<1$, by the definition of $\lam(\tth_1)$ and integration in the polar system, it is readily verified that
$$\E\frac{1}{\lam(\tth_1)^8}=\E\frac{n^8}{(x_1^2+\cdots+x_{d-p}^2)^8}=\frac{n^8}{\prod_{k=1}^8(d-p-2k)}\overset{d\to\infty}{\longrightarrow} C_1<\infty.$$
Therefore, the second term in the R.H.S of \eqref{postbound2} is also bounded. Also
\begin{align*}
\eqref{postbound2}&\leq C\cdot
\lim_{d\to\infty}\E_{XW^\top ,W,Y,\theta}\left(\lambda^*-\lambda(\tth_ 1)\right)^4\\
&\leq C_2\cdot\lim_{d\to\infty}\left(\left[\lambda^*-\frac{d}{n}(1-\frac{p}{d}+\sigma^2/\alpha^2)\right]^4+\E{[\lambda(\tth_ 1)-\E\lam(\tth_1)]^4}\right)=0,
\end{align*}
where $C,C_2$ are some finite constants and the last equality follows from equation \eqref{theta1norm} and properties of the chi-square distribution.

Therefore, we have proved that $\E\|\hbeta-\beta_{opt}\|^2\to0$ and the optimal ridge estimator $\hbeta$ is asymptotically optimal. 
\end{proof}

\subsection{Proof of Theorem \ref{fixthetathm}}
\begin{proof}[Proof of Theorem \ref{fixthetathm}]
We only need to make small changes in the proof of theorems \ref{sobolthm} and \ref{2lthm1} to prove  theorem \ref{fixthetathm}. We first take $\lim_{d\to\infty}\bias^2(\lam)$ as an example. Similar to \eqref{2lbias}
\begin{align}
 \bias^{2}(\lambda) &=\E_{x}\left[\E_{X,W,\Ep}\left(x^\top  M \theta+x^\top  \tM \Ep\right)-x^\top  \theta\right]^2 \nnum\\
 	&=\E_{x}\left[x^\top (\E_{X,W} M-I) \theta\right]^{2}
=\E_{x}\tr((\E M^\top-I)xx^\top(\E{M}-I)\theta\theta^\top)\nnum\\
&=\tr((\E M^\top-I)(\E{M}-I)\theta\theta^\top)\label{fixthetaf1}.\end{align}
Note that we have shown $\E M$ is a multiple of identity in Lemma \ref{2llm1} (under Gaussian assumption), therefore
\begin{align}
\eqref{fixthetaf1}&=\left[(\E M^\top-I)(\E{M}-I)\right]\cdot\tr(\theta\theta^\top)=\frac{1}{d}\tr\left[(\E M^\top-I)(\E{M}-I)\right]\cdot\tr(\theta\theta^\top).\label{fixthetaf2}
\end{align}
From Lemma \ref{2llm1}, we know the first term in the R.H.S. of \eqref{fixthetaf2} converges to $\pi^2(1-\lam\theta_1(\pi\delta,\lam))^2$. Note that $\tr(\theta\theta^\top)\overset{d}{=}\alpha^2(\sum_{i=1}^{d}x_i^2)/d$, where $x_i\sim\N(0,1)$. By the Borel-Cantelli lemma and the concentration inequality for $\chi^2$-variables, 
we have $\tr(\theta\theta^\top)\overset{a.s.}{\to}\alpha^2$. 

Thus, \eqref{fixthetaf2} almost surely converges to $\alpha^2\pi^2(1-\theta_1(\pi\delta,\lam))^2$ and the same asymptotic result for bias as in theorem \ref{2lthm1} holds almost surely over the randomness in $\theta$.

From this example, we can see that the results in theorem \ref{sobolthm} and \ref{2lthm1} will automatically hold in the non-random  setting if we can separate $\theta$ from other variables (e.g. $\E M$) in \eqref{2lbias} ---\eqref{2lmse}, \eqref{sobolvsf}---\eqref{sobolvslif} by showing that the matrices which are multiplied by $\theta\theta^\top$ are in fact a multiple of identity. For instance, in \eqref{2lbias}, since $\theta\theta^\top$ is multiplied by a multiple of identity $(\E M^\top-I)(\E M-I)$, the same result follows. 

To generalize other results in theorem \ref{sobolthm} and \ref{2lthm1} to the almost sure setting, from \eqref{2lbias}---\eqref{2lmse}, \eqref{sobolvsf}---\eqref{sobolvslif}, we can see that it is enough to show the following matrices are all multiples of the identity:
\begin{align*}
(a).\text{\phantom{a}} \E M^\top M, &&(b).\text{\phantom{a}} \E_{W}(\E_{X}M^\top \E_XM), &&(c). \text{\phantom{a}} \E_{X}(\E_{W}M^\top \E_WM). 
\end{align*}

(a). $\E M^\top M$. We will denote $R =\left({WX^\top  XW^\top /n}+\lambda I_p\right)^{-1}$ in what follows. By the definition of $M$ and the fact that $XW^\top$ and $X(I-W^\top W)$ (denoted by $X_2$) are two independent matrices with Gaussian entries for any fixed $W$ with orthogonal rows 
\begin{align}
\E M^\top M&= \E \frac{X^\top XW^\top}{n}
 R^{2}\frac{WX^\top X}{n}\nnum\\
&=\E \frac{W^\top WX^\top XW^\top}{n}
 R^{2}\frac{WX^\top XW^\top W}{n}\label{mtmmult1}
\\&+
\E \frac{X_2^\top XW^\top}{n}
 R^{2}\frac{WX^\top X_2}{n}\label{mtmmult2}.\end{align}
For \eqref{mtmmult1}, note that $W$ and $XW^\top$ are independent, so
\begin{align}
\eqref{mtmmult1}&=\E_{W} W^\top \left[\E_{XW^\top}\frac{WX^\top XW^\top }{n}
 R^{2}\frac{WX^\top XW^\top }{n}\right]W\nnum.\end{align}
Since $\E_{W}W^\top A W=\tr(A)\cdot I_d/d$ for any constant matrix $A$, it follows directly that \eqref{mtmmult1} is a multiple of identity.
For \eqref{mtmmult2}, note that $XW^\top$ and $X(I-W^\top W)$ are independent conditioned on $W$, thus
\begin{align}
\eqref{mtmmult2}&=\E_{W}\E_{X_2|W} X_2^\top \left[\E_{XW^\top}\frac{XW^\top }{n}
 R^{2}\frac{WX^\top }{n}\right]X_2.\label{mtmmult3}
\end{align}
Let $XW^\top=UDV^\top$ be the SVD. Since $XW^\top$ has i.i.d. $N(0,1)$ entries, we can assume $U$ follows the Haar measure and is independent of $DV^\top$. Therefore, with $X_2=X(I-W^\top W)$
\begin{align*}
\eqref{mtmmult3}&=
\E_{W}\E_{X_2|W} X_2^\top \left[\E_{U,D,V}U\frac{DV^\top}{n}
\left(\frac{VD^\top DV^\top}{n}+\lam I_{p}\right)^{-2}\frac{VD^\top}{n}U^\top\right]X_2\\
&=\E_{W}\E_{X_2|W} X_2^\top \left\{\frac{1}{n}\tr\left[\E_{D,V}\frac{DV^\top}{n}
\left(\frac{VD^\top DV^\top}{n}+\lam I_{p}\right)^{-2}\frac{VD^\top}{n}\right]\right\}X_2\\
&=c_0\cdot\E_{W}\E_{X_2|W} X_2^\top X_2
=c_1\cdot\E_{W}(I-W^\top W)
=c_2\cdot I_{d},
\end{align*}
where $c_0,c_1,c_2$ are some constants.
Combining \eqref{mtmmult1} and \eqref{mtmmult2}, we have proved $\E M^\top M$ is a multiple of identity.

(b). $\E_{W}(\E_{X}M^\top \E_XM)$. 
Since $W$ and $XW^\top$ are independent, similarly
\begin{align*}
\E_{X}M
&=\E _{X}
W^\top R\frac{WX^\top XW^\top}{n}W
=
W^\top\left[\E_{XW^\top|W} R\frac{WX^\top XW^\top}{n}\right]W\\
&=
W^\top \E_{V}V\E_{D}\left(\frac{D^\top D}{n}+\lam I_{p}\right)^{-1}\frac{D^\top D}{n}V^\top W
=c_0W^\top W,
\end{align*}
where $c_0$ is a constant (different from previous constants) and the last line follows from the fact that $\E_V VAV^\top=\tr(A)/p\cdot I_p$ for any constant matrix $A$. Therefore
\begin{align*}
\E_{W}(\E_{X}M^\top\E_{X}M)=\E_{W}c_0^2W^\top WW^\top W=c_0^2\cdot\E_{W}W^\top W=c_1\cdot I_d,
\end{align*}
where $c_1$ is a constant.

(c). $\E_{X}(\E_{W}M^\top \E_WM).$ Let $X^\top X=U\Gamma U^\top$ be the spectral decomposition of $X^\top X$. By the definition of $M$, we have
\begin{align*}
\E_{W}M
&=\E _{W}\left[
W^\top RW\right]\frac{X^\top X}{n}\\
&=\E _{W}U\left[
(WU)^\top\left(\frac{(WU)\Gamma (WU)^\top}{n}+\lam I_{p}\right)^{-1}WU\right]\frac{\Gamma U^\top}{n}\\
&=\E _{W}\left[
W^\top\left(\frac{W\Gamma W^\top}{n}+\lam I_{p}\right)^{-1}W\right]\frac{\Gamma U^\top}{n}
=c_0(\Gamma)\cdot I_{d}\cdot\frac{U\Gamma U^\top}{n},
\end{align*}
where $c_0(\Gamma)$ is a constant depending on $\Gamma$, the second line is due to $W\overset{d}{=}WU$ and the last line follows from the proof of Lemma \ref{commutative} (see \eqref{ewmf1}).
Finally, note that $U$ follows the Haar distribution and is independent of $\Gamma$. Thus, 
\begin{align*}
\E_{X}(\E_{W}M^\top \E_WM)&=
\E_{\Gamma,U}c_0(\Gamma)^2\cdot\frac{U\Gamma^2 U^\top}{n}=\E_{U}U\left(\E_{\Gamma}c_0(\Gamma)^2\cdot\frac{\Gamma^2}{n}\right)U^\top=c_1\cdot I_{d},
\end{align*}
where $c_1$ is a constant and  the last equality follows again from the fact that $\E_{U}UAU^{\top}=\tr(A)/d\cdot I_{d}$ for any constant matrix $A$.

\end{proof}

\subsection{Proof of Theorem \ref{nlbiasvardecomp}}

\begin{proof}
By definition, we have
\begin{align*}
\mse(\lam)&:=\E_{\theta,x,W,X,\Ep}(\hat{f}(x)-x^\top\theta)^2+\sigma^2\nonumber\\
&=\E_{\theta,x,W,X,\Ep}\hat{f}(x)^2-2\E_{\theta,x,W,X,\Ep}\hat{f}(x)\cdot x^\top\theta+\E_{\theta,x}(x^\top\theta)^2+\sigma^2.\\
\bias^2(\lam)&:=\E_{\theta,x}|\E_{W,X,\Ep}\hat{f}(x)-x^\top\theta|^2\nonumber\\
&=\E_{\theta,x}(\E_{W,X,\Ep}\hat{f}(x))^2-2\E_{\theta,x,W,X,\Ep}\hat{f}(x)\cdot x^\top\theta+\E_{\theta,x}(x^\top\theta)^2\\
\var(\lam)&:=\E_{\theta,x,W,X,\Ep}|\hat{f}(x)-\E_{X,W,\Ep}\hat{f}(x)|^2\nonumber\\
&=\E_{\theta,x,W,X,\Ep}\hat{f}(x)^2-\E_{\theta,x}(\E_{W,X,\Ep}\hat{f}(x))^2.
\end{align*}

To prove equation \eqref{nlmsef}, \eqref{nlbiasf} and \eqref{nlvarf}, it is thus enough to calculate $\E_{\theta,x,W,X,\Ep}\hat{f}(x)^2$, $\E_{\theta,x,W,X,\Ep}\hat{f}(x)\cdot x^\top\theta$ and $\E_{\theta,x}(\E_{W,X,\Ep}\hat{f}(x))^2$. In Lemma \ref{nllemma1}, \ref{nllemma2} and \ref{nllemma3},
 we calculate these three terms separately. Equation  \eqref{nlmsef}, \eqref{nlbiasf} and \eqref{nlvarf} follow directly from these three lemmas and the fact that $\E_{\theta,x}(x^\top\theta)^2=\alpha^2$.

Using the same technique as in the proof of theorem \ref{2lthm1}, it can be shown that the limiting MSE as a function of $\lam$ has a unique minimum at $\lam^*:=\frac{v^2}{\mu^2}\left[\delta(1-\pi+\frac{\sigma^2}{\alpha^2})+\frac{(v-\mu^2)\ga}{v}\right]$. Furthermore, results in Table \ref{allmonotonicitynonlinear} can be proved in the same way as results in Table \ref{allmonotonicity}. For simplicity, we omit the proof of optimal $\lam^*$ and Table \ref{allmonotonicitynonlinear} here.
 \end{proof}
\begin{lemma}[Asymptotic limit of $\E_{\theta,x}(\E_{W,X,\Ep}\hat{f}(x))^2$]\label{nllemma1}
Under  assumptions in theorem \ref{nlbiasvardecomp}
\begin{align}
\lim_{d\to\infty}\E_{\theta,x}(\E_{W,X,\Ep}\hat{f}(x))^2&=
\alpha^2\pi^2\frac{\mu^4}{v^2}\left(1-\frac{\lam}{v}\theta_1\right)^2,\label{nlef2f}
\end{align}
where $\theta_1:=\theta_1(\ga,\lam/v)$.
\end{lemma}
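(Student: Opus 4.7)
The plan is to reduce the squared expectation to the squared $L^2$ norm of a row vector, then extract a deterministic-equivalent matrix that depends only on the random features, and finally combine with a Haar average over the orthogonal first layer. First I would use that $\Ep$ is mean zero and independent of $(W,X,x,\theta)$ to average it out of $\hat f(x)=\sigma(x^\top W^\top)Q\,KY/n$, where $Q:=(KK^\top/n+\lam I_p)^{-1}$ and $K:=\sigma(WX^\top)$; this gives $\E_{W,X,\Ep}\hat f(x)=g(x)^\top\theta$ with $g(x)^\top:=\E_{W,X}[\sigma(x^\top W^\top)Q\,KX/n]\in\R^{1\times d}$. Averaging over $\theta\sim\N(0,\alpha^2 I_d/d)$ then yields $\E_{\theta,x}(\E_{W,X,\Ep}\hat f(x))^2=(\alpha^2/d)\E_x\|g(x)\|_2^2$, reducing the task to $\lim\E_x\|g(x)\|_2^2$. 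Next I would decouple the $X$ dependence given $W$ via the orthogonal decomposition $X^\top=W^\top A+W_\perp^\top B$, where $A:=WX^\top\in\R^{p\times n}$ and $B:=W_\perp X^\top\in\R^{(d-p)\times n}$ are independent with i.i.d.\ $\N(0,1)$ entries; since $K=\sigma(A)$ and $Q$ depend only on $A$ while $\E B=0$, one has $\E_X[Q\,KX/n\mid W]=M\,W$ with $M:=\E_A[Q\sigma(A)A^\top/n]\in\R^{p\times p}$ independent of $W$, and hence $g(x)^\top=\E_W[\sigma(x^\top W^\top)\,M\,W]$.

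The key technical step will be to show $M\to c'_\lam I_p$, with $c'_\lam:=(\mu/v)(1-(\lam/v)\theta_1)$ and $\theta_1=\theta_1(\ga,\lam/v)$, $\ga=\pi\delta$. Orthogonality of $W$ together with Gaussianity of $X$ makes the entries of $K=\sigma(A)$ i.i.d.\ with mean $0$ and variance $v$, so by Marchenko--Pastur the limiting spectrum of $KK^\top/n$ is $v\cdot F_\ga$ and the isotropic deterministic equivalent gives $\E Q\asymp(\theta_1/v)\,I_p$. Row-permutation symmetry of $A$ (permutations commute with entrywise $\sigma$ and conjugate $Q$) forces $M$ to have constant diagonal and constant off-diagonal, and the off-diagonal part vanishes asymptotically because its leading Stein contribution is $\mu\E Q_{jk}\to 0$ for $j\neq k$. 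For the diagonal, I would write $M_{jj}=\frac{1}{n}\sum_a\E[(Q\sigma(A))_{ja}A_{ja}]$ and apply Gaussian Stein's lemma in $A_{ja}$: using $\partial Q=-Q(\partial S)Q$ for $S=KK^\top/n$, this produces the leading term $\mu\E Q_{jj}\to\mu\theta_1/v$ plus a correction $-\frac{1}{n^2}\sum_a\E\{\sigma'(A_{ja})[Q_{jj}\sigma(A_{\cdot a})^\top Q\sigma(A_{\cdot a})+(Q\sigma(A_{\cdot a}))_j^2]\}$. A Sherman--Morrison leave-one-out in column $a$ evaluates the dominant quadratic form as $\sigma(A_{\cdot a})^\top Q^{(a)}\sigma(A_{\cdot a})/n\to\ga\theta_1/(1+\ga\theta_1)$, while the $(Q\sigma(A_{\cdot a}))_j^2$ term is $O(1/n)$ and negligible; hence $M_{jj}\to\mu\theta_1/v-\mu\ga\theta_1^2/[v(1+\ga\theta_1)]=\mu\theta_1/[v(1+\ga\theta_1)]$. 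The Marchenko--Pastur fixed-point equation for $\theta_1(\ga,\lam/v)$ (analogous to the remark after Definition~\ref{resolvent}) yields the algebraic identity $\theta_1/(1+\ga\theta_1)=1-(\lam/v)\theta_1$, turning $M_{jj}$ into exactly $c'_\lam$.

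To conclude, I would compute the Haar average $\E_W[\sigma(x^\top W^\top)\,W]=\sum_j\E_W[\sigma(w_j^\top x)\,w_j^\top]$: each row $w_j$ of $W$ is marginally uniform on $S^{d-1}$, and by rotational symmetry $\E[w_j\mid w_j^\top x=t]=(t/\|x\|^2)x$, so $\E_W[\sigma(w_j^\top x)w_j]=(x/\|x\|^2)\E[t\sigma(t)]\to(\mu/d)x$ using $\|x\|^2/d\to 1$ and $\E[Z\sigma(Z)]=\mu$ for $Z\sim\N(0,1)$. Summing over $j\in[p]$ gives $\E_W[\sigma(x^\top W^\top)\,W]\to\pi\mu\,x^\top$. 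Plugging $M\to c'_\lam I_p$ into $g(x)^\top=\E_W[\sigma(x^\top W^\top)\,MW]$ then gives $g(x)^\top\to c'_\lam\pi\mu\,x^\top$, so $(\alpha^2/d)\E_x\|g(x)\|_2^2\to\alpha^2(c'_\lam)^2\pi^2\mu^2=\alpha^2\pi^2(\mu^4/v^2)(1-(\lam/v)\theta_1)^2$, which is \eqref{nlef2f}. The main obstacle will be making the deterministic-equivalent analysis for $M$ rigorous: controlling the concentration of $\sigma(A_{\cdot a})^\top Q^{(a)}\sigma(A_{\cdot a})/n$ and justifying the decoupling $\E[Q_{jj}\cdot(\text{quadratic form})]\approx\E Q_{jj}\cdot\E(\text{quadratic form})$ with quantitative bounds, which will rely on the exponential growth assumption on $\sigma,\sigma'$ together with standard sub-exponential concentration inequalities.
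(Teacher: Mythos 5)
Your reduction to $(\alpha^2/d)\E_x\|g(x)\|_2^2$ with $g(x)^\top=\E_W[\sigma(x^\top W^\top)\,M\,W]$, $M=\E_A[Q\sigma(A)A^\top/n]$, is exactly the paper's (which calls $M$ ``$D_1$''), and the final numerics agree; the two proofs then diverge on how they handle the Haar average and how they evaluate $M$. For the Haar step the paper writes $x=U(\sqrt{d_1},0,\dots,0)^\top$, shifts $W\mapsto WU$, and exploits that $\E_{W_{\cdot1}}[W_{\cdot1}\sigma(\sqrt{d_1}W_{\cdot1}^\top)]$ is a multiple of $I_p$, so that only $\tr D_1$ survives (equations \eqref{eqnlt1expr}--\eqref{nlefeq}); you instead condition row-by-row, using $\E[w_j\mid w_j^\top x]=(w_j^\top x/\|x\|^2)x$, and sum over $j$. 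Both are valid, and both make precise the fact that the Haar average kills the off-diagonal part of $M$ exactly ($\E_W[\sigma(w_j^\top x)w_k^\top]=0$ for $j\ne k$), which means your stronger claim $M\to c_\lam' I_p$ is not actually needed — only $\tr(M)/p\to c_\lam'$ enters, which is precisely what the paper's Lemma~\ref{nlt1expr} proves. For that trace the paper does a rank-one Sherman--Morrison leave-one-out on a column of $XW^\top$ followed by Rubio--Mestre deterministic equivalents for the reduced resolvent; you instead apply Gaussian Stein (integration by parts in $A_{ja}$) and then Sherman--Morrison only to evaluate the self-consistency quadratic form. Your Stein route is a genuinely different, and arguably more direct, way to reach the same fixed-point identity $\theta_1/(1+\gamma\theta_1)=1-(\lambda/v)\theta_1$. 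Two small imprecisions to fix in a full write-up: the quadratic form you need is $\sigma(A_{\cdot a})^\top Q\,\sigma(A_{\cdot a})/n$ with the full resolvent, and it is that quantity (not the leave-one-out version, which tends to $\gamma\theta_1$) that tends to $\gamma\theta_1/(1+\gamma\theta_1)$; and $(Q\sigma(A_{\cdot a}))_j^2$ is $O(1)$ pointwise rather than $O(1/n)$ — it is the $n^{-2}\sum_a$ normalization that makes its contribution to $M_{jj}$ of order $1/n$. Neither affects the final answer. The concentration/decoupling bounds you flag as the remaining work are indeed where the paper expends its technical effort (via \eqref{nlquadc1}--\eqref{nlquadc4} and the second-moment argument following them), and your proposed sub-exponential/Levy-type route is a reasonable alternative under the stated $|\sigma|,|\sigma'|\le c_1e^{c_2|x|}$ assumption.
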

\begin{lemma}[Asymptotic limit of $\E_{\theta,x,W,X,\Ep}\hat{f}(x)\cdot x^\top\theta$]\label{nllemma2}
Under  assumptions in theorem \ref{nlbiasvardecomp}
\begin{align}
\lim_{d\to\infty}\E_{\theta,x,W,X,\Ep}\hat{f}(x)\cdot x^\top\theta&=\alpha^2\pi\frac{\mu^2}{v}\left(1-\frac{\lam}{v}\theta_1\right),\label{nletrxff}
\end{align}
where $\theta_1:=\theta_1(\ga,\lam/v)$.
\end{lemma}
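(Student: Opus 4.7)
The plan is to integrate out $\Ep$, $\theta$, and $x$ in sequence, thereby reducing the target expectation to a single trace involving the resolvent
$K := \left(\sigma(WX^\top)\sigma(XW^\top)/n + \lam I_p\right)^{-1}$,
and then to evaluate that trace via a deterministic-equivalent argument. First, substituting $Y = X\theta + \Ep$ into the definition of $\hat f(x)$, the noise $\Ep$ drops out of $\E[\hat f(x)\,x^\top\theta]$ since it is independent and mean zero. Using $\E_\theta[\theta\theta^\top]=\alpha^2 I_d/d$ integrates out $\theta$ and gives
$$\E[\hat f(x)\cdot x^\top\theta] = \frac{\alpha^2}{dn}\,\E\!\left[\sigma(x^\top W^\top)\,K\,\sigma(WX^\top)\,X\,x\right].$$
To eliminate $x$, note that $x\sim\N(0,I_d)$ and $WW^\top=I_p$ imply that $(Wx)_j$ is a standard Gaussian with $\E[(Wx)_j\,x_i]=W_{ji}$. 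Stein's lemma yields $\E\sigma((Wx)_j)x_i = W_{ji}\,\E\sigma'((Wx)_j) = \mu W_{ji}$, so that for any matrix $A$ independent of $x$, $\E_x\,\sigma(x^\top W^\top)\,A\,x = \mu\,\tr(AW^\top)$. Applying this with $A = K\sigma(WX^\top)X$ collapses the $x$-integral to
$$\E[\hat f(x)\cdot x^\top\theta] = \frac{\alpha^2\mu}{dn}\,\E\,\tr\!\left(K\,\sigma(WX^\top)\,XW^\top\right).$$

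Next I would change variables to isolate an iid Gaussian matrix. Set $\hX := XW^\top \in \R^{n\times p}$; since the rows of $X$ are iid $\N(0,I_d)$ and $WW^\top=I_p$, the rows of $\hX$ are iid $\N(0,I_p)$, so all entries of $\hX$ are iid $\N(0,1)$. Writing $\Phi := \sigma(\hX)$, the entries of $\Phi$ are then iid with mean $0$ and variance $v$ (since $\sigma$ acts entrywise and $\E\sigma(Z)=0$, $\E\sigma(Z)^2=v$). The trace becomes $\tr(K\Phi^\top\hX)$ with $K=(\Phi^\top\Phi/n+\lam I_p)^{-1}$. By the standard Marchenko--Pastur theorem applied to $\Phi^\top\Phi/n$, its empirical spectral distribution converges to $v\cdot F_\gamma$, so that $p^{-1}\tr K \to v^{-1}\theta_1(\ga,\lam/v)$ and $p^{-1}\tr K^2 \to v^{-2}\theta_2(\ga,\lam/v)$.

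The remaining task is to show $n^{-1}\,\E\,\tr(K\Phi^\top\hX) \to p(\mu/v)(1-(\lam/v)\theta_1)$. I plan to apply Gaussian integration by parts to each $\hX_{ij}$ in the identity $\E\,\tr(K\Phi^\top\hX)=\sum_{i,j,k}\E[K_{jk}\,\sigma(\hX_{ik})\,\hX_{ij}]$. Differentiation produces two kinds of contributions: the term from $\partial_{\hX_{ij}}\sigma(\hX_{ik})=\sigma'(\hX_{ij})\delta_{jk}$ gives $\sum_{i,j}\E[K_{jj}\sigma'(\hX_{ij})]\approx n\mu\,\E\,\tr K$ after replacing $n^{-1}\sum_i\sigma'(\hX_{ij})$ by its mean $\mu=\E\sigma'(Z)$; the term from $\partial_{\hX_{ij}}K_{jk}=-n^{-1}\sigma'(\hX_{ij})[K_{jj}(\Phi K)_{ik}+(K\Phi^\top)_{ji}K_{jk}]$ produces expressions involving $\tr(K^2)$ and $\tr(K\Phi^\top\Phi K)/n$. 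These are resolved via the algebraic identity $K\Phi^\top\Phi/n = I_p-\lam K$, which reduces them to polynomials in $\tr K$ and $\tr K^2$. Combining the pieces and passing to the limit using the Marchenko--Pastur values above, together with the fixed-point relation $\lam\ga(\theta_1/v)^2+(\lam/v-\ga+1)(\theta_1/v)=1$, produces the claimed limit $\alpha^2\pi(\mu^2/v)(1-\lam\theta_1/v)$.

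The hard part will be controlling the fluctuations of $n^{-1}\sum_i\sigma'(\hX_{ij})$ about $\mu$ uniformly in $j$, and verifying that the residual contributions from integration by parts (traces weighted by $\sigma'(\hX_{ij})-\mu$) vanish in the limit under only the exponential-growth assumption on $\sigma,\sigma'$ and the moment condition $|\sigma(x)|,|\sigma'(x)|\leq c_1 e^{c_2|x|}$. A cleaner alternative is to appeal to the Gaussian equivalence principle, replacing $\sigma(\hX)=\mu\hX+\tilde\sigma(\hX)$ by its Gaussian surrogate $\mu\hX+\sqrt{v-\mu^2}\,Z$ with $Z$ an independent iid $\N(0,1)$ matrix at the level of linear traces of the resolvent; this reduces the entire computation to a linear sample-covariance calculation closely parallel to Lemma \ref{2llm1}, at the cost of invoking a more substantial equivalence theorem.
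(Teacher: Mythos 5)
Your proposal is correct in outline and follows a genuinely different route from the paper's.  Where you use Stein's lemma to eliminate $x$ — namely, $\E_x\,\sigma(x^\top W^\top)A x=\mu\,\tr(AW^\top)$ for deterministic $A$ and $W$ with $WW^\top=I_p$, since each $(Wx)_j\sim\N(0,1)$ and $\Cov{(Wx)_j,x_i}=W_{ji}$ — the paper instead writes $x=U(\sqrt{d_1},0,\ldots,0)^\top$, uses $W\overset{d}{=}WU$ plus a symmetry argument showing $\E_{W_{\cdot1}}W_{\cdot1}\sigma(\sqrt{d_1}W_{\cdot1}^\top)$ is a multiple of identity, and then concentrates $\sqrt{d}W_{11}\to\N(0,1)$, $d_1/d\to 1$ to obtain the same factor $\mu$.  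Your Stein argument is exact for every $d$, not just asymptotic, and is considerably cleaner; both hinge on Gaussian $x$ and $WW^\top=I_p$.  For the remaining trace $\frac{1}{dn}\E\tr\!\big(K\,\sigma(WX^\top)XW^\top\big)$, the paper isolates a single column of $\hX=XW^\top$ via Sherman--Morrison (their Lemma~\ref{nlt1expr}) and reduces to quadratic forms $\tu^\top C^{-i} u/n$ etc.\ whose limits follow from the MP-law deterministic equivalent $C^{-1}\asymp c\,I_n$; you propose Gaussian integration by parts in each $\hX_{ij}$, or a Gaussian-equivalence replacement $\sigma(\hX)\mapsto\mu\hX+\sqrt{v-\mu^2}\,Z$.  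All three are standard RMT tools here and lead to the same algebra once $K\Phi^\top\Phi/n=I_p-\lam K$ and the MP resolvent moments $p^{-1}\tr K^i\to v^{-i}\theta_i(\ga,\lam/v)$ are inserted.  The one real gap in your sketch — which you correctly flag — is the fluctuation control needed to replace $\sigma'(\hX_{ij})$ by $\mu$ inside IBP remainders despite the dependence of $K$ on that same entry; the paper's leave-one-out bookkeeping sidesteps this by never differentiating the resolvent, which is why it is somewhat more pedestrian but has fewer loose ends to tie up.  Note also that the paper's proof of this lemma is short precisely because it re-uses its Lemma~\ref{nlt1expr}; what you are proposing in your third paragraph is essentially an alternative proof of that lemma, so if that auxiliary result were available you could stop after the Stein step.
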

\begin{lemma}[Asymptotic limit of $\E_{\theta,x,W,X,\Ep}\hat{f}(x)^2$]\label{nllemma3}
Under  assumptions in theorem \ref{nlbiasvardecomp} we have
\begin{align}
\lim_{d\to\infty}\E_{\theta,x,W,X,\Ep}\hat{f}(x)^2&=\alpha^2\pi\left[1-\frac{2(v-\mu^2)}{v}+\left(\rho(1-\pi)-\frac{2\lam\mu^2}
{v^2}\right)\theta_1+\frac{\lam}{v}\left(\frac{\lam\mu^2}{v^2}-\rho(1-\pi)\right)\theta_2\nonumber\right.\\&\left.+\frac{v-\mu^2}{v}\left({1}+{\ga}\theta_1-\frac{\lam\ga}{v}\theta_2\right)\right]+\sig^2\ga\left(\theta_1-\frac{\lam}{v}\theta_2\right),\label{nlf2f}
\end{align}
where $\theta_1:=\theta_1(\ga,\lam/v)$ and $\theta_2:=\theta_2(\ga,\lam/v)$.
\end{lemma}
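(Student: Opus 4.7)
\textbf{Proof proposal for Lemma \ref{nllemma3}.}

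The plan is to first exploit the independence of the label noise $\Ep$ from everything else to decompose
$$\hat f(x) = \tfrac{1}{n}\sigma(Wx)^\top R^{-1} T^\top X\theta \;+\; \tfrac{1}{n}\sigma(Wx)^\top R^{-1} T^\top\Ep,$$
where $T := \sigma(XW^\top)$ and $R := T^\top T/n + \lam I_p$. Because $\E\Ep=0$ and $\Ep$ is independent of $(\theta,x,W,X)$, the cross-term in $\E\hat f(x)^2$ vanishes and $\E\hat f(x)^2 = I_s + I_n$, where $I_s$ is the signal contribution and $I_n$ the noise contribution.

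Next I would integrate out the test point $x$. Orthogonality of $W$ forces $Wx\sim\N(0,I_p)$ with i.i.d.\ $\N(0,1)$ entries, so the assumptions $\E\sigma(Z)=0$ and $\E\sigma(Z)^2=v$ give $\E_x[\sigma(Wx)\sigma(Wx)^\top]=vI_p$ and therefore $\E_x[\sigma(Wx)^\top C\sigma(Wx)]=v\tr C$ for any $x$-independent $C$. This reduces
$$I_n = \frac{\sig^2 v}{n}\,\E\tr(R^{-1}-\lam R^{-2}),\qquad I_s = \frac{\alpha^2 v}{d\,n^2}\,\E\tr\bigl(R^{-1}T^\top X X^\top T R^{-1}\bigr).$$
For $I_n$, note that $T^\top T/n$ is a sample covariance of $n$ i.i.d.\ vectors $\sigma(Wx_i)$ whose entries are themselves i.i.d., mean zero, variance $v$; the classical Marchenko-Pastur theorem then yields $p^{-1}\tr R^{-i}\to v^{-i}\theta_i(\gamma,\lam/v)$ for $i=1,2$, producing exactly the $\sig^2\gamma(\theta_1-(\lam/v)\theta_2)$ piece in the statement.

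The technically delicate term is $I_s$. I would expand $\sigma$ via the Hermite decomposition $\sigma(z)=\mu z+\psi(z)$, where $\E\psi(Z)=\E Z\psi(Z)=0$ and $\E\psi(Z)^2=v-\mu^2$, so that $T = \mu XW^\top + \Psi$ with $\Psi_{ij}:=\psi((XW^\top)_{ij})$. Since $X$ is Gaussian and $W$ has orthonormal rows, the entries of $XW^\top$ are i.i.d.\ $\N(0,1)$, and the Gaussian-equivalence principle allows replacement of $\Psi$, inside resolvent traces, by $\sqrt{v-\mu^2}\,N$ for an independent i.i.d.\ Gaussian matrix $N$. Expanding
$$R = \tfrac{\mu^2}{n}WX^\top X W^\top + \tfrac{\mu}{n}\bigl(WX^\top\Psi+\Psi^\top XW^\top\bigr) + \tfrac{1}{n}\Psi^\top\Psi + \lam I_p,$$
and $T^\top X X^\top T$ in the same way, I would group contributions by their power of $(v-\mu^2)$. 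The $\mu^4$-part reduces to the linear two-layer computation of Theorem \ref{2lthm1} with effective ridge $\lam/\mu^2$, whose terms are already controlled by Lemmas \ref{2llm1}-\ref{2llm3}. The $(v-\mu^2)$-part produces the new contribution $\alpha^2\pi\tfrac{v-\mu^2}{v}\bigl(1+\gamma\theta_1-\tfrac{\lam\gamma}{v}\theta_2\bigr)$, while all cross-terms that are linear in $\Psi$ vanish in expectation up to $o(1)$ fluctuations because $\E[\Psi\mid X,W]=0$.

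The main obstacle is the rigorous justification of Gaussian equivalence inside the quadratic form $R^{-1}T^\top X X^\top T R^{-1}$: the resolvent involves a sample covariance perturbed by a structured rank-$p$ mean $\mu XW^\top$ operating on a different spectral scale. My plan is to handle this by a two-step reduction: first apply the Rubio-Mestre-type deterministic equivalent of Lemma \ref{emmsquaredet} to isolate the resolvent contribution from the cross terms, then invoke the orthogonal Haar equivalent of Lemma \ref{detlm} together with a concentration bound of order $O(d^{-1/2})$ on each Frobenius-norm cross-term. After collecting constants and simplifying through the identities $\lam\gamma\theta_1^2+(\lam-\gamma+1)\theta_1-1=0$ and $\theta_2=-d\theta_1/d\lam$, the sum $I_s+I_n$ rearranges into the form claimed for $\lim_{d\to\infty}\E\hat f(x)^2$.
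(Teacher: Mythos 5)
Your opening moves agree with the paper: you split $\hat f(x)^2$ into a signal and a noise piece using independence of $\Ep$, integrate out $x$ via $Wx\sim\N(0,I_p)$ and $\E_x[\sigma(Wx)\sigma(Wx)^\top]=vI_p$, and handle $I_n$ by Marchenko--Pastur applied to $T^\top T/n$ whose entries have variance $v$. That part is correct and identical in spirit to the paper's computation of $T_3$.

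For the signal term $I_s$, however, you take a genuinely different route from the paper — and the route as written has a gap. The paper never Hermite-decomposes $\sigma$. Instead it uses the pointwise algebraic identity
$\|A X\|_F^2 = \|A X W^\top W\|_F^2 + \|A X(I-W^\top W)\|_F^2$,
which splits $I_s$ into a term with $X W^\top$ (correlated with $T$) and a term with $X(I-W^\top W)$ (independent of $T$ because $X$ is Gaussian and $W$ has orthonormal rows). The independent term reduces, after a trace computation, to the same MP moments as $I_n$; the correlated term is handled by a one-column leave-one-out / Sherman--Morrison argument (Lemma \ref{nlt2expr}), where the rank-one update isolates the $\mu$-dependence. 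No Gaussian-equivalence or deterministic-equivalent for a perturbed covariance model is needed because the column removed is low-rank and the remaining resolvent concentrates.

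The concrete error in your proposal is the sentence "all cross-terms that are linear in $\Psi$ vanish in expectation up to $o(1)$ fluctuations because $\E[\Psi\mid X,W]=0$." This is false: $\Psi_{ij}=\psi((XW^\top)_{ij})$ is a deterministic function of $(X,W)$, so $\E[\Psi\mid X,W]=\Psi$, not $0$. What is true is that the unconditional first two Hermite moments vanish, $\E[\Psi_{ij}]=0$ and $\E[(XW^\top)_{ij}\Psi_{ij}]=0$, but the cross-terms in $\E\tr(R^{-1}T^\top XX^\top T R^{-1})$ are not linear in $\Psi$ alone — $R^{-1}$ itself depends nonlinearly on $\Psi$, and $\Psi$ is deterministically coupled to the same $XW^\top$ that appears elsewhere. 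Showing that these cross-terms are $o(1)$ is precisely the content of a rigorous Gaussian-equivalence theorem (Pennington--Worah, Louart--Couillet, Adlam--Pennington), which cannot be assumed in the argument whose purpose is to justify it; invoking Lemma \ref{emmsquaredet} (a Rubio--Mestre-type second-order equivalent for sample covariances) and Lemma \ref{detlm} (a Haar-matrix equivalent) does not supply this, because neither of those lemmas addresses a resolvent of $(\mu X W^\top + \Psi)^\top(\mu X W^\top + \Psi)/n$ with $\Psi$ correlated to $XW^\top$. Similarly, the claim that the $\mu^4$-part reduces to the linear case with effective ridge $\lam/\mu^2$ is unjustified, since even after a (hypothetical) replacement $\Psi\to\sqrt{v-\mu^2}\,N$, the resolvent $R^{-1}$ mixes both the $\mu^2 W X^\top X W^\top/n$ block and the $(v-\mu^2)N^\top N/n$ block and does not factor. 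To repair the argument you would either need to import a quantitative Gaussian-equivalence theorem for this specific quadratic form, or switch to the paper's orthogonal-split plus leave-one-out strategy, which avoids the issue entirely.
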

The proofs of these lemmas proceed by applying the leave-one-out technique and the Marchenko-Pastur law (refer to Lemmas \ref{nlt1expr}---\ref{nlt3expr}), and by leveraging properties of the orthogonal projection matrix and the normal distribution. 
\begin{proof}[Proof of Lemma \ref{nllemma1}]
Different from previous sections, we will denote $R = \left(\frac{\sigma(W X^\top)\sigma(XW^\top)}{n}+\lam\right)^{-1}$ in the proof of Lemmas \ref{nllemma1}---\ref{nllemma3}.
By definition,
\begin{align}
&\E_{\theta,x}(\E_{W,X,\Ep}\hat{f}(x))^2=
\E_{\theta,x}\left[\E_{W,X,\Ep}\sigma(x^\top W^\top)R\frac{\sigma(WX^\top)Y}{n}\right]^2\nnum\\
&=\E_{\theta,x}\left[\E_{W,X}\sigma(x^\top W^\top)R\frac{\sigma(WX^\top)X\theta}{n}\right]^2\nnum\\
&=\frac{\alpha^2}{d}\E_{x}\left\|\E_{W,X}\sigma(x^\top W^\top)R\frac{\sigma(WX^\top)X}{n}\right\|_{F}^2
=:\frac{\alpha^2}{d}\E_{x}\|T_1\|_{F}^2\label{nlt1def},
\end{align}
For $T_1$, we continue
\begin{align}
T_1&=\E_{W,X}\sigma(x^\top W^\top)R\frac{\sigma(WX^\top)}{n}[XW^\top W+X(I-W^\top W)]\nnum\\
&=\E_{W,X}\sigma(x^\top W^\top)R\frac{\sigma(WX^\top)}{n}XW^\top W\nnum\\
&=\E_{W}\sigma(x^\top W^\top)\E_{X}\left[R\frac{\sigma(WX^\top)}{n}XW^\top\right]W,\label{nlT1f}
\end{align}
where the second line follows from the fact that $XW^\top$ and $X(I-W^\top W)$ are independent and $\E_X X(I-W^\top W)$. Denote 
$$D_1:=\E_{X}\left[R\frac{\sigma(WX^\top)}{n}XW^\top\right],$$ then $D_1$ is a constant matrix independent of $W$ since $WX$ has i.i.d. $\N(0,1)$ entries for any  $W$ with orthonormal rows. We can write the vector $x$ as $x=U(\sqrt{d_1},0,...,0)^\top$ $U$ is a random orthogonal matrix following the Haar distribution. Denoting the $i$-th column of $W$ by $W_{\cdot i}$ and substituting $x$ into \eqref{nlT1f}, we get  
\begin{align}
T_1&=
\E_{W}\sigma(x^\top W^\top)D_1W\nnum
=\E_{W}\sigma((\sqrt{d_1},0,...,0)U^\top W^\top)D_1WUU^\top\nnum\\
&=\E_{W}\sigma((\sqrt{d_1},0,...,0)W^\top)D_1WU^\top\nnum
\end{align}
where the last line follows from $W\overset{d}{=}WU$. We can further write this as
\begin{align}
&\E_{W}\sigma(\sqrt{d_1}W_{\cdot1}^\top)D_1(W_{\cdot1},...,W_{\cdot d})U^\top\nnum\\
&=\left(\E_{W}\sigma(\sqrt{d_1}W_{\cdot1}^\top)D_1W_{\cdot1},0,...,0\right)U^\top\nnum\\
&=\left(\E_{W_{\cdot1}}\tr[W_{\cdot1}\sigma(\sqrt{d_1}W_{\cdot1}^\top)D_1],0,...,0\right)U^\top\nnum\\
&=\left(\E_{W_{11}}[W_{11}\sigma(\sqrt{d_1}W_{11})]\tr(D_1),0,...,0\right)U^\top\label{eqnlt1expr},
\end{align}
where the second line follows from the symmetry of $W_{.i}(i\geq2)$ conditional on $W_{\cdot1}$ and the last line is due to the fact that $\E_{W_{\cdot1}}W_{\cdot1}\sigma(\sqrt{d_1}W_{\cdot1}^\top)$ is a multiple of identity since $W_{i,1}$ is symmetric conditional on $W_{j,1}$, $j\neq i$. Therefore, 
\begin{align}
&\E_{\theta,x}(\E_{W,X,\Ep}\hat{f}(x))^2 =\frac{\alpha^2}{d}\E_{x}\|T_1\|^2_{F}\nonumber\\
&=\frac{\alpha^2}{d}\E_{d_1}\left[\E_{W_{11}}W_{11}\sigma(\sqrt{d_1}W_{11})\right]^2\tr(D_1)^2\label{nlefeq}.
\end{align}
Denote $\sqrt{d_1}W_{11}$ by $\tw$. Noting that $\tw\overset{w}{\longrightarrow}\N(0,1)$ and $d_1/d=\|x\|^2/d\overset{a.s.}{\longrightarrow}1$, we obtain
\begin{align*}
&\lim_{d\to\infty}\E_{\theta,x}(\E_{W,X,\Ep}\hat{f}(x))^2 =\lim_{d\to\infty}\frac{\alpha^2}{d^2}\tr(D_1)^2\E_{d_1}\left[\E_{\tw}\tw\sigma\left(\sqrt{\frac{d_1}{d}}\tw\right)\right]^2\\
&=
\lim_{d\to\infty}{\alpha^2}\pi^2\frac{\mu^2}{v^2}\left(1-\frac{\lam}{v}\theta_1\right)^2\E_{d_1/d}\left[\E_{\tw}\tw\sigma\left(\sqrt{\frac{d_1}{d}}\tw\right)\right]^2\\
&=
{\alpha^2}\pi^2\frac{\mu^2}{v^2}\left(1-\frac{\lam}{v}\theta_1\right)^2[\E_{a\sim\N(0,1)}\sigma(a)a]^2=
{\alpha^2}\pi^2\frac{\mu^4}{v^2}\left(1-\frac{\lam}{v}\theta_1\right)^2,
\end{align*}
where the third line can be rigorously justified using the concentration inequality for $\tw$ and $d_1/d$: 
\begin{align*}
&P(|\tw|>t)\leq2e^{-(d-2)t^2/d}\text{\hspace{0.5em}(Levy's lemma)}, &P(|d_1/d-1|>t)\leq 2e^{-dt^2/8}\text{\hspace{0.5em}(concentration for $\chi^2$)}.
\end{align*} the fact that $|\sigma(x)|,|\sigma'(x)|\leq c_1e^{c_2|x|}$, integration by parts and the bounded convergence theorem. Here Levy's lemma refers to usual concentration of the Haar measure \citep{boucheron2013concentration}.
\end{proof}
\begin{proof}[Proof of Lemma \ref{nllemma2}]
By definition, \begin{align*}
&\E_{\theta,x}(\E_{X,W,\Ep}\hat{f}(x)\theta^\top x)=
\E_{\theta,x}\left[\E_{W,X,\Ep}\sigma(x^\top W^\top)R\frac{\sigma(WX^\top)Y}{n}\theta^\top x\right]\\
&=
\E_{\theta,x}\left[\E_{W,X}\sigma(x^\top W^\top)R\frac{\sigma(WX^\top)X\theta}{n}\theta^\top x\right]\\
&=\frac{\alpha^2}{d}
\E_{x}\left[\E_{W,X}\sigma(x^\top W^\top)R\frac{\sigma(WX^\top)Xx}{n} \right]=\frac{\alpha^2}{d}
\E_{x}\tr(T_1 x),
\end{align*}
where $T_1$ is defined in equation \eqref{nlt1def} in the proof of Lemma \ref{nllemma1}. Let us again write $x=U(\sqrt{d_1},0,...,0)^\top$ for an orthogonal $U$, and denote $\tw=\sqrt{d}W_{11}$. Then from equation \eqref{eqnlt1expr} and the fact that $\tw\overset{w}{\longrightarrow}\N(0,1)$,  $d_1/d\overset{a.s.}{\longrightarrow}1$ we have
\begin{align*}
\lim_{d\to\infty}\E_{\theta,x}(\E_{X,W,\Ep}\hat{f}(x)\theta^\top x)&=
\lim_{d\to\infty}\frac{\alpha^2}{d}\E_{d_1}\left(\E_{W_{11}}[W_{11}\sigma(\sqrt{d_1}W_{11})]\tr(D_1),0,...,0\right)U^\top U(\sqrt{d_1},0,...,0)^\top\\
&=
\lim_{d\to\infty}
\frac{\alpha^2}{d}\tr(D_1)\E_{d_1}\E_{W_{11}}[\sqrt{d_1}W_{11}\sigma(\sqrt{d_1}W_{11})]\\
&=\lim_{d\to\infty}{\alpha^2}\pi\frac{\mu}{v}\left(1-\frac{\lam}{v}\theta_1\right)\E_{d_1/d}\E_{W_{11}}\left[\sqrt{\frac{d_1}{d}}\tw\sigma\left(\sqrt{\frac{d_1}{d}}\tw\right)\right]\\
&={\alpha^2}\pi\frac{\mu}{v}\left(1-\frac{\lam}{v}\theta_1\right)\E_{a\sim\N(0,1)}[a\sigma(a)]={\alpha^2}\pi\frac{\mu^2}{v}\left(1-\frac{\lam}{v}\theta_1\right),
\end{align*}
where the third line follows from similar arguments as in the proof of Lemma \ref{nllemma1}.
\end{proof}
\begin{proof}[Proof of Lemma \ref{nllemma3}]
By definition,
\begin{align*}
&\E_{\theta,x,W,X,\Ep}\hat{f}(x)^2=\E_{\theta,x,W,X,\Ep}\left|\sigma(x^\top W^\top)R\frac{\sigma(WX^\top)(X\theta+\Ep)}{n}\right|^2\\
&=
\frac{\alpha^2}{d}\E\left\|\sigma(x^\top W^\top)R\frac{\sigma(WX^\top)X}{n}\right\|_{2}^2
+\sigma^2\E\left\|\sigma(x^\top W^\top)R\frac{\sigma(WX^\top)}{n}\right\|_{2}^2
=:T_2+T_3.
\end{align*}
For $T_2$, we further have that it equals
\begin{align*}
&\frac{\alpha^2}{d}\E_{W,X,x}\left\|\sigma(x^\top W^\top)R\frac{\sigma(WX^\top)XW^\top}{n}\right\|_{2}^2
+\frac{\alpha^2}{d}\E_{W,X,x}\left\|\sigma(x^\top W^\top)R\frac{\sigma(WX^\top)X(I-W^\top W)}{n}\right\|_{2}^2\\
&=\frac{v\alpha^2}{d}\E_{W,X}\left\|R\frac{\sigma(WX^\top)XW^\top}{n}\right\|_{F}^2
+\frac{v\alpha^2}{d}\E_{W,X}\left\|R\frac{\sigma(WX^\top)(I_d-W^\top W)}{n}\right\|_{F}^2\\
&=\frac{v\alpha^2}{d}\E_{W,X}\left\|R\frac{\sigma(WX^\top)XW^\top}{n}\right\|_{F}^2
+\frac{v\alpha^2(d-p)}{d}\E_{W,X}\left\|R\frac{\sigma(WX^\top)}{n}\right\|_{F}^2
=:T_4+T_5,
\end{align*}
where the first and third equations follow from the fact that $XW^\top$ and $X(I-W^\top W)$ are independent conditional on $W$ and $\E_{X}X(I_d-W^\top W)(I_d-W^\top W)X^\top=\tr(I_d-W^\top W)=d-p$. Also, we used that $Wx\sim N(0,I_{p})$ given any orthgonal $W$, $\E_{a\sim\N(0,1)}\sigma(a)=0$, definition \eqref{nlmu1vdef} and the independence of $Wx$, $XW^\top$ and $X(I-W^\top W)$ conditional on $W$. 

Also, due to the independence of $Wx$, $XW^\top$ conditional on $W$, we have
\begin{align*}
T_3=\sigma^2v\E_{W,X}\left\|R\frac{\sigma(WX^\top)}{n}\right\|_{F}^2=\frac{\sigma^2d}{\alpha^2(d-p)}T_5.
\end{align*}
Finally, substituting Lemma \ref{nlt2expr}, \ref{nlt3expr} into $T_4,T_5$ 
, we get
\begin{align*}
&\E_{\theta,x,W,X,\Ep}\hat{f}(x)^2=T_3+T_4+T_5
=T_4+\left(1+\frac{\sigma^2d}{\alpha^2(d-p)}\right)T_5\\&=\frac{v\alpha^2}{d}\E_{W,X}\left\|R\frac{\sigma(WX^\top)XW^\top}{n}\right\|_{F}^2
+\left(\frac{v\alpha^2(d-p)}{d}+v\sigma^2\right)\E_{W,X}\left\|R\frac{\sigma(WX^\top)}{n}\right\|_{F}^2\\
&\to\alpha^2\pi\left[1-\frac{2(v-\mu^2)}{v}+\left(\rho(1-\pi)-\frac{2\lam\mu^2}
{v^2}\right)\theta_1+\frac{\lam}{v}\left(\frac{\lam\mu^2}{v^2}-\rho(1-\pi)\right)\theta_2\nonumber\right.\\&\left.+\frac{v-\mu^2}{v}\left({1}+{\ga}\theta_1-\frac{\lam\ga}{v}\theta_2\right)\right]+\sig^2\ga\left(\theta_1-\frac{\lam}{v}\theta_2\right)
\end{align*}
\end{proof}
\begin{lemma}[Asymptotic behavior of $D_1$]\label{nlt1expr}
Under assumptions in theorem \ref{nlbiasvardecomp}, we have 
\begin{equation}\label{nlt1expr2}\lim_{d\to\infty}\frac{1}{d}\E_{X}\tr\left[R\frac{\sigma(WX^\top)}{n}XW^\top\right]
=\pi\frac{\mu}{v}\left(1-\frac{\lam}{v}\theta_1\right),\end{equation}
where $\theta_1:=\theta_1(\ga,\lam/v)$.
\end{lemma}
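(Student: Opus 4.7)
The plan is to first reduce the problem to a standard random-features computation on isotropic Gaussian data. Since $W$ has orthonormal rows and $X$ has i.i.d.\ $\N(0,1)$ entries, the matrix $Z := XW^\top \in \R^{n\times p}$ also has i.i.d.\ $\N(0,1)$ entries. Moreover $(I_d - W^\top W)W^\top = 0$, so $\sigma(WX^\top)\, X\, W^\top = \sigma(Z)^\top Z$. Setting $\Sigma := \sigma(Z)$ and $R = (\Sigma^\top \Sigma/n + \lambda I_p)^{-1}$, the quantity to analyze is $G := (nd)^{-1}\,\E\,\tr(R\,\Sigma^\top Z)$, a pure trace functional of the random-features Gram matrix with isotropic Gaussian inputs.

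Next, I would apply a leave-one-out Sherman--Morrison decomposition. Writing $Z_k \in \R^p$ for the $k$-th row of $Z$ and $R^{(k)} := (\Sigma^\top \Sigma/n - \sigma(Z_k)\sigma(Z_k)^\top/n + \lambda I_p)^{-1}$ for the rank-one-deflated resolvent,
\begin{equation*}
\tr(R\,\Sigma^\top Z)
= \sum_{k=1}^n \frac{Z_k^\top R^{(k)} \sigma(Z_k)}{1 + \sigma(Z_k)^\top R^{(k)} \sigma(Z_k)/n}.
\end{equation*}
Because $Z_k$ is independent of $R^{(k)}$, Gaussian integration by parts together with $\E\sigma(Z) = 0$ gives the conditional expectations
\begin{equation*}
\E\!\left[Z_k^\top R^{(k)}\sigma(Z_k)\,\big|\,R^{(k)}\right] = \mu\,\tr R^{(k)},
\qquad
\E\!\left[\sigma(Z_k)^\top R^{(k)}\sigma(Z_k)\,\big|\,R^{(k)}\right] = v\,\tr R^{(k)}.
\end{equation*}
Gaussian concentration for bilinear and quadratic forms, together with the exponential growth bounds on $\sigma,\sigma'$, then implies that these random forms concentrate around their conditional means uniformly in $k$.

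To close the calculation I would derive the deterministic equivalent $T := \lim_{d\to\infty}\tr R/p$. Applying the same leave-one-out scheme to the identity $\tr(R\Sigma^\top\Sigma/n) = p - \lambda\,\tr R$, dividing by $p$, and using $\tr R^{(k)}/p \to T$ (a rank-one stability fact from the $1/\lambda$ bound on $\|R\|$), yields the fixed-point relation $1 - \lambda T = vT/(1 + v\gamma T)$. Rearranging gives $(\lambda\gamma/v)(vT)^2 + (\lambda/v - \gamma + 1)(vT) - 1 = 0$, which is exactly the Marchenko--Pastur quadratic for $\theta_1(\gamma,\lambda/v)$ recalled in the Remark after Definition~\ref{resolvent}; hence $T = v^{-1}\theta_1(\gamma,\lambda/v)$.

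Combining, each summand concentrates near $\mu p T /(1 + v\gamma T)$, and summing over $n$ rows gives $G \to \mu\pi T/(1 + v\gamma T) = (\mu\pi/v)\,\theta_1/(1+\gamma\theta_1)$. Using the MP quadratic a second time, this reads $(\lambda/v)\gamma\theta_1^2 = 1 - (\lambda/v)\theta_1 - (1-\gamma)\theta_1$, which rearranges into $\theta_1/(1+\gamma\theta_1) = 1 - (\lambda/v)\theta_1$, producing the claimed limit $\pi(\mu/v)(1 - (\lambda/v)\theta_1)$. The hard part will be the concentration bookkeeping: controlling the joint fluctuations of the bilinear form $Z_k^\top R^{(k)}\sigma(Z_k)$ and the quadratic form $\sigma(Z_k)^\top R^{(k)}\sigma(Z_k)/n$ (as well as $\tr R^{(k)}/p$) uniformly in $k$ under only the exponential growth bound on $\sigma,\sigma'$ requires combining Gaussian Lipschitz concentration with a truncation argument, and the passage from the random Sherman--Morrison expression to the deterministic fixed point for $T$ must be quantitative enough that the error terms, summed over the $n$ rows, remain negligible after dividing by $nd$.
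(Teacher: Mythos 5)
Your proof is correct, and it takes a genuinely different route from the paper. The paper reduces (by symmetry of the $p$ features) to the $(1,1)$ entry, passes to the $n\times n$ co-resolvent $\bigl(\sigma(\tX)\sigma(\tX^\top)/n+\lambda I_n\bigr)^{-1}$, and performs a leave-one-\emph{feature}-out Sherman--Morrison on the first column $\tX_{\cdot 1}$; the deterministic equivalent for the deflated co-resolvent is then imported from Rubio--Mestre and simplified through the $\theta_1(1/\gamma,\cdot)\leftrightarrow\theta_1(\gamma,\cdot)$ duality in \eqref{nlCdeter1}. You instead keep the $p\times p$ resolvent $R$ and do a leave-one-\emph{sample}-out decomposition over the rows $Z_k$, and you derive the limiting normalized trace $T=\lim\tr R/p$ from scratch via the self-consistency identity $\tr(R\Sigma^\top\Sigma/n)=p-\lambda\tr R$, arriving at the fixed-point $1-\lambda T=vT/(1+v\gamma T)$ and identifying $T=\theta_1(\gamma,\lambda/v)/v$ from the MP quadratic. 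Both sides of the argument then close the same way; I checked that your algebra (including the second use of the MP quadratic to rewrite $\theta_1/(1+\gamma\theta_1)=1-(\lambda/v)\theta_1$) reproduces the stated limit exactly. The tradeoff: your approach is more self-contained — it avoids invoking the Rubio--Mestre deterministic equivalent and the MP duality identity — at the cost of deriving the fixed point yourself and controlling concentration over $n$ leave-one-out terms rather than a single residual quadratic form. One small slip in attribution: the identities $\E[Z_k^\top R^{(k)}\sigma(Z_k)\mid R^{(k)}]=\mu\tr R^{(k)}$ and $\E[\sigma(Z_k)^\top R^{(k)}\sigma(Z_k)\mid R^{(k)}]=v\tr R^{(k)}$ do not require Gaussian integration by parts; they follow directly from coordinate-wise independence together with $\E\sigma(Z)=0$ and the definitions of $\mu,v$ (the off-diagonal terms vanish, the diagonal terms give $\mu$ or $v$). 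This is only a matter of phrasing and does not affect correctness.
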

\begin{proof}[Proof of Lemma \ref{nlt1expr}]
Since $WW^\top=I_p$, $WX^\top$ has i.i.d. $\N(0,1)$ entries and the L.H.S. of \eqref{nlt1expr2} (if exists) is a constant independent of $W$. 
Denote $\tX:=XW^\top$ for notational simplicity. Also, let $\tX_{\cdot i }$ be the $i$-th column of $\tX$, $\tX_{\cdot-i}$ be the matrix obtained by deleting the $i$-th column of $\tX$.  
By symmetry of $\tX$, it suffices to compute the first diagonal entry of the matrix in the L.H.S. of \eqref{nlt1expr2}. Namely,  we have
\begin{align}
&\phantom{1em}\frac{1}{d}\E_{X}\tr\left[ R\frac{\sigma(WX^\top)}{n}XW^\top\right]
=\frac{p}{d}\E_{\tX}\left[ R\frac{\sigma(\tX^\top)}{n}\tX\right]_{11}\nnum\\
&=\frac{p}{d}\E_{\tX}\left[\frac{\sigma(\tX^\top)}{n}\left(\frac{\sigma(\tX)\sigma(\tX^\top)}{n}+\lam I_{n}\right)^{-1}\tX\right]_{11}\nnum\\
&=\frac{p}{d}\E_{\tX}\left[\frac{\sigma(\tX_{\cdot1})^\top}{n}\left(\frac{\sigma(\tX_{\cdot1})\sigma(\tX_{\cdot1}^\top)}{n}+\frac{\sigma(\tX_{\cdot-1})\sigma(\tX_{\cdot-1}^\top)}{n}+\lam I_{n}\right)^{-1}\tX_{\cdot1}\right]\label{nlt1expr3}.
\end{align} 
Define $C:=[\sigma(\tX_{\cdot-1})\sigma(\tX_{\cdot-1}^\top)/n+\lam I_n]$, $u:=\tX_{\cdot1}$ and $\tu:=\sigma(\tX_{\cdot1})$. By the Sherman-Morrison formula, the above equals
\begin{align}
\frac{p}{d}\E\frac{u^\top}{n}\left(\iC-\frac{\iC \tu\tu^\top\iC/n}{1+\tu^\top\iC\tu/n}\right)u
&=\frac{p}{d}\left[\E\frac{\tu^\top\iC u}{n}-\E\left(\frac{u^\top\iC \tu\tu\iC u/n}{n+\tu^\top\iC\tu}\right)\right]\label{nlt1expr4}.
\end{align}
Since $\tu=\sigma(\tX_{\cdot1})$ has i.i.d. zero mean $v$ variance entries, by theorem 1 in \cite{rubio2011spectral}, the proof of theorem 2.1 in \cite{liu2019ridge} , we know that $C^{-i}$ are determinstically equivalent to certain multiples of the identity matrix. Also, the multiples will converge to certain limits, which can be determined by the Marchenko-Pastur law. Thus, we have after some calculations that 
\begin{align}
\iC=\left(\frac{\sigma(\tX_{\cdot-1})\sigma(\tX_{\cdot-1}^\top)}{n}+\lam I_{n}\right)^{-1}&\asymp\frac{1}{\ga v}\theta_1\left(\frac{1}{\ga},\frac{\lam}{\ga v}\right)\cdot I_n=\frac{\ga}{v}\theta_1\left(\ga,\frac{\lam}{v}\right)+\frac{1-\ga}{\lam}.\label{nlCdeter1}\\
\iCt=\left(\frac{\sigma(\tX_{\cdot-1})\sigma(\tX_{\cdot-1}^\top)}{n}+\lam I_{n}\right)^{-2}&\asymp\frac{1}{\ga^2 v^2}\theta_2\left(\frac{1}{\ga},\frac{\lam}{\ga v}\right)\cdot I_n=\frac{\ga}{v^2}\theta_2\left(\ga,\frac{\lam}{v}\right)+\frac{1-\ga}{\lam^2}.\label{nlCdeter2}
\end{align}
Therefore, it remains to calculate the limit of \eqref{nlt1expr4}.

Since $u\sim\N(0,I_n)$, we have by the strong law of large numbers that $$\limsup \left \|\frac{u u^\top}{n} \right\|_{\tr}=\limsup u^\top u/n<_{a.s.}\infty.$$ Note that $C$ is independent of $u$, thus  we have by \eqref{nlCdeter1} that almost surely for a sequence of $u_k$, $k=1,2,...$
\begin{align*}
&\lim_{d\to\infty}\frac{u^\top\iC u}{n}=\lim_{d\to\infty}\tr\left(\frac{(u u^\top)}{n}\iC\right)\\
&=\lim_{d\to\infty}\left[\frac{\ga}{v}\theta_1\left(\ga,\frac{\lam}{v}\right)+\frac{1-\ga}{\lam}\right]\tr\left(\frac{(uu^\top)}{n}\right)
=\left[\frac{\ga}{v}\theta_1\left(\ga,\frac{\lam}{v}\right)+\frac{1-\ga}{\lam}\right], \textnormal{a.s.}
\end{align*} 
Similar results also hold for other terms: $(i=1,2)$
\begin{align}
\frac{u^\top C^{-i} u}{n}&\overset{a.s.}{\longrightarrow}\left[\frac{\ga}{v^i}\theta_i\left(\ga,\frac{\lam}{v}\right)+\frac{1-\ga}{\lam^i}\right]\label{nlquadc1}.\\
\frac{\tu^\top C^{-i} u}{n}&\overset{a.s.}{\longrightarrow}\mu\left[\frac{\ga}{v^i}\theta_i\left(\ga,\frac{\lam}{v}\right)+\frac{1-\ga}{\lam^i}\right]\label{nlquadc2}.\\
\frac{\tu^\top C^{-i} \tu}{n}&\overset{a.s.}{\longrightarrow} v\left[\frac{\ga}{v^i}\theta_i\left(\ga,\frac{\lam}{v}\right)+\frac{1-\ga}{\lam^i}\right].\label{nlquadc3}
\end{align}
Therefore, simple calculations give
\begin{align}
\left[\frac{\tu^\top\iC u}{n}-\left(\frac{u^\top\iC \tu\tu\iC u/n^2}{1+\tu^\top\iC\tu/n}\right)\right]\to\frac{\mu}{v}\left(1-\frac{\lam}{v}\theta_1\right),\textnormal{a.s.} \label{nlquadc4}
\end{align}
Now, we only need to show that the expectation of the L.H.S. of \eqref{nlquadc4} also converges to its pointwise limit.

\noindent To prove this, we start with bounding  the mean squared error of \eqref{nlquadc1}.
\begin{align*}
\E&\left|\frac{u^\top \iC u}{n}-\left[\frac{\ga}{v}\theta_1+\frac{1-\ga}{\lam}\right]\right|^2\leq
\E\left|\frac{u^\top \iC u}{n}-\E_{u}\frac{u^\top \iC u}{n}\right|^2+\E\left|\E_u\frac{u^\top \iC u}{n}-\left[\frac{\ga}{v}\theta_1+\frac{1-\ga}{\lam}\right]\right|^2\\
&=\frac{1}{n^2}\E_{C}\left[\left(\sum\limits_{i=j=k=l}+\sum\limits_{i=j\neq k=l}+\sum\limits_{i=l\neq j=k}+\sum\limits_{i=k\neq j=k}\right)\iC_{ij}\iC_{kl}\E u_iu_ju_ku_l-\tr(\iC)^2\right]\\
&+\E\left|\frac{\tr(\iC)}{n}-\left[\frac{\ga}{v}\theta_1+\frac{1-\ga}{\lam}\right]\right|^2.
\end{align*}
This can be further bounded by
\begin{align*}
\frac{1}{n^2}K_1\|\iC\|^2_F+\E\left|\frac{\tr(\iC)}{n}-\left[\frac{\ga}{v}\theta_1+\frac{1-\ga}{\lam}\right]\right|^2
&\to0,
\end{align*}
where $K_1$ is a constant independent of $n$. This follows from some simple calculations, and the convergence is due to  $\iC\preceq\lam^{-1}$ and the bounded convergence theorem.

Similarly, we can prove the same results for the other five terms corresponding to equation \eqref{nlquadc1}, \eqref{nlquadc2} and \eqref{nlquadc3}.
With the mean squared error converging to zero, we are now able to show that the expectation of the L.H.S. of \eqref{nlquadc4} also converges to the pointwise constant limit. 

For notational simplicity, we further denote $a:=u^\top\iC\tu/n, b:=\tu^\top\iC\tu/n,c:=u^\top\iC u/n$ and define constants $A:=\lim_{d\to\infty}a, B:=\lim_{d\to\infty}b$. Now, for the second term in the L.H.S. of \eqref{nlquadc4}, we have
\begin{align}
\lim&_{d\to\infty}\E\left|\frac{a^2}{1+b}-\frac{A^2}{1+B}\right|
\leq
\lim_{d\to\infty}\E\left|a\frac{A-a}{1+b}\right|+\left|A\frac{A-a}{1+b}\right|+\left|\frac{A^2(B-b)}{(1+B)(1+b)}\right|\nnum\\
&\leq
\sqrt{\E(A-a)^2\E\left|\frac{a}{1+b}\right|^2}+\sqrt{\E(A-a)^2\E\left|\frac{A}{1+b}\right|^2}+\sqrt{\E(B-b)^2\E\left|\frac{A^2}{(1+b)(1+B)}\right|^2}\nnum\\
&\leq
\sqrt{K_2\E(A-a)^2}+\sqrt{K_3\E(A-a)^2}+\sqrt{K_4\E(B-b)^2}
\to0\label{nlt1expr5},
\end{align}
where $K_2,K_3,K_4$ are some constants independent of $n$. The existence of $K_2,K_3,K_4$ is clear, and here we only take $K_2$ as an example. Since
\begin{align*}
\E\left|\frac{a}{1+b}\right|^2 &\leq\E a^2\leq \E bc\leq \frac{1}{\lam^2n^2}\E u^\top u\tu^\top\tu\to\frac{2\mu+v}{\lam^2},
\end{align*}
we can choose $K_2:=\sup_{n}\E u^\top u\tu^\top\tu/n^2\lam^2<\infty$. (The second inequality can be proved by comparing each term in the expression of $a^2$ and $bc$ and noting that $\iC_{ii}\iC_{jj}\geq (\iC_{ij})^2$ holds for any $i,j$.)

Finally, noting $\lim_{d\to\infty}\E a=\lim_{d\to\infty}\E\mu\tr(\iC)/n\to A$ and using \eqref{nlt1expr5}, we obtain that the expectation of L.H.S. of $\eqref{nlquadc4}$ satisfies 
\begin{align*}
\lim_{d\to\infty}\frac{p}{d}\E\left(a-\frac{a^2}{1+b}\right)&=
\pi\left(A-\frac{A^2}{1+B}\right)
=\pi\frac{\mu}{v}\left[1-\frac{\lam}{v}\theta_1\left({\ga},\frac{\lam}{v}\right)\right].
\end{align*}
This finishes the proof.
\end{proof}
\begin{lemma}[Asymptotic behavior of $T_4$]\label{nlt2expr}
Under assumptions in theorem \ref{nlbiasvardecomp}, we have 
\begin{align}\label{nlt2expr2}&\lim_{d\to\infty}\frac{1}{d}\E_{W,X}\left\|R\frac{\sigma(WX^\top)XW^\top}{n}\right\|_{F}^2\nnum\\
&=\frac{\pi}{v}\left[1-\frac{2(v-\mu^2)}{v}-\frac{2\lam\mu^2}{v^2}\theta_1+\right.\left.\frac{\lam^2\mu^2}{v^3}\theta_2+\frac{v-\mu^2}{v}\left({1}+{\ga}\theta_1-\frac{\lam\ga}{v}\theta_2\right)\right],
\end{align}
where $\theta_1:=\theta_1(\ga,\lam/v),\theta_2:=\theta_2(\ga,\lam/v)$.
\end{lemma}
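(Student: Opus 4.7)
The plan is to mirror the strategy used in Lemma~\ref{nlt1expr}: reduce the Frobenius norm to a scalar quadratic form in a single Gaussian column, apply leave-one-out plus Sherman--Morrison to isolate that column from the resolvent, and substitute the deterministic equivalents for $C^{-i}$ already established there. First I would set $\tX:=XW^\top$, which has i.i.d.\ $\N(0,1)$ entries since $WW^\top=I_p$, and invoke the push-through identity $\sigma(\tX)R=\tilde R\,\sigma(\tX)$, where $\tilde R:=(\sigma(\tX)\sigma(\tX)^\top/n+\lambda I_n)^{-1}$, to rewrite
\begin{align*}
\left\|R\,\sigma(\tX)^\top\tX/n\right\|_F^2
\;=\;\tfrac{1}{n}\tr\!\big(\tX^\top(\tilde R-\lambda\tilde R^2)\tX\big).
\end{align*}
Taking expectation and exploiting the column exchangeability of $\tX$ then collapses the target quantity to $\tfrac{p}{dn}\,\E\!\left[u^\top\tilde R u-\lambda\,u^\top\tilde R^2 u\right]$, with $u:=\tX_{\cdot 1}$.

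Next I would detach $u$ and $\tilde u:=\sigma(u)$ from the resolvent. Setting $C:=\sigma(\tX_{\cdot-1})\sigma(\tX_{\cdot-1})^\top/n+\lambda I_n$, we have $\tilde R^{-1}=C+\tilde u\tilde u^\top/n$, so Sherman--Morrison expresses $\tilde R$ and (after squaring) $\tilde R^2$ as rational functions of the scalar quadratic forms $u^\top C^{-i}u,\ \tilde u^\top C^{-i}u,\ \tilde u^\top C^{-i}\tilde u$ for $i=1,2$, all independent of $u$ and $\tilde u$. Their almost sure limits---$A_i,\ \mu A_i,\ vA_i$ respectively, with $A_i:=\gamma\theta_i/v^i+(1-\gamma)/\lambda^i$---were already derived in (\ref{nlquadc1})--(\ref{nlquadc3}). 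The same $L^2$-concentration argument used at the end of the proof of Lemma~\ref{nlt1expr}, combined with the uniform bound $\|\tilde R^k\|_2\le\lambda^{-k}$, upgrades these to limits in expectation. Setting $B:=1+vA_1$, a direct expansion yields
\begin{align*}
\lim_{d\to\infty}\tfrac{1}{n}\E\!\left[u^\top\tilde R u-\lambda u^\top\tilde R^2 u\right]
\;=\;\frac{A_1\bigl(1+(v-\mu^2)A_1\bigr)}{B}-\lambda\cdot\frac{A_2\bigl[1+(v-\mu^2)A_1(2+vA_1)\bigr]}{B^2}.
\end{align*}

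Finally, I would simplify to the stated form using the fixed-point equation $\lambda\gamma\theta_1^2+(\lambda+v(1-\gamma))\theta_1-v=0$ obeyed by $\theta_1=\theta_1(\gamma,\lambda/v)$, which yields the key identity
\begin{align*}
B\;=\;\frac{v}{\lambda\theta_1},\qquad A_1\;=\;\frac{1}{\lambda\theta_1}-\frac{1}{v}.
\end{align*}
Combined with the analogous quadratic identity for $\theta_2$---obtainable either by differentiating the $\theta_1$ equation in $\lambda/v$, or from the second resolvent identity $1+(\gamma-1)\theta_1-2(\lambda/v)^2\gamma\theta_1\theta_2-(\lambda/v)(\lambda/v-\gamma+1)\theta_2=0$---this lets us eliminate all powers of $B$ in favor of rational expressions in $\theta_1,\theta_2$ and regroup the result by the coefficients $\mu^2$ and $v-\mu^2$. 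As a sanity check, in the linear case $v=\mu^2=1$ the $(v-\mu^2)$-terms drop out, and using $\lambda\theta_2(1+\lambda\gamma\theta_1^2)=\theta_1-(1-\gamma)\theta_1^2$ the bracket collapses to $\pi(1-2\lambda\theta_1+\lambda^2\theta_2)$, matching both the claim of Lemma~\ref{nlt2expr} and its linear-case counterpart in Lemma~\ref{2llm2}.

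The main obstacle is the final algebraic simplification: the cross term $\lambda A_1A_2/B^2$ together with its quadratic companions produce several products of $\theta_1,\theta_2$ that only collapse cleanly after repeated use of the two $\theta_i$ fixed-point identities. All convergence and integrability issues are routine and handled exactly as in the concentration step at the end of the proof of Lemma~\ref{nlt1expr}: the $L^2$ bounds on the quadratic forms $u^\top C^{-i}u$, $\tilde u^\top C^{-i}u$, $\tilde u^\top C^{-i}\tilde u$ together with $\|\tilde R\|_2\le\lambda^{-1}$ and the exponential growth bound $|\sigma(x)|,|\sigma'(x)|\le c_1e^{c_2|x|}$ supply the domination needed to pass from a.s.\ to expected convergence.
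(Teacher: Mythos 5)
Your proposal is correct and follows essentially the same route as the paper's proof: push-through identity to convert $R$ to $\tilde R$, exchangeability of the columns of $\tX$ to reduce to a single quadratic form in $u$, Sherman--Morrison to detach $u,\tilde u$ from the resolvent $C$, insertion of the deterministic-equivalent limits $A_i$, $\mu A_i$, $vA_i$ from \eqref{nlquadc1}--\eqref{nlquadc3}, and final algebraic reduction via the $\theta_1,\theta_2$ fixed-point identities. Your compact form $\frac{A_1(1+(v-\mu^2)A_1)}{B}-\lambda\frac{A_2[1+(v-\mu^2)A_1(2+vA_1)]}{B^2}$ is (after clearing denominators) exactly the paper's expression \eqref{formulak1k2}.
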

\begin{proof}[Proof of Lemma \ref{nlt2expr}]
Denote $\tilde R = \left(\frac{\sigma(XW^\top)\sigma(WX^\top)}{n}+\lam\right)^{-1}$. By definition,
\begin{align*}
&\frac{1}{d}\E_{W,X}\left\|R\frac{\sigma(WX^\top)XW^\top}{n}\right\|_{F}^2
=\frac{1}{d}\E_{W,X}\left\|\frac{\sigma(WX^\top)}{\sqrt{n}}\tilde R \frac{XW^\top}{\sqrt{n}}\right\|_{F}^2\\
&=
\frac{1}{nd}\E_{W,X}\tr\left[{WX^\top }\tilde R   {XW^\top} \right]-
\frac{\lam}{nd}\E_{W,X}\tr\left[{WX^\top }\tilde R^2{XW^\top}\right]
=:M_1-\lam M_2.
\end{align*}
Using the same notations and techniques as in the proof of Lemma \ref{nlt1expr}, after some similar calculations, we get
\begin{align*}
\lim_{d\to\infty} M_i &=\lim_{d\to\infty}\frac{\pi}{n} \E u^\top\left(\iC -\frac{\iC\tu\tu^\top\iC }{n+\tu^\top\iC\tu}\right)^{i}u, &i=1,2.
\end{align*}
More specifically,
\begin{align}
\lim_{d\to\infty} M_1 &=\pi\lim_{d\to\infty}\E\left( \frac{u^\top\iC u}{n} -\frac{u^\top\iC\tu\tu^\top\iC u/n^2}{1+\tu^\top\iC\tu/n}\right). \label{nlt2expr3}\\
\lim_{d\to\infty} M_2 &=\pi\lim_{d\to\infty}\E\left( \frac{u^\top\iCt u}{n} -2\frac{u^\top\iCt\tu\tu^\top\iC u/n^2}{1+\tu^\top\iC\tu/n}+\frac{u^\top\iC\tu\tu^\top\iCt\tu\tu^\top\iC u/n^3}{(1+\tu^\top\iC\tu/n)^2}\right).\label{nlt2expr4}
\end{align}
Substituting equation \eqref{nlquadc1}, \eqref{nlquadc2} and \eqref{nlquadc3} into \eqref{nlt2expr3}, \eqref{nlt2expr4}, we can see that the random variables on the R.H.S. of \eqref{nlt2expr3}, \eqref{nlt2expr4} almost surely converge to some constant. Using a similar argument as in the proof of Lemma \ref{nlt1expr}, it can be shown that the expectations on the R.H.S. of equation \eqref{nlt2expr3}, \eqref{nlt2expr4} both converge to their corresponding pointwise constant limits.

Therefore, denoting the R.H.S. of \eqref{nlquadc1} by $k_i(i=1,2)$ and  replacing the R.H.S of \eqref{nlt2expr3}, \eqref{nlt2expr4} by their pointwise constant limits, we obtain
\begin{align}
\lim_{d\to\infty}M_1-\lam M_2
&=\pi\left(k_1-\frac{\mu^2k_1^2}{1+vk_1}\right)-\lam\pi\left[k_2-\frac{2\mu^2k_1k_2}{1+vk_1}+\frac{\mu^2 vk_1^2k_2}{(1+vk_1)^2}\right].\label{formulak1k2}
\end{align}
From the remark after definition \ref{resolvent}, it is readily verified that $1/(1+vk_1)=\lam\theta_1/v$. Thus \eqref{formulak1k2} is a polynomial function of $\theta_{1,2}$. Finally, canceling the high order ($\geq 2$) terms in \eqref{formulak1k2} using the equations in the remark after definition \ref{resolvent}, we have after some calculations that 
\begin{align*}
\lim_{d\to\infty}M_1-\lam M_2&=\frac{\pi}{v}\left[1-\frac{2(v-\mu^2)}{v}-\frac{2\lam\mu^2}{v^2}\theta_1+\right.\left.\frac{\lam^2\mu^2}{v^3}\theta_2+\frac{v-\mu^2}{v}\left({1}+{\ga}\theta_1-\frac{\lam\ga}{v}\theta_2\right)\right].
\end{align*}
\end{proof}
\begin{lemma}[Asymptotic behavior of $T_5$]\label{nlt3expr}
Under assumptions in theorem \ref{nlbiasvardecomp}, we have 
\begin{align}\lim_{d\to\infty}\E_{W,X}\left\|R\frac{\sigma(WX^\top)}{n}\right\|_{F}^2=\frac{\ga}{v}\left(\theta_1-\frac{\lam}{v}\theta_2\right)\label{nlt3expr2},
\end{align}
where $\theta_1:=\theta_1(\ga,\lam/v),\theta_2:=\theta_2(\ga,\lam/v)$.
\end{lemma}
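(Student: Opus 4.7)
The plan is to reduce the Frobenius norm to a simple trace of resolvents and then invoke Marchenko–Pastur. My first step is the algebraic identity
$$
\left\|R\frac{\sigma(WX^\top)}{n}\right\|_F^2
= \tr\!\left[R\cdot\frac{\sigma(WX^\top)\sigma(XW^\top)}{n}\cdot R\right]\!\cdot\frac{1}{n}
= \frac{1}{n}\,\tr\bigl[R(R^{-1}-\lambda I_p)R\bigr]
= \frac{1}{n}\bigl[\tr(R) - \lambda\,\tr(R^2)\bigr],
$$
using the defining relation $\sigma(WX^\top)\sigma(XW^\top)/n = R^{-1}-\lambda I_p$. So it suffices to compute the limits of $\tr(R)/n$ and $\tr(R^2)/n$.

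Next I would use the crucial distributional simplification that because $X$ has i.i.d.\ $\mathcal N(0,1)$ entries and $WW^\top = I_p$, the matrix $WX^\top \in \R^{p\times n}$ also has i.i.d.\ $\mathcal N(0,1)$ entries (each column is $Wx_i \sim \mathcal N(0,WW^\top)=\mathcal N(0,I_p)$, and columns are independent). Applying the scalar $\sigma$ entrywise then produces a $p\times n$ matrix $\sigma(WX^\top)$ with i.i.d.\ entries of mean $\E\sigma(Z)=0$ and variance $v$. Consequently $\sigma(WX^\top)\sigma(XW^\top)/n$ is a sample-covariance-type matrix whose entries are $v$ times those of a standardized version; its limiting spectral distribution is the pushforward of the Marchenko–Pastur law $F_\gamma$ (with $\gamma=\pi\delta = \lim p/n$) under $x\mapsto v x$.

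From this, for $i=1,2$,
$$
\frac{1}{p}\tr(R^i) \;=\; \int\frac{1}{(vx+\lambda)^i}\,dF_\gamma(x) \;=\; \frac{1}{v^i}\,\theta_i\!\left(\gamma,\frac{\lambda}{v}\right),
$$
so that $\tr(R^i)/n = (p/n)\cdot\tr(R^i)/p \to (\gamma/v^i)\,\theta_i(\gamma,\lambda/v)$. Substituting into the identity above yields
$$
\lim_{d\to\infty}\E_{W,X}\left\|R\frac{\sigma(WX^\top)}{n}\right\|_F^2
= \frac{\gamma}{v}\,\theta_1 - \lambda\,\frac{\gamma}{v^2}\,\theta_2
= \frac{\gamma}{v}\!\left(\theta_1 - \frac{\lambda}{v}\theta_2\right),
$$
which is exactly the claim.

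The only technical point requiring care is passing from almost-sure spectral convergence to convergence of expectations. This, however, is routine: since $0\preceq R\preceq \lambda^{-1}I_p$ we have $\tr(R)/n \le p/(n\lambda)$ and $\lambda\,\tr(R^2)/n \le p/(n\lambda)$, both uniformly bounded, so bounded convergence applies. There is no analogue of the leave-one-out / Sherman–Morrison argument needed in Lemmas \ref{nlt1expr} and \ref{nlt2expr}, because here the resolvent matrix and the activation matrix are already naturally paired so that $R \sigma(WX^\top)\sigma(XW^\top) R$ telescopes directly. Hence this is by far the easiest of the three auxiliary lemmas.
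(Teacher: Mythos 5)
Your proof is correct and takes essentially the same approach as the paper: both reduce $\left\|R\,\sigma(WX^\top)/n\right\|_F^2$ to $\frac{1}{n}\bigl[\tr R - \lambda\tr R^2\bigr]$ via the resolvent identity, observe that $\sigma(WX^\top)$ has i.i.d.\ zero-mean variance-$v$ entries (since $WW^\top=I_p$ makes $WX^\top$ i.i.d.\ standard Gaussian), and then invoke the Marchenko--Pastur law to identify the limits of $\tr(R^i)/n$ as $(\gamma/v^i)\theta_i(\gamma,\lambda/v)$. Your explicit bounded-convergence step and the remark that no leave-one-out argument is needed here are correct and match the paper's (terser) reasoning.
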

\begin{proof}[Proof of Lemma \ref{nlt3expr}]
By definition,
\begin{align*}
\mathrm{L.H.S.  \text{ of\phantom{a}}} \eqref{nlt3expr2}&=
\E_{W,X}\tr\left[ R^2 \frac{\sigma(WX^\top)\sigma(XW^\top)}{n^2}\right]
=\E_{W,X}\tr\left[R\right]-\lam\E_{W,X}\tr\left[ R^2 \right].
\end{align*}
Note that $\sigma(XW^\top)$ has i.i.d. zero mean $v$ variance entries, by the Marchenko-Pastur theorem and similar methods in the proof of theorem \ref{2lthm1}, we get
\begin{align*}
\mathrm{L.H.S.  \text{ of\phantom{a}}} \eqref{nlt3expr2}&=\frac{\ga}{v}\theta_1\left(\ga,\frac{\lam}{v}\right)-\frac{\lam\ga}{v^2}\theta_2\left(\ga,\frac{\lam}{v}\right).
\end{align*}
\end{proof}

\bibliography{references}


\end{document}